\newtheorem{assumption}{Assumption}[section]
\begin{document}

\title{Variational Inference: Posterior Threshold Improves Network Clustering Accuracy in Sparse Regimes}

\author{\name Xuezhen Li \email xzhli@ucdavis.edu \\
       \addr Department of Statistics\\
       University of California, Davis\\
       Davis, CA 95616-5270, USA
       \AND
       \name Can M. Le \email canle@ucdavis.edu \\
       \addr Department of Statistics\\
       University of California, Davis\\
       Davis, CA 95616-5270, USA}

\editor{}

\maketitle

\begin{abstract}
Variational inference has been widely used in machine learning literature to fit various Bayesian models. In network analysis, this method has been successfully applied to solve community detection problems. Although these results are promising, their theoretical support is only for relatively dense networks, an assumption that may not hold for real networks. In addition, it has been shown recently that the variational loss surface has many saddle points, which may severely affect its performance, especially when applied to sparse networks. This paper proposes a simple way to improve the variational inference method by hard thresholding the posterior of the community assignment after each iteration. Using a random initialization that correlates with the true community assignment, we show that the proposed method converges and can accurately recover the true community labels, even when the average node degree of the network is bounded. Extensive numerical study further confirms the advantage of the proposed method over the classical variational inference and another state-of-the-art algorithm.
\end{abstract}

\begin{keywords}
Variational Inference, Community Detection, Posterior Threshold, Non-convex Optimization, Sparse Network
\end{keywords}

\section{Introduction}\label{sec:intro}
Variational inference (VI) is arguably one of the most important methods in Bayesian statistics \citep{jordan1999anintroduction}. It is often used to approximate posterior distributions in large-scale Bayesian inference problems when the exact computation of posterior distributions is not feasible. For example, this is the case for many high-dimensional and complex models which involve complicated latent structures. Mean-field variational method \citep{beal2003variational,jordan1999anintroduction} is the simplest VI algorithm, which approximates the posterior distribution of latent variables by a product measure. This method has been applied in a wide range of fields including network analysis \citep{airoldi2008mixed,celisse2012consistency}, computer science \citep{wang2013variational} and neuroscience \citep{grabska2017probabilistic}. The mean-field variational approximation is especially attractive in large-scale data analysis compared to alternatives such as Markov chain Monte Carlo (MCMC) \citep{gelfand1990sampling} due to its computational efficiency and scalability \citep{blei2017variational}.   

Although mean-field VI has been successfully applied to various Bayesian models, the theoretical behavior of these algorithms has not been fully understood. Most of the existing theoretical work focused on the global minimum of the variational method \citep{blei2003latent,bickel2013asymptotic,zhang2020convergence,wang2019frequentist}. 
%For example, \citet{bickel2013asymptotic} showed that the global minimum of the variational method is consistent under the stochastic block model and also obtained the asymptotic normality under the assumption of relatively dense networks. 
However, it is often intractable to compute the exact global minimizers of high-dimensional and complex models in practice. Instead, iterative algorithms such as Batch Coordinate Ascent Variational Inference (BCAVI) are often applied to estimate them \citep{blei2017variational}. 

This paper focuses on the statistical properties of BCAVI for estimating node labels of a network generated from stochastic block models (SBM) \citep{holland1983stochastic} or degree-corrected stochastic block models (DCSBM) \citep{karrer2011stochastic}. Let $n$ be the  number of nodes, indexed by integers $i\in[n]=\{1,2,...,n\}$, and $A\in\{0,1\}^{n\times n}$ be the adjacency matrix with $A_{ij}=1$ if $i$ and $j$ are connected. We only consider undirected networks without self-loops, so $A$ is symmetric, and $A_{ii}=0$ for all $i$. Under SBM or DCSBM, nodes are partitioned into $K$ communities with community labels $z_i\in[K]$. %drawn independently from a multinomial distribution with parameter vector $\pi\in\mathbb{R}^K$.  
The label vector $z=(z_1,...,z_n)^T\in[K]^n$ can also be encoded by a membership matrix $Z\in\mathbb{R}^{n\times K}$ such that the $i$-th row of $Z$ represents the membership of node $i$ with $Z_{iz_i}=1$ and $Z_{ik}=0$ if $k\neq z_i$. Conditioned on $Z$, $\{A_{ij},i<j\}$ are independent Bernoulli random variables with the corresponding probabilities $\{P_{ij}, i<j\}$. For SBM, the edge probability $P_{ij} = B_{z_i z_j}$ is determined by the block memberships of node $i$ and $j$, where $B\in\mathbb{R}^{K\times K}$ is the block probability matrix that may depend on $n$. For DCSBM, $P_{ij} = \theta_i\theta_j B_{z_i z_j}$, where $\theta_i$, $1\le i\le n$, are the parameters controlling the node degrees. Community detection aims to recover $z$ and estimate $B$ and $\theta_i$.

Many algorithms have been proposed for solving the community detection problem; we refer the reader to  \citep{abbe2017community} for a thorough review of this topic. 
%In \cite{lu2016statistical}, the authors proposed to use a variant of Lloyd's algorithm for SBM, which iteratively updates the label vector and the estimated connecting probability from each node to each community. In \cite{gao2017achieving}, the authors designed an algorithm based on majority vote with penalization, which updates the spectral clustering initializations by ``penalized neighbor voting''. 
Among all these promising algorithms, BCAVI remains one of the most popular approaches. Given the adjacency matrix $A$ and an initial estimate $Z ^{(0)}\in\mathbb{R}^{n\times K}$ of the true membership matrix $Z$, BCAVI aims to calculate the posterior $\mathbb{P}(Z|A)$ and use it to recover the true label vector $z$, for example, by setting the estimated label of node $i$ to $\mathrm{argmax}_{k\in[K]}\mathbb{P} (Z_{ik}|A)$. Since the exact calculation of $\mathbb{P}(Z|A)$ is infeasible because it involves summing $\mathbb{P}(A,Z)$ over an exponentially large set of label assignments $Z$, BCAVI approximates this posterior by a much simpler and tractable distribution $\mathbb{Q}(Z^{(t)})$ and iteratively updates $\mathbb{Q}(Z^{(t)})$, $t\in\mathbb{N}$, to improve the accuracy of this approximation. 

Regarding the consistency of BCAVI, \cite{zhang2020theoretical} shows that if $Z^{(0)}$ is sufficiently close to the true membership matrix $Z$ then $Z^{(t)}$ converges to $Z$ at an optimal rate. In \citet{sarkar2021random}, the authors give a complete characterization of all the critical points for optimizing $\mathbb{Q}(Z)$ and establish the behavior of BCAVI when $Z^{(0)}$ is randomly initialized in such a way so that it is correlated with $Z$. However, their theoretical guarantees only hold for relatively dense networks, an assumption that may not be satisfied for many real networks \citep{chin2015stochastic, le2017concentration,gao2017achieving}. Moreover, BCAVI often converges to uninformative saddle points that contain no information about community labels when networks are sparse. \citet{yin2020theoretical} addresses this problem by introducing dependent posterior approximation, but they still require strong assumptions on the network density. 

This paper considers the application of BCAVI in the sparse regime when the average node degree may be bounded. Our contribution is two-fold. First, we propose to improve BCAVI by hard thresholding the posterior of the label assignment at each iteration: After $Z^{(t)}$ has been calculated by BCAVI in the $t$-th iteration, the largest entry of each row $Z^{(t)}_i$ is set to one, and all other entries are set to zero. In view of the uninformative saddle points \citep{sarkar2021random} of BCAVI, this step appears to be a naive way to project the posterior back to the set of ``reasonable'' label assignments. However, as the hard threshold discards part of the label information in $Z^{(t)}$, it is not a priori clear why such a step is beneficial. Surprisingly, exhaustive simulations show that this adjustment often leads to significant improvement of BCAVI, especially when the network is sparse or the accuracy of the label initialization is poor. The resulting algorithm also outperforms a state-of-the-art method of \cite{gao2017achieving}, which is rate-optimal only when the average degree is unbounded. 

Second, we prove that when the network is generated from either SBM or DCSBM, BCAVI, with the threshold step, accurately estimates the community labels and model parameters even when the average node degree is bounded. This nontrivial result extends the theoretical guarantees in network literature for BCAVI under SBM and is new under DCSBM. It is in parallel with the extension of spectral clustering to the sparse regime where nodes of unusually large degrees are removed before spectral clustering is applied \cite{chin2015stochastic,le2017concentration}. However, in contrast to that approach, our algorithm does not require any pre-processing step for the observed network.

Similar to existing work on BCAVI \citep{zhang2020theoretical,zhang2020convergence,sarkar2021random,yin2020theoretical}, we emphasize that the goal of this paper is not to propose the most competitive algorithm for community detection, which may not exist for all model settings. Instead, we aim to refine VI, a popular but still not very well-understood machine learning method, and provide a deeper understanding of its properties, with the hope that the new insights developed in this paper can help improve the performance of VI for other problems \citep{blei2017variational}.%As a proof of concept, we also apply the threshold strategy to cluster data points generated from Gaussian mixtures and numerically show that it consistently improves the classical variational inference in this setting; theoretical analysis for the Gaussian mixtures is very different from that for SBM or DCSBM and is left for future research. 

\section{BCAVI with Posterior Threshold}\label{sec:TBCAVI}
\subsection{Classical BCAVI for SBM} 
\label{sec: bcavi in sbm}
To simplify the presentation, we first describe the classical BCAVI for SBM without the threshold step. In particular, consider SBM with $K$ communities and the likelihood given by 
\begin{equation*}
  \mathbb{P}(A,Z) = \prod_{1 \leq i<j \leq n}\prod_{1 \leq a,b \leq K} \left(B_{ab}^{A_{ij}}(1-B_{ab})^{1-A_{ij}}\right)^{Z_{ia}Z_{jb}} \cdot \prod_{i=1}^n \prod_{a=1}^K \pi_a^{Z_{ia}}.
\end{equation*}
%We omit the obvious range of those indexes for simplicity in the following content unless we need specify it. 
The goal of mean-field VI is to compute the posterior distribution of $Z$ given $A$:
\[\mathbb{P}(Z|A) = \frac{\mathbb{P}(Z,A)}{\mathbb{P}(A)}, \quad 
\mathbb{P}(A)=\sum_{Z\in\mathcal{Z}}\mathbb{P}(Z,A),\] 
where
$\mathcal{Z}=\{0,1\}^{n\times K}$ is the set of membership matrices.
Since $\mathbb{P}(A)$ involves a sum over the exponentially large set $\mathcal{Z}$, computing $\mathbb{P}(Z|A)$ exactly is not infeasible. The mean-field VI  approximates this posterior distribution by a family of product probability measures $\mathbb{Q}(Z) = \prod_{i=1}^n q_i(Z_i)$, where $Z_i$ is the $i$-th row of $Z$. The optimal $\mathbb{Q}(Z)$ is chosen to minimize the Kullback–Leibler divergence: 
\begin{equation*}
    \mathbb{Q}^{*}(Z) = \underset{\mathbb{Q}}{\mathrm{argmin}} \ \text{KL}(\mathbb{Q}(Z),\mathbb{P}(Z|A)).
    \label{qz}
\end{equation*} 
An important property of this divergence is that
\[\text{KL}(\mathbb{Q}(Z),\mathbb{P}(Z|A)) = \mathbb{E}_\mathbb{Q}[\log \mathbb{Q}(Z)] - \mathbb{E}_\mathbb{Q}[\log \mathbb{P}(Z,A)]+\log \mathbb{P}(A),\] 
where $\mathbb{E}_\mathbb{Q}$ denotes the expectation with respect to $\mathbb{Q}(Z)$. This property enables us to approximate the likelihood of the observed data $\mathbb{P}(A)$ by its evidence lower bound (ELBO), defined as 
\begin{equation*}
\text{ELBO}(\mathbb{Q}) = \mathbb{E}_\mathbb{Q}[\log \mathbb{P}(Z,A)]-\mathbb{E}_\mathbb{Q}[\log \mathbb{Q}(Z)].
\end{equation*}
Optimizing $\text{ELBO}(\mathbb{Q})$ also yields the optimal $\mathbb{Q}^*(Z)$ and $\text{ELBO}(\mathbb{Q}^{*})$ is the best approximation of $\mathbb{P}(A)$ we can obtain.

We consider $\mathbb{Q}(Z)$ of the form $\mathbb{Q}(Z)=\prod_{i}\prod_{a} \Psi_{ia}^{Z_{ia}}$, where $\Psi = \mathbb{E}_\mathbb{Q} [Z]$. Then 
\begin{eqnarray}
\label{elbo expression}
    \text{ELBO}(\mathbb{Q}) &=& \sum_{i<j}\sum_{ a,b } \Psi_{ia}\Psi_{jb}\Big[A_{ij}\log B_{ab} + (1-A_{ij})\log (1-B_{ab})\Big] \nonumber \\
    && +   \sum_{i} \sum_{a} \Psi_{ia}\log\left(\pi_a/\Psi_{ia}\right).
\end{eqnarray}
To optimize $\text{ELBO}(\mathbb{Q})$, we alternatively update $\Psi$ and $\{B,\pi\}$ at every iteration. 
In particular, we first fix $\Psi$ (the first iteration requires an initial value for $\Psi$) and set new values for $\{B,\pi\}$ to be the solution of the following equations: 
$$\frac{\partial \text{ELBO}(\mathbb{Q})}{\partial B_{ab}}=0, \quad \frac{\partial\text{ELBO}(\mathbb{Q})}{\partial \pi_{a}}=0, \quad a,b\in[K].$$    
Depending on whether $a=b$, the first equation yields the following update for $B_{ab}$:
\begin{align}
\label{update B}
    B_{aa} = \frac{\sum_{i<j} A_{ij} \Psi_{ia} \Psi_{ja}}{\sum_{i<j} \Psi_{ia} \Psi_{ja} } , \quad
    B_{ab} = \frac{\sum_{i<j} A_{ij} \left(\Psi_{ia} \Psi_{jb} + \Psi_{ib} \Psi_{ja}\right)}{\sum_{i<j} \left(\Psi_{ia} \Psi_{jb} + \Psi_{ib} \Psi_{ja}\right) }, \quad a < b.
\end{align}
To update $\pi$, we treat $\pi_a$, $a \in [K-1]$, as independent parameters and $\pi_K = 1-\sum_{a=1}^{K-1} \pi_a$,  For each $a \in [K-1]$, taking the derivative with respect to $\pi_a$ and setting it to zero, we get
\[ \frac{\partial\text{ELBO}(\mathbb{Q})}{\partial \pi_{a}} = \frac{\sum_{i} \Psi_{ia}}{ \pi_a} - \frac{\sum_{i} \Psi_{iK}}{ \pi_K} = 0,\]
which yields 
\begin{equation}
\label{update pi}
  \pi_a = \frac{\sum_{i} \Psi_{ia}}{\sum_{i}\sum_{b} \Psi_{ib}}, \quad a \in [K].
\end{equation}
Holding $\{B,\pi\}$ fixed, we then update $\Psi$ based on the equation  
\begin{align*}
   \frac{\partial \text{ELBO}(\mathbb{Q})}{\partial \Psi_{ia}} = 0, \quad i\in[n], \quad a \in [K-1],
\end{align*}
which gives for each $i\in[n]$ and $a\in[K]$:
\begin{equation}
\label{update Psi}
\Psi_{ia} = \frac{\pi_a \exp \left\{ \sum_{j: j\neq i} \sum_{b=1}^K \Psi_{jb}\left[A_{ij} \log B_{ab}+ (1-A_{ij}) \log (1 - B_{ab})\right]  \right\} } { \sum_{a'} \pi_{a'} \exp \left\{ \sum_{j: j\neq i} \sum_{b=1}^K \Psi_{jb}\left[A_{ij} \log B_{a'b}+ (1-A_{ij}) \log (1 - B_{a'b})\right]  \right\} }.    
\end{equation}

\subsection{Classical BCAVI for DCSBM}\label{sec: bcavi in dcsbm}

In this section, we extend the classical BCAVI to DCSBM, a more general class of network models that account for node-degree heterogeneity. Recall that under DCSBM, the edge probabilities are
$P_{ij} = \theta_i \theta_j B_{z_i z_j}$, where $\theta = (\theta_1,...,\theta_n)^T$ is the vector of degree parameters. For identifiability, 
we follow \cite{zhao2012consistency} and make the following assumption.

\begin{assumption}[Degree parameters]
\label{ass:theta}
First, $\theta_1,...,\theta_n$ are independently and identically generated from an unknown distribution with expected value one. Second, there exist positive constants $C_1$ and $C_2$ such that $C_1 \le \theta_i \le C_2 $ for all $1 \leq i \leq n$.
\end{assumption}

%Under DCSBM, the conditional expected degree of node $i$ given $\theta_1,...,\theta_n$ is proportional to $\theta_i$ when the sizes of all communities are similar. 
Conditioning on the degree parameters, we now derive the BCAVI update equations. In particular, we follow \cite{karrer2011stochastic} and use Poisson distributions to approximate Bernoulli distributions in the full likelihood. As pointed out in \cite{karrer2011stochastic}, this approximation simplifies the likelihood function and leads to more tractable updates of the unknown parameters.
%In fact, the same updated equations can also be derived from Bernoulli likelihood by ignoring those small order terms.  
Thus, instead of using the likelihood function of the DCSBM, we consider
\begin{equation*}
  \mathbb{P}(A,Z) = \prod_{ i<j}\prod_{ a,b } \left( \frac{ (\theta_i \theta_j B_{ab}) ^ { A_{ij} } e^{- \theta_i \theta_j B_{ab}} }{ (A_{ij} )!} \right)^{Z_{ia}Z_{jb}} \cdot \prod_{i} \prod_{a} \pi_a^{Z_{ia}}.
\end{equation*}
Similar to Section~\ref{sec: bcavi in sbm}, the evidence lower bound is given by
\begin{eqnarray}
\label{elbo expression in dc}
    \text{ELBO}(\mathbb{Q}) &=& \sum_{i<j}\sum_{ a,b } \Psi_{ia}\Psi_{jb}\Big[A_{ij}\log ( \theta_i \theta_j B_{ab}) - \theta_i \theta_j B_{ab} \Big] + \sum_{i} \sum_{a} \Psi_{ia}\log\left(\pi_a/\Psi_{ia}\right).
\end{eqnarray}
To optimize $\text{ELBO}(\mathbb{Q})$, we alternatively update $\{\Psi,\theta\}$ and $\{B,\pi\}$ at every iteration. 
In particular, we first fix $\{\Psi,\theta\}$ (the first iteration requires an initial value for $\{\Psi,\theta\}$) and set new values for $\{B,\pi\}$ to be the solution of the following equations: 
$$\frac{\partial \text{ELBO}(\mathbb{Q})}{\partial B_{ab}}=0, \quad \frac{\partial\text{ELBO}(\mathbb{Q})}{\partial \pi_{a}}=0, \quad a,b\in[K].$$    
Depending on whether $a=b$, the first equation yields the following update for $B_{ab}$:
\begin{align}
\label{update B in dc}
    B_{aa} = \frac{\sum_{i<j} A_{ij} \Psi_{ia} \Psi_{ja}}{\sum_{i<j} \Psi_{ia} \Psi_{ja} \theta_i \theta_j } , \quad
    B_{ab} = \frac{\sum_{i<j} A_{ij} \left(\Psi_{ia} \Psi_{jb} + \Psi_{ib} \Psi_{ja}\right)}{\sum_{i<j} \left(\Psi_{ia} \Psi_{jb} + \Psi_{ib} \Psi_{ja}\right) \theta_i \theta_j }, \quad a < b.
\end{align}
The update equation for $\pi$ is the same as in Section~\ref{sec: bcavi in sbm}:
\begin{equation}
\label{update pi in dc}
  \pi_a = \frac{\sum_{i} \Psi_{ia}}{\sum_{i}\sum_{b} \Psi_{ib}}, \quad a \in [K].
\end{equation}
Holding $\{B,\pi\}$ fixed, we then update $\Psi$ based on the equation  
\begin{align*}
   \frac{\partial \text{ELBO}(\mathbb{Q})}{\partial \Psi_{ia}} = 0, \quad i\in[n], \quad a \in [K-1],
\end{align*}
which gives for each $i\in[n]$ and $a\in[K]$:
\begin{equation}
\label{update Psi in dc}
\Psi_{ia} = \frac{\pi_a \exp \left\{ \sum_{j: j\neq i} \sum_{b=1}^K \Psi_{jb}\left[A_{ij} \log (\theta_i \theta_j B_{ab} ) - \theta_i \theta_j B_{a b} \right]  \right\} } { \sum_{a'} \pi_{a'} \exp \left\{ \sum_{j: j\neq i} \sum_{b=1}^K \Psi_{jb}\left[A_{ij} \log (\theta_i \theta_j B_{a'b} ) - \theta_i \theta_j b_{a' b} \right]  \right\} }.    
\end{equation}
To update the degree parameters, we take the derivative and set it to zero:
\begin{align*}
\frac{\partial \text{ELBO}(\mathbb{Q})}{\partial \theta_i} = 0, \quad i \in [n].
\end{align*}
The above equation can be simplified to
\begin{align}
\label{update theta in dcsbm}
\frac{\sum_j A_{ij}}{ \theta_i} = \sum_{j:j \neq i } \sum_{a, b} \Psi_{ia} \Psi_{jb} \theta_j B_{ab} , \quad i \in [n].
\end{align}

\subsection{Posterior Threshold}\label{sec:posterior threshold}
The only difference between our proposed method and the classical BCAVI in previous sections is that we add a threshold step after each update of $\Psi$, setting the largest value of each row $\Psi_i$ of $\Psi$ to one and all other values of the same row to zero. That is, for each $i \in [n]$, we further update:
\begin{equation}
\label{threshold Psi}
\Psi_{ i k_i} = 1, \quad \Psi_{ i a} = 0, \quad a \neq k_i,
\end{equation}
where $k_i = \arg \max_{a} \{\Psi_{ia}\}$ and if there are ties, we arbitrarily choose one of the values of $k_i$. The summary of this algorithm for SBM, which we will refer to as Threshold BCAVI or T-BCAVI for SBM, is provided in Algorithm~\ref{alg}.

\begin{algorithm}[h]
%\DontPrintSemicolon 
  \KwInput{Adjacency matrix A, initialization $\Psi^{(0)}$ and number of iterations $s$.}

\For{$k\leftarrow 1$ \KwTo s} 
{ 
  Compute $B^{(k)}$ according to Equation~\eqref{update B} \;
  Compute $\pi^{(k)}$ according to Equation~\eqref{update pi}\;
  Compute $\Psi^{(k)}$ according to Equation~\eqref{update Psi}\;
  Update $\Psi^{(k)}$ by hard threshold according to Equation~\eqref{threshold Psi}\;
}

\KwOutput{Estimation of label matrix $\Psi^{(s)}$, estimation of block probability matrix $B^{(s)}$ and estimation of parameters $\pi^{(s)}$.}

\caption{Threshold BCAVI for SBM}
\label{alg}
\end{algorithm}

The summary of the algorithm for DCSBM is provided in Algorithm~\ref{alg in DCSBM}. 

\begin{algorithm}
%\DontPrintSemicolon
  
  \KwInput{Adjacency matrix A, initialization $\Psi^{(0)}$ and number of iterations $s$.}
 Initialize the degree parameters: $\theta^{(0)}_i = \frac{\sum_{j} A_{i j} n}{\sum_{i j} A_{i j}}, i \in [n].$

\For{$k\leftarrow 1$ \KwTo s} 
{ 
  
  Compute $B^{(k)}$ according to Equation~\eqref{update B in dc}\;
   Compute $\pi^{(k)}$ according to Equation~\eqref{update pi in dc}\; 
  Compute $\Psi^{(k)}$ and update $\Psi^{(k)}$ by hard threshold according to Equation~\eqref{update Psi in dc} and Equation~\eqref{threshold Psi} \; 
  Compute $\theta^{(k)}$ according to Equation~\eqref{update theta in dcsbm} \;
}

\KwOutput{Estimation of label matrix $\Psi^{(s)}$, estimation of block probability matrix $B^{(s)}$, estimation of size parameters $\pi^{(s)}$ and estimation of degree parameters $\theta^{(s)}$.}

\caption{Threshold BCAVI for DCSBM}
\label{alg in DCSBM}
\end{algorithm}

Intuitively, thresholding the label posterior forces variational inference to avoid saddle points by performing a majority vote at each iteration. From an algorithmic point of view, it also performs a selection similar to the majority vote. As some versions of the majority vote have been proven consistent in estimating the community labels \cite{chin2015stochastic,gao2017achieving,lu2016statistical}, this explains why our proposed method is always biased toward the correct membership. The main difference between the selection induced by our harsh thresholding step and other existing versions of the majority vote is that it does not perform on the adjacency matrix. Instead, it is computed based on an adaptive matrix that also depends on the parameters of the label posterior approximation, resulting in a better performance in challenging settings when the networks are sparse, or the initializations are poor. 

Moreover, unlike the naive majority vote that does not take into account model parameters or the majority vote with penalization \cite{gao2017achieving}, which requires a careful design of the penalty term, T-BCAVI performs the selection step automatically by completely relying on its posterior approximation. Once a strategy for posterior approximation is chosen, T-BCAVI does not require an extra model-specific (and often nontrivial) step to determine efficient majority vote updates. From a methodology point of view, this property is desirable because it allows us to readily extend the algorithm to other applications of variational inference, such as clustering mixtures of exponential families.

\subsection{Initialization}\label{sec:initialization}

Similar to other variants of BCAVI \citep{zhang2020theoretical,sarkar2021random,yin2020theoretical}, our algorithm requires an initial value $\Psi^{(0)}=Z^{(0)}$ that is correlated with the true membership matrix $Z$. A natural way to obtain such an initialization is through network data splitting \citep{chin2015stochastic,li2016network}, a popular approach in machine learning and statistics. For this purpose, we fix $\tau\in (0,1/2)$ and create $A^{(\text{init})}\in\{0,1\}^{n\times n}$ by setting all entries of the adjacency matrix $A$ to zero independently with probability $1-\tau$. A spectral clustering algorithm \citep{abbe2017community, qin2013regularized} is applied to $A^{(\text{init})}$ to find the membership matrix $Z^{(0)}$. We then apply our algorithm on $A-A^{(\text{init})}$ using the initialization $\Psi^{(0)}=Z^{(0)}$. 
%Note that $A-A^{(\text{init})}$ and $Z^{(0)}$ are independent by the construction of $A^{(\text{init})}$. 
It is known that under certain conditions on SBM or DCSBM, $Z^{(0)}$ is provably correlated with the true membership matrix, even when the average node degree is bounded \citep{chin2015stochastic,le2017concentration,qin2013regularized}. Numerical study of this approach is given in Section~\ref{simulation}. For the initialization step for $\theta$, we set each element proportional to the observed degree \citep{karrer2011stochastic}. 

\medskip

\section{Theoretical Results}
\label{sec:thoery}
This section provides the theoretical guarantees for the  Threshold BCAVI algorithm described in Section~\ref{sec:TBCAVI}. Although this algorithm works for any block model (either SBM or DCSBM), we will focus on the settings studied by \cite{sarkar2021random} and \cite{yin2020theoretical} for the theoretical analysis. Specifically, we make the following assumption, where, for two sequences $(a_n)_{n=1}^\infty$ and $(b_n)_{n=1}^\infty$ of positive numbers, we write $a_n\asymp b_n$ if there exists a constant $C>0$ such that $a_n/C\le b_n\le Ca_n$. 

\begin{assumption}[Block model assumption]\label{ass:model assumption}
    Consider block models with $K$ communities of equal sizes and the block probability matrix given by $B_{aa} = p$ for all $a$ and $B_{ab} = q$ for all  $a\neq b$. Moreover, $p>q>0$ can vary with $n$ so that $p \asymp q \asymp p - q \asymp \rho_n$, where $\rho_n \to 0$ if $n \to \infty$.
\end{assumption}

Network model satisfying Assumption~\ref{ass:model assumption}  provides a benchmark for studying the behavior of various community detection algorithms \citep{mossel2012stochastic,amini2013pseudo,gao2017achieving,abbe2017community}. Although we only provide theoretical guarantees for networks with equal community sizes, extending the results to networks for which the ratios of community sizes are bounded is straightforward. In the context of VI, these model assumptions significantly simplify the update rules of BCAVI and make the theoretical analysis tractable \citep{sarkar2021random,yin2020theoretical}.

In addition, we assume that the initialization $Z^{(0)}$ can be obtained from the true membership matrix $Z$ by independently perturbing its entries as follows (note that $\varepsilon=(K-1)/K$ corresponds to random guessing). 

\begin{assumption}[Random initialization]
\label{ass:perturb}
Let $\varepsilon\in(0,(K-1)/K)$ be a fixed error rate. Let $z$ be the true label vector and $Z$ be the encoded true membership matrix. Assume that the initialized label vector $z^{(0)}$ is a vector of independent random variables with 
$\mathbb{P}(z^{(0)}_i = z_i) = 1-\varepsilon$ and $\mathbb{P}(z^{(0)}_i = a) = \varepsilon/(K-1), 
\quad a \neq z_i$. Let $Z^{(0)}$ be the encoded membership matrix according to $z^{(0)}$. Moreover, $Z^{(0)}$ and the adjacency matrix $A$ are independent.
\end{assumption}

This assumption is slightly stronger than what we can obtain from the practical initialization involving data splitting and spectral clustering described in Section~\ref{sec:initialization}. It simplifies our analysis and helps us highlight the essential difference between T-BCAVI and BCAVI; a similar assumption is also used in \cite{sarkar2021random}. Based on the proofs provided in Appendix~\ref{all_proof}, it is also possible to drop the independent assumption if the initialization is sufficiently close to the true membership matrix.

To evaluate the clustering accuracy of Threshold BCAVI, we use $\ell_1$ norm to measure the difference between estimated membership matrix $\Psi^{(s)}$ and true membership matrix $Z$ after $s$ steps, where for a matrix $M \in \mathbb{R}^{n \times m } $, we use $||M||_1 = \sum_{i,j } |M_{i j}|$ to denote its $\ell_1$ norm. In \cite{zhang2020theoretical}, they considered all bijections $\phi$ from $[K]$ to $[K]$ and evaluated the error by $ \inf_{\phi} ||\Psi^{(s)} -  \phi(Z)||_1$. In our paper, since we assume that the initialization is from the true membership matrix $Z$ by independently perturbing its entries, we will use $||\Psi^{(s)} -Z||_1$ throughout the paper. It is obvious to notice that $ \inf_{\phi} ||\Psi^{(s)} -  \phi(Z)||_1 \leq ||\Psi^{(s)} -Z||_1$, which means the error defined in \cite{zhang2020theoretical} is always bounded by our expression.

\subsection{Stochastic Block Models}\label{sec:theory sbm}

Under Assumption~\ref{ass:model assumption}, the BCAVI algorithm for SBM has the following update rule:
\begin{align}
\label{update of Psi in proof}
  \Psi_{ia}^{(s)} &\propto \exp\left\{2t^{(s)} \sum_{j:j\neq i} \Psi_{ja}^{(s-1)} (A_{ij} - \lambda^{(s)})\right\}, \quad i = 1,2,...,n,
\end{align}
where the parameters $t^{(s)}$ and $\lambda^{(s)}$ are described below.
After this update we add a threshold step to update $\Psi^{(s)}$ by:
\begin{equation*}
\Psi_{ i k_i}^{(s)} = 1, \quad \Psi_{ i a}^{(s)} = 0, \quad a \neq k_i,\quad i = 1,2,...,n,
\end{equation*}
where $k_i = \arg \max_{a} \{\Psi_{ia}^{(s)}\}$ and if there are ties, we arbitrarily choose one of the values of $k_i$. The update equation for $t^{(s)}$ and $\lambda^{(s)}$ are given by:
\begin{align}
 t^{(s)} = \frac{1}{2}\log \frac{p^{(s)}\left(1-q^{(s)}\right)}{q^{(s)}\left(1-p^{(s)}\right)},\quad
 \lambda^{(s)} = \frac{1}{2t^{(s)}} \log \frac{1-q^{(s)}}{1-p^{(s)}},    
 \label{eq:tlambda}
\end{align}
where $p^{(s)},q^{(s)}$ are the estimates of $p,q$ at the $s$-th iteration, given as follows:
\begin{align}
\label{update of pq in proof}
  p^{(s)} = \frac{\sum_{i<j} \sum_a \Psi_{ia}^{(s-1)} \Psi_{ja}^{(s-1)} A_{ij} }{ \sum_{i<j} \sum_a \Psi_{ia}^{(s-1)} \Psi_{ja}^{(s-1)}}, \quad
  q^{(s)} = \frac{\sum_{i<j} \sum_{a\neq b} \Psi_{ia}^{(s-1)} \Psi_{jb}^{(s-1)} A_{ij} }{ \sum_{i<j} \sum_{a \neq b} \Psi_{ia}^{(s-1)} \Psi_{jb}^{(s-1)}}.
\end{align}
These formulas follow from a direct calculation; for details, see Appendix~\ref{derivation of updated eq}.

We are now ready to state the main theoretical results for SBM. 

\begin{proposition}[Parameter estimation for SBM]
\label{prop:parameter estimation}
Consider SBM that satisfies Assumption~\ref{ass:model assumption}. In addition, assume that the initialization for the Threshold BCAVI satisfies Assumption~\ref{ass:perturb} with fixed error rate $\varepsilon\in(0,(K-1)/K)$. Let 
$d=(n/K - 1)p + n(K-1)q/K $
denote the expected average degree.
Then there exist constants $C,C',c>0$ only depending on $\varepsilon$ and $K$ such that if $d > C$ then with high probability $1-n^{-r}$ for some constant $r>0$,   
\[ t^{(1)} \ge c,\quad \lambda^{(1)} \le C' \rho_n.\]
Moreover, for $s \geq 2$,
\begin{align*}
\big|p^{(s)}-p\big| &\le p\exp(-c d),\quad \big|q^{(s)}-q\big| \le q\exp(-c d),\\
\big|t^{(s)}-p\big| &\le t\exp(-c d),\quad \big|\lambda^{(s)}-\lambda\big| \le \lambda\exp(-c d).
\end{align*} 
\end{proposition}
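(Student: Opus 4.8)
The plan is to prove the proposition by induction on $s$, simultaneously carrying an auxiliary bound on the labeling error $\|\Psi^{(s)}-Z\|_1$ that feeds the parameter estimates at the next iteration; the parameter bounds stated here are exactly the ``parameter side'' of this induction, while the ``label side'' is the engine that makes them sharp. For the base case I would analyze the first iteration directly. Since $\Psi^{(0)}=Z^{(0)}$ is a hard membership matrix, $p^{(1)}$ and $q^{(1)}$ in \eqref{update of pq in proof} are ratios whose denominators count same/different estimated-label pairs and whose numerators are sums of independent Bernoullis $A_{ij}$ given $Z^{(0)}$. Writing $u=1-\varepsilon$ and $v=\varepsilon/(K-1)$, two nodes sharing an estimated label have the same true label with probability $\alpha=u^2+(K-1)v^2$ (by the symmetry of Assumption~\ref{ass:perturb} under equal community sizes), so $\mathbb{E}[p^{(1)}\mid Z^{(0)}]\approx \alpha p+(1-\alpha)q$; likewise $\mathbb{E}[q^{(1)}\mid Z^{(0)}]\approx \beta p+(1-\beta)q$ with $\beta=2uv+(K-2)v^2$, and the key algebraic fact is $\alpha-\beta=(u-v)^2>0$, a constant depending only on $\varepsilon,K$. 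Bernstein's inequality for the numerators (mean $\asymp n^2\rho_n\asymp nd$) together with standard concentration of the pair counts gives $p^{(1)},q^{(1)}$ within relative error $O((nd)^{-1/2})$ of these means with probability $1-n^{-r}$. Substituting into \eqref{eq:tlambda} and using $p,q\to 0$ (so $1-p^{(1)},1-q^{(1)}\approx 1$) yields $t^{(1)}\approx \tfrac12\log(p^{(1)}/q^{(1)})$, bounded below by a constant because $(p-q)/q\asymp 1$ and $\alpha>\beta$, and $\lambda^{(1)}\approx (p^{(1)}-q^{(1)})/(2t^{(1)})\asymp \rho_n$; these are precisely $t^{(1)}\ge c$ and $\lambda^{(1)}\le C'\rho_n$.

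The inductive engine is the following lemma, which I would establish and reuse at every step: if $t^{(s)}\ge c$ and $\lambda^{(s)}\le C'\rho_n$ (true at $s=1$, and near the population values $t,\lambda$ for $s\ge 2$), then the thresholded update $\Psi^{(s)}$ misclassifies any fixed node with probability at most $K\exp(-cd)$. Because $t^{(s)}>0$, the update \eqref{update of Psi in proof} assigns node $i$ the label $\arg\max_a S_{ia}$ with $S_{ia}=\sum_{j:\,z^{(s-1)}_j=a}(A_{ij}-\lambda^{(s)})$; the expected gap between the correct score and any competitor is $\asymp (n/K)(u-v)(p-q)\asymp d$, while the fluctuation is a sum of independent Bernoullis of variance $\asymp d$, so Bernstein gives deviation probability $\exp(-cd)$. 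Summing over nodes controls $\mathbb{E}\|\Psi^{(s)}-Z\|_1\le n\exp(-cd)$, and a concentration-of-the-count argument upgrades this to hold with probability $1-n^{-r}$. Feeding $\|\Psi^{(s-1)}-Z\|_1\le n\exp(-cd)$ back into \eqref{update of pq in proof}, the contamination from the $\exp(-cd)$-fraction of misclassified nodes perturbs the numerators and denominators by a relative $\exp(-cd)$, which gives $|p^{(s)}-p|\le p\exp(-cd)$ and $|q^{(s)}-q|\le q\exp(-cd)$; the bounds for $t^{(s)},\lambda^{(s)}$ (around the population $t,\lambda$ obtained from \eqref{eq:tlambda} with $p,q$ in place of $p^{(s)},q^{(s)}$) then follow from a Taylor/Lipschitz expansion of those smooth maps, transporting the relative errors in $p^{(s)},q^{(s)}$ to the stated multiplicative bounds.

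The hard part will be the dependence structure. The quantities $t^{(s)}$, $\lambda^{(s)}$, and the previous labels $\Psi^{(s-1)}$ are functions of the \emph{entire} adjacency matrix, so the scores $S_{ia}$ classifying node $i$ are not sums of variables independent of the penalty, and the per-node misclassification events are correlated across $i$; a naive Bernstein step is therefore invalid. I would resolve this with a leave-one-out/decoupling argument: replace $\lambda^{(s)},t^{(s)}$ and the other nodes' labels by versions computed after deleting node $i$, show that removing one node changes these global ratios over $\Theta(n^2)$ pairs by only $O(1/n)$, and then apply concentration to the now genuinely independent score. The same weak-dependence (via bounded differences) is what I would use to concentrate the total error count for the $1-n^{-r}$ statement. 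The other point requiring care is verifying that the induction is a genuine contraction, so that the $\exp(-cd)$ error does not accumulate with $s$; this hinges on the expected score gap $\asymp d$ dominating the fluctuation \emph{uniformly} in $s$, which is guaranteed precisely because the parameter bounds keep $t^{(s)}$ bounded below and $\lambda^{(s)}=O(\rho_n)$ throughout.
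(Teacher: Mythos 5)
Your base case matches the paper's Lemma~\ref{lemma:parameter estimation} almost exactly: the same conditional means $\alpha p+(1-\alpha)q$ with $\alpha=(1-\varepsilon)^2+\varepsilon^2/(K-1)$, the same key fact $\alpha-\beta>0$, and the same Bernstein/Chebyshev concentration, so the $s=1$ claims for $t^{(1)}$ and $\lambda^{(1)}$ are fine. The overall induction structure (parameter bounds feeding label bounds feeding parameter bounds) is also the paper's structure. The gap is in how you propose to handle the dependence of $\Psi^{(s-1)}$ on $A$ at steps $s\ge 2$, which is the central difficulty in the bounded-degree regime. Your leave-one-out plan controls how deleting node $i$ perturbs the \emph{global ratios} $t^{(s)},\lambda^{(s)}$ (an $O(1/n)$ effect, as you say), but it does not control how it perturbs the \emph{labels} $\Psi^{(s-1),(-i)}_j$ of the other nodes. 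When $d$ is bounded, node $i$ has $O(1)$ neighbors, each of whose score gap is only $\Theta(d)=\Theta(1)$, so deleting node $i$ can flip a constant number of neighboring labels, which can cascade through subsequent iterations of the thresholded map; nothing in your sketch bounds this propagation, and without it the score $S_{ia}$ is not a sum of terms independent of the summation index set. The same dependence problem undermines your ``concentration-of-the-count'' step, since the per-node misclassification indicators are then correlated through $\Psi^{(s-1)}$.

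The paper resolves this with a different device that your proposal does not contain: it introduces a degree-truncated matrix $A'$ (all degrees at most $C_0 d$), proves that the total number of removed edges is at most $2n\exp(-C_1 d)$ (Lemma~\ref{lemma:quantity of removed edges}) and that $\|A'-\mathbb{E}A\|^2\le C_2 d$ (Lemma~\ref{lemma:concentration of regularized adjacency matrices}), and then splits the noise at node $i$ into three pieces: a piece involving $A-A'$ (controlled in aggregate by the removed-edge count), a piece involving $(A'-P)(\Psi^{(s-1)}-Z)$ (controlled \emph{deterministically and uniformly over all $\Psi^{(s-1)}$ with $\|\Psi^{(s-1)}-Z\|_1\le nG_{s-1}$} via Cauchy--Schwarz and the spectral norm bound), and a piece involving only the fixed true labels $Z$ (genuinely independent summands, with the row-symmetry of $A$ handled by an independent-copy/splitting trick). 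Because the second piece is bounded uniformly over the admissible set of label matrices, no leave-one-out or cascade analysis is needed. Unless you can actually carry out the perturbation analysis of the full iterated, thresholded trajectory under node deletion in the sparse regime --- which is a substantial and unaddressed piece of work --- your argument for $s\ge 2$ does not go through as written, and you should adopt the regularization-plus-spectral-norm decomposition instead.
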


Proposition~\ref{prop:parameter estimation} shows that if the initialization is better than random guessing, then $p^{(s)}$ and $q^{(s)}$ are sufficiently accurate after only a few iterations, even when $d$ is bounded but sufficiently large. This result is important in proving the accuracy of the Threshold BCAVI in estimating community labels.

\begin{theorem}[Clustering accuracy for SBM]
\label{unknown pa}
Suppose that conditions of Proposition~\ref{prop:parameter estimation} hold.
Then there exist constants $C, c>0$ only depending on $\varepsilon$ and $K$ such that if $d \ge C$ then with high probability $1-n^{-r}$ for some constant $r>0$,
\[\|\Psi^{(s)} - Z\|_1 \leq n\exp(-c d),\]
for every $s\ge 1$, where $\|.\|_1$ denotes the $\ell_1$ norm.
Moreover, for $s \geq 2$, 
\begin{align*}
||\Psi^{(s)} - Z||_1   
 \leq  \frac{C_1 K}{ \delta^2 d} || \Psi^{(s-1)}- Z ||_1 + \frac{nK\exp(-C_2 d)}{\delta d} +  Kn \exp\left(\frac{-\delta^2d}{1+\delta}\right),
\end{align*}
where $\delta = \frac{C_3(p-q)}{p + Kq}$ and $C_1,C_2,C_3>0$ are some absolute constants.
\end{theorem}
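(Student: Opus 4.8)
The plan is to reduce the statement to a count of misclassified nodes and then prove a one-step recursion. Because the threshold step makes each row $\Psi^{(s)}_i$ a standard basis vector, $\|\Psi^{(s)}-Z\|_1 = 2\,|\{i : \hat z^{(s)}_i \neq z_i\}|$, where $\hat z^{(s)}_i$ is the label assigned to node $i$ at step $s$. Since $t^{(s)}>0$ by Proposition~\ref{prop:parameter estimation} and $\exp$ is monotone, thresholding assigns $\hat z^{(s)}_i = \arg\max_a S_i(a)$ with $S_i(a) = \sum_{j\neq i}\Psi^{(s-1)}_{ja}(A_{ij}-\lambda^{(s)})$; in particular the magnitude of $t^{(s)}$ is irrelevant and only the threshold $\lambda^{(s)}$ enters quantitatively. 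Thus a node $i$ with $z_i=a^\ast$ is misclassified only if $S_i(a^\ast)-S_i(b)\le 0$ for some $b\neq a^\ast$. I would split this margin as $S_i(a^\ast)-S_i(b)=M_i(b)+\mathrm{pert}_i(b)$, where $M_i(b)=\sum_{j\neq i}(Z_{ja^\ast}-Z_{jb})(A_{ij}-\lambda)$ is an \emph{oracle} margin built from the true labels and threshold, and $\mathrm{pert}_i(b)$ collects the errors of the previous iterate together with the shift from using $\lambda^{(s)}$ in place of $\lambda$. A direct computation gives $\mathbb E[M_i(b)]\asymp (n/K)(p-q)\asymp \delta d$ with fluctuations of order $\sqrt d$, and $\mathrm{pert}_i(b)$ is supported on the nodes mislabeled at step $s-1$, hence controlled by the edges from $i$ into the previous error set $\mathrm{Err}^{(s-1)}$.

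A node can therefore fail only through one of three mechanisms, which I expect to generate the three terms of the recursion: (i) its oracle margin is already nonpositive; (ii) the perturbation from $\mathrm{Err}^{(s-1)}$ overwhelms the signal $\asymp\delta d$; or (iii) the parameter error $\lambda^{(s)}-\lambda$ tips it across threshold. Mechanism (i) depends only on $A$ and $Z$: a Bernstein/Chernoff bound for the sum of independent Bernoulli contributions gives $\mathbb P(M_i(b)\le 0)\le \exp(-\delta^2 d/(1+\delta))$, the $(1+\delta)$ coming from the variance, and summing expected counts over the $n$ nodes and the $K-1$ competitors produces the additive term $Kn\exp(-\delta^2 d/(1+\delta))$. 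The first displayed bound at $s=1$ follows from the same estimate: by Assumption~\ref{ass:perturb} the initialization $Z^{(0)}$ is independent of $A$, so conditioning on $Z^{(0)}$ and rerunning the concentration bound with the constant-correlated start gives per-node error $e^{-cd}$ and hence $\|\Psi^{(1)}-Z\|_1\le n e^{-cd}$. For mechanism (iii), Proposition~\ref{prop:parameter estimation} gives $|\lambda^{(s)}-\lambda|\le \lambda e^{-cd}$ for $s\ge 2$, which perturbs each score by $O(d\,e^{-cd})$; feeding this into the counting argument against the signal $\delta d$ yields the middle term $nK e^{-C_2 d}/(\delta d)$.

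The main obstacle is mechanism (ii), which must deliver the contraction factor $C_1K/(\delta^2 d)$. A node flips this way only when $|\mathrm{pert}_i(b)|\ge c\delta d$, so the natural tool is a second-moment count, $\#\{i:|\mathrm{pert}_i(b)|\ge c\delta d\}\le (c\delta d)^{-2}\sum_i \mathrm{pert}_i(b)^2$; matching the target factor $1/(\delta^2 d)$ forces the bound $\sum_i \mathrm{pert}_i(b)^2\le C\, d\,\|\Psi^{(s-1)}-Z\|_1$. Expanding the square, the diagonal terms give $\sum_{j\in\mathrm{Err}^{(s-1)}}\deg(j)\asymp d\,|\mathrm{Err}^{(s-1)}|$, which is exactly the right order; the real difficulty is the off-diagonal terms $\sum_i\sum_{j\neq k\in\mathrm{Err}^{(s-1)}}A_{ij}A_{ik}$, i.e.\ common-neighbor counts over pairs in the error set, compounded by the fact that $\mathrm{Err}^{(s-1)}$ is itself a function of $A$. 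I would resolve both issues at once by pushing all randomness of $A$ into a single high-probability event $\mathcal E$, depending on $A$ and $Z^{(0)}$ alone, on which: every degree is $O(d)$; the edge count between any pair of vertex subsets is uniformly controlled; the common-neighbor sums over an arbitrary subset are negligible relative to the diagonal; and the oracle-margin count of mechanism (i) holds. Establishing $\mathcal E$ with $\mathbb P(\mathcal E)\ge 1-n^{-r}$ uniformly over all $\binom{n}{m}$ candidate error sets is the technical heart of the proof, and it is precisely here that the bounded-degree regime bites: the usual spectral concentration of $A$ fails when $d=O(1)$, so it must be replaced by combinatorial edge-counting inequalities strong enough to survive the union bound.

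Once $\mathcal E$ is fixed, the recursion for $s\ge 2$ holds deterministically and can be iterated. Assumption~\ref{ass:model assumption} forces $\delta\asymp 1/K$, so the contraction factor is $O(K^3/d)$, which is below $1/2$ as soon as $d\ge C$, and both additive terms are at most $n e^{-c'd}$. Hence the fixed point of the recursion is $O(n e^{-cd})$, and a short induction starting from the $s=1$ bound closes $\|\Psi^{(s)}-Z\|_1\le n e^{-cd}$ for every $s$, completing the argument.
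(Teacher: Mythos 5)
Your overall architecture --- reducing to a count of misclassified nodes, splitting each margin into an oracle signal of order $\delta d$ plus a perturbation supported on the previous error set, and treating the oracle term by Bernstein with a symmetrization to handle the dependence between rows of $A$ --- matches the paper's proof, and your mechanism (i) is essentially the paper's $r_{3ia}$ component. However, the technical heart of your argument, the event $\mathcal{E}$, relies on two claims that fail in exactly the regime the theorem targets. You require that on $\mathcal{E}$ every degree is $O(d)$ and that $\sum_{j\in\mathrm{Err}^{(s-1)}}\deg(j)\asymp d\,|\mathrm{Err}^{(s-1)}|$ uniformly over all candidate error sets. When $d=O(1)$ the maximum degree is of order $\log n/\log\log n$, so no high-probability event can enforce $\deg_i=O(d)$ for all $i$, and the worst case of $\sum_{j\in S}\deg(j)$ over subsets of size $m$ is of order $m\log n/\log\log n$, not $O(dm)$. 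Consequently your target bound $\sum_i \mathrm{pert}_i(b)^2\le C\,d\,\|\Psi^{(s-1)}-Z\|_1$ cannot be established by a union bound over the $\binom{n}{m}$ error sets, and the contraction factor $C_1K/(\delta^2 d)$ is not obtained.

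The paper closes this gap with two devices you are missing. First, it introduces a degree-regularized matrix $A'$ obtained by deleting edges at nodes of degree exceeding $C_0 d$, so that $\sum_{j\in S}\deg_{A'}(j)\le C_0 d\,|S|$ holds deterministically for every $S$; a separate lemma shows the total number of removed edges is at most $2n\exp(-C_1 d)$ with high probability, and it is this removed-edge mass, pushed through a Markov-type count against the threshold $\delta d$, that produces the middle term $nK\exp(-C_2 d)/(\delta d)$ of the recursion. (You attribute that term to the parameter error $\lambda^{(s)}-\lambda$; in the paper that error is uniformly $o(\delta d)$ for $s\ge 2$ and is simply absorbed into the signal lower bound, contributing no additive term.) Second, for the contraction term the paper avoids any union bound over error sets by the deterministic inequality $\sum_i |r_{2ia}|^2\le \|A'-P\|^2\cdot 2nG_{s-1}$ combined with the concentration $\|A'-\mathbb{E}A\|^2\le C_2 d$ for regularized sparse adjacency matrices, which is valid for bounded $d$ and holds simultaneously for every possible error set. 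Without these two ingredients the first two terms of the recursion are not reachable by the route you sketch, so the proposal has a genuine gap precisely at the step you yourself identify as the technical heart.
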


The bound in Theorem~\ref{unknown pa} implies that the fraction of incorrectly labeled nodes is exponentially small in the average degree $d$. In addition, we can recursively use the second inequality in Theorem~\ref{unknown pa} to show the estimation error as a function of the iteration number $s$. To that end, we rewrite the second inequality in Theorem~\ref{unknown pa} as follows:  \[||\Psi^{(s)} - Z||_1 \leq a || \Psi^{(s-1)} - Z||_1 + b,\] where $a$ and $b$ are some constants only depending on $p$, $q$, $n$, $\epsilon$ and $K$. Note that $a<1$ because of the assumption $n \rho_n > C$ for some constant $C$ only depending on $K$ and $\varepsilon$. In addition, $b=O(ne^{-cd})$ for some constant $c>0$. Applying this inequality recursively $s$ times yields 
\[ ||\Psi^{(s)} - Z||_1 \leq a^s || \Psi^{(0)} - Z||_1 + \frac{b(a^s-1)}{a-1},\] which gives the estimation error of $\Psi^{(s)}$ as a function of the iteration number.

The result in Theorem~\ref{unknown pa} is comparable with the bound obtained by \cite{sarkar2021random} for BCAVI in the regime when $d$ grows at least as $\log n$. When $d$ is bounded, CommuLloyd algorithm proposed in \cite{lu2016statistical} satisfies a similar bound. 
%While both Threshold BCAVI and CommuLloyd algorithms are based on iterative updates, 
%we highlight two main differences in the theorems. First in terms of the condition on the initialization, in \cite{lu2016statistical} 
One advantage of our method over the CommuLloyd algorithm is that it only requires the initialization to be better than the random guessing, while the initialization for the CommuLloyd algorithm must be smaller than $1/4$ (the assumption that entries of initialization are independent in our analysis can also be dropped if the initialization is sufficiently close to the true membership matrix). In addition, the theoretical guarantee in \cite{lu2016statistical} is only proved if the number of iterations $s$ is at most $3\log n$ while our result holds for any $s$. 
Finally, the extensive numerical study in Section 4 shows that the proposed method outperforms the CommuLloyd algorithm (which is essentially equivalent to a majority vote algorithm) in almost all settings, especially when the networks are sparse or the initializations are poor.

Besides estimating community labels, Proposition~\ref{prop:parameter estimation} shows that the Threshold BCAVI also provides good estimates of the unknown parameters. This property is essential not only for proving Theorem~\ref{unknown pa} but also for statistical inference purposes. In this direction, \cite{bickel2013asymptotic} shows that if the exact optimizer of the variational approximation can be calculated, then the parameter estimates of VI for SBM converge to normal random variables. Our following result shows that the same property also holds for the Threshold BCAVI in the regime that $d$ grows at least as $\log n$. It will be interesting to see if the condition $d\gg\log n$ can be removed.

\begin{theorem}[Limiting distribution of parameter estimates for SBM]
\label{dist}
Suppose that conditions of Proposition~\ref{prop:parameter estimation} hold and $d\ge C \log n$ for some large constant $C$ only depending on $\varepsilon$ and $K$. Then for every $s\ge 2$, as $n$ tends to infinity,     
$(p^{(s)},q^{(s)})$ converges in distribution to a bi-variate Gaussian vector:  
\[ \begin{pmatrix}n (p^{(s)} - p)/\sqrt{p}\\n(q^{(s)} - q)/ \sqrt{q} \end{pmatrix} \to N\left(\begin{pmatrix} 0\\0\end{pmatrix},\begin{pmatrix} 2K & 0\\0 & 2K/(K-1)\end{pmatrix}\right).
\]
\end{theorem}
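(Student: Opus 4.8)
The plan is to reduce the distributional statement to a classical central limit theorem for two \emph{oracle} estimators built from the true labels, and then transfer the conclusion to $(p^{(s)},q^{(s)})$ by Slutsky's theorem. First I would exploit the hypothesis $d\ge C\log n$ together with Theorem~\ref{unknown pa}: for any $s-1\ge 1$ the bound $\|\Psi^{(s-1)}-Z\|_1\le n\exp(-cd)\le n^{1-cC}$ holds with probability $1-n^{-r}$, and taking $C$ large makes this right-hand side strictly less than $1$. Since each misclassified node contributes exactly $2$ to $\|\Psi^{(s-1)}-Z\|_1$, this quantity is an even nonnegative integer, so the bound forces $\Psi^{(s-1)}=Z$ exactly; that is, the threshold step achieves \emph{exact} recovery with high probability. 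This is precisely where the extra density assumption $d\ge C\log n$ (rather than merely $d$ bounded) is used.

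On this exact-recovery event the weights $\sum_a\Psi^{(s-1)}_{ia}\Psi^{(s-1)}_{ja}$ and $\sum_{a\ne b}\Psi^{(s-1)}_{ia}\Psi^{(s-1)}_{jb}$ appearing in \eqref{update of pq in proof} reduce to the indicators that $i,j$ lie in the same, respectively different, \emph{true} communities. Writing $\hat p,\hat q$ for the resulting oracle estimators, i.e.\ the empirical within- and between-community edge densities computed from $z$, we then have $(p^{(s)},q^{(s)})=(\hat p,\hat q)$ on an event of probability tending to one, so the rescaled differences $n(p^{(s)}-\hat p)/\sqrt p$ and $n(q^{(s)}-\hat q)/\sqrt q$ equal zero with probability tending to one and hence vanish in probability. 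By Slutsky's theorem it therefore suffices to establish the stated limit for $(\hat p,\hat q)$. The advantage of this route is that it never conditions on the (data-dependent) recovery event: $\hat p$ and $\hat q$ are unconditional functions of $A$, so their limiting law can be computed directly.

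Because the communities have equal size under Assumption~\ref{ass:model assumption}, the numerators of $\hat p$ and $\hat q$ are sums of $N_p=K\binom{n/K}{2}$ and $N_q=\binom{n}{2}-N_p$ independent Bernoulli variables with parameters $p$ and $q$, taken over the \emph{disjoint} sets of within- and between-community pairs. I would apply the Lindeberg--Feller central limit theorem to each triangular array. Since $\hat p-p=(S_p-N_pp)/N_p$ has variance $p(1-p)/N_p$ with $N_p\asymp n^2/(2K)$, and $1-p\to 1$ because $p\asymp\rho_n\to 0$, this yields $n(\hat p-p)/\sqrt p\to N(0,2K)$; the analogous computation with $N_q\asymp n^2(K-1)/(2K)$ gives $n(\hat q-q)/\sqrt q\to N(0,2K/(K-1))$. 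Joint convergence with a diagonal covariance follows because the two sums involve disjoint edges of $A$ and are therefore independent, which produces the zero off-diagonal entries.

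The main obstacle is verifying the Lindeberg condition in the sparse triangular-array regime, where the Bernoulli parameters vanish. This reduces to checking that the expected within-community edge count $N_pp\asymp n^2\rho_n$ diverges; here $d\ge C\log n$ together with $d\asymp n\rho_n$ gives $n\rho_n\ge c'\log n\to\infty$, so $N_pp,\,N_qq\to\infty$, which is exactly what the normal approximation for sums of independent indicators requires. The remaining work is bookkeeping: confirming that replacing $1-p,1-q$ by $1$ and $N_p,N_q$ by their leading $n^2$ terms introduces only $o(1)$ corrections to the limiting variances, and that the Slutsky reduction preserves \emph{joint} convergence of the two coordinates rather than just the marginals.
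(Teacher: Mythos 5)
Your proposal is correct, and it shares the paper's overall skeleton (reduce $(p^{(s)},q^{(s)})$ to the oracle estimators built from the true labels, prove a CLT for those, and transfer the limit by Slutsky), but the reduction step is carried out by a genuinely different mechanism. The paper never invokes exact recovery: it keeps the quantitative perturbation bounds \eqref{estimation of numerator of ps} and \eqref{estimation of denominator of ps} on the numerator and denominator of $p^{(s)}$, which give $|p^{(s)}-\underline{p}^{(s)}|=O(d\exp(-cd)/n)$, and then checks that $\sqrt{n^2/p}\cdot nd\exp(-cd)/n^2=o(1)$ precisely when $d\ge C\log n$; the normal limit for the oracle is obtained via Berry--Esseen. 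You instead use $d\ge C\log n$ to make the bound of Theorem~\ref{unknown pa} read $\|\Psi^{(s-1)}-Z\|_1\le n^{1-cC}<1$, and since this quantity is an even nonnegative integer (each misclassified row contributes exactly $2$), it must vanish, so $\Psi^{(s-1)}=Z$ and $(p^{(s)},q^{(s)})=(\hat p,\hat q)$ exactly on an event of probability $1-n^{-r}$. Your integrality trick is slicker and makes the Slutsky step trivial (the difference is exactly zero with high probability), at the cost of being brittle: it only works because the thresholded $\Psi^{(s-1)}$ is integer-valued and because the constant $C$ in the hypothesis may be taken large relative to the constant $c$ of Theorem~\ref{unknown pa}. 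The paper's quantitative route is more robust — it would survive a weaker label-error bound that does not force exact recovery — which is presumably why the authors keep it, but both arguments consume the assumption $d\ge C\log n$ at the same point and for the same underlying reason. Your CLT computation (Lindeberg--Feller for the two disjoint triangular arrays of Bernoulli variables, with $N_p p\asymp n^2\rho_n\to\infty$, and independence of the within- and between-community sums yielding the diagonal covariance) matches the paper's variance constants $2K$ and $2K/(K-1)$ and is sound.
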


Given the asymptotic distribution of $(p^{(s)},q^{(s)})$, we can construct joint confidence intervals for the unknown parameters $p$ and $q$.

\subsection{Degree-corrected Stochastic Block Models}\label{sec:theory dcsbm}

We now consider DCSBM satisfying Assumption~\ref{ass:model assumption}. In this setting, the BCAVI algorithm for DCSBM has the following update rule:

\begin{align}
\label{update of Psi in dc in proof}
  \Psi_{ia}^{(s)} &\propto \exp\left\{2t^{(s)} \sum_{j:j\neq i} \Psi_{ja}^{(s-1)} (A_{ij} - \theta_i^{(s-1)} \theta_j^{(s-1)} \lambda^{(s)})\right\}, \quad i = 1,2,...,n,
\end{align}
where the parameters $t^{(s)}$ and $\lambda^{(s)}$ are described below.
After this update we add a threshold step to update $\Psi^{(s)}$ by:
\begin{equation*}
\Psi_{ i k_i}^{(s)} = 1, \quad \Psi_{ i a}^{(s)} = 0, \quad a \neq k_i,\quad i = 1,2,...,n,
\end{equation*}
where $k_i = \arg \max_{a} \{\Psi_{ia}^{(s)}\}$ and if there are ties, we arbitrarily choose one of the values of $k_i$. The update equation for $\theta^{(s)}$ is given by: 
\begin{align}
\label{update of theta in proof}
\frac{\sum_j A_{ij}}{ \theta_i^{(s)}} = \sum_{j:j \neq i } \sum_{a \neq b} \Psi_{ia}^{(s-1)} \Psi_{jb}^{(s-1)} \theta_j^{(s-1)} q^{(s)} + \sum_{j:j \neq i } \sum_{a} \Psi_{ia}^{(s-1)} \Psi_{ja}^{(s-1)} \theta_j^{(s-1)} p^{(s)}, \quad i \in [n],
\end{align}
where the parameters $p^{(s)}$ and $q^{(s)}$ are described below. For theoretical analysis, we follow \citep{karrer2011stochastic} to rescale $\theta_i^{(s)}$ within each estimated community so that their sum is exactly $n/K$. That is, for each $a \in [K]$, we update $\theta_i$ by:
\begin{align}
\label{threshold theta in dcsbm in proof}
\theta_i^{(s)} = \frac{ \theta_i^{(s)} n/K }{ \sum_i \Psi_{ia}^{(s)} \theta_i^{(s)}}, \quad \text{if } \Psi_{ia}^{(s)} = 1.
\end{align}
%The threshold step of $\theta$ is only needed for theoretical analysis 
%This step is only needed for theoretical analysis. 
Similar to hard thresholding $\Psi$, this step simplifies our theoretical analysis. It is also consistent with Assumption~\ref{ass:theta}, which implies that the sum of the true degree parameters within each true community highly concentrates on the community size. In practice, we find that the performance of T-BCAVI with initialization satisfying Assumption~\ref{ass:theta} essentially does not depend on whether equation \eqref{threshold theta in dcsbm in proof} is used. For this reason, we omit the rescaling step in practice and implement  Algorithm~\ref{alg in DCSBM}; for details, see Section~\ref{real data} and Appendix~\ref{sec: simu of dcsbm}. Finally, the update equations for $t^{(s)}$ and $\lambda^{(s)}$ are given by:
\begin{align}
 t^{(s)} = \frac{1}{2}\log \frac{p^{(s)}}{q^{(s)}},\quad
 \lambda^{(s)} = \frac{1}{2t^{(s)}} ( p^{(s)} - q^{(s)}),    
 \label{eq:tlambda in dc}
\end{align}
where $p^{(s)},q^{(s)}$ are the estimates of $p,q$ at the $s$-th iteration, given as follows:
\begin{align}
\label{update of pq in dc in proof}
  p^{(s)} = \frac{\sum_{i<j} \sum_a \Psi_{ia}^{(s-1)} \Psi_{ja}^{(s-1)} A_{ij} }{ \sum_{i<j} \sum_a \Psi_{ia}^{(s-1)} \Psi_{ja}^{(s-1)} \theta_i^{(s-1)} \theta_j^{(s-1)} }, \quad
  q^{(s)} = \frac{\sum_{i<j} \sum_{a\neq b} \Psi_{ia}^{(s-1)} \Psi_{jb}^{(s-1)} A_{ij} }{ \sum_{i<j} \sum_{a \neq b} \Psi_{ia}^{(s-1)} \Psi_{jb}^{(s-1)} \theta_i^{(s-1)} \theta_j^{(s-1)} }.
\end{align}
These formulas follow from a direct calculation; for details, see Appendix~\ref{derivation of updated eq in dc}.

The following proposition gives error bounds for parameter estimation for DCSBM. 

\begin{proposition}[Parameter estimation for DCSBM]
\label{prop:parameter estimation in DCSBM}
Consider DCSBM satisfying Assumption~\ref{ass:theta} and Assumption~\ref{ass:model assumption}. In addition, assume that the initialization satisfies Assumption~\ref{ass:perturb} with a fixed error rate $\varepsilon\in(0,(K-1)/K)$. Let
$$d=(n/K - 1)p + n(K-1)q/K, \qquad d_{\min} = \min_{1\le i\le n} \mathbb{E}\sum_{j=1}^n A_{ij}$$ 
denote the expected average degree and minimum expected node degree, respectively.
Then there exist constants $C,C',c>0$ only depending on $\varepsilon$ and $K$ such that if $d > C$ then with probability at least $1-n^{-r}$ for some constant $r>0$,  
\[ t^{(1)} \ge c,\quad \lambda^{(1)} \le C' \rho_n.\]
Moreover, for $s \geq 2$,
\begin{align*}
\big|p^{(s)}-p\big| &\le p \cdot \left( \exp(-c d) + C' n^{-1/3} \right) ,\quad \big|q^{(s)}-q\big| \le q \cdot \left( \exp(-c d) + C' n^{-1/3} \right),\\
\big|t^{(s)}-p\big| &\le t \cdot \left( \exp(-c d) + C' n^{-1/3} \right),\quad \big|\lambda^{(s)}-\lambda\big| \le \lambda \cdot \left( \exp(-c d)+ C' n^{-1/3} \right).
\end{align*} 
\end{proposition}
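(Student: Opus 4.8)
The plan is to decompose the estimation error into two essentially independent sources: the error coming from nodes that the thresholded update misclassifies, which will contribute the $\exp(-cd)$ term exactly as in the SBM analysis of Proposition~\ref{prop:parameter estimation}, and the error coming from the estimated degree parameters $\theta^{(s-1)}$ that appear in the denominators of \eqref{update of pq in dc in proof}, which will contribute the new $C'n^{-1/3}$ term. I would establish the first-iteration bounds $t^{(1)}\ge c$ and $\lambda^{(1)}\le C'\rho_n$ by mirroring the corresponding step for SBM: since the initialization is better than random guessing, the expectations of the numerators of $p^{(1)}$ and $q^{(1)}$ differ by a constant multiple of $\rho_n$, so after establishing concentration of both numerators and of the degree-weighted denominators one obtains $p^{(1)}/q^{(1)}$ bounded away from $1$ and $p^{(1)}=O(\rho_n)$; the claimed bounds on $t^{(1)},\lambda^{(1)}$ then follow directly from \eqref{eq:tlambda in dc}. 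The only genuinely new ingredient at this stage is controlling the denominators $\sum \Psi^{(0)}\Psi^{(0)}\theta^{(0)}\theta^{(0)}$, for which I would show that the observed-degree initialization concentrates on the true $\theta_i$ in an aggregate sense.

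Next I would establish one-step label accuracy. Given $t^{(1)}\ge c$ and $\lambda^{(1)}\le C'\rho_n$, the thresholded update \eqref{update of Psi in dc in proof} assigns node $i$ to the community $a$ maximizing $\sum_{j\ne i}\Psi_{ja}^{(0)}(A_{ij}-\theta_i^{(0)}\theta_j^{(0)}\lambda^{(1)})$. For a node with true label $a^{*}$, the gap between the score of $a^{*}$ and that of any competing community has positive mean of order $(p-q)\cdot(n/K)$ times the initialization correlation, while the degree-corrected threshold $\theta_i^{(0)}\theta_j^{(0)}\lambda^{(1)}$ recenters the Bernoulli contributions. A Bernstein bound on the fluctuations then shows that each node is misclassified with probability at most $\exp(-cd)$, so that $\|\Psi^{(1)}-Z\|_1\le n\exp(-cd)$ with high probability. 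This part runs parallel to the SBM argument, the degree heterogeneity entering only through the bounded factors $C_1\le\theta_i\le C_2$ of Assumption~\ref{ass:theta} and the matching degree factor in the threshold.

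The main obstacle is the degree-estimation error, which is where the rate $n^{-1/3}$ originates. In the bounded-degree regime the observed degree $d_i=\sum_j A_{ij}$ is a Poisson-like variable with mean $\asymp\theta_i d$, so each individual estimate $\theta_i^{(s)}$ from \eqref{update of theta in proof} has relative fluctuation of order $1$ and does \emph{not} concentrate on $\theta_i$. The key observation is that only degree-weighted aggregates enter \eqref{update of pq in dc in proof}, and these sums do concentrate. I would first use the rescaling step \eqref{threshold theta in dcsbm in proof} to force $\sum_{i\in\hat C_a}\theta_i^{(s)}=n/K$, which removes the leading-order bias by matching the within-community sums of the true $\theta_i$ (themselves equal to $n/K+O(\sqrt n)$ by the law of large numbers under Assumption~\ref{ass:theta}). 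I would then bound the remaining discrepancy $\sum_{i<j,\,\text{same}}(\theta_i\theta_j-\theta_i^{(s-1)}\theta_j^{(s-1)})$ relative to the main term $\asymp n^2$; the delicate point is that the heavy upper tail of the Poisson-like $\theta_i^{(s)}$ prevents a direct sub-Gaussian bound, so I would truncate $\theta_i^{(s)}$ at a level $M$ and optimize the trade-off between the truncation bias and the tail probability needed for the high-probability $(1-n^{-r})$ guarantee, which is what produces the polynomial rate $n^{-1/3}$ rather than the $n^{-1/2}$ one would get from variance alone.

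Finally I would assemble the pieces for $s\ge 2$. Writing $p^{(s)}-p=(N-\mathbb{E}N)/D+p(D_{\mathrm{true}}-D)/D$, with $N$ the within-community edge count and $D$ the degree-weighted pair count, the first term is controlled by the label-error bound $\exp(-cd)$ together with edge concentration, and the second by the degree-aggregate bound $n^{-1/3}$ from the previous step; the identical decomposition handles $q^{(s)}$. The bounds on $t^{(s)}$ and $\lambda^{(s)}$ then transfer from those on $p^{(s)},q^{(s)}$ using the local Lipschitz continuity of the maps in \eqref{eq:tlambda in dc} on the relevant range, after noting that $t^{(s)}$ stays bounded away from $0$. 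Since accurate labels and accurate aggregate degrees at step $s-1$ reproduce accurate $p^{(s)},q^{(s)},\theta^{(s)}$, and hence accurate $\Psi^{(s)}$, these bounds are self-sustaining and propagate to all $s\ge 2$ by induction.
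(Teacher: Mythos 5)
Your overall architecture matches the paper's: the error is split into a label-misclassification part giving $\exp(-cd)$ and a degree-aggregate part giving $C'n^{-1/3}$; the first-step bounds on $t^{(1)},\lambda^{(1)}$ are obtained by the SBM argument plus aggregate concentration of the observed-degree initialization; the rescaling step \eqref{threshold theta in dcsbm in proof} is correctly identified as the device that removes the leading bias of the degree estimates (and is indeed why the rate degrades from the $O(\sqrt n)$ fluctuation of $\sum_{i:z_i=a}\theta_i$ to $n^{-1/3}$); and the $s\ge 2$ bounds are propagated by induction exactly as in the paper, which compares $p^{(s)}$ to an oracle $\underline{p}^{(s)}$ built from the true $(Z,\theta)$. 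Two remarks on the mechanics: the paper's $n^{-1/3}$ does not come from truncating $\theta_i^{(s)}$ but from Chebyshev bounds with the deviation level chosen to make the failure probability polynomially small, together with the crude pointwise bound $D_i\le n^{1/3}d_i$ used to absorb the diagonal term $\sum_a\sum_i(\Psi_{ia}^{(s-1)}\theta_i^{(s-1)})^2$ into the $O(n^{5/3})$ error of the denominator of \eqref{update of pq in dc in proof}; your truncation-versus-tail trade-off is the same idea in different clothing. Also, the induction must carry a \emph{pointwise} hypothesis $|\theta_i^{(s-1)}-\theta_i^{(0)}|\le\theta_i^{(0)}/4$ for all $i$, not only aggregate accuracy, because individual degree estimates enter the label-update threshold and the diagonal correction of the denominator.

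The one genuine gap is your treatment of the numerator error for $s\ge2$. You propose to control $\sum_{i<j}\sum_a(\Psi_{ia}^{(s-1)}\Psi_{ja}^{(s-1)}-Z_{ia}Z_{ja})A_{ij}$ by ``the label-error bound together with edge concentration,'' but the mislabeled set is random and depends on $A$ through all previous iterations, so no direct concentration applies; in the worst case the $n\exp(-cd)$ mislabeled nodes are exactly the highest-degree ones, whose degrees in the bounded-$d$ regime are of order $\log n/\log\log n\gg d$, which would destroy the claimed relative error $\exp(-cd)$. The paper resolves this with the degree-truncated matrix $A'$ (all row sums at most $C_0 d$): the $A'$-part of the sum is bounded deterministically by $3\|\Psi^{(s-1)}-Z\|_1\, C_0 d$, and the $A-A'$ part by the total number of removed edges, which Lemma~\ref{lemma:quantity of removed edges (dcsbm)} shows is at most $2n\exp(-C_1 d)$ with high probability. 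Without this (or an equivalent) regularization device, your argument does not close in precisely the sparse regime the proposition is about.
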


Compared to the bounds for SBM, the bounds in Proposition~\ref{prop:parameter estimation in DCSBM} contain an extra term of order $n^{-1/3}$. Although it is unclear if this term is of optimal order, its presence is expected. This is because the threshold step in \eqref{threshold theta in dcsbm in proof} forces the sum of estimated degree parameters within each community to be exactly $n/K$, while under Assumption~\ref{ass:theta}, the sum of the true degree parameters within each community typically deviates from $n/K$ by an order of $n^{1/2}$. Nevertheless, $n^{-1/3}$ is sufficiently small for our purposes.

\begin{theorem}[Clustering accuracy of Threshold BCAVI in DCSBM]
\label{unknown pa (dcsbm)}
Suppose that conditions of Proposition~\ref{prop:parameter estimation in DCSBM} hold.
Then there exist constants $C, c>0$ only depending on $\varepsilon$ and $K$ such that if $n\rho_n \ge C$ then with high probability $1-n^{-r}$ for some constant $r>0$,
\[\|\Psi^{(s)} - Z\|_1 \leq n\exp(-c d),\]
for every $s\ge 1$, where $\|.\|_1$ denotes the $\ell_1$ norm.
Moreover, for $s \geq 2$, 
\begin{align*}
||\Psi^{(s)} - Z||_1   
 \leq  \frac{C_1 K}{ \delta^2 d_{\min}} || \Psi^{(s-1)}- Z ||_1 + \frac{ndK\exp(-C_2 d)}{\delta d_{\min}^2} +  K \sum_{i=1}^n \exp\left(\frac{-\delta^2d_i}{1+\delta}\right),
\end{align*}
where $\delta = \frac{C_3 (p-q)}{p + Kq}$ and $C_1,C_2,C_3>0$ are some absolute constants. 
\end{theorem}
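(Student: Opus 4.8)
The plan is to mirror the SBM argument behind Theorem~\ref{unknown pa}, reducing the $\ell_1$ error to a count of misclassified nodes and then controlling that count through the update rule \eqref{update of Psi in dc in proof}, while carrying the degree parameters $\theta_i^{(s-1)}$ and their estimation error along. Since each row of the thresholded $\Psi^{(s)}$ is a single indicator, $\|\Psi^{(s)}-Z\|_1 = 2\cdot\#\{i:k_i\neq z_i\}$, so it suffices to bound the number of misclassified nodes. Because $t^{(s)}>0$ in \eqref{eq:tlambda in dc}, node $i$ is classified correctly iff the score gap
\[
\Delta_{ia}^{(s)} := \sum_{j\neq i}\bigl(\Psi_{j,z_i}^{(s-1)} - \Psi_{ja}^{(s-1)}\bigr)\bigl(A_{ij} - \theta_i^{(s-1)}\theta_j^{(s-1)}\lambda^{(s)}\bigr)
\]
is positive for every wrong label $a\neq z_i$. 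I would split $\Delta_{ia}^{(s)}$ into three pieces: (i) the idealized gap obtained by replacing $\Psi^{(s-1)}$ with $Z$ and the estimates $(\theta^{(s-1)},\lambda^{(s)})$ with the truth $(\theta,\lambda)$; (ii) the perturbation from the rows where $\Psi^{(s-1)}$ disagrees with $Z$; and (iii) the perturbation from the parameter errors.

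For piece (i), the idealized gap has conditional mean $\theta_i\bigl[(p-\lambda)\!\sum_{z_j=z_i}\!\theta_j+(\lambda-q)\!\sum_{z_j=a}\!\theta_j\bigr]$, which, after the rescaling \eqref{threshold theta in dcsbm in proof} makes each within-community sum equal to $n/K$, equals $\theta_i(n/K)(p-q)\asymp\delta d_i$ and is positive since $q<\lambda<p$. It is a sum of $\asymp d_i$ bounded independent terms with variance $\asymp d_i$, so a node-wise Bernstein inequality keeps it above half its mean except with probability $\exp(-c\delta^2 d_i/(1+\delta))$; a union over the $K-1$ wrong labels and a sum over $i$ produce the third term $K\sum_i\exp(-\delta^2 d_i/(1+\delta))$. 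For piece (iii), I would invoke Proposition~\ref{prop:parameter estimation in DCSBM}: $|\lambda^{(s)}-\lambda|\le\lambda(\exp(-cd)+C'n^{-1/3})$ together with the analogous control of $\theta_i^{(s-1)}\theta_j^{(s-1)}$. Since the relative error $n^{-1/3}=o(\delta)$ with $\delta\asymp 1$, only the $\exp(-cd)$ part can ever overturn the $\asymp\delta d_i$ signal, and a Markov-type count of the nodes it flips yields the middle term $ndK\exp(-C_2 d)/(\delta d_{\min}^2)$.

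The contraction term comes from piece (ii), and this is where I expect the main obstacle. Each mislabeled node $j$ shifts the gap by at most $O(A_{ij})+O(\theta_i\theta_j\lambda)$, so node $i$ can be flipped only if the mislabeled set $M=\{j:\Psi_j^{(s-1)}\neq Z_j\}$ sends enough mass to $i$ to exceed $c\delta d_i\ge c\delta d_{\min}$. The mean contribution $\lesssim\theta_i\rho_n|M|$ is negligible because $|M|\le n\exp(-cd)\ll\delta n/K$, so the obstruction reduces to controlling the number of $i$ for which the fluctuation $\sum_{j\in M}(A_{ij}-P_{ij})$ exceeds $c\delta d_i$. A global operator-norm bound on $A-\mathbb{E}A$ would finish this immediately, but in the bounded-degree regime that norm is inflated by high-degree vertices and the estimate fails --- this is exactly the obstruction that forces degree-trimming in spectral methods. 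I would instead bound the corruption node-by-node and control the number of nodes receiving an anomalously large fluctuation from $M$ by a combinatorial argument that isolates, rather than discards, the few high-degree vertices; this sparse-regime concentration without preprocessing is the technical heart of the proof and, after bookkeeping of the factors $K$, $\delta$ and $d/d_{\min}$, gives the coefficient $C_1 K/(\delta^2 d_{\min})$.

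It remains to establish the crude bound $\|\Psi^{(s)}-Z\|_1\le n\exp(-cd)$ for all $s\ge1$, which both closes the recursion and supplies the input $\|\Psi^{(s-1)}-Z\|_1\le n\exp(-cd)$ used above. For $s=1$ I would argue directly that a single thresholded update from the warm start $\Psi^{(0)}=Z^{(0)}$ of Assumption~\ref{ass:perturb} already drives the error to $n\exp(-cd)$: even with a constant fraction $\varepsilon<(K-1)/K$ of mislabeled initial nodes, the idealized gap retains a strictly positive mean of order $\delta d_i$, and $t^{(1)}\ge c$, $\lambda^{(1)}\le C'\rho_n$ from Proposition~\ref{prop:parameter estimation in DCSBM} keep piece (iii) under control. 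For $s\ge2$ it then follows by induction: plugging $\|\Psi^{(s-1)}-Z\|_1\le n\exp(-cd)$ into the recursion and using $d_{\min}\asymp d$ (from $C_1\le\theta_i\le C_2$ in Assumption~\ref{ass:theta}) together with $n\rho_n\ge C$, the contraction coefficient satisfies $C_1 K/(\delta^2 d_{\min})<1$ while the two floor terms are each $\le n\exp(-c'd)$, so the iterate error remains bounded by $n\exp(-cd)$.
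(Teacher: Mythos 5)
Your overall architecture matches the paper's: reduce $\|\Psi^{(s)}-Z\|_1$ to a count of misclassified nodes, split the score gap into a signal term, a noise term on the correctly-labeled rows (giving the Bernstein term $K\sum_i\exp(-\delta^2 d_i/(1+\delta))$ after symmetrization over the dependent rows of $A$), and a perturbation from mislabeled rows and parameter errors; then close an induction that also propagates $\|\Psi^{(s-1)}-Z\|_1\le n\exp(-cd)$ and the degree-parameter accuracy $|\theta_i^{(s-1)}-\theta_i^{(0)}|\le\theta_i^{(0)}/4$. The first-step lemmas and the treatment of the signal and of $r_{3ia}$ are essentially the paper's.

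However, there is a genuine gap at exactly the point you flag as ``the technical heart of the proof.'' You correctly observe that $\|A-\mathbb{E}A\|$ is inflated by high-degree vertices in the bounded-degree regime, but you then defer to an unspecified ``combinatorial argument that isolates, rather than discards, the few high-degree vertices.'' The paper's actual mechanism is concrete and is not what you describe: it introduces an analysis-only trimmed matrix $A'$ obtained by \emph{removing} edges at vertices of degree exceeding $C_0 d$, splits the mislabeled-row perturbation as $r_{1ia}+r_{2ia}$ with $r_{1ia}$ built from $A-A'$ and $r_{2ia}$ from $A'-P$, applies the regularized concentration bound $\|A'-P\|^2\le C_2 d$ (Lemma~\ref{lemma:concentration of regularized adjacency matrices}, from \cite{le2017concentration}) together with an $\ell_2$-to-counting (Chebyshev/Markov) step to obtain the contraction coefficient $C_1K/(\delta^2 d_{\min})$, and separately bounds the total mass of removed edges by $2n\exp(-C_1d)$ (Lemma~\ref{lemma:quantity of removed edges (dcsbm)}) to obtain the middle additive term. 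Without this trimming-plus-spectral-bound input, or an explicit substitute for it, your piece (ii) does not yield the stated coefficient; a purely node-by-node bound on $\sum_{j\in M}(A_{ij}-P_{ij})$ over an adversarially located mislabeled set $M$ of size $n\exp(-cd)$ is precisely what fails without the spectral control. Relatedly, you misattribute the middle term $ndK\exp(-C_2d)/(\delta d_{\min}^2)$ to the parameter-estimation error of piece (iii); in the paper that error ($R_{ia}^{(s-1)}$ and the $\lambda^{(s)}$ error) is shown to be deterministically below $\delta' d_i$ and contributes no additive term, while the middle term comes from the removed edges in piece (ii).
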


 To the best of our knowledge, this is the first theoretical result of BCAVI for DCSBM. Furthermore, the bounds provided here are comparable with the minimax risk results of DCSBM in \cite{gao2018community}. Thus, we have proved that T-BCAVI can attain the rate-optimal clustering accuracy under DCSBM. Note also that the bounds in this theorem are similar to those for SBM in Theorem~\ref{unknown pa}; for a more detailed discussion about these bounds, see the paragraph following Theorem~\ref{unknown pa}.

\section{Numerical Studies}\label{simulation}
This section compares the performance of Threshold BCAVI (T-BCAVI), the classical version without the threshold step (BCAVI), majority vote (MV) (which iteratively updates node labels by assigning nodes to the communities they have the most connections), and the version of majority vote with penalization (P-MV) of \cite{gao2017achieving}. For a thorough numerical analysis, we will consider both balanced (equal community sizes) and unbalanced (different community sizes) network models with $K=2$ and $K=3$ communities, although our theoretical results in Section~\ref{sec:thoery} are only proved for balanced models. Overall, the qualitative behaviors of the two versions of T-BCAVI based on SBM and DCSBM are very similar for networks drawn from SBM and DCSBM, respectively. For this reason, the current section mainly presents the results for SBM-based T-BCAVI and networks generated from SBM (except Section~\ref{real data} where both SBM and DCSBM-based T-BCAVI are applied to fit real data); the corresponding study for DCSBM is provided in Appendix~\ref{sec: simu of dcsbm}.  

% As mentioned in Section~\ref{sec:intro}, the threshold step also improves the performance of variational inference in clustering data points drawn from Gaussian mixtures. Numerical results for this setting are provided in Appendix~\ref{sec:Gaussian}.  

\subsection{Simulated Networks}\label{sec:simulated}

\medskip
\noindent
{\em (a) Communities with idealized initialization}. 
We first provide numerical support for the theoretical results in Section~\ref{sec:thoery} and compare the methods listed above. To this end, we first consider SBM with $n=600$ nodes and $K=2$ or $K=3$ communities. When $K=2$, the sizes of communities are fixed to be $n_1=n_2=300$ in balanced settings and $n_1 = 240, n_2 = 360$ in unbalanced settings. When $K=3$, the sizes of communities are fixed to be $n_1=n_2=n_3 = 200$ in balanced settings and $n_1 = 150, n_2 = 210, n_3 = 240$ in unbalanced settings. The block probability matrix is generated according to Assumption~\ref{ass:model assumption}, where the true model parameters $p$ and $q$ are chosen so that $p/q=10/3$ and the expected average degree $d$ can vary. We generate initializations $Z^{(0)}$ for all methods from true membership matrix $Z$ according to Assumption~\ref{ass:perturb} with various values of the error rate $\varepsilon$. The accuracy of the estimated membership matrix $\Psi \in \{0,1\}^n$ is measured by the fraction of correctly labeled nodes $\inf_{\phi} || \Psi - \phi (Z) ||_1 / (2n)$, where the infimum is over all label permutations $\phi$. Note that according to this measure, choosing node labels independently and uniformly at random results in the baseline accuracy of approximately $1/2$ when $K=2$ and $1/3$ when $K=3$ in the balanced network. For each setting, we report this clustering accuracy averaged over 100 replications with one standard deviation band. For reference, the actual accuracy of initializations $Z^{(0)}$ (RI), which is similar to $1-\varepsilon$, is also included. Since the simulation result is similar in DCSBM, we will include it in the appendix.

Figure~\ref{eps} shows that T-BCAVI performs uniformly better than BCAVI, MV, and P-MV in balanced settings for both $K=2$ and $K=3$ networks. In particular, the improvement is significant when $d$ is small or $\varepsilon$ is large. Since real-world networks are often sparse and the accuracy of initialization is usually poor, this improvement highlights the practical importance of our threshold strategy. Figure~\ref{Ueps} shows that T-BCAVI also completely outperforms other methods in unbalanced settings for both $K=2$ and $K=3$ networks. In general, T-BCAVI does not require accurate initializations, even in sparse networks.

\begin{figure}[ht]
\begin{subfigure}{.33\textwidth}
  \centering
  % include first image
  \includegraphics[width=1\linewidth]{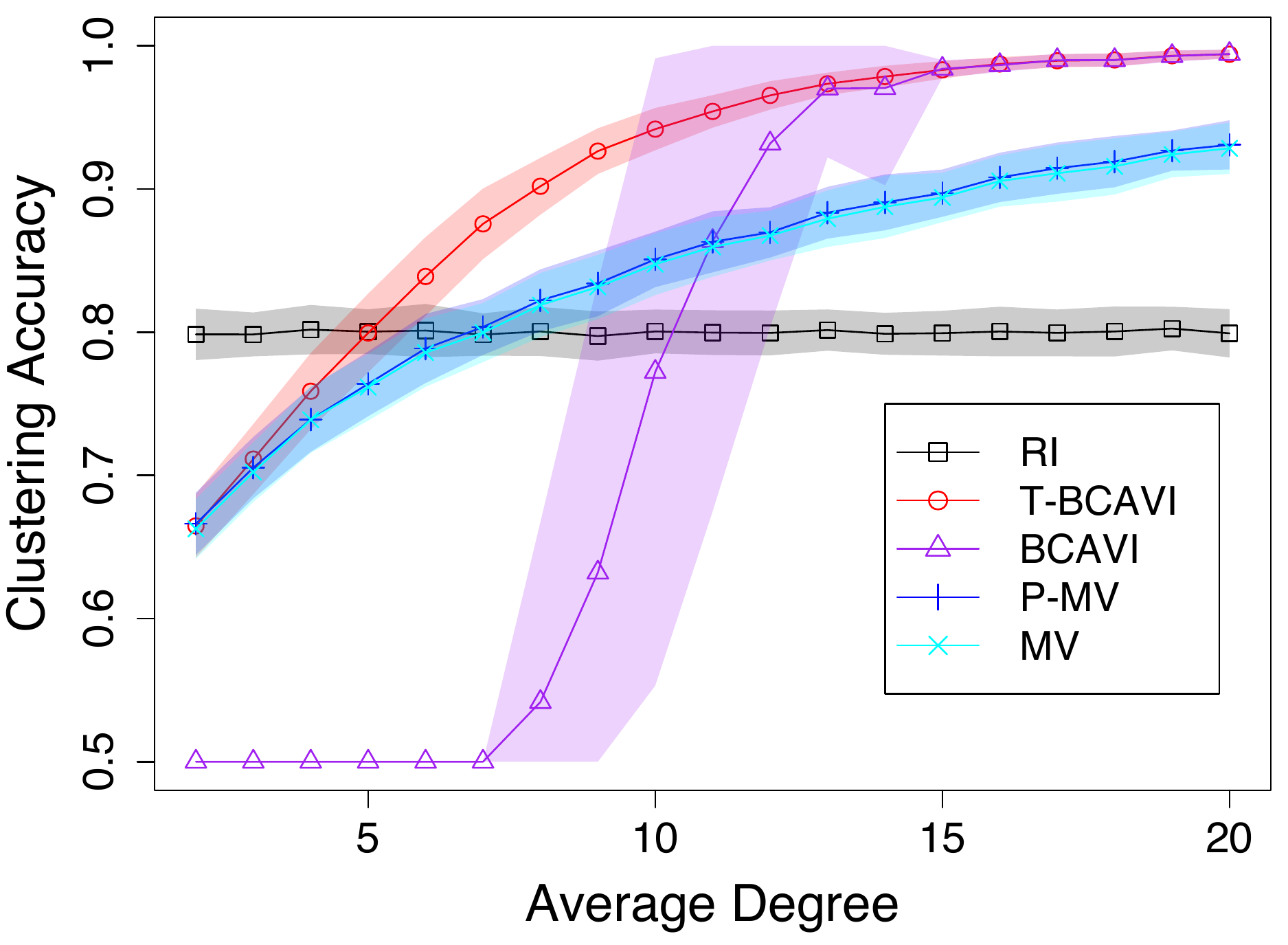}  
  \caption{$\varepsilon = 0.2$}
  \label{eps0.2}
\end{subfigure}
\begin{subfigure}{.33\textwidth}
  \centering
  % include second image
  \includegraphics[width=1\linewidth]{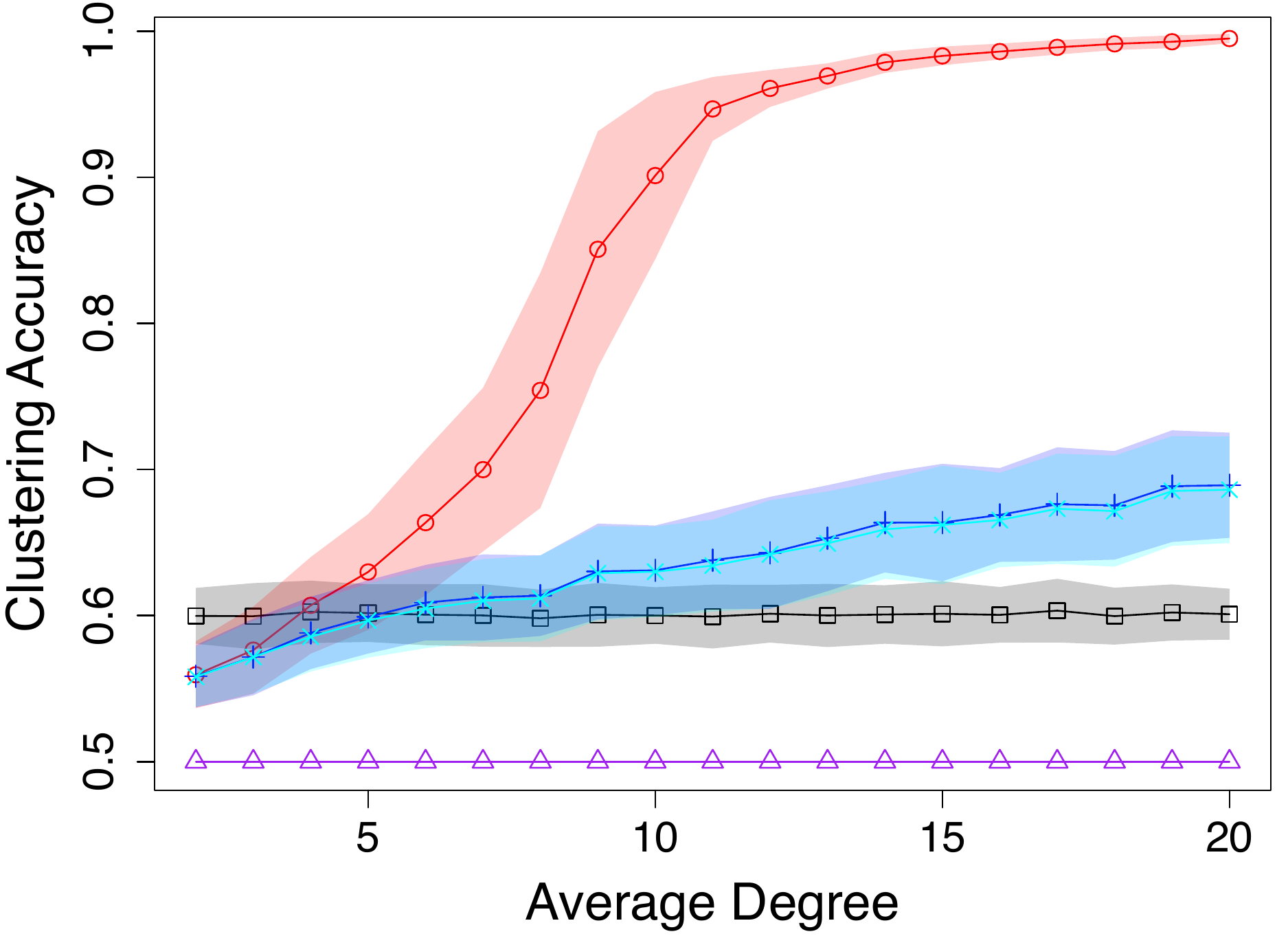}  
  \caption{$\varepsilon = 0.4$}
  \label{eps0.4}
\end{subfigure}
\begin{subfigure}{.33\textwidth}
  \centering
  % include second image
  \includegraphics[width=1\linewidth]{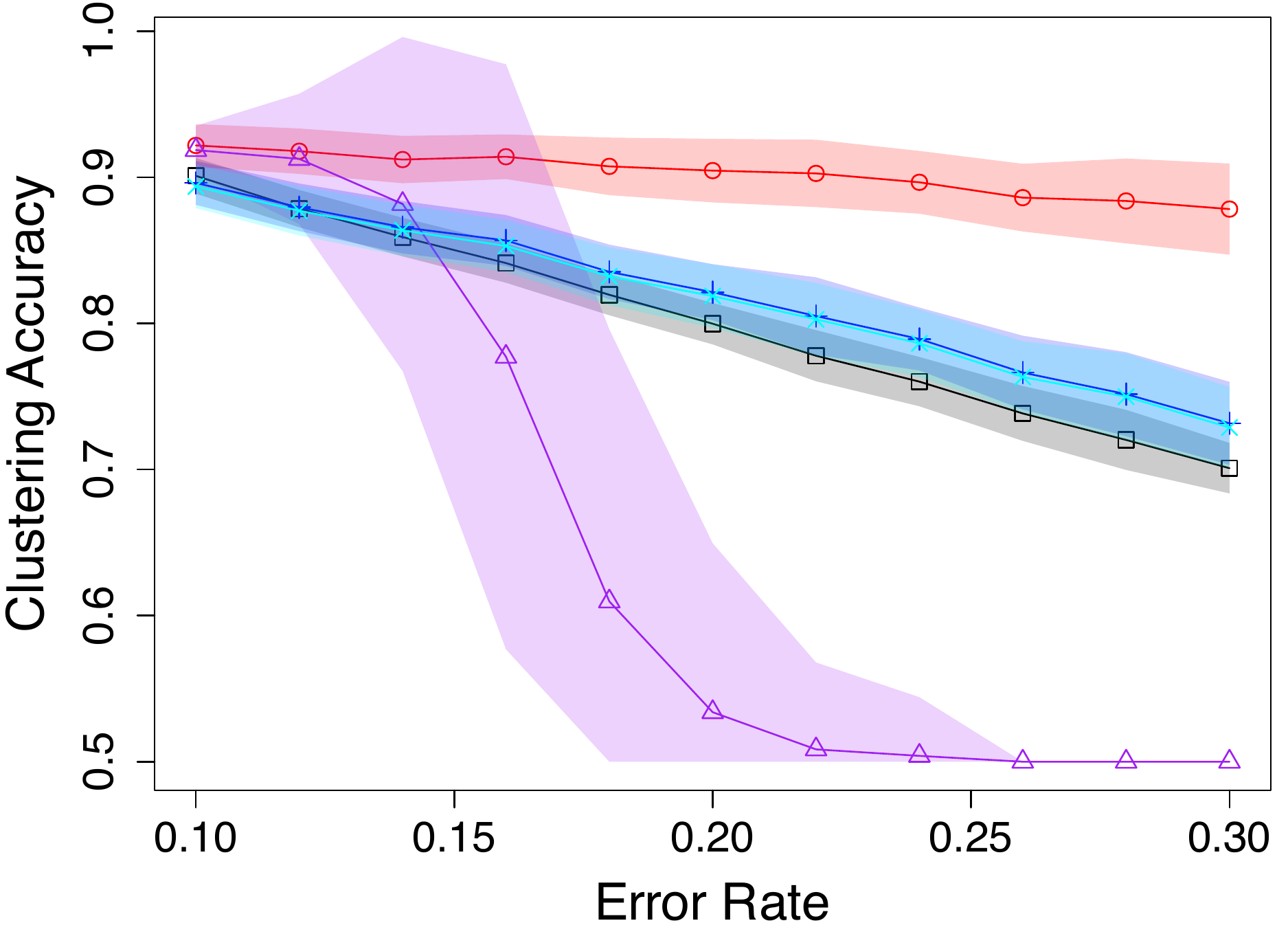}  
  \caption{$d = 8$}
  \label{avd8}
\end{subfigure}
\begin{subfigure}{.33\textwidth}
  \centering
  % include first image
  \includegraphics[width=1\linewidth]{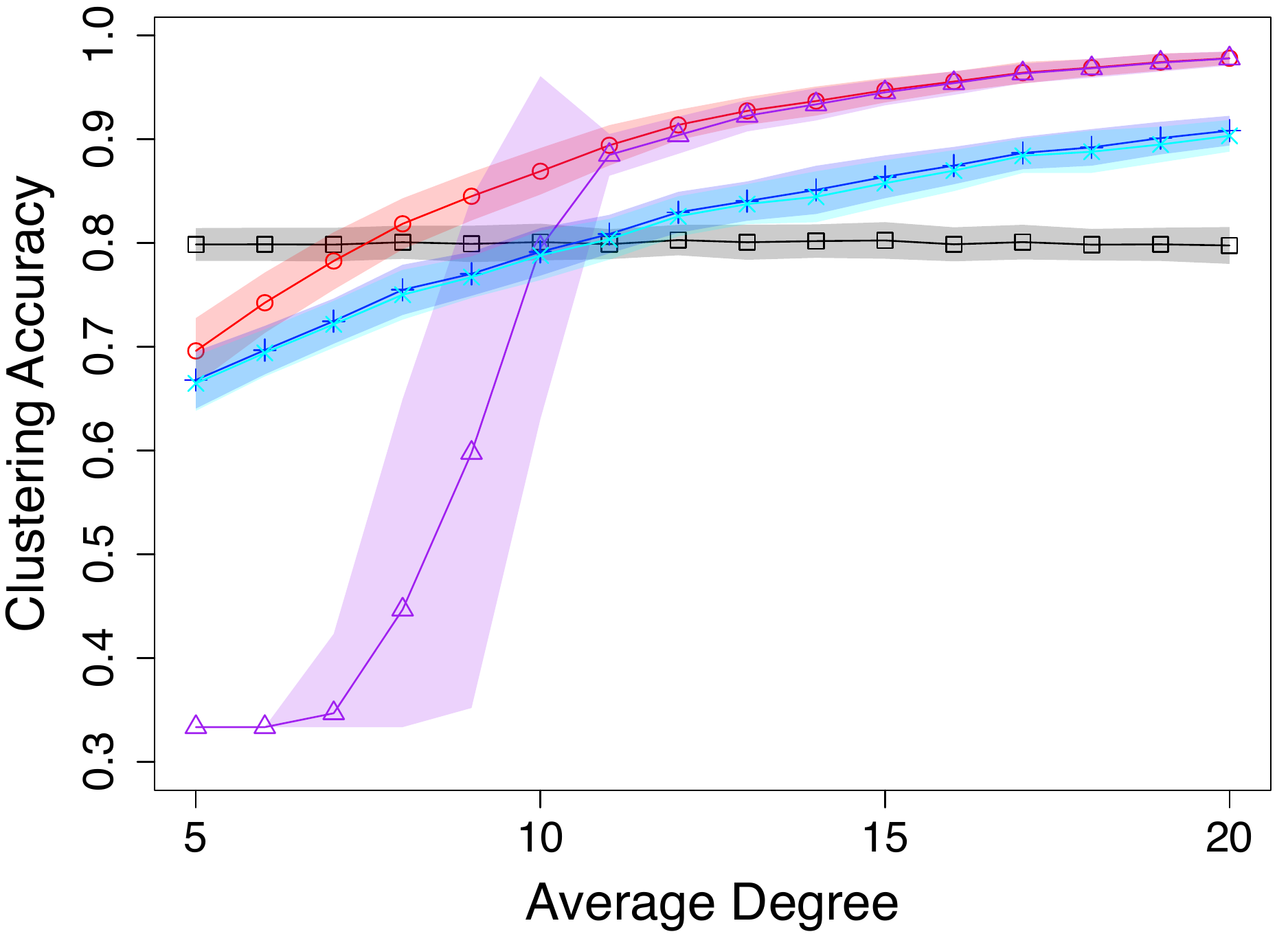}  
  \caption{$\varepsilon = 0.2$}
  \label{eps0.2(k=3)}
\end{subfigure}
\begin{subfigure}{.33\textwidth}
  \centering
  % include second image
  \includegraphics[width=1\linewidth]{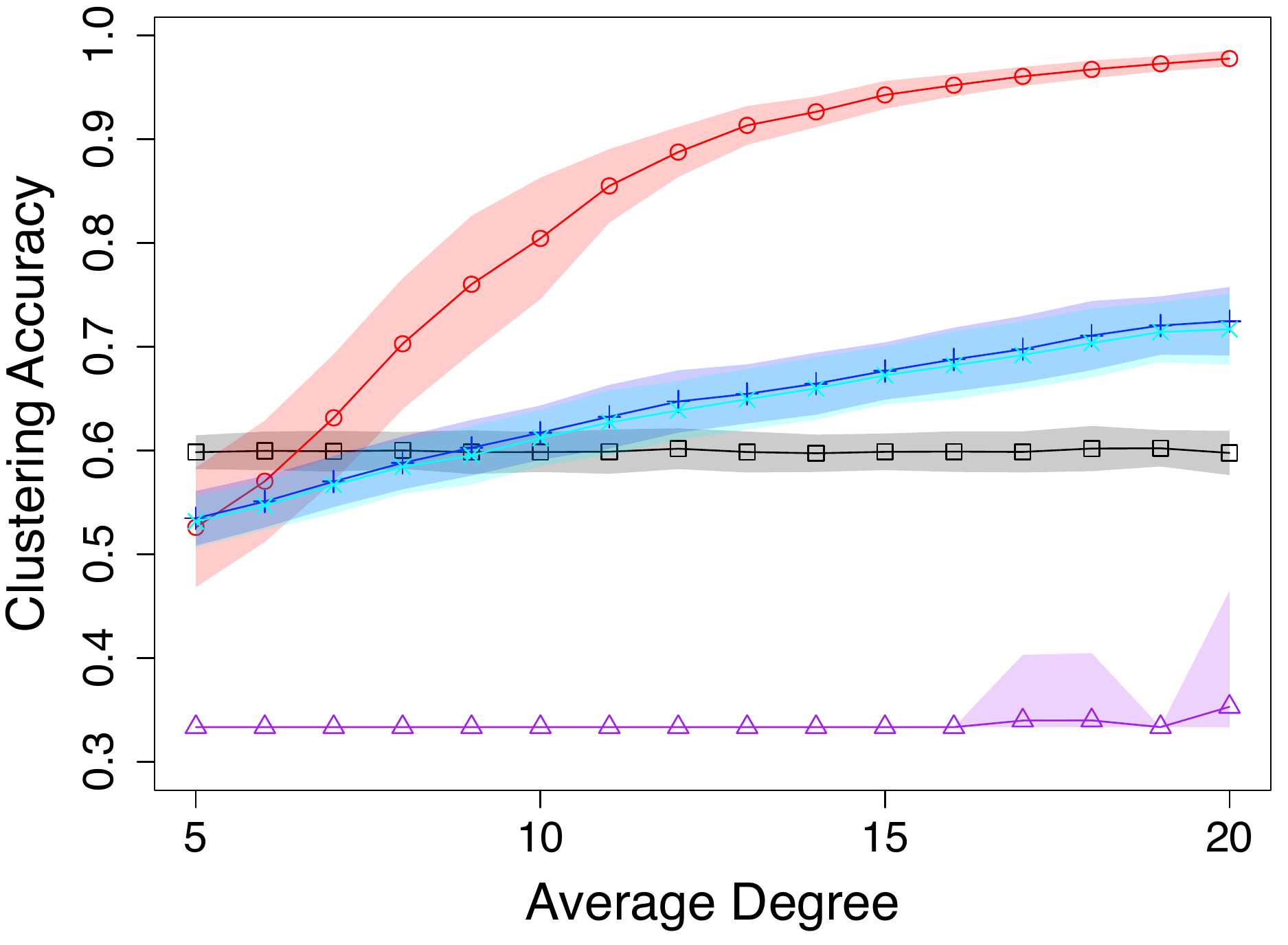}  
  \caption{$\varepsilon = 0.4$}
  \label{eps0.4(k=3)}
\end{subfigure}
\begin{subfigure}{.33\textwidth}
  \centering
  % include second image
  \includegraphics[width=1\linewidth]{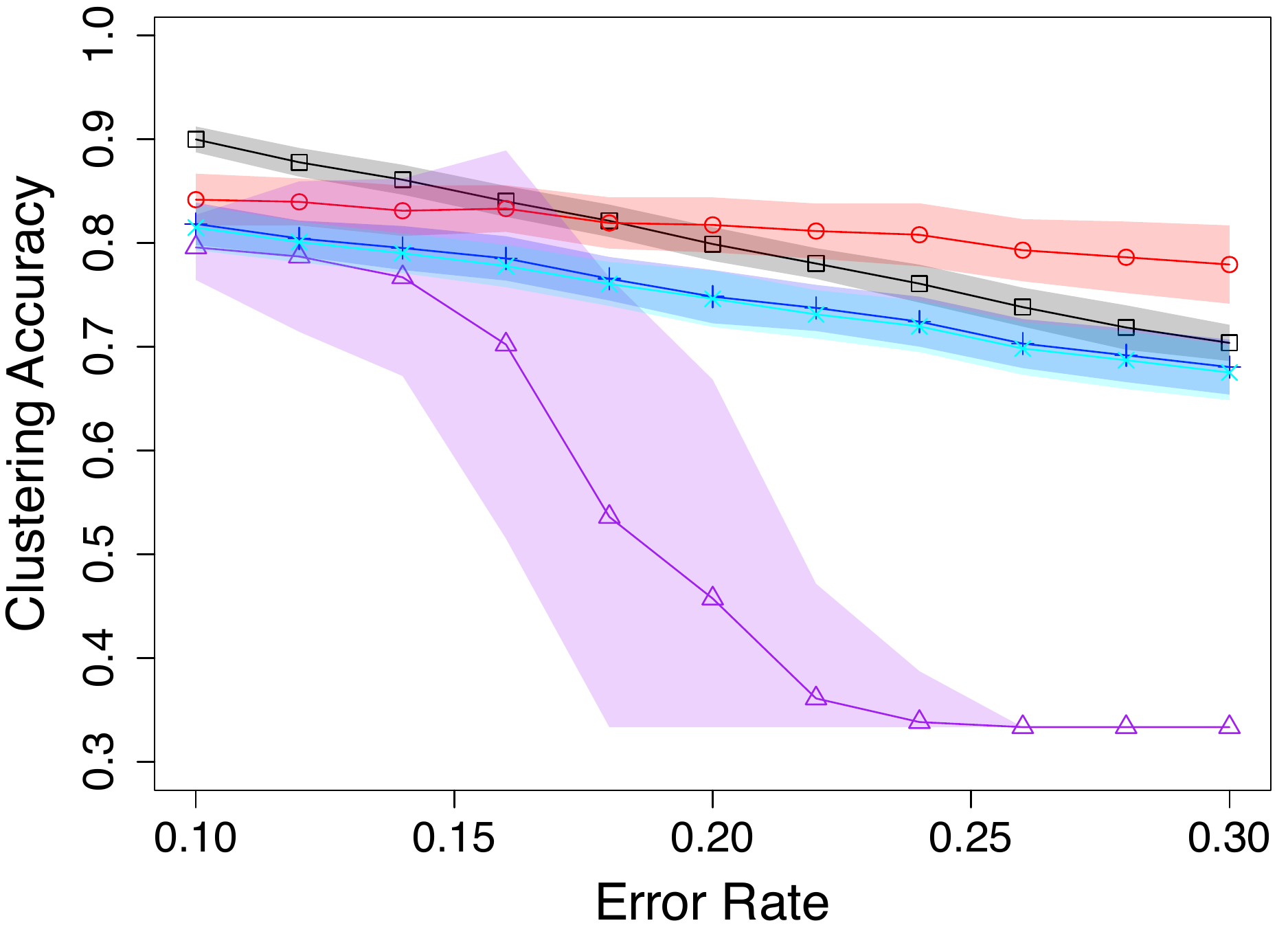}  
  \caption{$d = 8$}
  \label{avd8(k=3)}
\end{subfigure}
\caption{Performance of Threshold BCAVI (T-BCAVI), the classical BCAVI, majority vote (MV), and majority vote with penalization (P-MV) in balanced settings. (a)-(c): Networks are generated from SBM with $n=600$ nodes, $K=2$ communities of sizes $n_1 = n_2 = 300$. (d)-(f): Networks are generated from SBM with $n=600$ nodes, $K=3$ communities of sizes $n_1 = n_2 = n_3 = 200$. Initializations are generated from true node labels according to Assumption~\protect\ref{ass:perturb} with error rate $\varepsilon$, resulting in actual clustering initialization accuracy (RI) of approximately $1-\varepsilon$.}
\label{eps}
\end{figure}

\begin{figure}[ht]
\begin{subfigure}{.33\textwidth}
  \centering
  % include first image
  \includegraphics[width=1\linewidth]{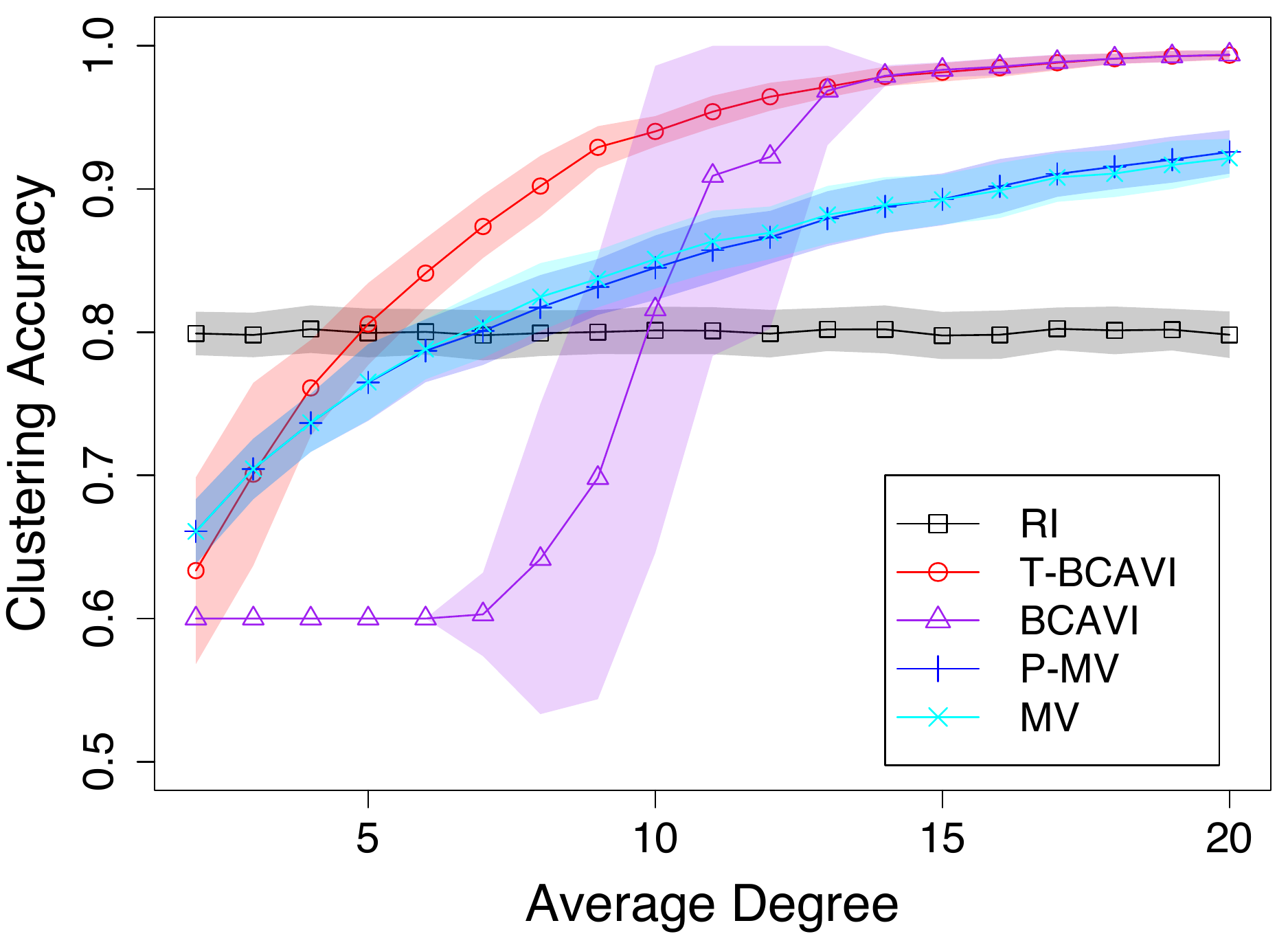}  
  \caption{$\varepsilon = 0.2$}
  \label{Ueps0.2}
\end{subfigure}
\begin{subfigure}{.33\textwidth}
  \centering
  % include second image
  \includegraphics[width=1\linewidth]{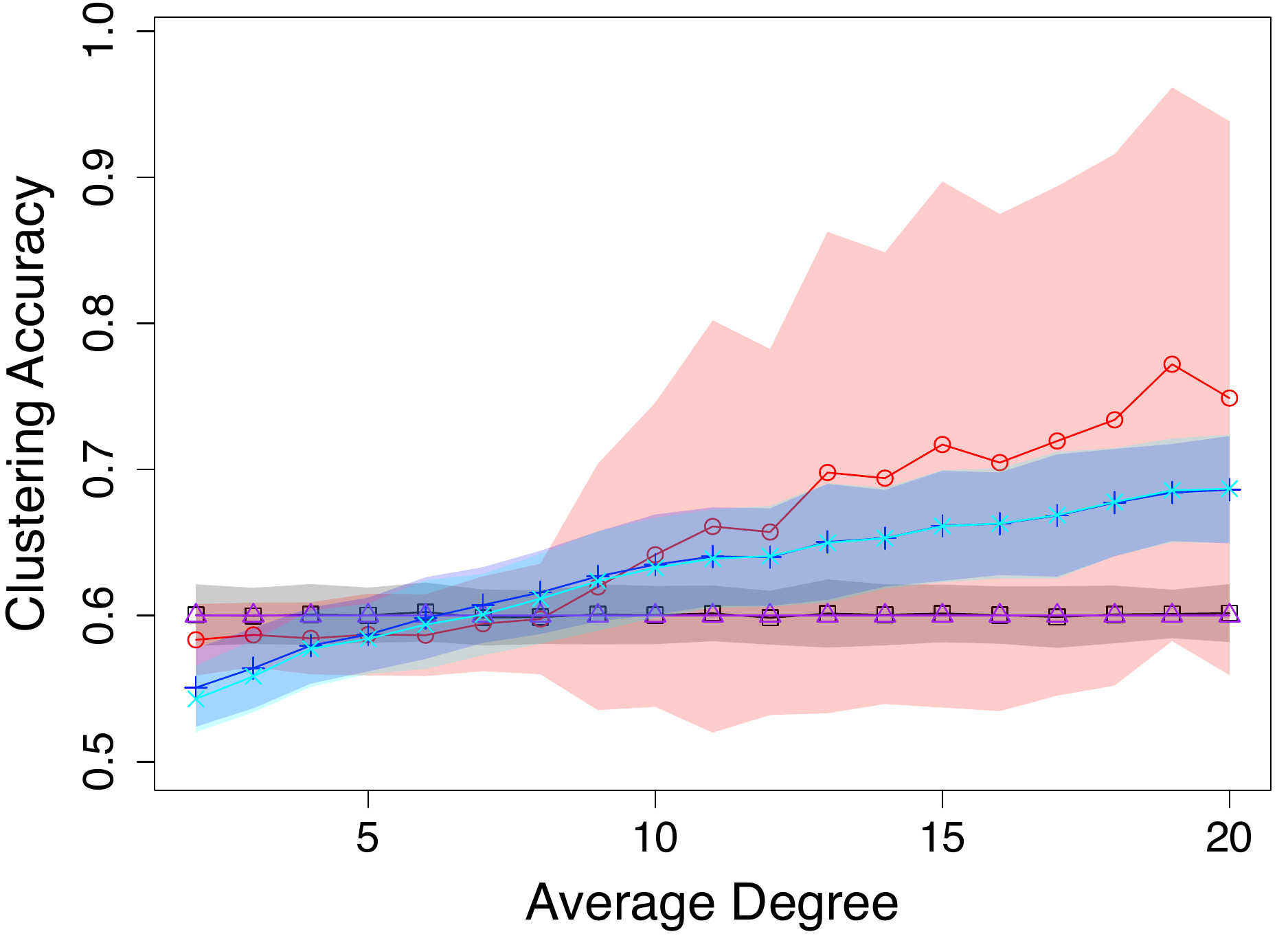}  
  \caption{$\varepsilon = 0.4$}
  \label{Ueps0.4}
\end{subfigure}
\begin{subfigure}{.33\textwidth}
  \centering
  % include second image
  \includegraphics[width=1\linewidth]{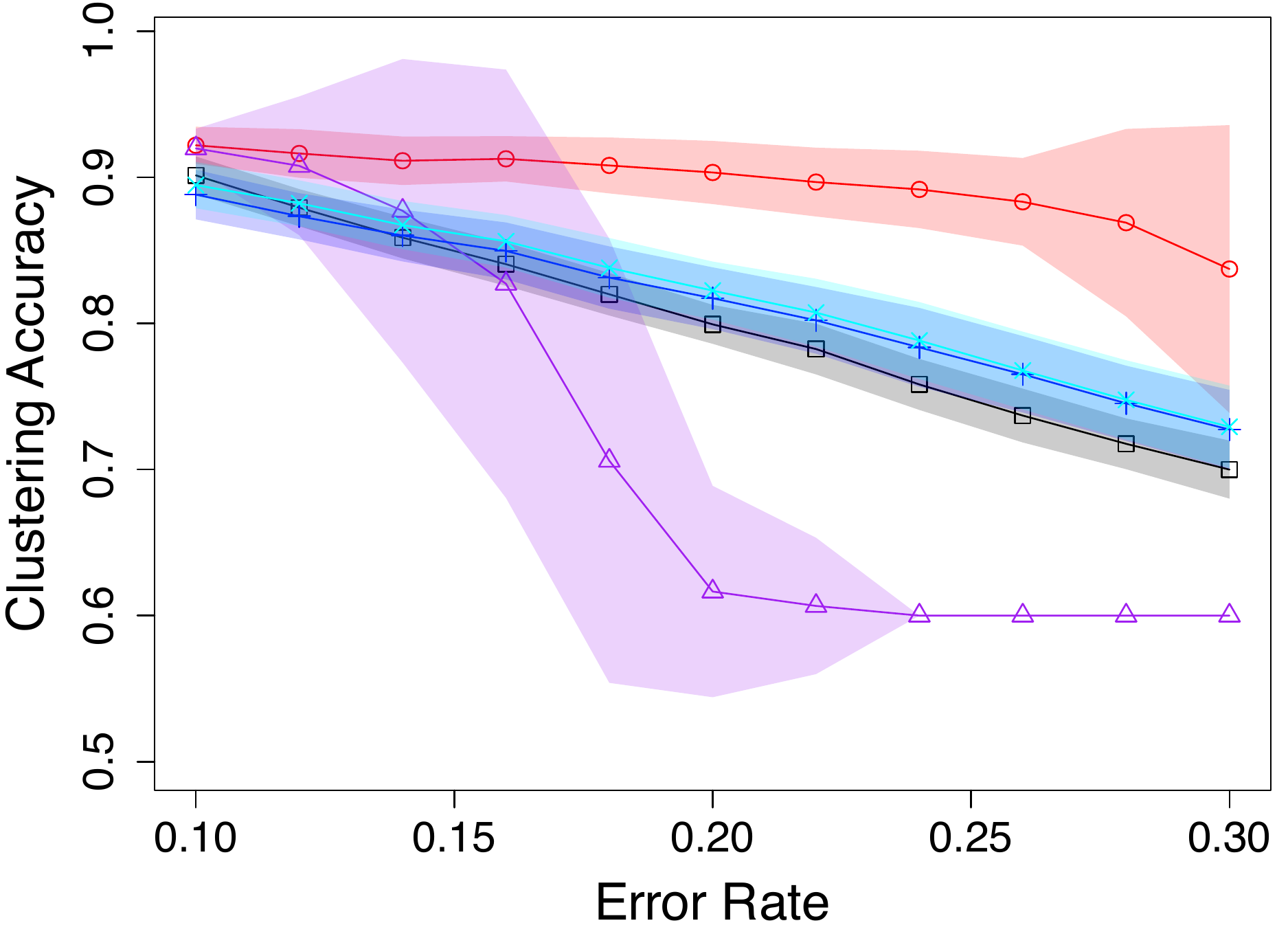}  
  \caption{$d = 8 $}
  \label{Uavd8}
\end{subfigure}
\begin{subfigure}{.33\textwidth}
  \centering
  % include first image
  \includegraphics[width=1\linewidth]{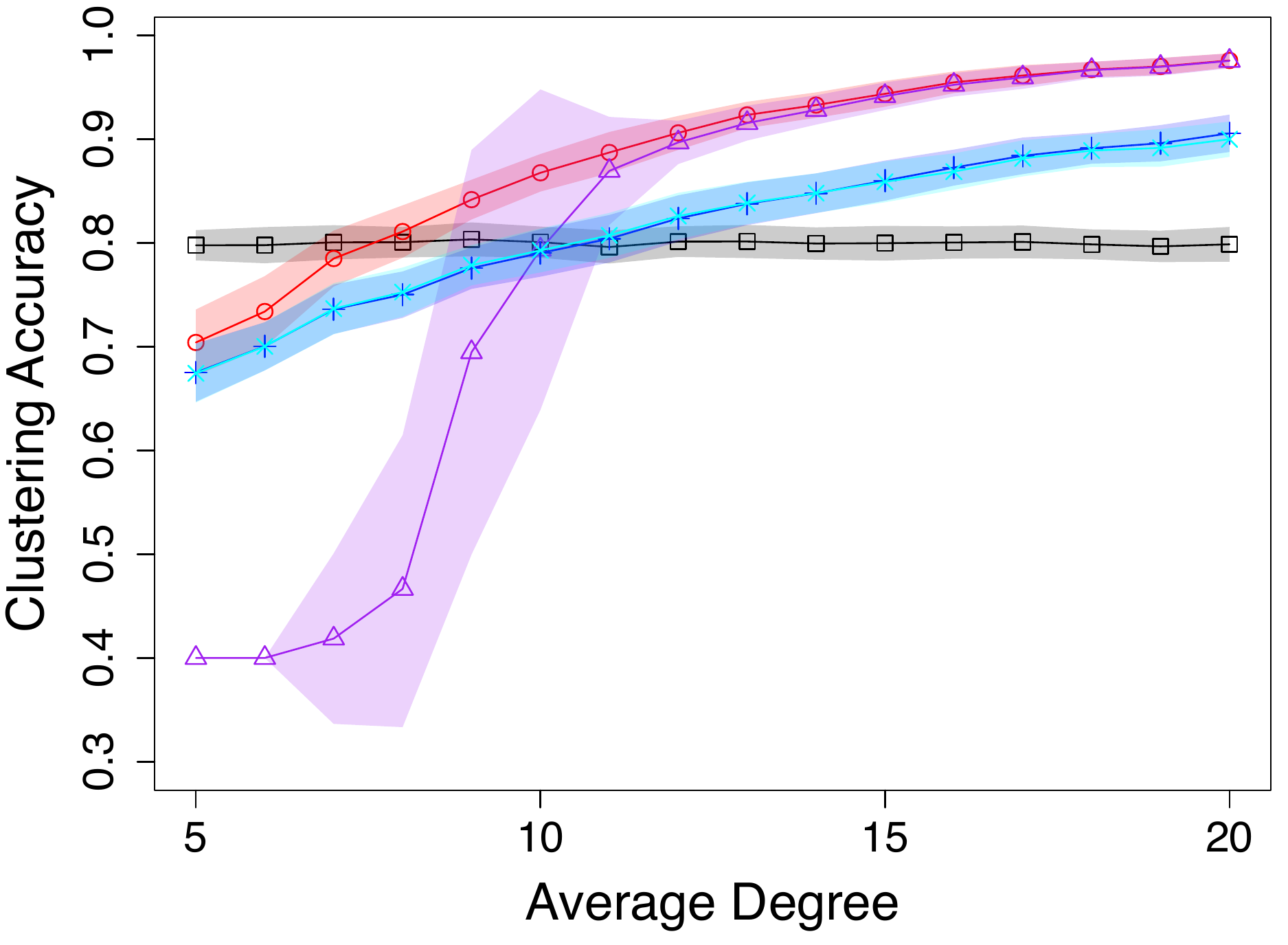}  
  \caption{$\varepsilon = 0.2$}
  \label{Ueps0.2(k=3)}
\end{subfigure}
\begin{subfigure}{.33\textwidth}
  \centering
  % include second image
  \includegraphics[width=1\linewidth]{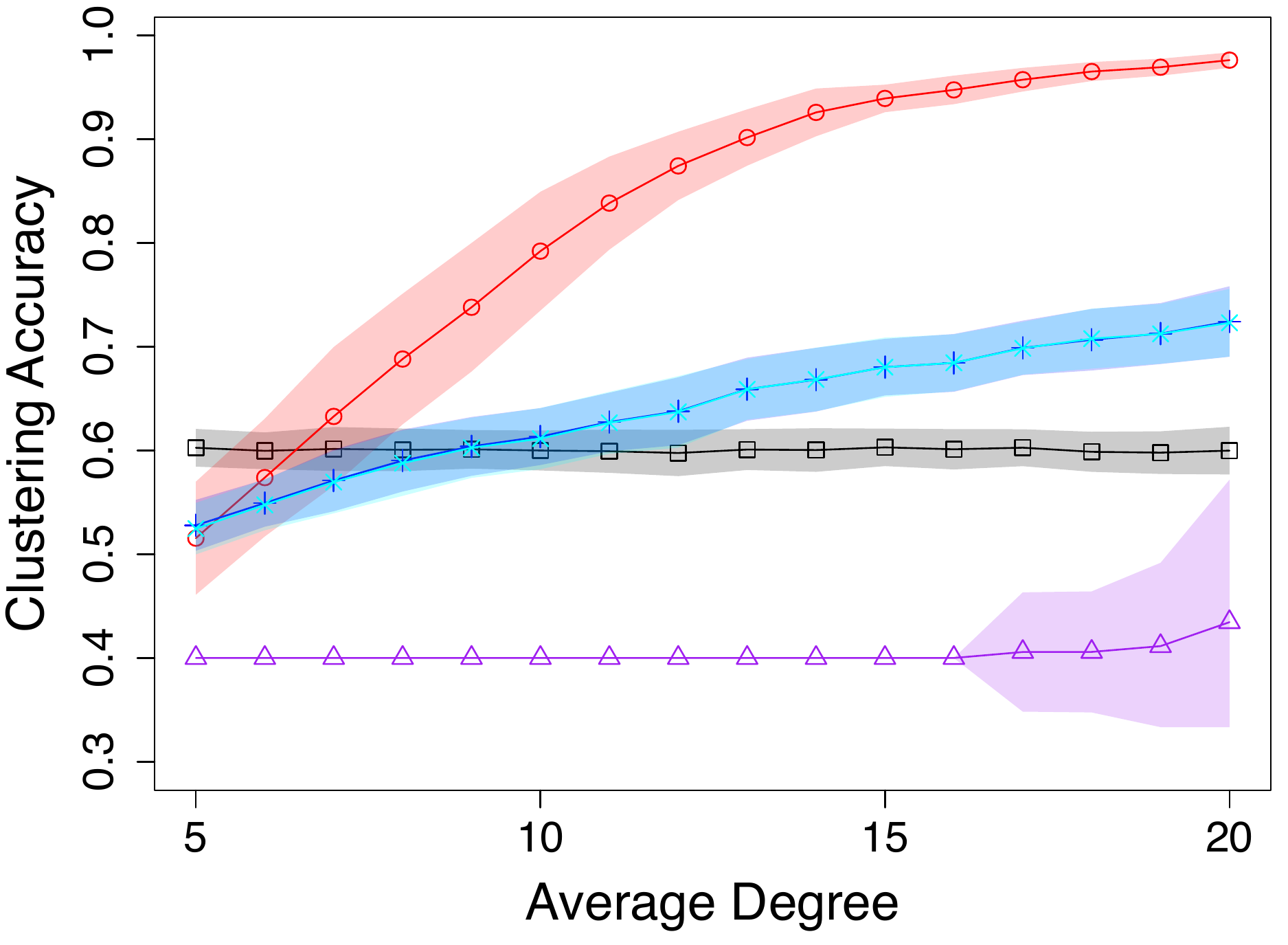} 
  \caption{$\varepsilon = 0.4$}
  \label{Ueps0.4(k=3)}
\end{subfigure}
\begin{subfigure}{.33\textwidth}
  \centering
  % include second image
  \includegraphics[width=1\linewidth]{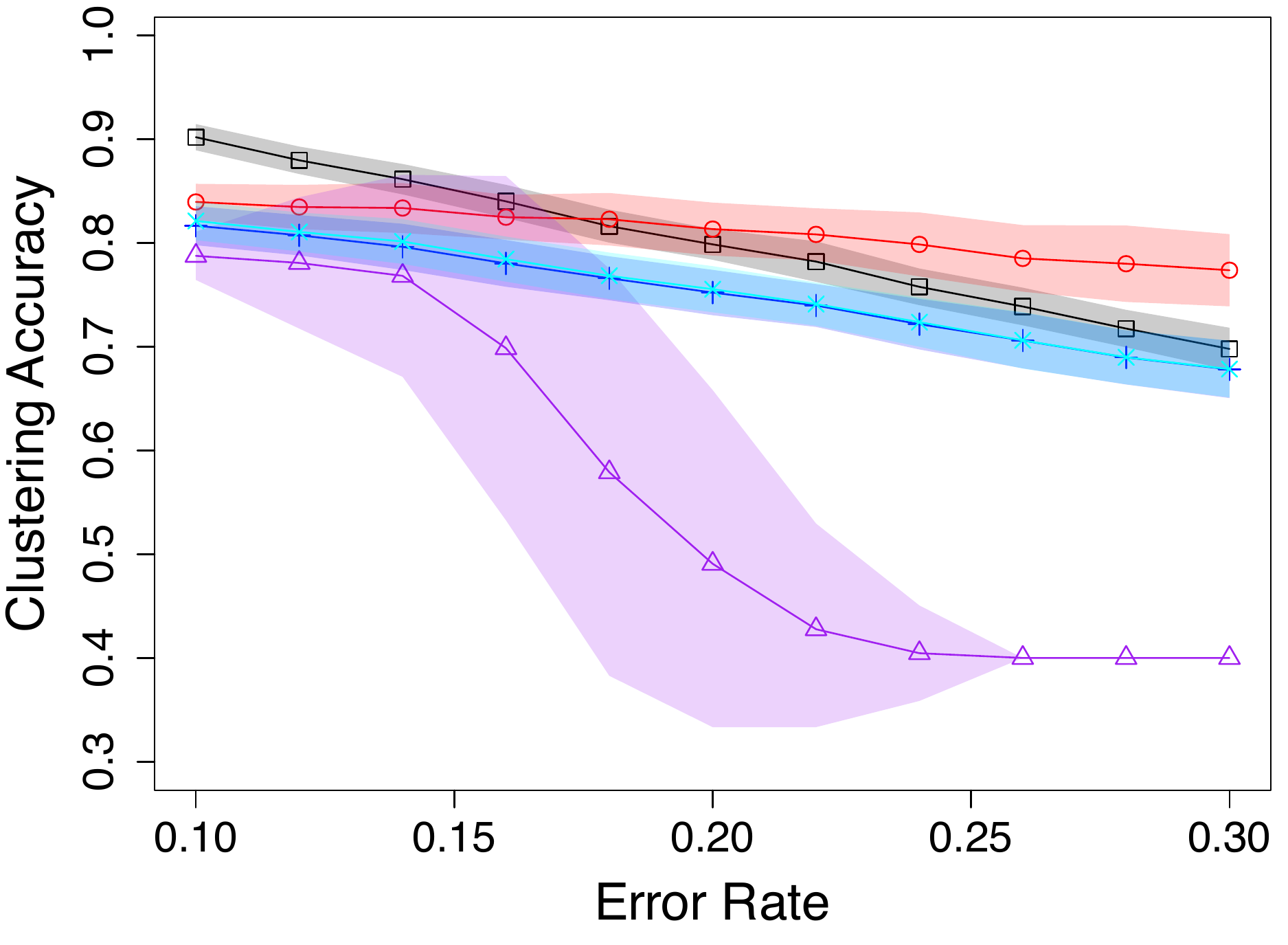}  
  \caption{$d = 8$}
  \label{Uavd8(k=3)}
\end{subfigure}
\caption{Performance of Threshold BCAVI (T-BCAVI), the classical BCAVI, majority vote (MV), and majority vote with penalization (P-MV) in unbalanced settings. (a)-(c): Networks are generated from SBM with $n=600$ nodes, $K=2$ communities of sizes $n_1 = 240, n_2 = 360$. (d)-(f): Networks are generated from SBM with $n=600$ nodes, $K=3$ communities of sizes $n_1 = 150, n_2 =210, n_3 = 240$. Initializations are generated from true node labels according to Assumption~\protect\ref{ass:perturb} with error rate $\varepsilon$, resulting in actual clustering initialization accuracy (RI) of approximately $1-\varepsilon$.}
\label{Ueps}
\end{figure}

In addition, we provide experimental results for the accuracy of parameter estimation in the balanced setting. We report the relative errors of $p^{(s)}$, $q^{(s)}$ and $p^{(s)}/q^{(s)}$, defined by $\frac{p^{(s)} - p}{p}$, $\frac{q^{(s)} - q}{q}$, and $\frac{p^{(s)}/q^{(s)} - p/q}{p/q}$, respectively. Figures~\ref{REeps0.2} and \ref{REeps0.2(k=3)} show that, overall, T-BCAVI is much more accurate than BCAVI in parameter estimation for sparse networks. Both algorithms overestimate $q$ and underestimate $p$, although the bias is smaller for T-BCAVI. When the network is dense, two algorithms have a negligible relative error. Figures~\ref{REeps0.4} and \ref{REeps0.4(k=3)} show that when the initialization contains a large amount of wrong labels ($\varepsilon=0.4$), T-BCAVI still provides meaningful estimates of $p$ and $q$ while the estimation errors of BCAVI do not reduce as the average degree increases. This is because BCAVI always produces $p^{(s)} = q^{(s)}$ when $s$ is sufficiently large. Figures~\ref{REavd8} and \ref{REavd8(k=3)} further confirm that T-BCAVI is much more robust with respect to initialization than BCAVI is.

\begin{figure}[ht]
\begin{subfigure}{.33\textwidth}
  \centering
  % include first image
  \includegraphics[width=1\linewidth]{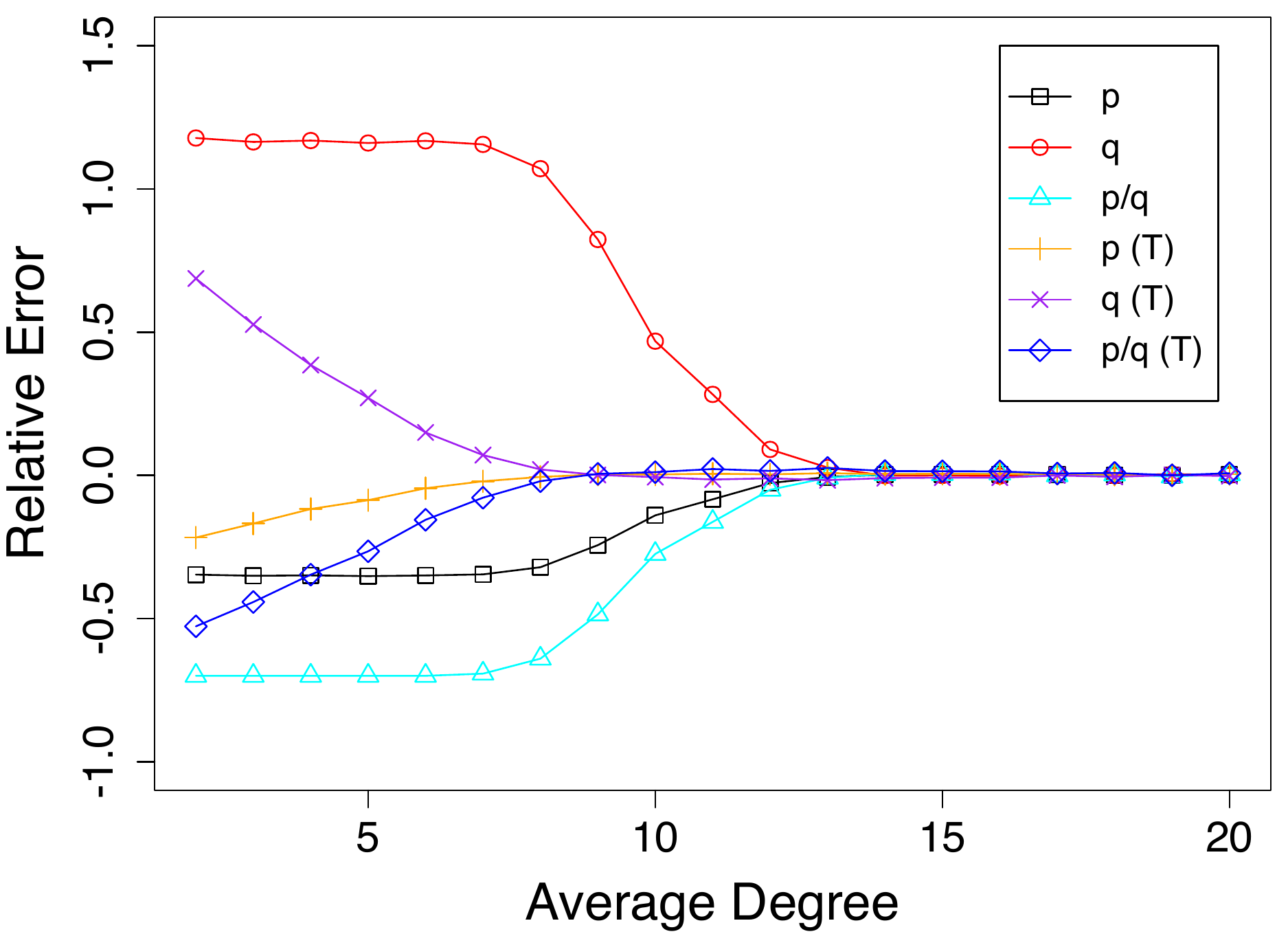}  
  \caption{$\varepsilon = 0.2$}
  \label{REeps0.2}
\end{subfigure}
\begin{subfigure}{.33\textwidth}
  \centering
  % include second image
  \includegraphics[width=1\linewidth]{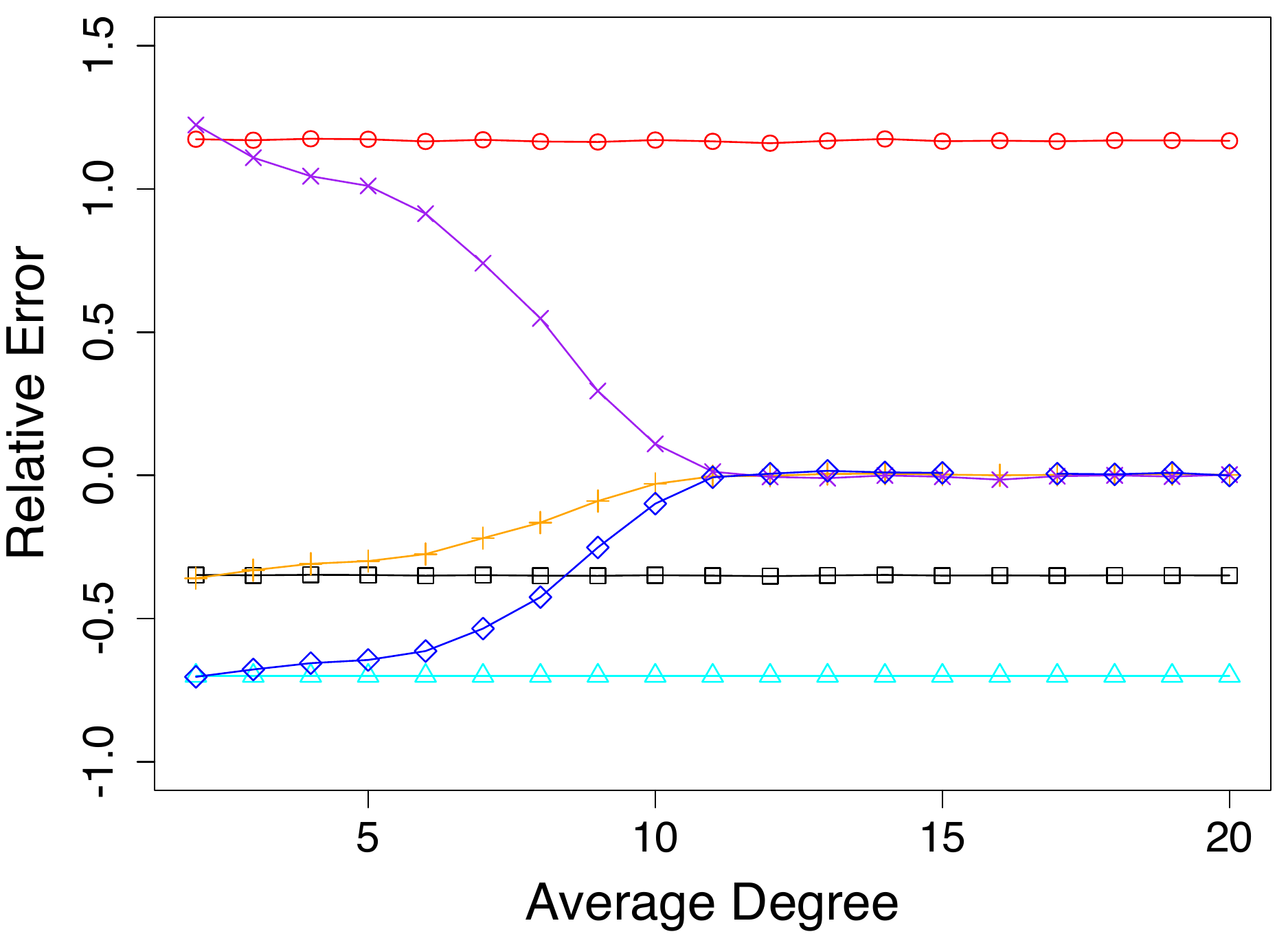}  
  \caption{$\varepsilon = 0.4$}
  \label{REeps0.4}
\end{subfigure}
\begin{subfigure}{.33\textwidth}
  \centering
  % include second image
  \includegraphics[width=1\linewidth]{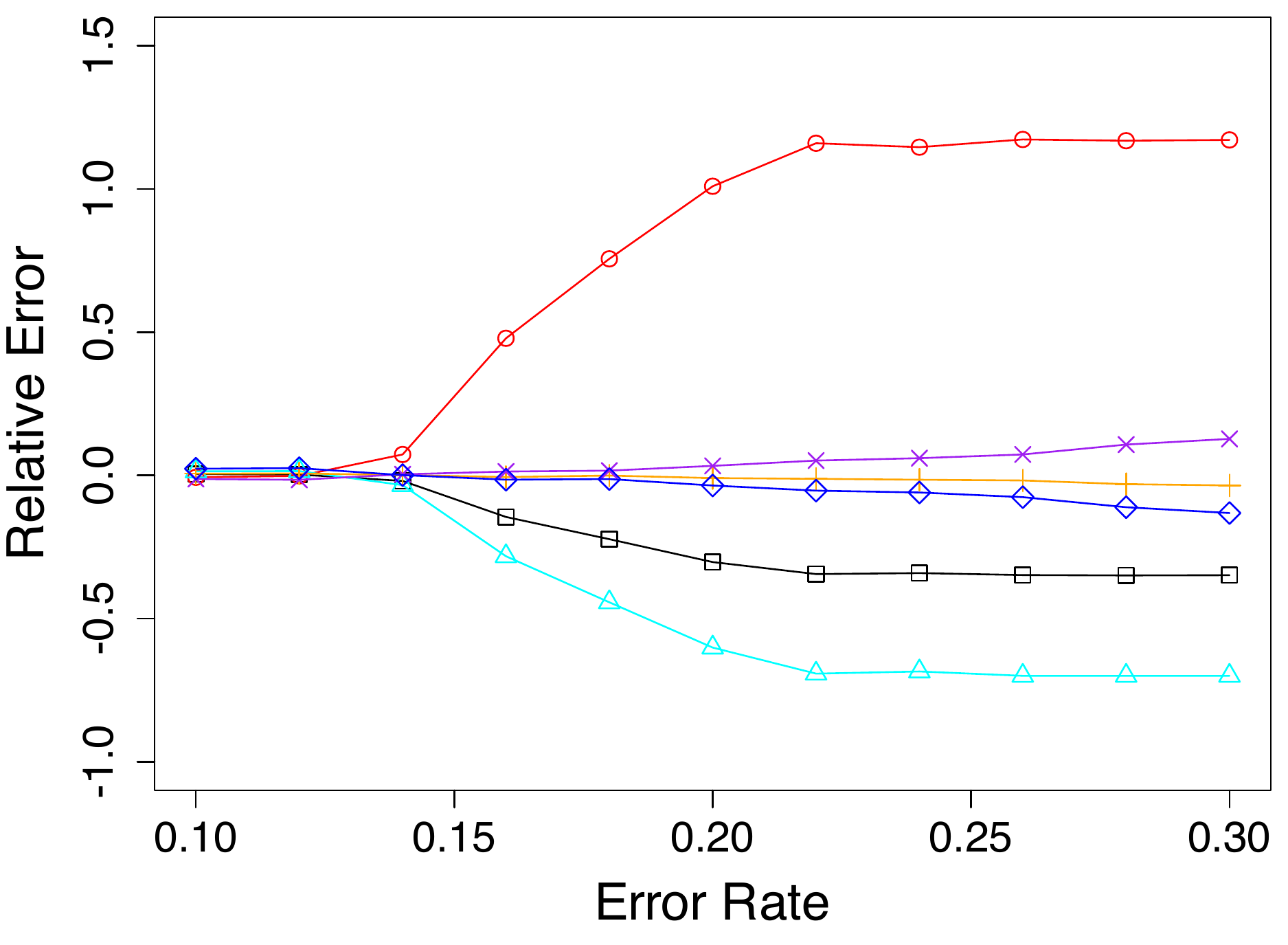}  
  \caption{$d = 8$}
  \label{REavd8}
\end{subfigure}
\begin{subfigure}{.33\textwidth}
  \centering
  % include first image
  \includegraphics[width=1\linewidth]{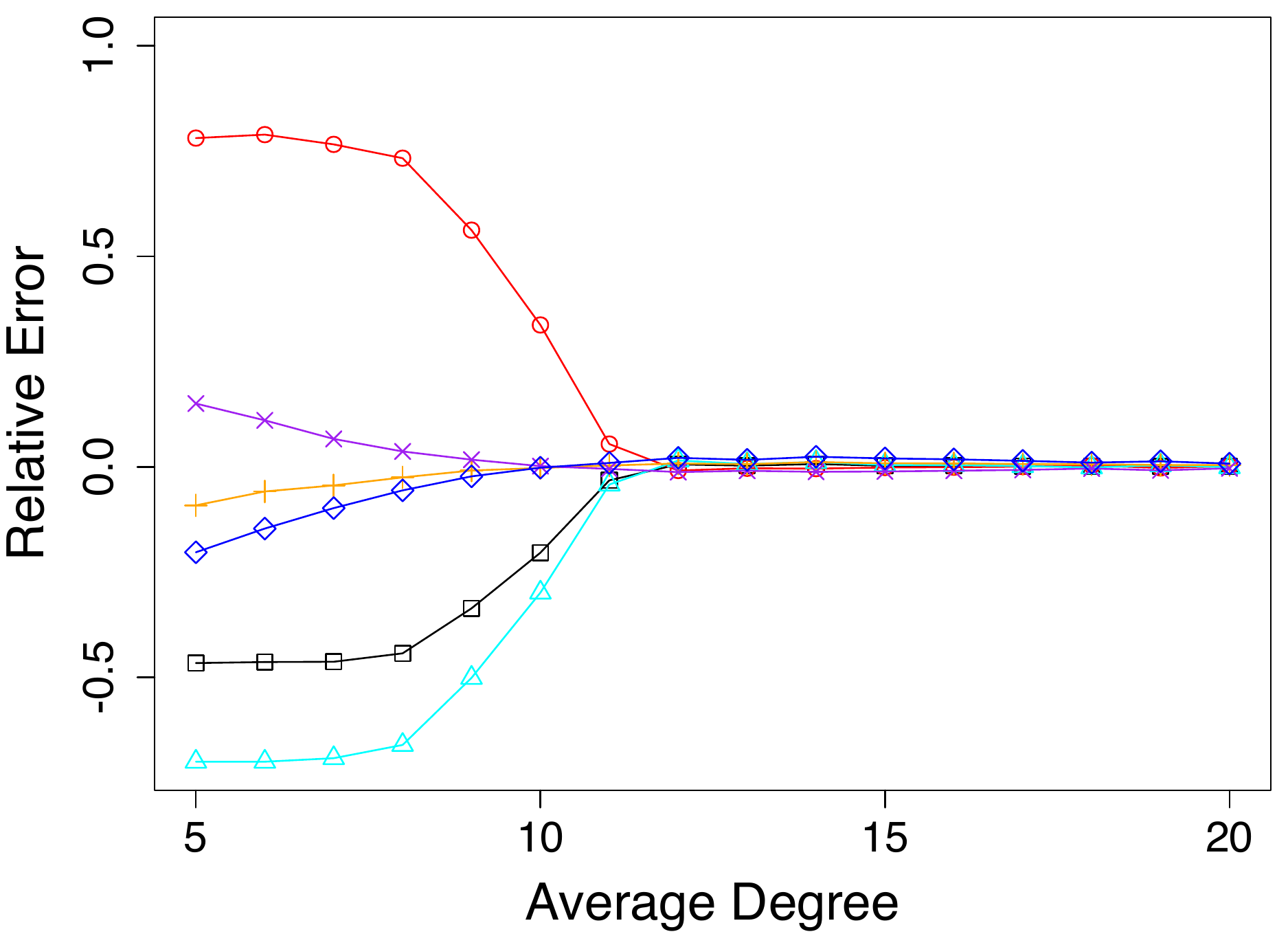}  
  \caption{$\varepsilon = 0.2$}
  \label{REeps0.2(k=3)}
\end{subfigure}
\begin{subfigure}{.33\textwidth}
  \centering
  % include second image
  \includegraphics[width=1\linewidth]{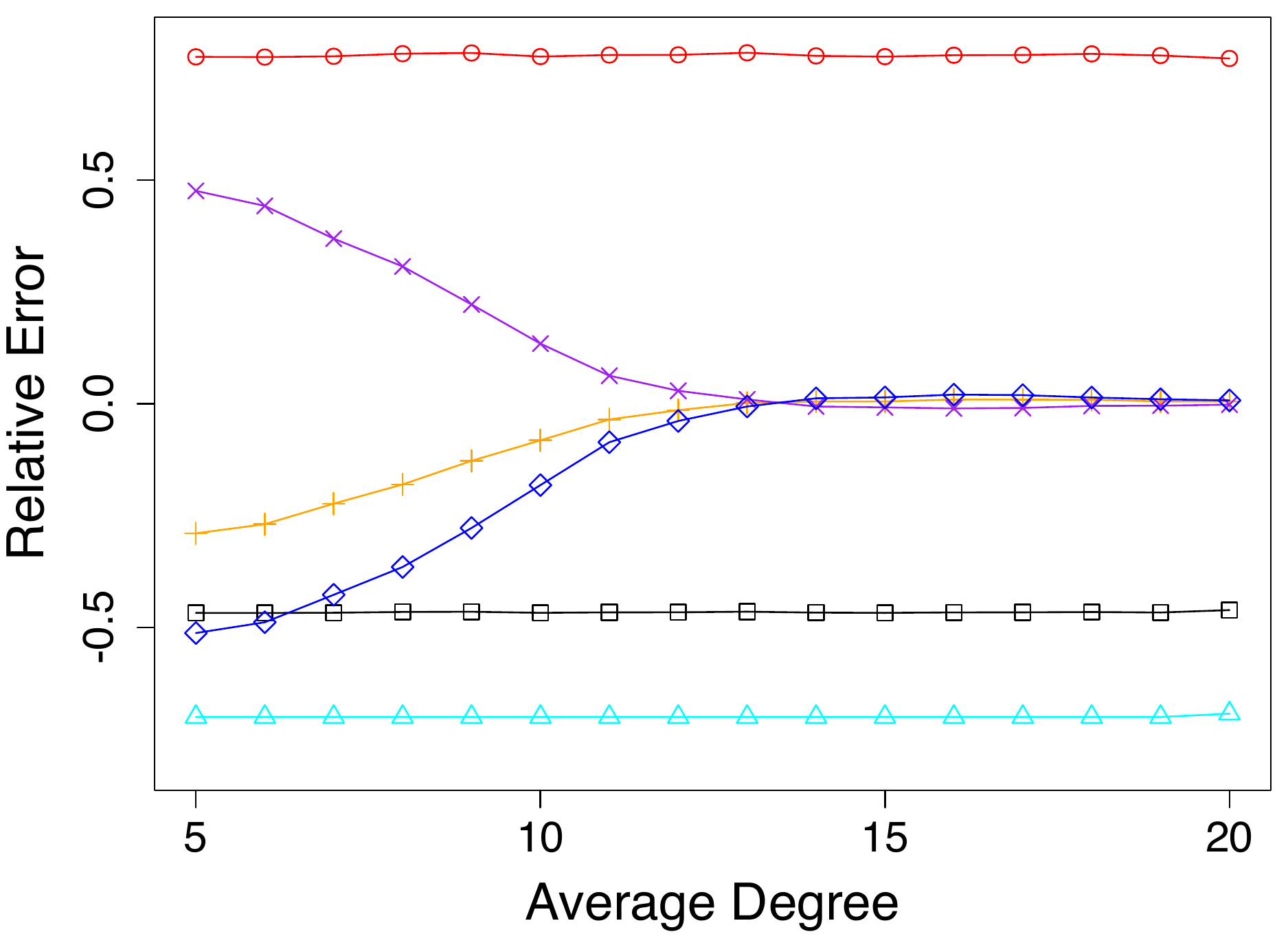}  
  \caption{$\varepsilon = 0.4$}
  \label{REeps0.4(k=3)}
\end{subfigure}
\begin{subfigure}{.33\textwidth}
  \centering
  % include second image
  \includegraphics[width=1\linewidth]{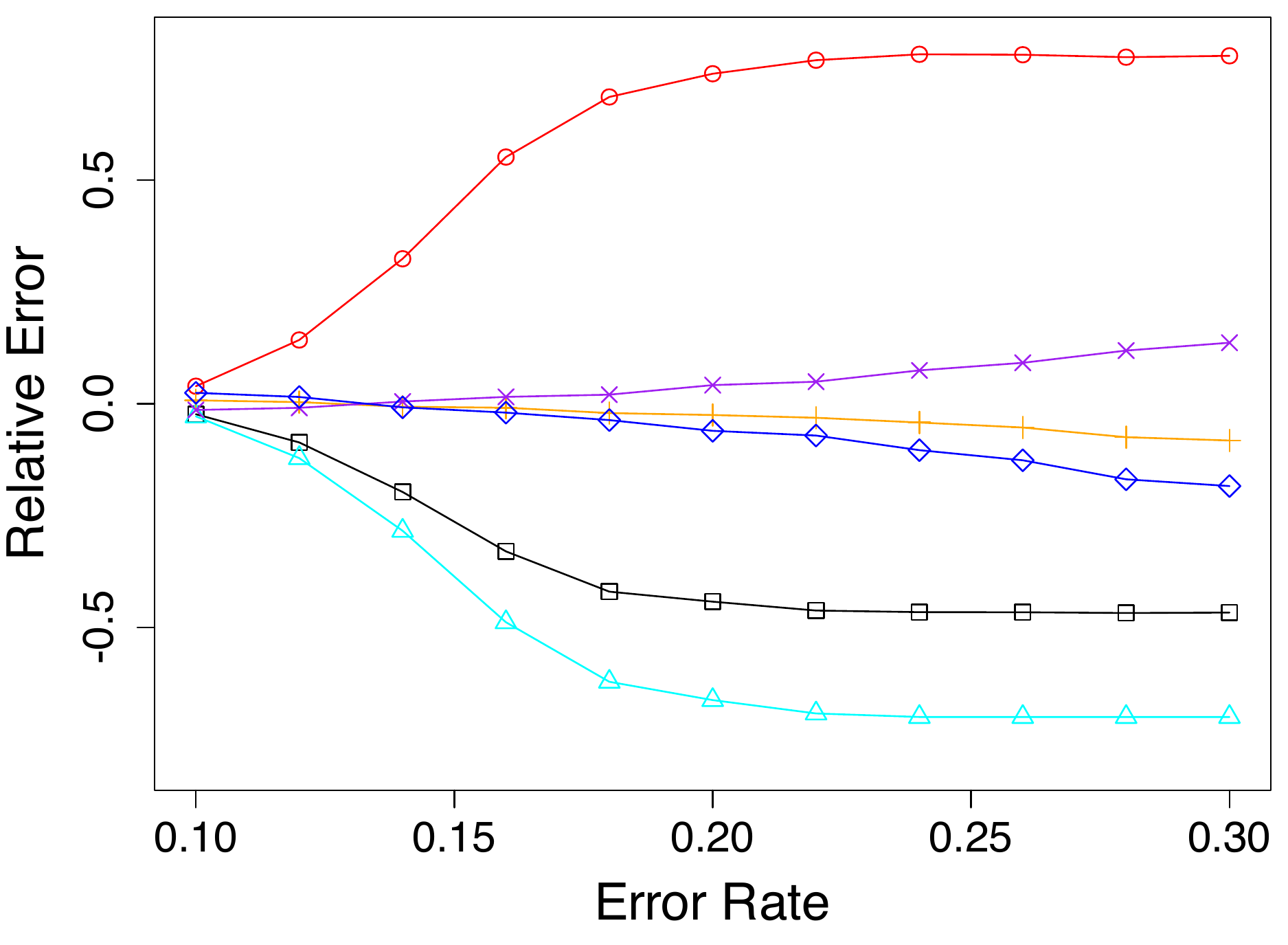}  
  \caption{$d = 8$}
  \label{REavd8(k=3)}
\end{subfigure}
\caption{Relative errors of parameter estimation by Threshold BCAVI (T) and the classical BCAVI in balanced settings. (a)-(c): Networks are generated from SBM with $n=600$ nodes, $K=2$ communities of sizes $n_1 = n_2 = 300$. (d)-(f): Networks are generated from SBM with $n=600$ nodes, $K=3$ communities of sizes $n_1 = n_2 = n_3 = 200$. Initializations are generated from true node labels according to Assumption~\protect\ref{ass:perturb} with error rate $\varepsilon$.}
\label{REperturb}
\end{figure}

\medskip
\noindent
{\em (b) Communities with initialization by spectral clustering}. 
Since the initialization in Assumption~\ref{ass:perturb} is not available in practice, we now evaluate the performance of all methods using initialization generated from network data splitting \citep{chin2015stochastic,li2016network}. According to the discussion in Section~\ref{sec:TBCAVI}, we fix a sampling probability $\tau\in(0,1/2)$ and sample edges in $A$ independently with probability $\tau$; denote by $A^{(\text{init})}$ the adjacency matrix of the resulting sampled network. To get a warm initialization $Z^{(0)}$, we apply spectral clustering algorithm \citep{von2007tutorial} on $A^{(\text{init})}$. All methods are then performed on the remaining sub-network $A - A^{(\text{init})}$ using the initialization $Z^{(0)}$. For reference, we also report the accuracy of $Z^{(0)}$ (SCI). 
%Since the simulation result is similar in DCSBM, we will include it in the appendix.

Similar to the previous setting when $Z^{(0)}$ are generated according to Assumption~\ref{ass:perturb}, Figure~\ref{avd} shows that T-BCAVI performs much better than other methods for balanced networks, especially when the spectral clustering initialization is almost uninformative (accuracy close to random guess of 1/2 when $K=2$ and 1/3 when $K=3$ in balanced network). This observation again highlights that T-BCAVI often requires a much weaker initialization than what BCAVI and variants of majority vote need. Network data splitting also provides a practical way to implement our algorithm in real-world data analysis. Figure~\ref{Uavd} further shows that T-BCAVI outperforms BCAVI and other methods in unlanced networks. When the average degree is large, both algorithms have similar and negligible errors. This is consistent with the results in Figures~\ref{eps} and \ref{Ueps} when initialization is generated according to Assumption~\ref{ass:perturb}. Generally, T-BCAVI does not require accurate spectral clustering initializations and consistently outperforms other methods for both balanced and unbalanced networks.

\begin{figure}[ht]
\begin{subfigure}{.33\textwidth}
  \centering
  % include first image
  \includegraphics[width=1\linewidth]{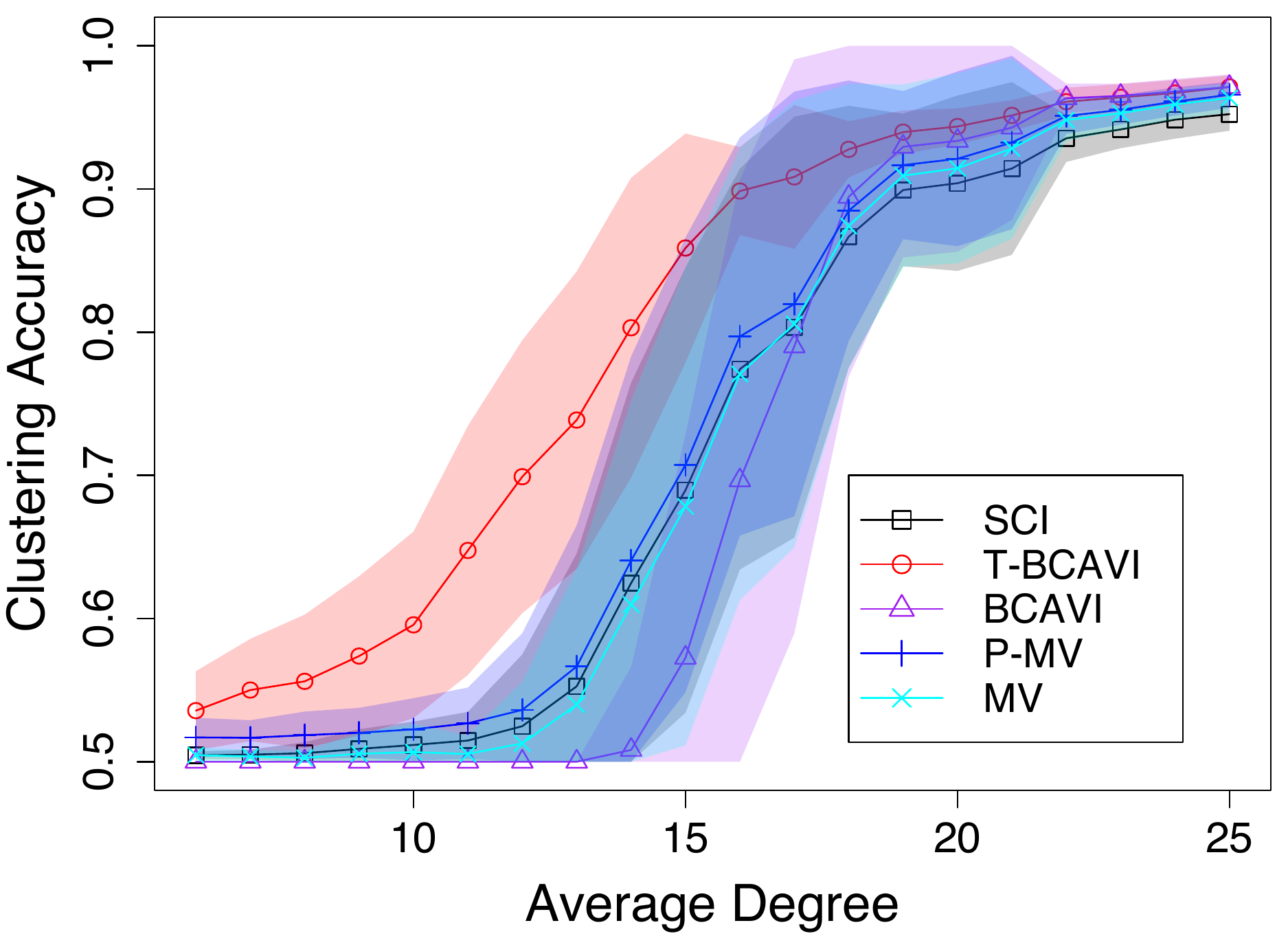}  
  \caption{$\tau = 0.5$}
  \label{samplep0.5}
\end{subfigure}
\begin{subfigure}{.33\textwidth}
  \centering
  % include second image
  \includegraphics[width=1\linewidth]{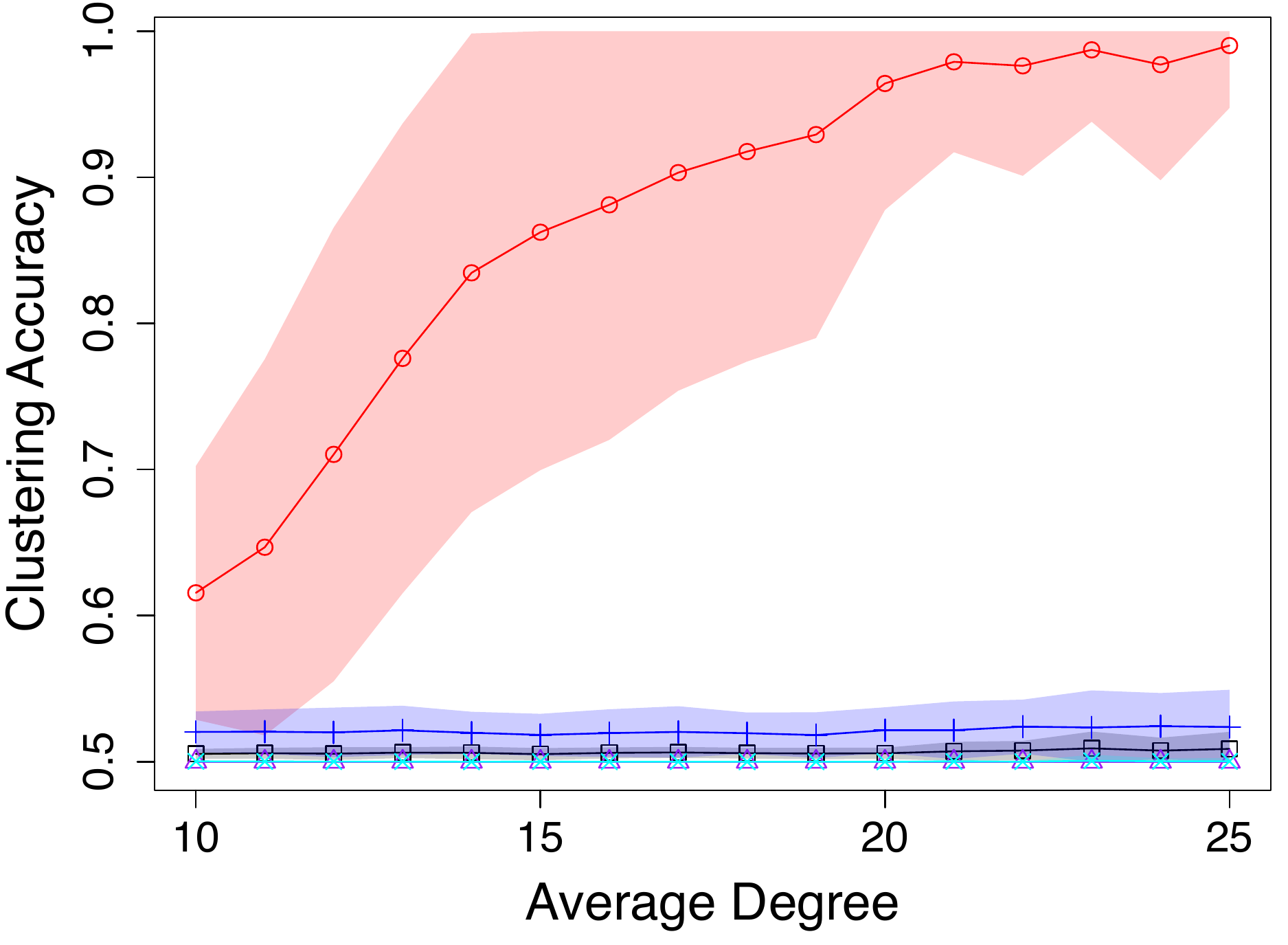}  
  \caption{$\tau = 0.2$}
  \label{samplep0.2}
\end{subfigure}
\begin{subfigure}{.33\textwidth}
  \centering
  % include second image
  \includegraphics[width=1\linewidth]{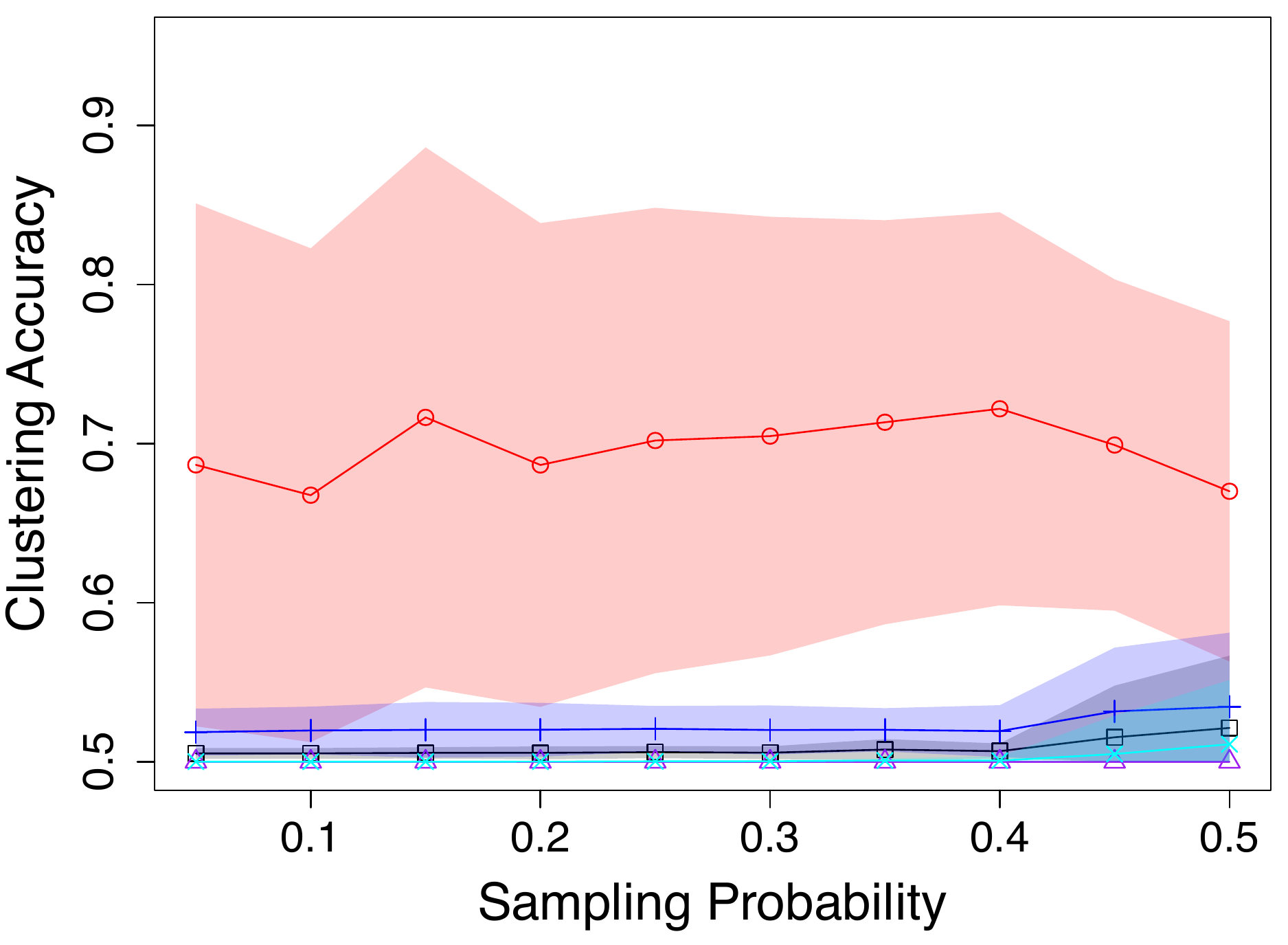}  
  \caption{ $d = 12$}
  \label{sample_avd12}
\end{subfigure}
\begin{subfigure}{.33\textwidth}
  \centering
  % include first image
  \includegraphics[width=1\linewidth]{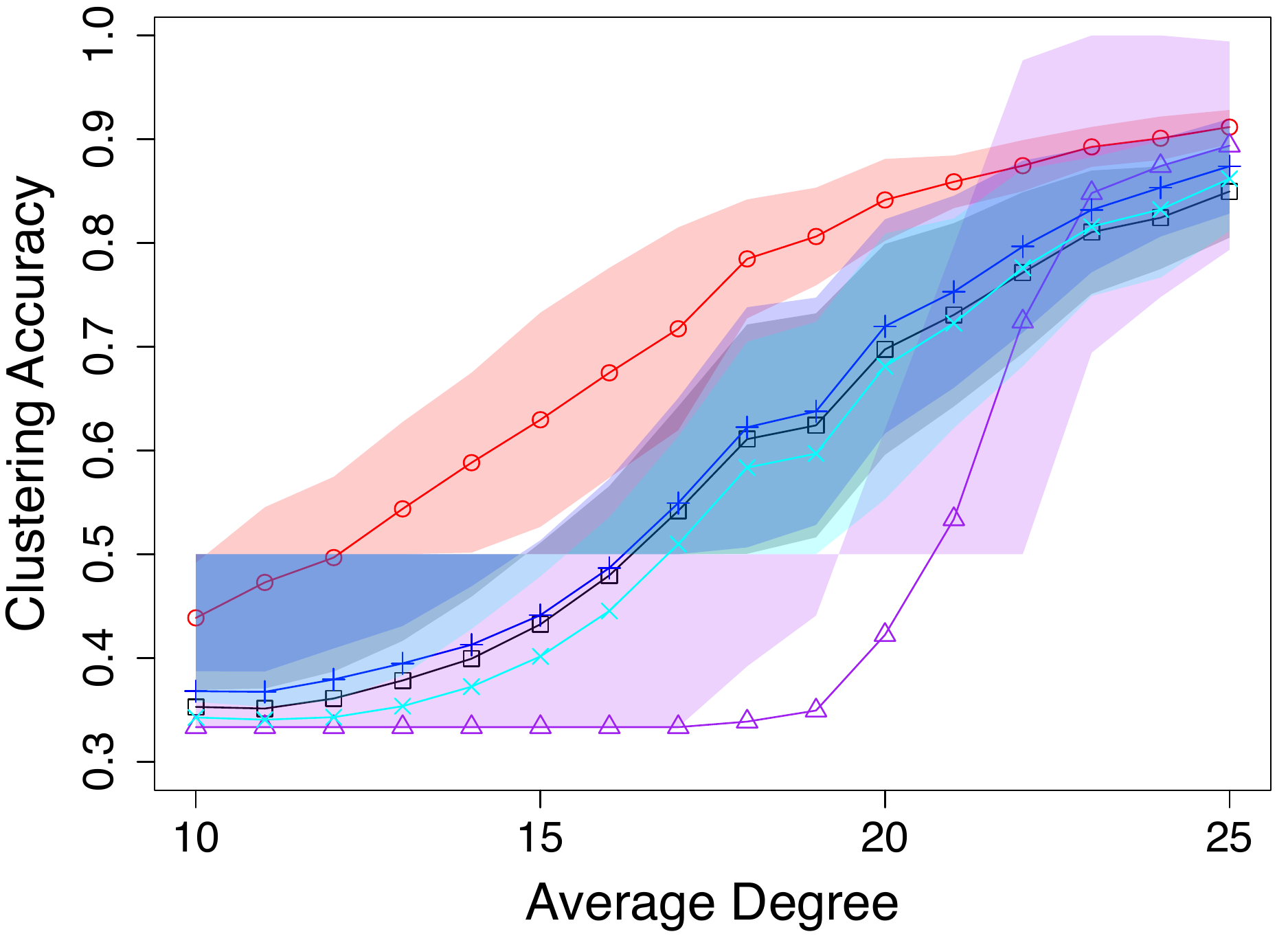}  
  \caption{$\tau = 0.5$}
  \label{samplep0.5(k=3)}
\end{subfigure}
\begin{subfigure}{.33\textwidth}
  \centering
  % include second image
  \includegraphics[width=1\linewidth]{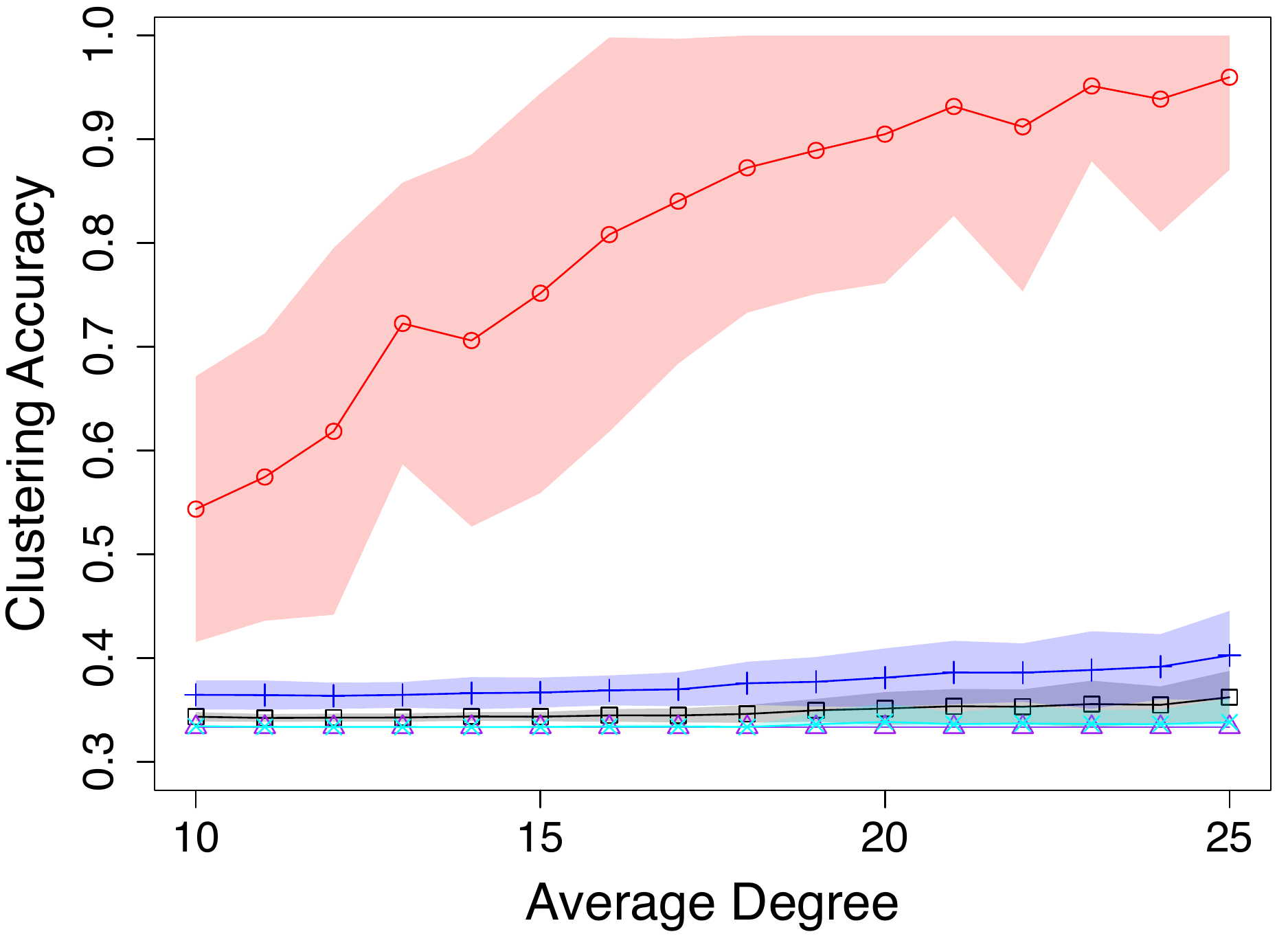}  
  \caption{$\tau = 0.25$}
  \label{samplep0.25(k=3)}
\end{subfigure}
\begin{subfigure}{.33\textwidth}
  \centering
  % include second image
  \includegraphics[width=1\linewidth]{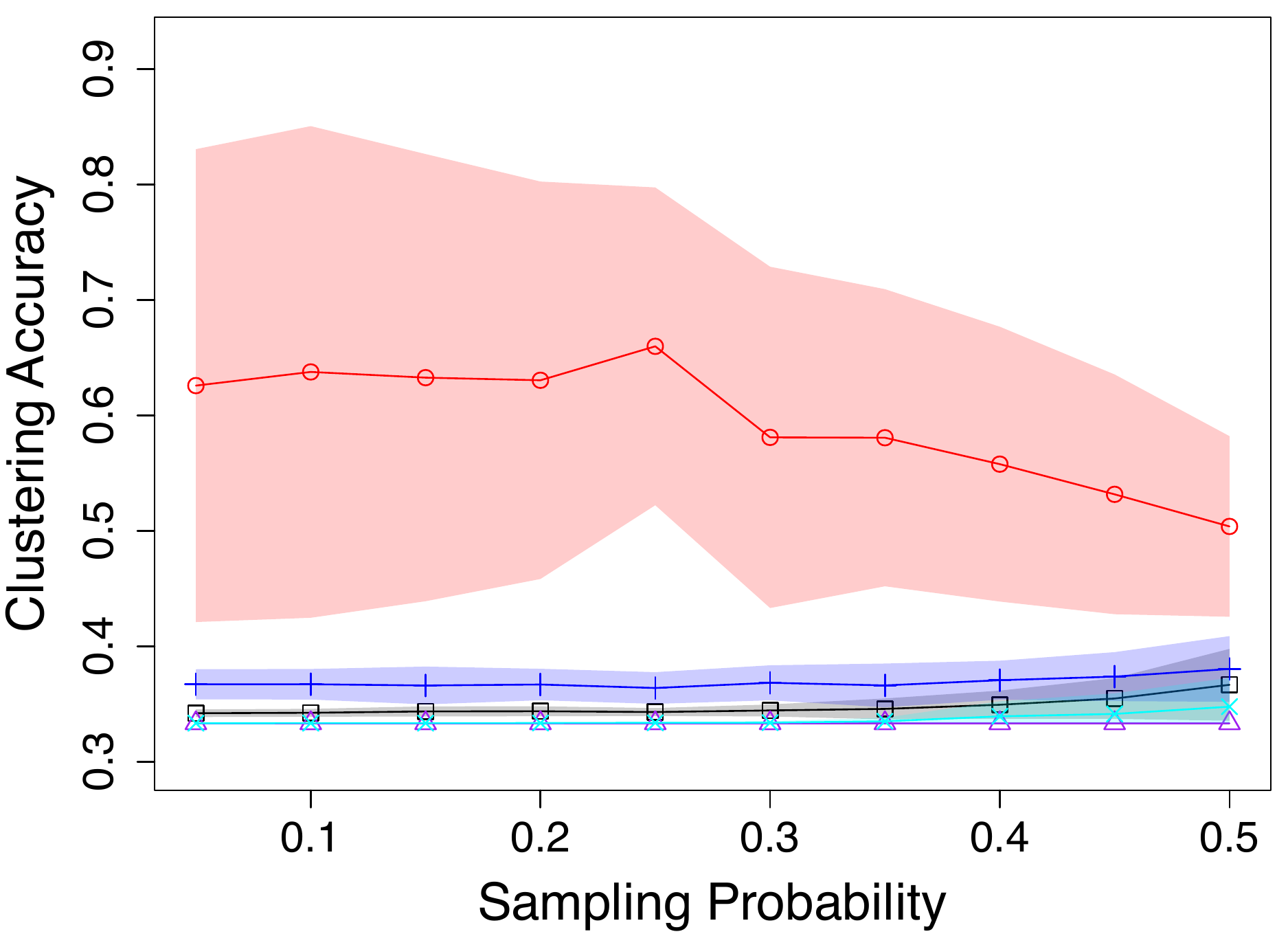}  
  \caption{$d = 12$}
  \label{sample_avd12(k=3)}
\end{subfigure}

\caption{Performance of Threshold BCAVI (T-BCAVI), the classical BCAVI, majority vote (MV), and majority vote with penalization (P-MV) in balanced settings. (a)-(c): Networks are generated from SBM with $n=600$ nodes, $K=2$ communities of sizes $n_1 = n_2 = 300$. (d)-(f): Networks are generated from SBM with $n=600$ nodes, $K=3$ communities of sizes $n_1 = n_2 = n_3 = 200$. Initializations are computed by spectral clustering (SCI) applied to sampled sub-networks $A^{(\text{init})}$ with sampling probability $\tau$  while T-BCAVI and BCAVI are performed on remaining sub-networks $A-A^{(\text{init})}$.}
\label{avd}
\end{figure}

\begin{figure}[ht]
\begin{subfigure}{.33\textwidth}
  \centering
  % include first image
  \includegraphics[width=1\linewidth]{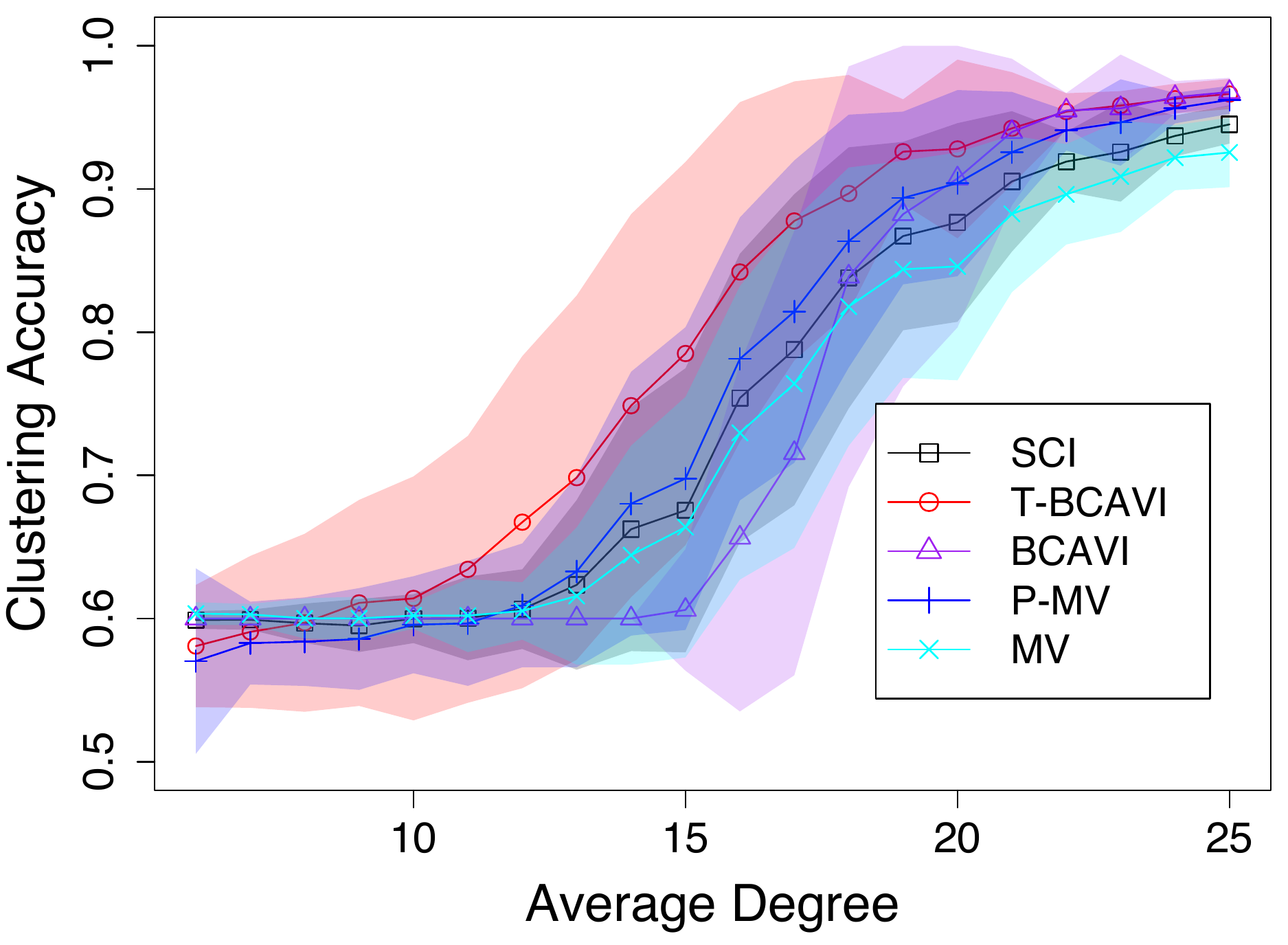}  
  \caption{$\tau = 0.5$}
  \label{Usamplep0.5}
\end{subfigure}
\begin{subfigure}{.33\textwidth}
  \centering
  % include second image
  \includegraphics[width=1\linewidth]{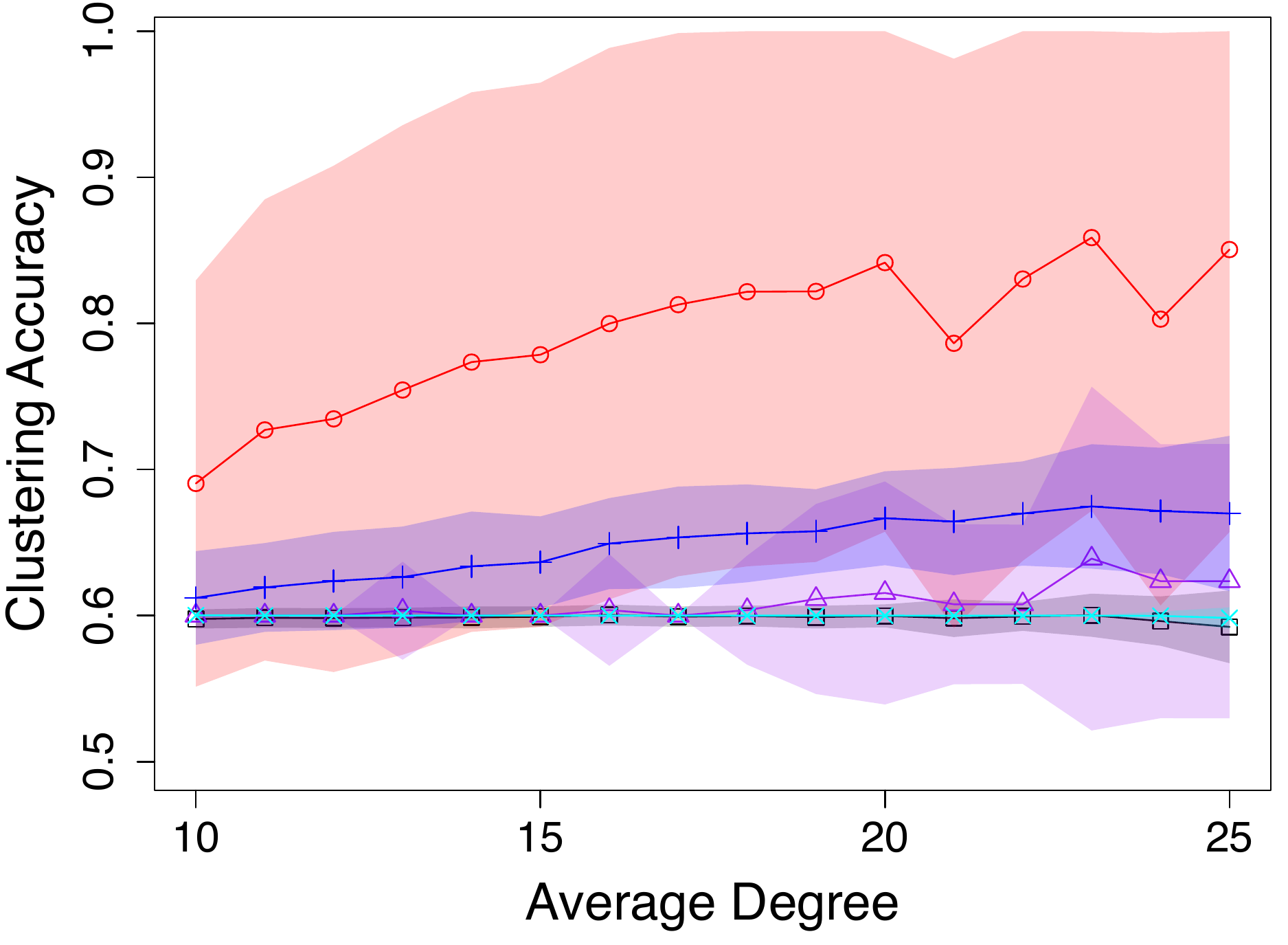}  
  \caption{$\tau = 0.2$}
  \label{Usamplep0.2}
\end{subfigure}
\begin{subfigure}{.33\textwidth}
  \centering
  % include second image
  \includegraphics[width=1\linewidth]{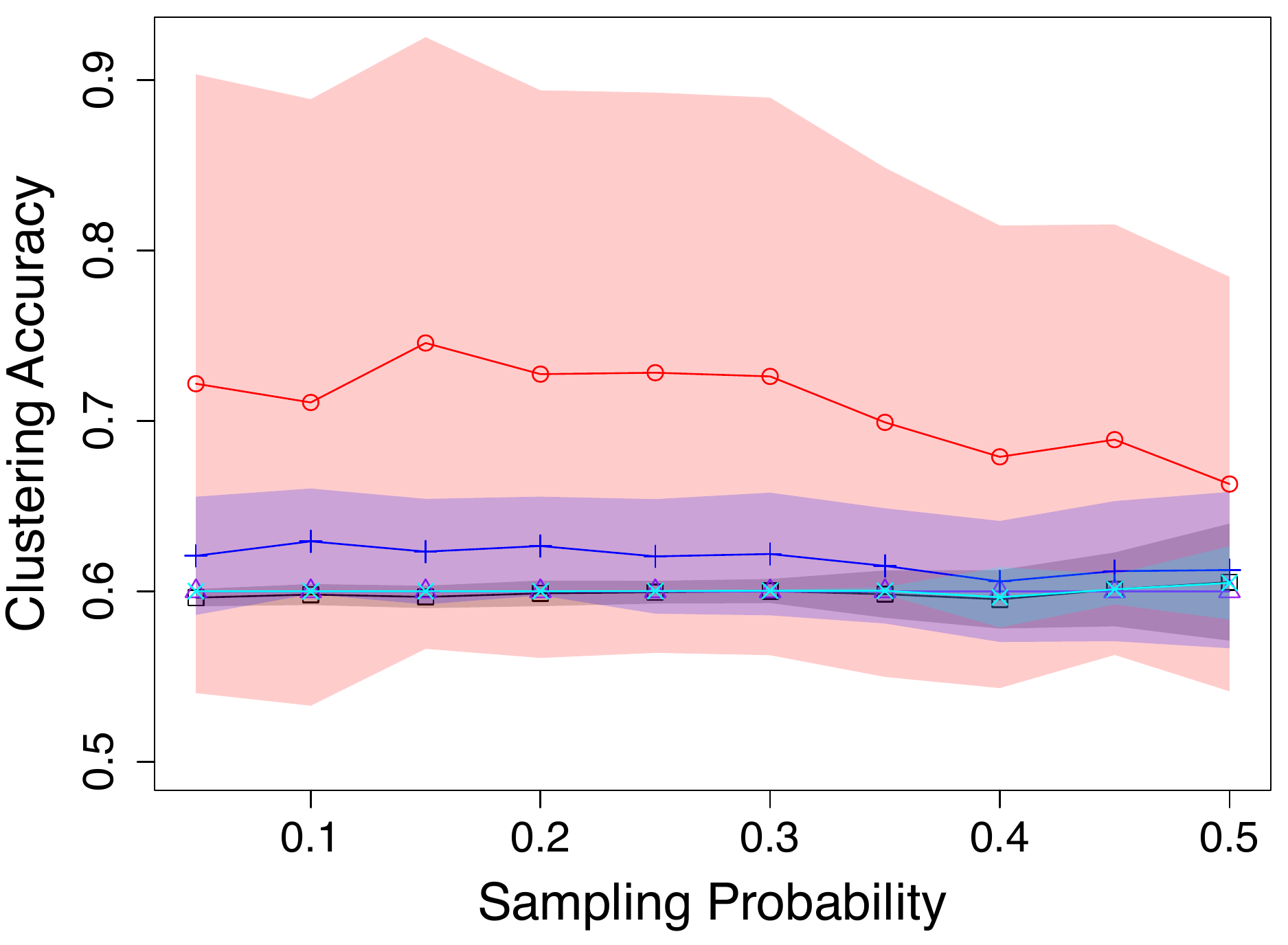}  
  \caption{$d = 12$}
  \label{Usample_avd12}
\end{subfigure}
\begin{subfigure}{.33\textwidth}
  \centering
  % include first image
  \includegraphics[width=1\linewidth]{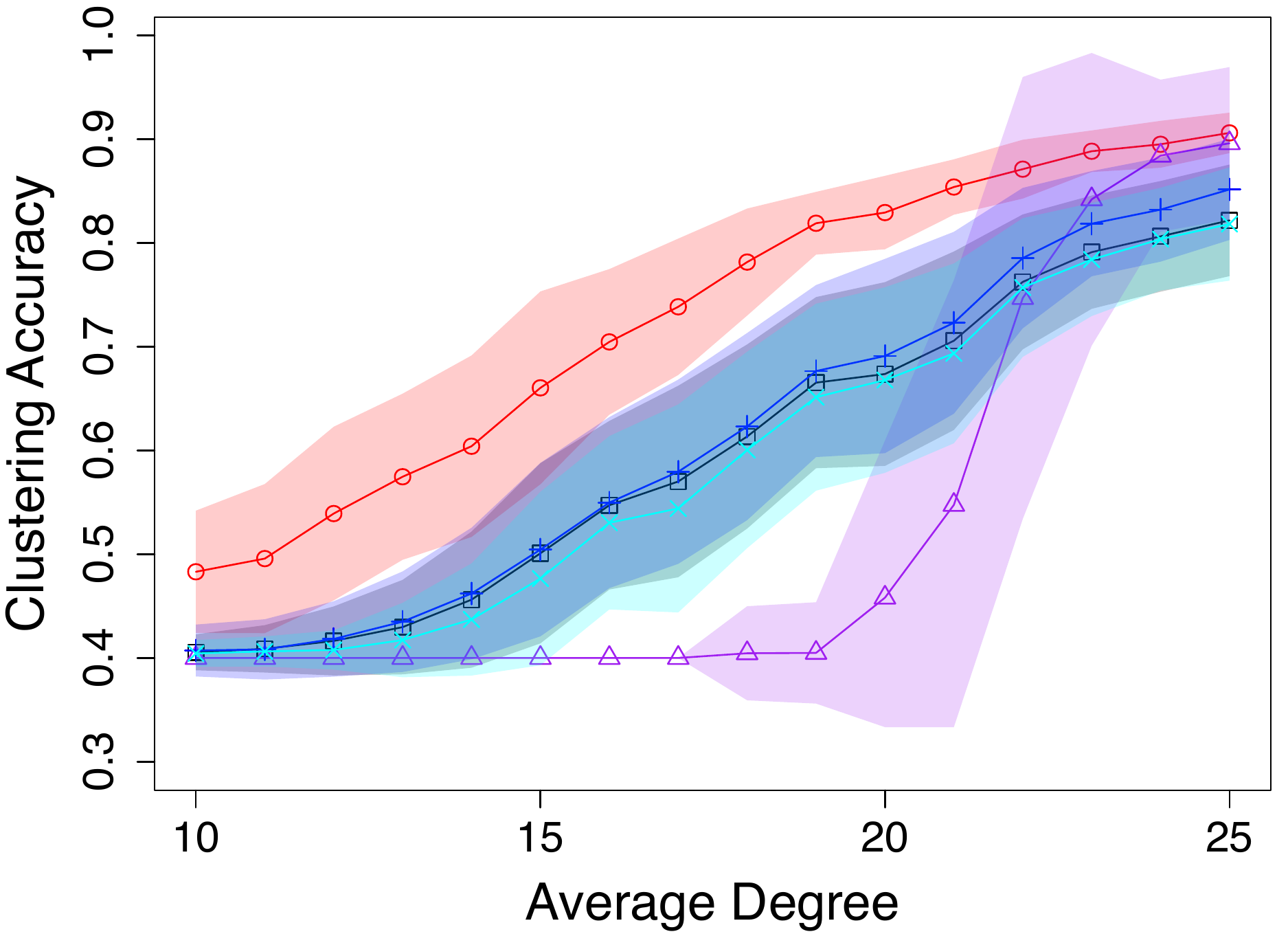}  
  \caption{$\tau = 0.5$}
  \label{Usamplep0.5(k=3)}
\end{subfigure}
\begin{subfigure}{.33\textwidth}
  \centering
  % include second image
  \includegraphics[width=1\linewidth]{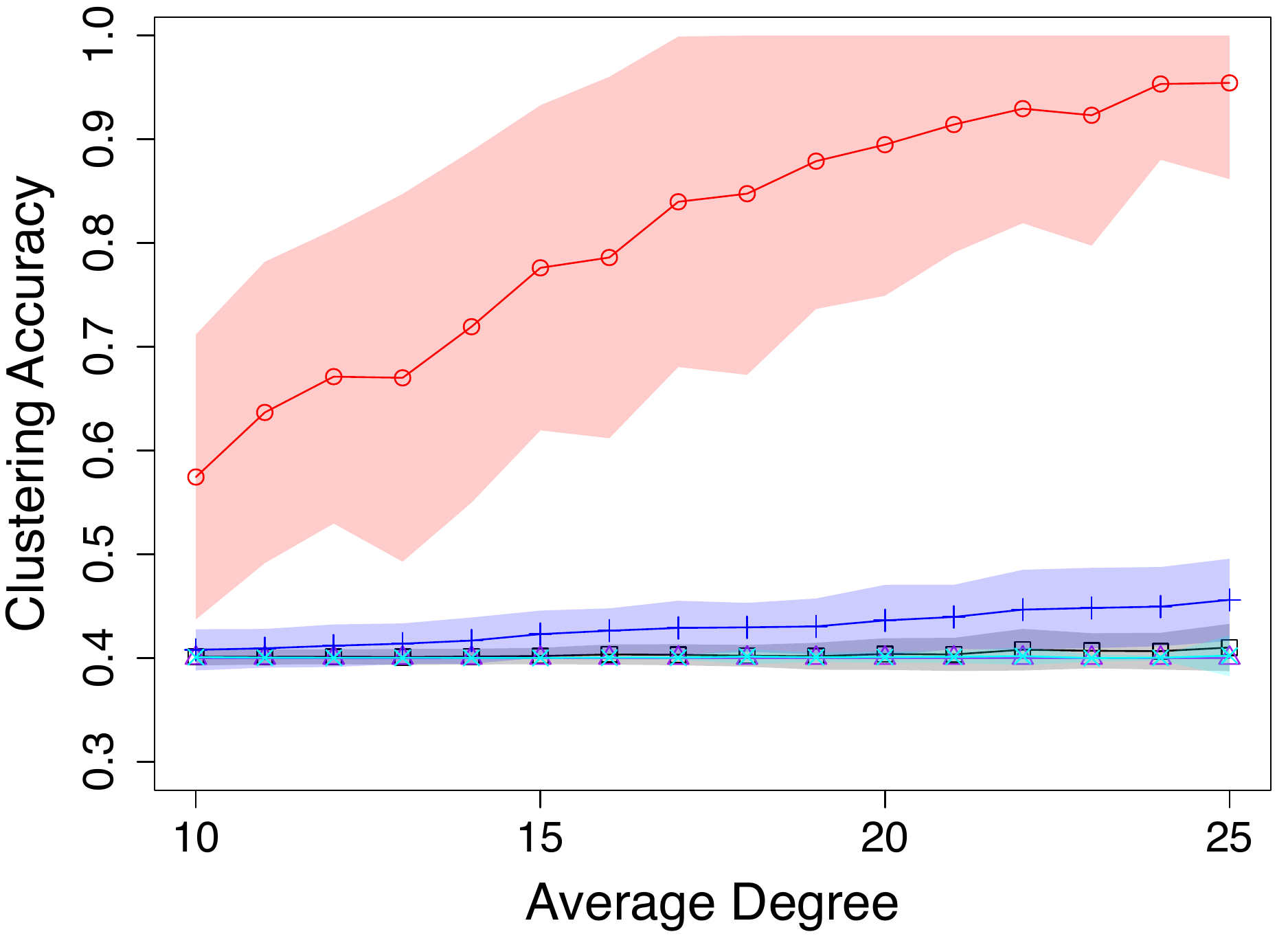}  
  \caption{$\tau = 0.2$}
  \label{Usamplep0.25(k=3)}
\end{subfigure}
\begin{subfigure}{.33\textwidth}
  \centering
  % include second image
  \includegraphics[width=1\linewidth]{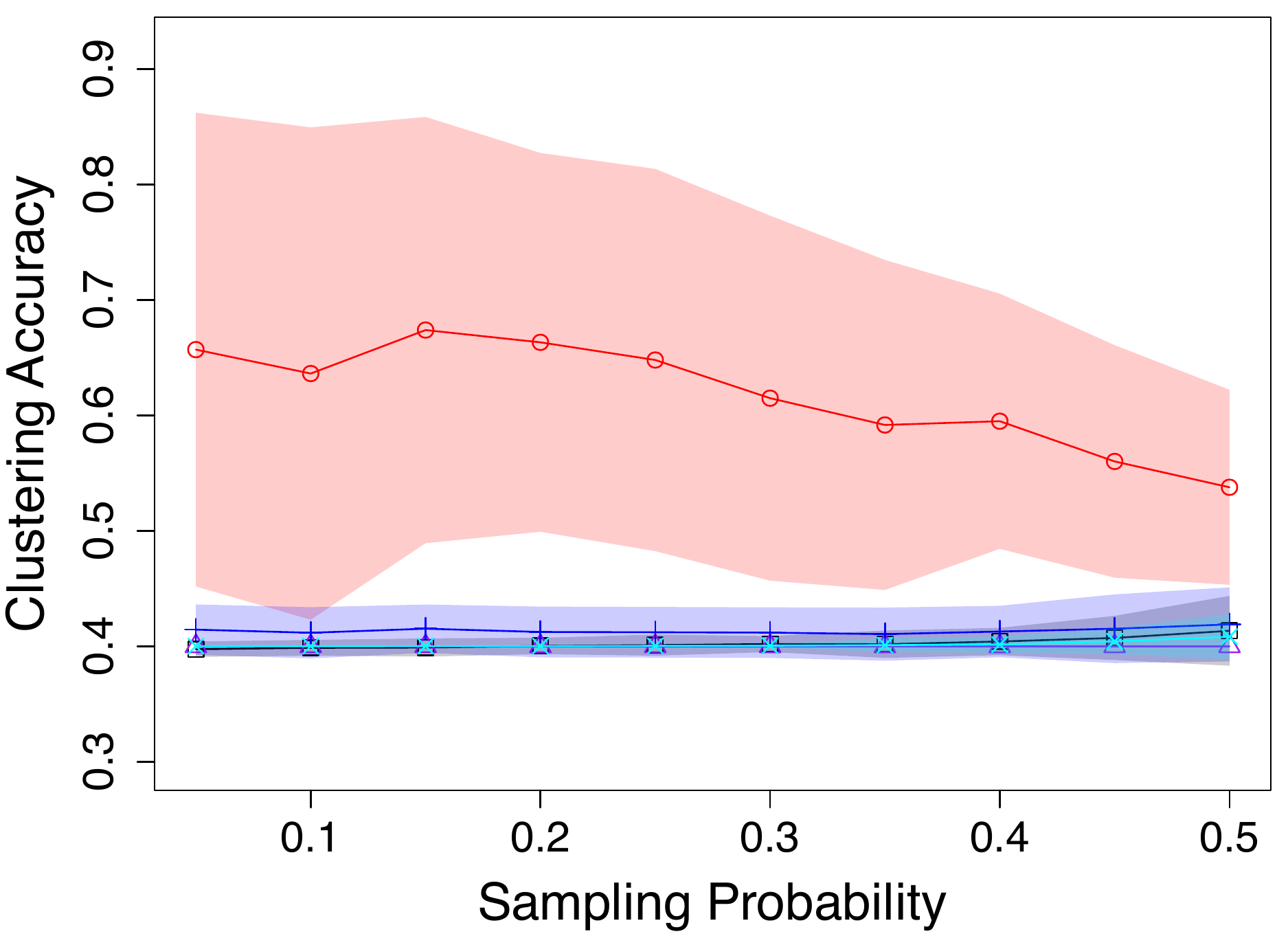}  
  \caption{$d = 12$}
  \label{Usample_avd12(k=3)}
\end{subfigure}

\caption{Performance of Threshold BCAVI (T-BCAVI), the classical BCAVI, majority vote (MV), and majority vote with penalization (P-MV) in unbalanced settings. (a)-(c): Networks are generated from SBM with $n=600$ nodes, $K=2$ communities of sizes $n_1 = 240, n_2 = 360$. (d)-(f): Networks are generated from SBM with $n=600$ nodes, $K=3$ communities of sizes $n_1 = 150, n_2 =210, n_3 = 240$. Initializations are computed by spectral clustering (SCI) applied to sampled sub-networks $A^{(\text{init})}$ with sampling probability $\tau$  while T-BCAVI and BCAVI are performed on remaining sub-networks $A-A^{(\text{init})}$.}
\label{Uavd}
\end{figure}

\subsection{Real Data Examples}
\label{real data}

We first consider the network of common adjectives and nouns in the novel ``David Copperfield'' by Charles Dickens, as described by \citet{newman2006finding}. The nodes represent the adjectives and nouns that occur most frequently in the book. The node labels are 0 for adjectives and 1 for nouns. Edges connect any pair of words that occur in adjacent positions in the book. The network has 58 adjectives and 54 nouns with an average degree of 7.59.

Another data set we analyze is a network of books about US politics published around the 2004 presidential election and sold by the online bookseller Amazon.com. Edges between books represent frequent co-purchasing of books by the same buyers. The network was compiled by \citet{orgnet} and was posted by \citet{ndt}. The books are labeled as "liberal," "neutral," or "conservative" based on their descriptions and reviews of the books \citep{ndt}. This network has an average degree of 8.40, and it contains 49 conservative books, 43 liberal books, and 13 neutral books.

The last data set that we analyze in this section is the political blogosphere network data set \citep{adamic2005political}. It is also related to the 2004 U.S. presidential election, where the nodes are blogs focused on American politics and the edges are hyperlinks connecting the blogs. These blogs have been manually labeled as 'liberal' or 'conservative' \citep{adamic2005political}. We ignore the direction of the hyperlinks and perform our analysis on the largest connected component of the network \citep{karrer2011stochastic}, which contains 1490 nodes, and the average node degree is 22.44. This network includes 732 conservative blogs and 758 liberal blogs.

Similar to Section~\ref{sec:simulated}, to implement all methods, we first randomly sample a sub-network and apply a spectral clustering algorithm to get a warm initialization. We then run these algorithms on the remaining sub-network for comparison. Since it is unclear whether these real networks are more similar to networks drawn from SBM or DCSBM, we implement both versions of the variational inference methods that fit either
SBM or DCSBM. In particular, we denote the classical BCAVI algorithms that fit SBM in Section~\ref{sec: bcavi in sbm} and DCSBM in Section~\ref{sec: bcavi in dcsbm} by BCAVI(sbm) and BCAVI(dcsbm), respectively. Similarly, T-BCAVI(sbm) and T-BCAVI(dcsbm) are the Threshold BCAVI algorithms that fit SBM in Algorithm~\ref{alg} and DCSBM in Algorithm~\ref{alg in DCSBM}, respectively. For initialization, we denote the standard spectral clustering \citep{von2007tutorial} and regularized spectral clustering \citep{qin2013regularized} by SCI(sbm) and SCI(dcsbm), respectively.

The numerical results for BCAVI(sbm) and T-BCAVI(sbm) are shown in Figures~\ref{adj} - \ref{pol1}, while those for BCAVI(dcsbm) and T-BCAVI(dcsbm) are in Figures~\ref{adj-dc} - \ref{pol1-dc}. Figures~\ref{adj} and \ref{adj-dc} show that T-BCAVI(sbm) outperforms other methods that fit SBM and T-BCAVI(dcsbm) is more accurate than other methods that fit DCSBM, although initializations are close to random guess, resulting in a relatively small improvement of all algorithms. Also, the improvement of T-BCAVI(dcsbm) is more visible than that of T-BCAVI(sbm). This is within our expectations because real networks are usually degree-inhomogeneous, which means that DCSBM is often more suitable than SBM in fitting them. Figures~\ref{pol2} and \ref{pol2-dc} for the book network show that BCAVI(sbm), BCAVI(dcsbm), MV, and P-MV do not improve the initialization much, while T-BCAVI(sbm) and especially T-BCAVI(dcsbm) improve the accuracy of the initialization considerably. Finally, Figures~\ref{pol1} and \ref{pol1-dc} for political blogosphere show that they all improve the accuracy of the initializations, but T-BCAVI(sbm) outperforms other SBM-based methods for almost all sampling probabilities $\tau$. In addition, T-BCAVI(dcsbm) is still consistently better than other DCSBM-based methods and has a larger improvement than T-BCAVI(sbm).

\begin{figure}[ht]
\begin{subfigure}{.33\textwidth}
  \centering
  \includegraphics[width=.99\linewidth]{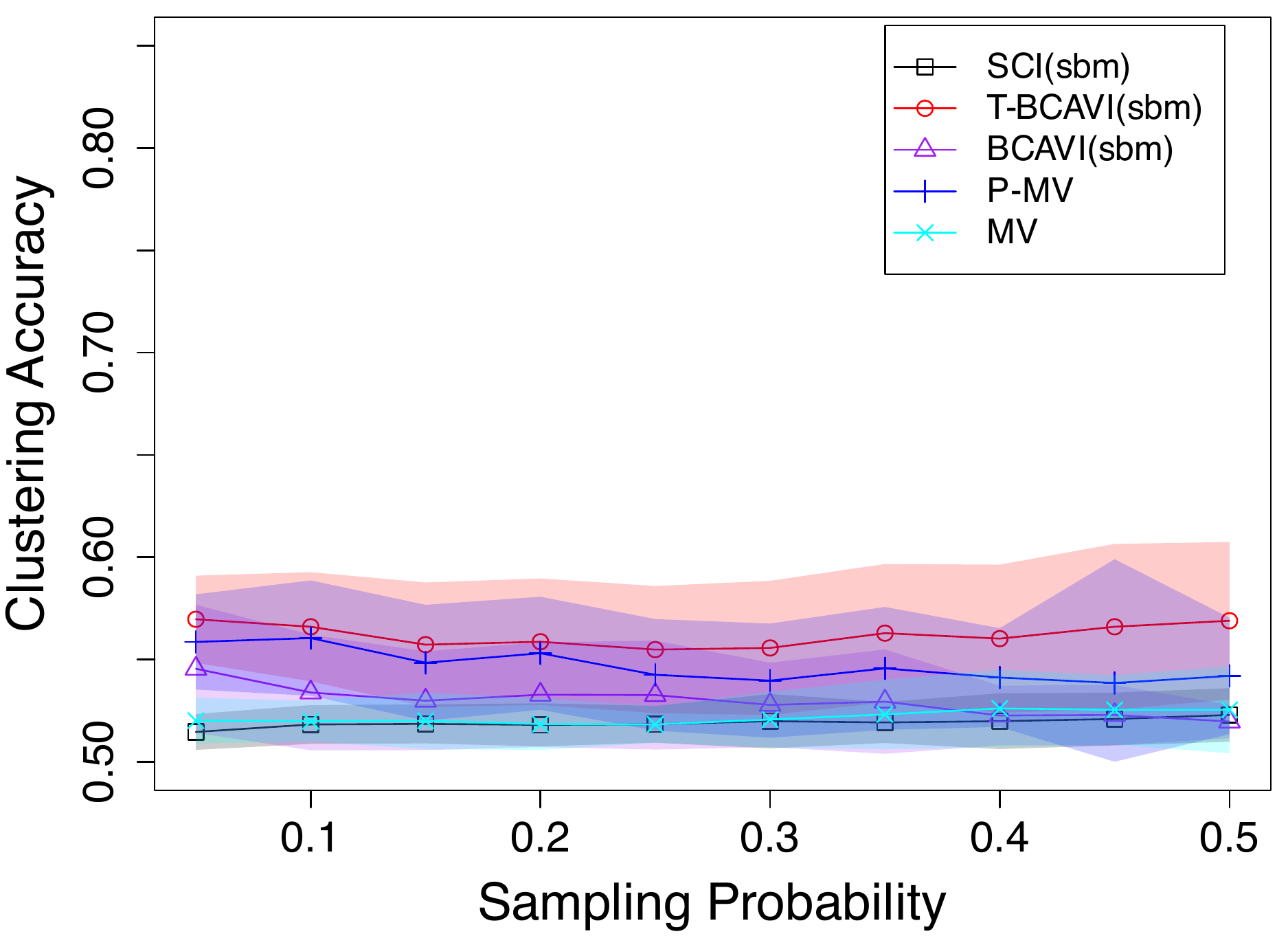}  
  \caption{Adj and nouns: SBM fit}
\label{adj}
\end{subfigure}
\begin{subfigure}{.33\textwidth}
  \centering
  % include second image
  \includegraphics[width=.99\linewidth]{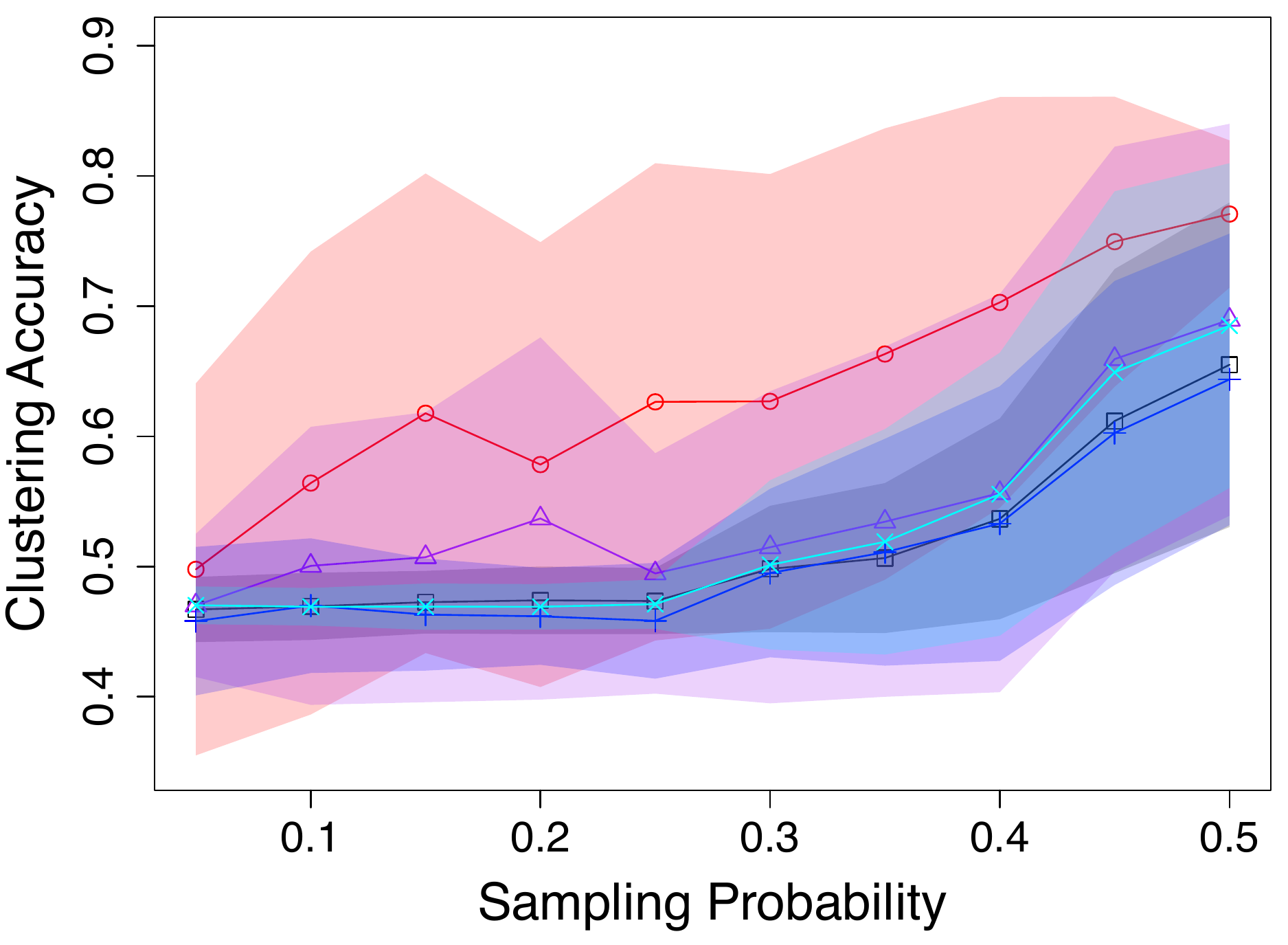} 
  \caption{Politics book: SBM fit}
  \label{pol2}
\end{subfigure}
\begin{subfigure}{.33\textwidth}
  \centering
  % include first image
  \includegraphics[width=.99\linewidth]{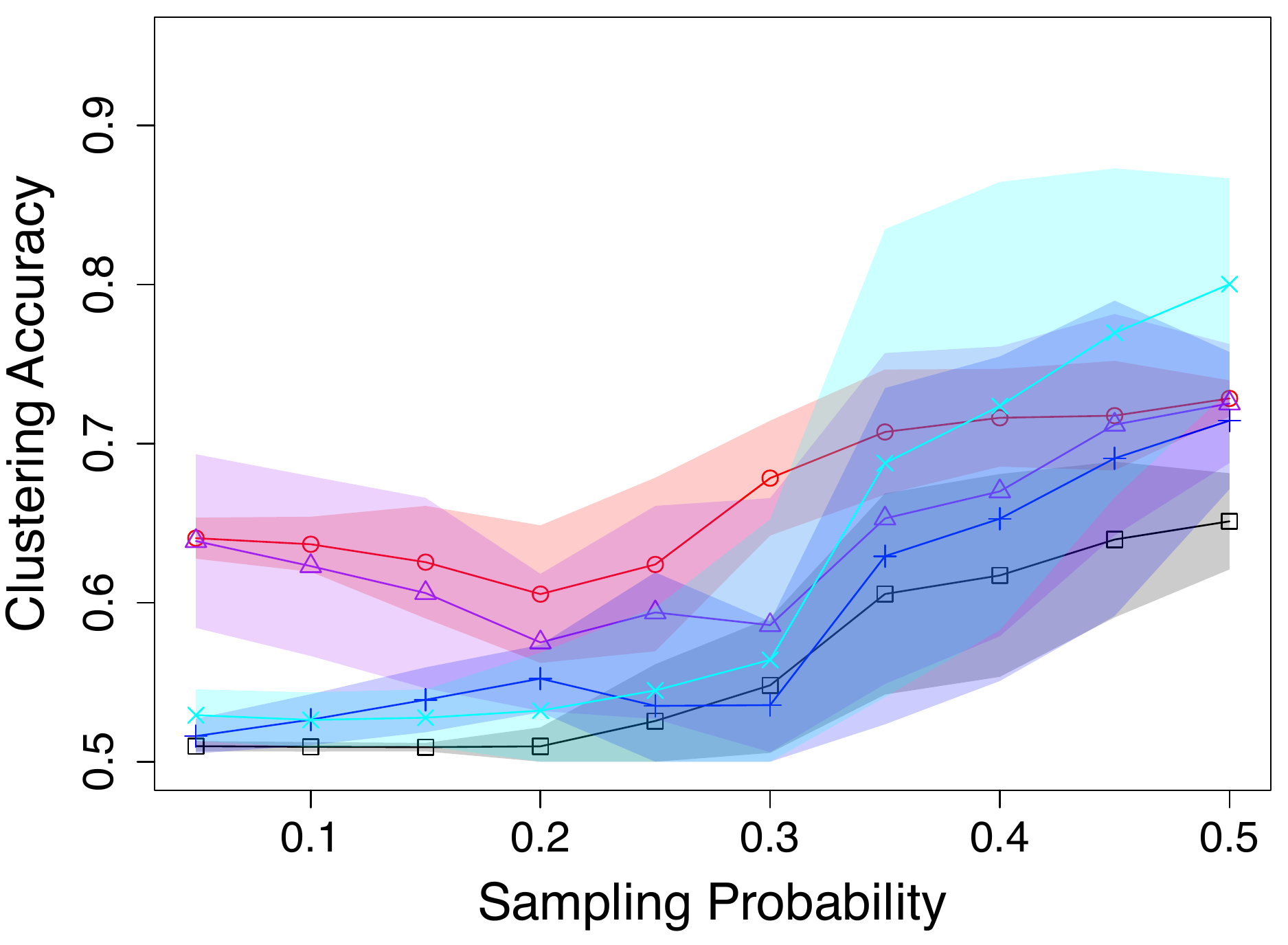}  
  \caption{Politics blogosphere: SBM fit}
  \label{pol1}
\end{subfigure}
\begin{subfigure}{.33\textwidth}
  \centering
  \includegraphics[width=.99\linewidth]{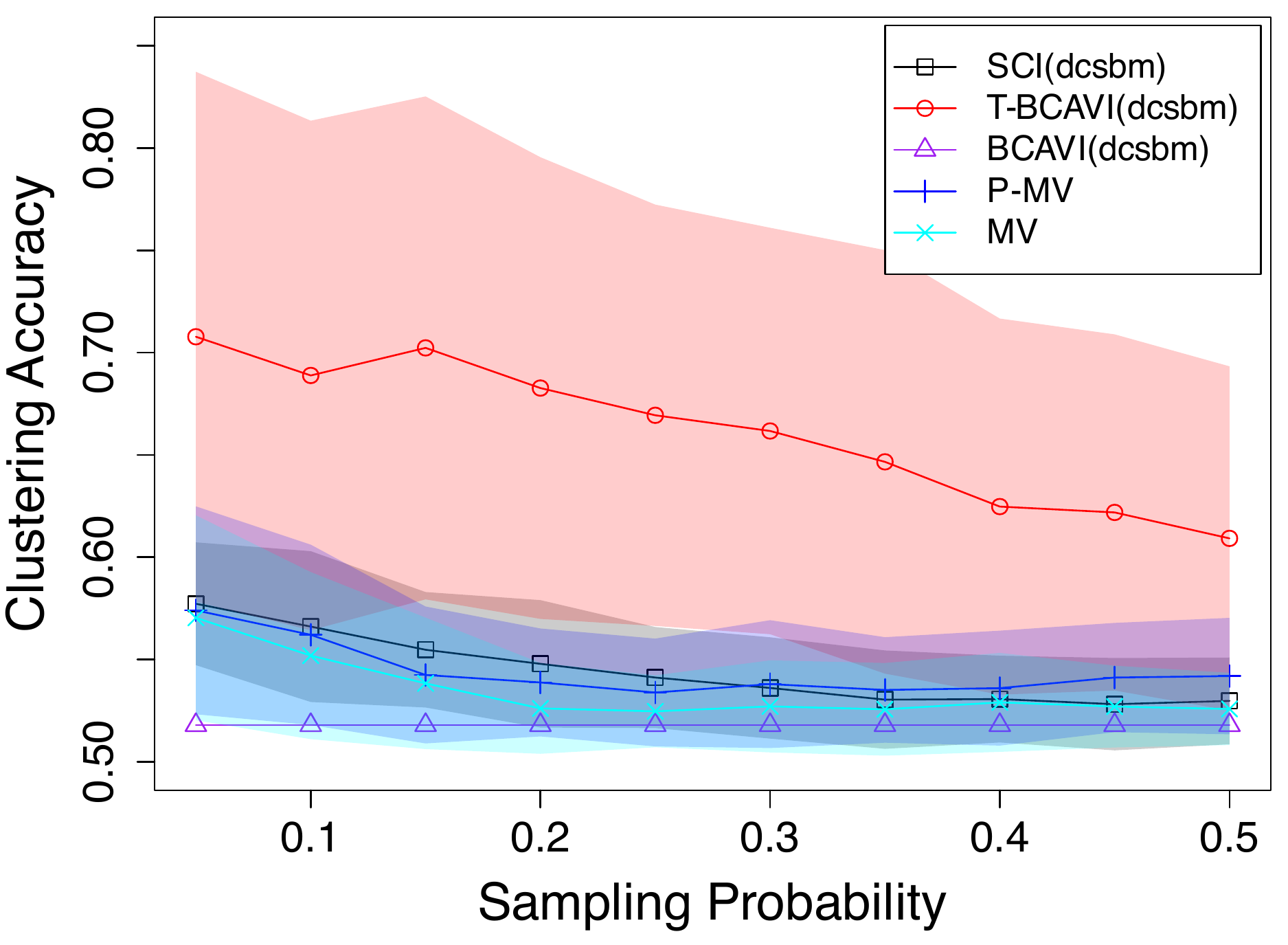}  
  \caption{Adj and nouns: DCSBM fit}
\label{adj-dc}
\end{subfigure}
\begin{subfigure}{.33\textwidth}
  \centering
  % include second image
  \includegraphics[width=.99\linewidth]{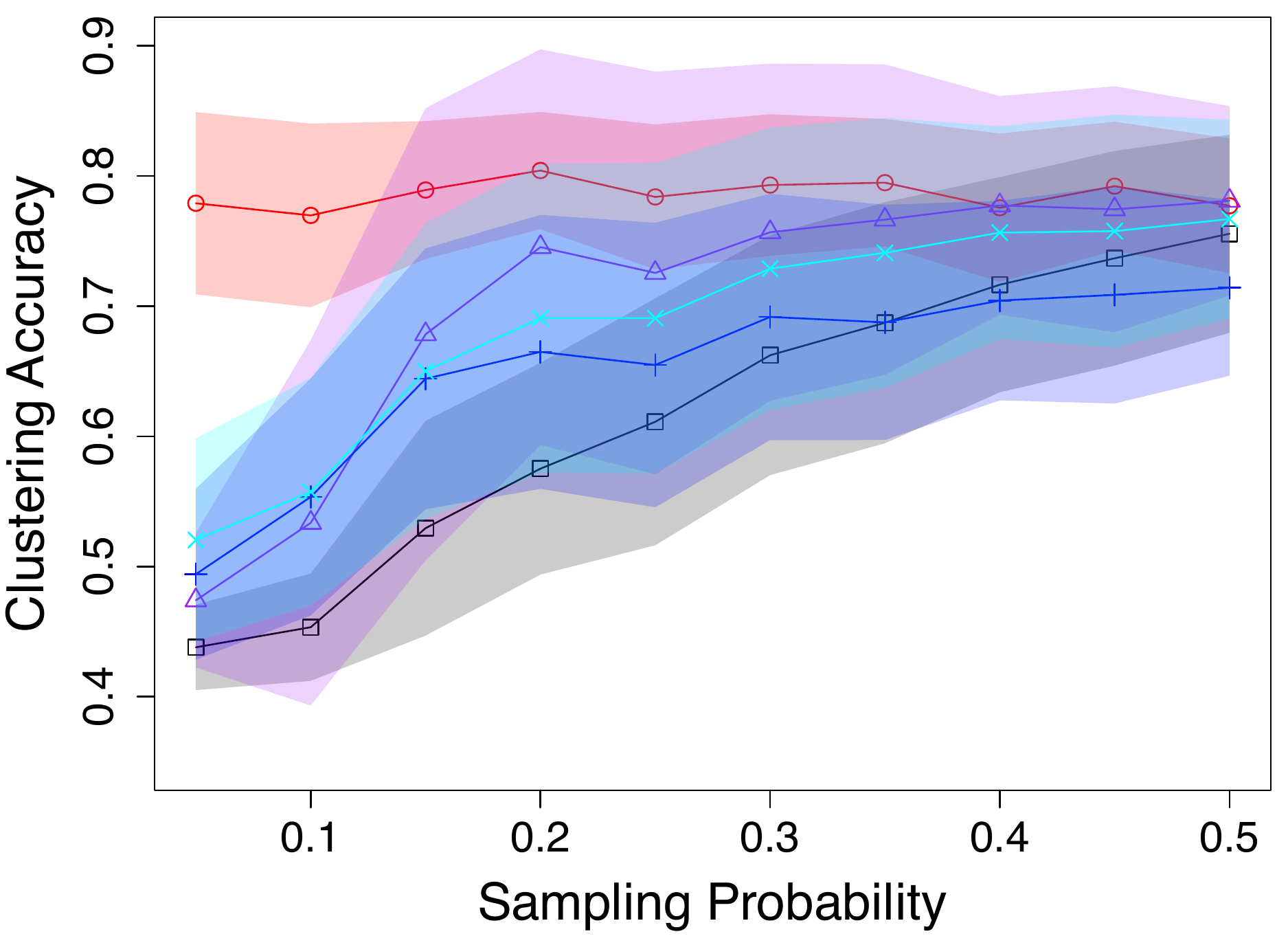} 
  \caption{Politics book: DCSBM fit}
  \label{pol2-dc}
\end{subfigure}
\begin{subfigure}{.33\textwidth}
  \centering
  % include first image
  \includegraphics[width=.99\linewidth]{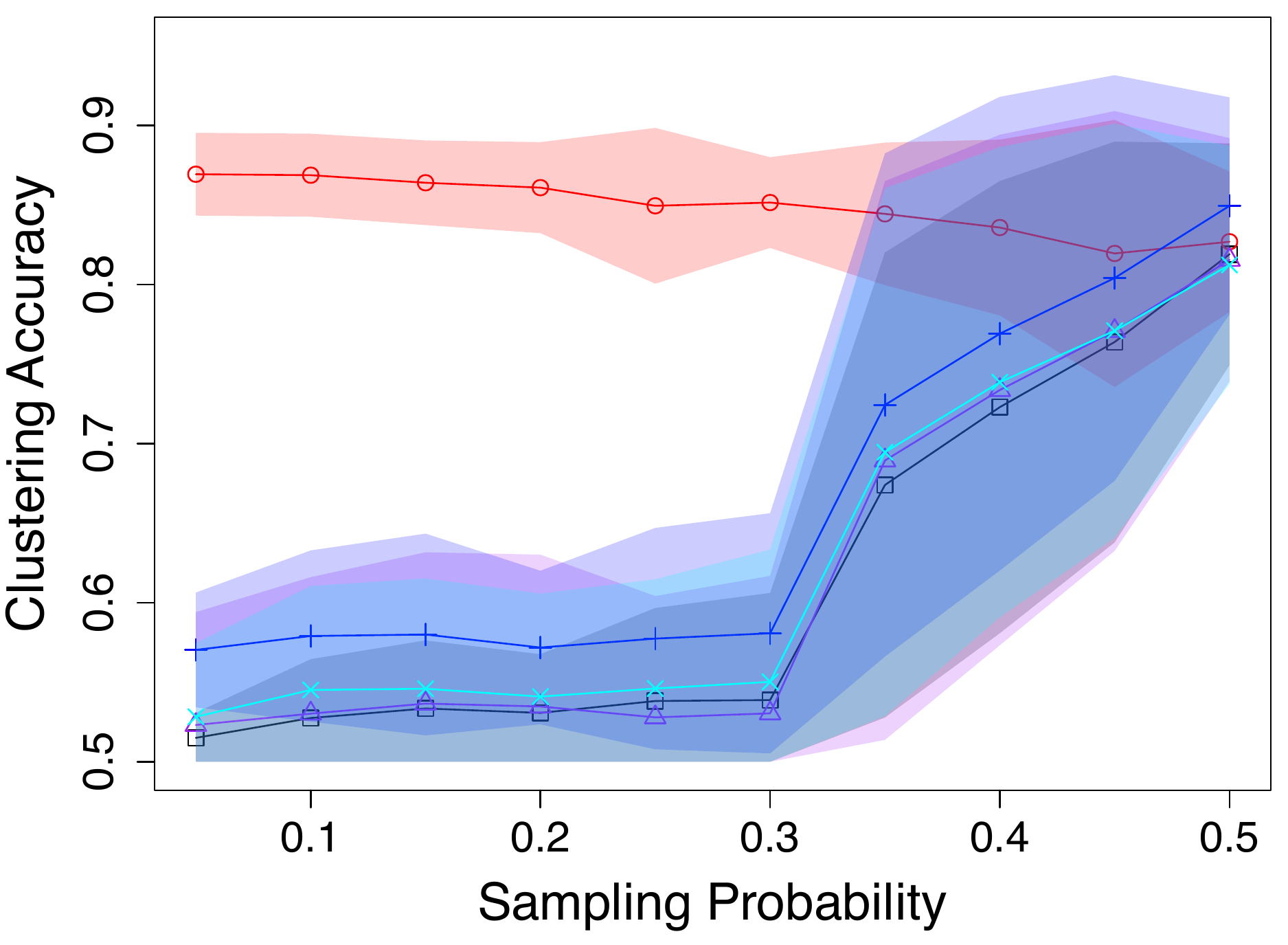}  
  \caption{Politics blogosphere: DCSBM fit}
  \label{pol1-dc}
\end{subfigure}
\caption{Plot (a) - (c): Performance of T-BCAVI(sbm) and BCAVI(sbm) in real data examples. Initializations are computed by standard spectral clustering (SCI(sbm)) applied to sampled sub-networks $A^{(\text{init})}$ with sampling probability $\tau$ while all the algorithms are performed on remaining sub-networks $A - A^{(\text{init})}$. Plot (d) - (f): Performance of T-BCAVI(dcsbm) and BCAVI(dcsbm) in real data examples. Initializations are computed by regularized spectral clustering (SCI(dcsbm)) applied to sampled sub-networks $A^{(\text{init})}$ with sampling probability $\tau$ while all the algorithms are performed on remaining sub-networks $A - A^{(\text{init})}$.} 
\label{pol}
\end{figure}

\section{Discussion}
This paper studies the batch coordinate update variational inference algorithm for community detection under the stochastic block model.
Existing work in this direction only establishes the theoretical support for this iterative algorithm in the relatively dense regime. The contribution of this paper is two-fold. First, we prove the validity of the variational approach under DCSBM and extend its guarantee under SBM to the sparse setting when node degrees may be bounded. Second, we propose a simple but novel threshold strategy that significantly improves the accuracy of the classical variational inference method, especially in the sparse regime or when the accuracy of the initialization is poor. 

While we only have theoretical results for SBM and DCSBM with $K$ equal-sized communities, we believe that similar results also hold for more general settings, for example, the covariate stochastic blockmodels \citep{sweet2015incorporating} and leave their analysis for future work. In addition, Assumption~\ref{ass:perturb} requires that each entry of the initialization is perturbed independently with the same error rate. Simulations in Section~\ref{simulation} suggest that our algorithm works for much weaker assumptions on the initialization.  

\newpage

\appendix

\renewcommand{\theequation}{\thesection\arabic{equation}}

\section{Derivation of updated equations of SBM in Section~\ref{sec:thoery} }
\label{derivation of updated eq}
First we derive the BCAVI update equations in the simplified settings where the block probability matrix only has two different parameters: diagonal element $p$ and off-diagonal element $q$. The full likelihood of SBM with $K$ communities is given by 
\begin{equation*}
  \mathbb{P}(A,Z) = \prod_{1 \leq i<j \leq n}\prod_{1 \leq a,b \leq K} \left(B_{ab}^{A_{ij}}(1-B_{ab})^{1-A_{ij}}\right)^{Z_{ia}Z_{jb}} \cdot \prod_{i=1}^n \prod_{a=1}^K \pi_a^{Z_{ia}}.
\end{equation*}
We omit the natural range for those indexes for simplicity in the following content unless we need specify it. We consider the mean-field variational approximation $\mathbb{Q}(Z)$, which takes the form 
\[\mathbb{Q}(Z)=\prod_{i}\prod_{a}\Psi_{ia}^{Z_{ia}},\] where $\Psi = \mathbb{E}_\mathbb{Q} [Z]$. Then the evidence lower bound (ELBO) is 
\begin{align*}
    \text{ELBO}( \mathbb{Q}) =& 
 \mathbb{E}_{\mathbb{Q}} \left[ \sum_{i<j} \sum_{a,b} Z_{ia} Z_{jb} [ A_{ij} \log B_{ab} + (1 - A_{ij} ) \log (1 - B_{ab} ) ] + \sum_i \sum_a (Z_{ia} \log \pi_a - Z_{ia} \log \Psi_{ia}) \right]\\
    =& \sum_{i<j}\sum_{a\neq b} \Psi_{ia}\Psi_{jb}[  A_{ij} \log q + (1 - A_{ij} ) \log (1 - q )] \\ &+ \sum_{i<j}\sum_{a} \Psi_{ia}\Psi_{ja}[  A_{ij} \log p + (1 - A_{ij} ) \log (1 - p )] + \sum_i \sum_a \Psi_{ia} \log \frac{\pi_a}{ \Psi_{ia}}.
\end{align*}
By solving the following two equations: 
\begin{align*}
    \frac{\partial \text{ELBO}(\mathbb{Q})}{\partial p} =& \sum_{i<j} \sum_a \Psi_{ia} \Psi_{ja} \left( \frac{A_{ij}}{p} - \frac{1 - A_{ij}}{1-p} \right) = 0, \\
    \frac{\partial \text{ELBO}(\mathbb{Q})}{\partial q} =& \sum_{i<j} \sum_{a \neq b} \Psi_{ia} \Psi_{jb} \left( \frac{A_{ij}}{q} - \frac{1 - A_{ij}}{1-q} \right) = 0,   
\end{align*}
we derived the following updates for $p$ and $q$:
\begin{align*}
    p = \frac{\sum_{i<j} \sum_a \Psi_{ia} \Psi_{ja} A_{ij}}{ \sum_{i<j} \sum_a \Psi_{ia} \Psi_{ja} }, \quad q =& \frac{ \sum_{i<j} \sum_{a \neq b} \Psi_{ia} \Psi_{jb} A_{ij} }{ \sum_{i<j} \sum_{a \neq b} \Psi_{ia} \Psi_{jb} }. 
\end{align*}
To update the variational parameters under the constraints $\sum_a \Psi_{ia} = 1, \forall i \in [n]$, we take the derivatives:
\begin{align*}
\frac{\partial \text{ELBO}(\mathbb{Q})}{\partial \Psi_{ia}} = & \sum_{j: j \neq i} \sum_{b:b \neq a} \Psi_{jb}[ A_{ij} \log q + (1 - A_{ij})\log (1-q) ] + \sum_{j: j\neq i} \Psi_{ja} [ A_{ij} \log p + (1 - A_{ij})\log (1-p) ] \\ &+ \log\frac{\pi_a}{\Psi_{ia}} - 1 \\
= & \sum_{j: j \neq i} \sum_{b} \Psi_{jb}[ A_{ij} \log q + (1 - A_{ij})\log (1-q) ] + \sum_{j: j\neq i} \Psi_{ja} [ A_{ij} \log \frac{p}{q} + (1 - A_{ij})\log \frac{1-p}{1-q} ] \\ &+ \log\frac{\pi_a}{\Psi_{ia}} - 1, \quad i \in [n], \quad a \in [K].
\end{align*}
Therefore, we can apply the Lagrange approach, which gives for each $i \in [n]$ and $a \in [K]$:
\begin{equation*}
\Psi_{ia} \propto \pi_a \exp\left( 2t \sum_{j: j \neq i} \Psi_{ja} [A_{ij} - \lambda] \right), 
\end{equation*}
where $t$ and $\lambda$ are defined by \[ t = \frac{1}{2} \log \frac{p(1-q)}{q(1-p)}, \quad \lambda = \frac{1}{2t} \log \frac{1-q}{1-p} .\]
Since we consider the balanced setting, by letting $\pi_a = 1/K$ for all $a$ we can derive the updated equations mentioned in Section~\ref{sec:thoery}.

\section{Derivation of updated equations of DCSBM in Section~\ref{sec:thoery} }
\label{derivation of updated eq in dc}
The full likelihood of DCSBM with $K$ communities is given by \begin{equation*}
  \mathbb{P}(A,Z) = \prod_{1 \leq i<j \leq n}\prod_{1 \leq a,b \leq K} \left( \frac{ (\theta_i \theta_j B_{ab}) ^ { A_{ij} } e^{- \theta_i \theta_j B_{ab}} }{ (A_{ij} )!} \right)^{Z_{ia}Z_{jb}} \cdot \prod_{i=1}^n \prod_{a=1}^K \pi_a^{Z_{ia}}.
\end{equation*}
We consider the mean-field variational approximation $\mathbb{Q}(Z)$, which takes the form \[\mathbb{Q}(Z)=\prod_{i}\prod_{a}\Psi_{ia}^{Z_{ia}},\] where $\Psi = \mathbb{E}_\mathbb{Q} [Z]$. Then the evidence lower bound (ELBO) is 
\begin{align*}
    \text{ELBO}( \mathbb{Q}) =& 
 \mathbb{E}_{\mathbb{Q}} \left[ \sum_{i<j} \sum_{a,b} Z_{ia} Z_{jb} [ A_{ij} \log( \theta_i \theta_j B_{a b} ) - \theta_i \theta_j B_{a b} - \log (A_{ij} !)
 ] + \sum_i \sum_a (Z_{ia} \log \pi_a - Z_{ia} \log \Psi_{ia}) \right]\\
    =& \sum_{i<j}\sum_{a\neq b} \Psi_{ia}\Psi_{jb}[A_{ij} \log( \theta_i \theta_j q ) - \theta_i \theta_j q - \log (A_{ij} ! )] \\ &+ \sum_{i<j}\sum_{a} \Psi_{ia}\Psi_{ja}[  A_{ij} \log( \theta_i \theta_j p ) - \theta_i \theta_j p - \log (A_{ij} !) ] + \sum_i \sum_a \Psi_{ia} \log \frac{\pi_a}{ \Psi_{ia}}.
\end{align*}
By solving the following two equations: 
\begin{align*}
    \frac{\partial \text{ELBO}(\mathbb{Q})}{\partial p} =& \sum_{i<j} \sum_a \Psi_{ia} \Psi_{ja} \left( \frac{A_{ij}}{p} - \theta_i \theta_j \right) = 0, \\
    \frac{\partial \text{ELBO}(\mathbb{Q})}{\partial q} =& \sum_{i<j} \sum_{a \neq b} \Psi_{ia} \Psi_{jb} \left( \frac{A_{ij}}{q} - \theta_i \theta_j \right) = 0,   
\end{align*}
we derived the following updates for $p$ and $q$:
\begin{align*}
    p = \frac{\sum_{i<j} \sum_a \Psi_{ia} \Psi_{ja} A_{ij}}{ \sum_{i<j} \sum_a \Psi_{ia} \Psi_{ja}\theta_i \theta_j }, \quad q =& \frac{ \sum_{i<j} \sum_{a \neq b} \Psi_{ia} \Psi_{jb} A_{ij} }{ \sum_{i<j} \sum_{a \neq b} \Psi_{ia} \Psi_{jb} \theta_i \theta_j }. 
\end{align*}
To update the variational parameters under the constraints $\sum_a \Psi_{ia} = 1, \forall i \in [n]$, we take the derivatives:
\begin{align*}
\frac{\partial \text{ELBO}(\mathbb{Q})}{\partial \Psi_{ia}} = & \sum_{j: j \neq i} \sum_{b:b \neq a} \Psi_{jb}[ A_{ij} \log (\theta_i \theta_j q) - \theta_i \theta_j q - \log (A_{ij} !) ] + \sum_{j: j\neq i} \Psi_{ja} [ A_{ij} \log (\theta_i \theta_j p) - (\theta_i \theta_j p) \\ & - \log (A_{ij} !) ] + \log\frac{\pi_a}{\Psi_{ia}} - 1 \\
= & \sum_{j: j \neq i} \sum_{b} \Psi_{jb} [ A_{ij} \log (\theta_i \theta_j q) - \theta_i \theta_j q - \log (A_{ij} !) ]  + \sum_{j: j\neq i} \Psi_{ja} [ A_{ij} \log \frac{p}{q} - \theta_i \theta_j (p - q) ] \\ &+ \log\frac{\pi_a}{\Psi_{ia}} - 1, \quad i \in [n], \quad a \in [K].
\end{align*}
Therefore, we can apply the Lagrange approach, which gives for each $i \in [n]$ and $a \in [K]$:
\begin{equation*}
\Psi_{ia} \propto \pi_a \exp\left( 2t \sum_{j: j \neq i} \Psi_{ja} [A_{ij} - \theta_i \theta_j \lambda] \right), 
\end{equation*}
where $t$ and $\lambda$ are defined by \[ t = \frac{1}{2} \log \frac{p}{q}, \quad \lambda = \frac{1}{2t} (p-q) .\]
In the analysis of the balanced setting, we set $\pi_a = 1/K$ for all $a$. To update the degree parameters, we take the derivatives and set them to zero:
\begin{align*}
\frac{\partial \text{ELBO}(\mathbb{Q})}{\partial \theta_i} = \sum_{j: j \neq i}\sum_{a\neq b} \Psi_{ia}\Psi_{jb}[A_{ij} / \theta_i - \theta_j q ] + \sum_{j: j \neq i}\sum_{a} \Psi_{ia}\Psi_{ja}[  A_{ij} / \theta_i - \theta_j p] = 0, \quad i \in [n].
\end{align*}
The above equations can be simplified to
\begin{align*}
\frac{\sum_j A_{ij}}{ \theta_i} = \sum_{j:j \neq i } \sum_{a \neq b} \Psi_{ia} \Psi_{jb} \theta_j q + \sum_{j:j \neq i } \sum_{a} \Psi_{ia} \Psi_{ja} \theta_j p, \quad i \in [n].
\end{align*}

\section{Proofs of Results for SBM in Section~\ref{sec:theory sbm}}
\label{all_proof}

Unlike \cite{sarkar2021random}, we need to control the irregularity of sparse networks carefully. To that end, denote by $A^{\prime}$ the adjacency matrix of the network obtained from the observed network by removing some edges (in an arbitrary way) of nodes with degrees greater than $C_0 d$ so that all node degrees of the new network are at most $C_0 d$, where $C_0>0$ is a large fixed absolute constant. In this paper we choose $C_0$ such that $C_0 d \geq 20np$. Note that $A'$ is only used for the proofs, and our algorithm does not need to specify it.

Now, we state and prove all the lemmas needed in the main proof. In the following lemmas, we assume that the assumptions of Proposition~\ref{prop:parameter estimation} and Theorem~\ref{unknown pa} are satisfied.

\begin{lemma}[Concentration inequality]
\label{lemma: Concentration inequality}
Denote 
$$S = \Big\{ i \in [n], \sum_{j=1}^n A_{ij} \geq C_0 d \Big\}, \quad H_i = \sum_{j=1}^n |A_{ij} - \mathbb{E} A_{ij}| \mathbbm{1}\{ i \in S\}.$$ Then with probability at least $1 - \exp(-C_0 d /4)$, we have
\begin{equation}
   \sum_{i=1}^n H_i \leq n C_0 d \exp(-C_0 d / 4).
\end{equation}
\end{lemma}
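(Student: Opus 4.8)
The strategy is to first reduce the claim to a tail bound on the total degree carried by the high-degree nodes, and then control that total degree through a Chernoff estimate combined with Markov's inequality.

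First I would replace the absolute deviations by degrees. Writing $D_i=\sum_{j=1}^n A_{ij}$ for the degree of node $i$ and using the crude bound $|A_{ij}-\mathbb{E}A_{ij}|\le A_{ij}+\mathbb{E}A_{ij}$, summation over $j$ gives $\sum_{j}|A_{ij}-\mathbb{E}A_{ij}|\le D_i+\mathbb{E}D_i$. Under Assumption~\ref{ass:model assumption} one has $\mathbb{E}D_i=d$ for every $i$, so on the event $\{i\in S\}=\{D_i\ge C_0 d\}$ we get $\mathbb{E}D_i=d\le D_i/C_0$ and hence $\sum_j|A_{ij}-\mathbb{E}A_{ij}|\le(1+1/C_0)D_i\le 2D_i$. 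Consequently $\sum_{i=1}^n H_i=\sum_{i\in S}\sum_j|A_{ij}-\mathbb{E}A_{ij}|\le 2\sum_{i\in S}D_i$, and it suffices to show that $\sum_{i\in S}D_i\le \tfrac12\, n C_0 d\exp(-C_0 d/4)$ with probability at least $1-\exp(-C_0 d/4)$.

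Next I would bound the expectation $\mathbb{E}\sum_{i\in S}D_i=\sum_{i=1}^n \mathbb{E}\big[D_i\,\mathbbm{1}\{D_i\ge C_0 d\}\big]$. Each $D_i$ is a sum of independent Bernoulli variables with mean $d$, so a standard Chernoff (Bennett) bound yields $\mathbb{P}(D_i\ge C_0 d)\le\exp\!\big(-c_0 C_0 d\log C_0\big)$ for a suitable constant, a quantity that decays super-exponentially in $C_0$. Because this tail decays faster than geometrically past the level $C_0 d$, the layer-cake identity $\mathbb{E}[D_i\mathbbm{1}\{D_i\ge C_0 d\}]=C_0 d\,\mathbb{P}(D_i\ge C_0 d)+\sum_{k>C_0 d}\mathbb{P}(D_i\ge k)$ gives $\mathbb{E}[D_i\mathbbm{1}\{D_i\ge C_0 d\}]\le 2C_0 d\,\mathbb{P}(D_i\ge C_0 d)$. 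Summing over $i$ then produces $\mathbb{E}\sum_{i\in S}D_i\le 2nC_0 d\exp(-c_0 C_0 d\log C_0)$.

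Finally I would apply Markov's inequality to the non-negative sum $\sum_{i\in S}D_i$ at the threshold $\tfrac12\, nC_0 d\exp(-C_0 d/4)$, obtaining a failure probability of order $\exp\!\big(-c_0 C_0 d\log C_0+C_0 d/4\big)$ up to absolute constants. Since $C_0$ is a fixed large constant, compatible with the requirement $C_0 d\ge 20np$ because $np\asymp d$, and $d>C$, the term $c_0 C_0 d\log C_0$ dominates $C_0 d/2$, so this failure probability is at most $\exp(-C_0 d/4)$, which completes the argument. I expect the main obstacle to be the second step: one must make the Chernoff tail estimate quantitative enough that the super-exponential factor $\exp(-c_0 C_0 d\log C_0)$ provably beats both the threshold factor $\exp(-C_0 d/4)$ and the target probability $\exp(-C_0 d/4)$ simultaneously, which is precisely what determines how large $C_0$ must be taken.
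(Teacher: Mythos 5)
Your proof is correct and shares the paper's overall skeleton---bound a truncated first moment by an exponentially small quantity and finish with Markov's inequality---but the moment bound is implemented by a genuinely different route. The paper bounds $\mathbb{E}[H_i]$ directly by splitting the tail integral at $C_0 d$: on $[0,C_0 d]$ it uses Bernstein's inequality for the event $\{\sum_j A_{ij}\ge C_0 d\}$ (giving $\exp(-21C_0 d/40)$), and on $[C_0 d,\infty)$ it uses Bernstein again to get $\mathbb{P}(H_i>s)\le\exp(-s/2)$; the resulting bound $\mathbb{E}[H_i]\le C_0 d\exp(-C_0 d/2)$ beats the Markov threshold $C_0 d\exp(-C_0 d/4)$ by a clean factor of $\exp(-C_0 d/4)$ with no further tuning of $C_0$. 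You instead reduce $H_i\le 2D_i\mathbbm{1}\{D_i\ge C_0 d\}$ and bound $\mathbb{E}\bigl[D_i\mathbbm{1}\{D_i\ge C_0 d\}\bigr]$ via a multiplicative Chernoff bound plus the layer-cake identity, which yields the super-exponential rate $\exp(-c_0C_0 d\log C_0)$; this is sharper in $C_0$ but forces the closing verification that $c_0C_0\log C_0$ exceeds $C_0/2$ plus lower-order terms, i.e., that $C_0$ is taken large enough---a condition compatible with the paper's requirement $C_0 d\ge 20np$ since $np\asymp d$. Both arguments are valid; the paper's Bernstein route avoids the final tuning step, while yours makes the dependence on $C_0$ transparent and sidesteps handling $|A_{ij}-\mathbb{E}A_{ij}|$ in the tail estimate by passing to degrees.
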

\begin{proof}
This lemma is similar to Lemma C.5 in \cite{zhang2020theoretical}. First, note that
$$\mathbb{E} \left[ \sum_j |A_{ij} - \mathbb{E} A_{ij}| \right] \leq 2np(1-p) \leq 2np \leq C_0 d /10.$$  
For any $s \leq C_0 d$, by Bernstein inequality, we have 
\begin{align*}
\mathbb{P}(H_i > s) & \leq \mathbb{P} \left( \sum_j |A_{ij} - \mathbb{E} A_{ij}| -   \mathbb{E} \left[ \sum_j |A_{ij} - \mathbb{E} A_{ij}| \right] > s - 2np \right) \\
&  \leq \exp\left(  -\frac{ (s-2np)^2 / 2 }{np+ (s-2np)/3}  \right)\\
& \leq \exp(-s/2).
\end{align*}
Applying Bernstein inequality again, we get
\begin{align*}
\mathbb{P}(H_i>0) & = \mathbb{P} \left( \sum_i A_{ij} \geq C_0 d  \right) \\ & \leq \mathbb{P} \left( \sum_i A_{ij} - \mathbb{E}\left[ \sum_i A_{ij}\right] \geq 9C_0 d /10  \right) \\ & \leq \exp\left(  -\frac{(9C_0 d /10)^2/2}{ C_0 d /20 +(9C_0 d /10) / 3 }\right) \\ & \leq \exp( - 21/40 C_0 d).
\end{align*}
Thus, we are able to bound $\mathbb{E}[H_i]$ by 
\begin{align*}
\mathbb{E}[H_i] & \leq \int_{0}^{C_0 d} \mathbb{P}(H_i > 0) ds + \int_{C_0 d}^{\infty}  \mathbb{P}(H_i > s) ds \\ & \leq C_0 d \exp( - 21/40 C_0 d) + \int_{C_0 d}^{\infty}  \exp(-s/2) ds \\ &  \leq C_0 d \exp( - C_0 d / 2).
\end{align*}
By Markov inequality, we have
\begin{align*}
\mathbb{P}\left( \sum_i H_i \geq n C_0 d \exp(-C_0 d / 4) \right) \leq \frac{ n \mathbb{E}[H_1] }{n C_0 d \exp(-C_0 d / 4)} \leq \exp(-C_0 d /4).
\end{align*}
The proof is complete.
\end{proof}

\begin{lemma}[Number of removed edges] 
\label{lemma:quantity of removed edges}
Assume that $ d \geq \kappa $ for a large constant $\kappa>0$. Then with high probability $1 - n^{-r}$ for some absolute constant $r$, we have  
\[\sum_{i ,j=1}^n (A_{i j} - A_{i j}^{\prime}) \leq 2n \exp(- C_1 d),\]
where $C_1$ is an absolute constant.
\end{lemma}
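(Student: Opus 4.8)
The plan is to reduce the total number of removed edges to the total excess degree of the high-degree vertices and then to the quantity $\sum_i H_i$ already controlled by Lemma~\ref{lemma: Concentration inequality}. Since $A'$ is only a proof device and may be constructed freely, I would take $A'$ to be produced by the greedy deletion that repeatedly removes one edge incident to a vertex whose current degree exceeds $C_0 d$. Writing $D_i = \sum_j A_{ij}$ and letting $R$ be the number of deleted edges, a potential-function argument on $\Phi = \sum_i (\deg_i - C_0 d)^+$ (each deletion lowers $\Phi$ by at least one and no deletion raises any degree) gives $R \le \sum_{i \in S}(D_i - C_0 d)$, where $S = \{i : D_i \ge C_0 d\}$. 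Because the model is balanced, $\mathbb{E} D_i = d$ for every $i$, so for $i \in S$ we have $D_i - C_0 d \le D_i - d = \sum_j (A_{ij} - \mathbb{E} A_{ij}) \le \sum_j |A_{ij} - \mathbb{E} A_{ij}| = H_i$, using $C_0 \ge 1$. Combining these with $\sum_{i,j}(A_{ij} - A'_{ij}) = 2R$ yields the \emph{deterministic} inequality $\sum_{i,j}(A_{ij} - A'_{ij}) \le 2\sum_{i=1}^n H_i$.

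Given this reduction, Lemma~\ref{lemma: Concentration inequality} bounds $\sum_i H_i \le n C_0 d \exp(-C_0 d/4)$ on its high-probability event, and it only remains to absorb the polynomial factor: for $d \ge \kappa$ with $\kappa$ large, $C_0 d \le \exp(C_0 d/8)$, hence $C_0 d \exp(-C_0 d/4) \le \exp(-C_0 d/8)$, and taking $C_1 = C_0/8$ gives $\sum_{i,j}(A_{ij} - A'_{ij}) \le 2n\exp(-C_1 d)$, the claimed bound.

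The delicate point is the probability, and this is the main obstacle. Lemma~\ref{lemma: Concentration inequality}, proved via Markov's inequality, only certifies the bound on an event of probability $1 - \exp(-C_0 d/4)$, which is merely a constant when $d$ is bounded, whereas the statement asks for $1 - n^{-r}$. The gap is genuine: in the bounded-degree regime the target quantity has mean of order $n\exp(-\Theta(d)) = \Theta(n)$, so a constant-factor bound sits only a few standard deviations above the mean and cannot be reached by a first-moment estimate. To close it I would replace the Markov step by a Bernstein-type bounded-difference concentration applied directly to $f = \sum_i X_i$ with $X_i = (D_i - C_0 d)^+$. Flipping a single edge changes $f$ by at most $2$ (each summand is $1$-Lipschitz in the corresponding degree), while the variance proxy $\sum_i \mathbb{E}[X_i^2]$ is only of order $n\exp(-\Theta(d))$; feeding both into a Bernstein or self-bounding inequality produces a deviation probability of $\exp(-\Omega(n))$ for a constant-factor excess over the mean, which comfortably beats $n^{-r}$. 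The input $\mathbb{E} f \le n\exp(-C_1 d)$ is then a routine per-vertex Chernoff bound on $\mathbb{E}[(D_i - C_0 d)^+]$ exploiting $C_0 d \ge 20 np$.
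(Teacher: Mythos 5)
Your deterministic reduction is clean and correct: the greedy-deletion potential argument gives $\sum_{i,j}(A_{ij}-A'_{ij}) \le 2\sum_{i\in S}(D_i - C_0 d) \le 2\sum_i H_i$, and you rightly identify that Lemma~\ref{lemma: Concentration inequality} then certifies the claimed bound only on an event of probability $1-\exp(-C_0 d/4)$, which is merely a constant when $d$ is bounded. This diagnosis matches the structure of the paper's own proof, which uses the first-moment route only in the regime $d>\eta\log n$ (where $\exp(-C_0 d/4)\le n^{-C_0\eta/4}$ is already polynomially small) and handles $\kappa\le d\le\eta\log n$ by importing Proposition~1.12 of \cite{benaych2019largest}, an external high-probability bound on the upper tail of the degree sequence.

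The gap is in your replacement for the Markov step. The $X_i=(D_i-C_0d)^+$ are not independent (rows of $A$ share entries), so you cannot feed $\sum_i\mathbb{E}[X_i^2]$ into Bernstein's inequality for the sum $\sum_i X_i$; the legitimate framework is concentration of the function $f(A)=\sum_i X_i$ of the $\binom{n}{2}$ independent edge variables. In that framework, plain bounded differences (influence $2$ per edge over roughly $n^2/2$ edges) give $\exp\bigl(-t^2/(4n^2)\bigr)$, which is useless at $t=n\exp(-C_1 d)$, so one must invoke a variance-sensitive (weakly self-bounding, entropy-method) inequality; and there the relevant proxy is $\sum_{i<j}(f-f_{ij})^2\le 2\sum_{i\in S}D_i\le 2(1+C_0 d)\,f$, not $\sum_i\mathbb{E}[X_i^2]$. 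That route yields an upper-tail bound of order $\exp(-ct/d)$ at deviation $t$, which for $t=n\exp(-C_1 d)$ is superpolynomially small only when $d=O(\log n)$ with the constants arranged so that $C_1 d<\log n$; your claimed $\exp(-\Omega(n))$ holds only for bounded $d$, and the bound degenerates entirely once $d\gg\log n$. So even after correcting the concentration tool you still need the case split between $d\lesssim\log n$ and $d\gtrsim\log n$ that your write-up omits (the large-$d$ case being covered by exactly the Markov argument you set out to replace). As written, the concentration step is the missing piece: it is plausible the argument can be completed along these lines, but it is not routine, and the paper's citation of \cite{benaych2019largest} is doing real work at precisely this point.
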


\begin{proof}
Case 1: $ \kappa \leq d \leq \eta \log n$ for some constant $\eta \in (0,1)$. It follows from Proposition 1.12 of \cite{benaych2019largest} that with high probability, \[\sum_{i,j} (A_{i j} - A^{\prime}_{i j}) \leq 2 \sum_{k = C _0 d+1}^{\infty} k n  \exp(-f(k)) ,\]
where \[f(x) = x\log\left(\frac{x}{d}\right) - (x-d) - \log\sqrt{2\pi x}.
\]
Notice that for $k > C_0 d$,  
$$f(k) - \log k \geq k ( \log C_0 - 1) - \log \sqrt{2 \pi k} - \log k \geq k( \log C_0 - 1)/2.$$ 
This implies the following inequality
\[ \sum_{i,j} (A_{i j} - A^{\prime}_{i j}) \leq 2n  \sum_{k = C_0 d+1}^{\infty} \exp\left(-\frac{\log C_0 -1}{2} k\right) \leq 2 n \exp(- C_1 d)\]
occurs with high probability for some constant $C_1$.

Case 2: $d >  \eta \log n$. Define $S = \{ i \in [n], \sum_j A_{ij} \geq C_0 d \}$, $H_i = \sum_j |A_{ij} - \mathbb{E} A_{ij}| \mathbbm{1}\{ i \in S\}$ and $M_i = \sum_j |A_{ij}^{\prime} - \mathbb{E} A_{ij} | \mathbbm{1}\{ i \in S\}$. From Lemma~\ref{lemma: Concentration inequality} we know that with probability at least $1 - \exp(-C_0 d/4)$, it holds that 
\begin{equation}\label{first bound in removed edges}
   \sum_i H_i \leq n C_0 d \exp(-C_0 d / 4).
\end{equation}
In addition, from Lemma~\ref{lemma: Concentration inequality} we know \[\mathbb{P}(i \in S) = \mathbb{P}(H_i>0) \leq \exp(-21C_0 d / 40).\]
Therefore, $\mathbb{E}[M_i] \leq (20np + np) \exp(-21C_0 d / 40 ) = 21C_0d/20 \exp(-21C_0 d / 40 ) $. By Markov inequality, we have 
\begin{equation}\label{second bound in removed edges}
  \mathbb{P}\left( \sum_i M_i > 21n C_0 d/20 \exp(-11 C_0 d / 40)\right) \leq \exp(-C_0 d /4).
\end{equation}
It follows from \eqref{first bound in removed edges} and \eqref{second bound in removed edges} that 
\begin{align*}
    \sum_{i ,j} (A_{i j} - A_{i j}^{\prime}) & \leq \sum_i H_i +  \sum_i M_i \\
    & \leq n C_0 d \exp(-C_0 d / 4) +  21n C_0 d/20 \exp(-11 C_0 d / 40) \\
    & \leq 2 n \exp(- C_1 d)
\end{align*}
occurs with high probability for some constant $C_1$.
\end{proof}

\begin{lemma}[Concentration of regularized adjacency matrices]
\label{lemma:concentration of regularized adjacency matrices}
Let $P=\mathbb{E}[A|Z]$.  With high probability $1 - n^{-r}$ for some absolute constant $r$, we have
\[||A^{\prime} - \mathbb{E} A||^2 \leq C_2 d,\]
where $\|.\|$ denotes the spectral norm and 
$C_2$ is an absolute constant.
\end{lemma}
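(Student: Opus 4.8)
The plan is to recognize that $A'$ is precisely a degree-regularized version of the sparse random graph $A$: by construction every row sum of $A'$ is capped at $C_0 d$, and $A'$ is obtained from $A$ only by deleting edges incident to atypically high-degree vertices, while $\mathbb{E}A_{ij}=P_{ij}\le p = O(d/n)$. Spectral concentration bounds of the form $\|A'-\mathbb{E}A\| = O(\sqrt d)$ for exactly this type of regularization are known to hold in the sparse regime (see \cite{le2017concentration,chin2015stochastic}), where the unregularized matrix $A$ fails to concentrate because high-degree nodes produce spectral outliers. The cleanest route is therefore to verify that the hypotheses of one of these results are met here — a bounded entrywise mean of order $d/n$ and a degree cap at a constant multiple of the expected average degree, guaranteed by the choice $C_0 d \ge 20np$ — and invoke it to obtain $\|A'-\mathbb{E}A\|^2 \le C_2 d$ with probability $1-n^{-r}$.

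If instead one proves the bound from scratch, the standard route is a Feige--Ofek / Grothendieck-type discretization of the bilinear form. First I would reduce the spectral norm to a supremum over an $\varepsilon$-net $\mathcal{N}$ of the unit sphere, using $\|A'-\mathbb{E}A\| \le (1-2\varepsilon)^{-1}\sup_{x,y\in\mathcal{N}} \langle x,(A'-\mathbb{E}A)y\rangle$. For each fixed pair $(x,y)$ I would split $\sum_{ij}(A'_{ij}-\mathbb{E}A_{ij})x_i y_j$ into a \emph{light} part, over index pairs with $|x_i y_j|$ below a threshold of order $1/\sqrt n$, and a \emph{heavy} part over the remaining pairs. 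Each light-part form has variance $O(p)=O(d/n)$ and bounded increments, so a Bernstein bound with deviation threshold of order $\sqrt d$ drives the per-pair failure probability below $e^{-cn}$, which then survives the union bound over the net of cardinality $e^{O(n)}$.

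The main obstacle — and the reason the regularization is essential — is the heavy part. For the raw matrix $A$ this term is uncontrolled in the sparse regime, since a single high-degree vertex can inflate the quadratic form; replacing $A$ by $A'$ is exactly what tames it. I would bound the heavy contribution through the combinatorial \emph{bounded discrepancy} property of the regularized graph: for any vertex subsets $S,T$, the edge count $e(S,T)$ exceeds its expectation $\mu(S,T)$ only by a controlled amount dictated by the degree cap, via a counting argument organized over dyadic scales of the coordinates of $x$ and $y$. Combining the light and heavy estimates and optimizing $\varepsilon$ yields $\|A'-\mathbb{E}A\| \le \sqrt{C_2 d}$, completing the proof.
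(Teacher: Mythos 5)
Your proposal is correct and takes essentially the same route as the paper: the paper's entire proof is a one-line citation of Theorem 1.1 in \cite{le2017concentration}, which is exactly the reduction you describe in your first paragraph (verifying the degree cap $C_0 d \ge 20np$ and the entrywise mean of order $d/n$, then invoking the known concentration result for degree-regularized adjacency matrices). Your additional Feige--Ofek-style light/heavy sketch is a faithful outline of how that cited theorem is itself proved, but it is not needed for, nor present in, the paper's argument.
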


\begin{proof}
This lemma is a special case of Theorem 1.1  in \cite{le2017concentration}.
\end{proof}

\begin{lemma}[Propositions of the parameter $\lambda$]
\label{lemma:Propositions of the parameter lambda}
Under the assumption that $p>q>0$ and $p \asymp q \asymp p - q \asymp \rho_n$, we have the following bounds for $\lambda$:
\begin{align*}
    q< \lambda < p, \quad
    \lambda - q \geq C_3 \rho_n, \quad
    \frac{p+q}{2} - \lambda \geq C_4 \rho_n,
\end{align*}
where $C_3$ and $C_4$ are absolute constants.
\end{lemma}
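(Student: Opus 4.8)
The plan is to rewrite $2t$ and $2t\lambda$ as integrals and then interpret $\lambda$ as the mean of an explicit probability measure on $[q,p]$. Recall that for SBM $t=\frac{1}{2}\log\frac{p(1-q)}{q(1-p)}$ and $2t\lambda=\log\frac{1-q}{1-p}$. Writing each logarithm via the fundamental theorem of calculus, namely $\log\frac{p}{q}=\int_q^p\frac{dx}{x}$ and $\log\frac{1-q}{1-p}=\int_q^p\frac{dx}{1-x}$, and adding the two for $t$, I obtain
\begin{align*}
2t=\int_q^p\frac{dx}{x(1-x)},\qquad 2t\lambda=\int_q^p\frac{dx}{1-x}=\int_q^p x\cdot\frac{dx}{x(1-x)}.
\end{align*}
Hence $\lambda=\int_q^p x\,d\mu(x)$ is the mean of the probability measure $d\mu(x)=\frac{1}{2t}\frac{dx}{x(1-x)}$ supported on $(q,p)$. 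Since $\mu$ has strictly positive density throughout $(q,p)$, its mean lies strictly inside the interval, giving $q<\lambda<p$ immediately.

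For the lower bound $\lambda-q\ge C_3\rho_n$, I would bound the numerator and denominator of
\begin{align*}
\lambda-q=\frac{\int_q^p (x-q)\,\frac{dx}{x(1-x)}}{\int_q^p\frac{dx}{x(1-x)}}
\end{align*}
separately. Because $\rho_n\to 0$ forces $p<1/2$ for large $n$, the map $x\mapsto x(1-x)$ is increasing on $[q,p]$, so the weight $\frac{1}{x(1-x)}$ is bounded above by $\frac{1}{q(1-q)}$ and below by $\frac{1}{p(1-p)}$. This yields numerator $\ge\frac{(p-q)^2}{2p(1-p)}$ and denominator $\le\frac{p-q}{q(1-q)}$, whence $\lambda-q\ge\frac{(p-q)\,q(1-q)}{2p(1-p)}$. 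Under Assumption~\ref{ass:model assumption} we have $p-q\asymp\rho_n$ and $\frac{q(1-q)}{p(1-p)}\asymp\frac{q}{p}\ge c>0$, so $\lambda-q\ge C_3\rho_n$.

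The midpoint gap $\frac{p+q}{2}-\lambda\ge C_4\rho_n$ is the most delicate part and the main obstacle, since I must show the mean of $\mu$ falls below the midpoint $m=\frac{p+q}{2}$ by a full order $\rho_n$, which is possible only because $p/q$ is bounded away from $1$. The plan is to symmetrize about $m$: substituting $x=m+y$ with $y\in[-h,h]$, $h=\frac{p-q}{2}$, and writing $g(y)=(m+y)(1-m-y)$, a direct expansion gives the exact identity $g(y)-g(-y)=2y(1-2m)$. Therefore
\begin{align*}
m-\lambda=\frac{1}{2t}\int_{-h}^{h}(-y)\,\frac{dy}{g(y)}=\frac{2(1-2m)}{2t}\int_0^h\frac{y^2}{g(y)g(-y)}\,dy,
\end{align*}
which is manifestly positive since $1-2m=1-(p+q)>0$. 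Bounding $g(y)g(-y)\le p^2$ and $2t\le\frac{p-q}{q(1-q)}$ then gives $m-\lambda\ge\frac{(1-2m)(p-q)^2\,q(1-q)}{12\,p^2}$, and because $1-2m\ge 1/2$ for small $\rho_n$, $(p-q)^2\asymp\rho_n^2$, $q\asymp\rho_n$, and $p^2\asymp\rho_n^2$, I conclude $\frac{p+q}{2}-\lambda\ge C_4\rho_n$. The one point requiring care throughout is that every comparability constant remains absolute; this follows from $p\asymp q\asymp p-q\asymp\rho_n$, which keeps $p/q$ inside a fixed compact subinterval of $(1,\infty)$ and thus prevents the density of $\mu$ from flattening (the degenerate limit $p/q\to 1$ would pull $\lambda$ to the midpoint and destroy the last bound).
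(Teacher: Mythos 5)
Your proof is correct, but it takes a genuinely different route from the paper: the paper does not prove this lemma at all, it simply cites Propositions 1 and 2 of \cite{sarkar2021random}, which establish these inequalities by direct manipulation of the logarithms defining $t$ and $\lambda$. What you do instead is package both quantities into the single integral representation $2t=\int_q^p\frac{dx}{x(1-x)}$, $2t\lambda=\int_q^p\frac{x\,dx}{x(1-x)}$, so that $\lambda$ becomes the mean of the probability measure with density proportional to $\frac{1}{x(1-x)}$ on $(q,p)$. This makes $q<\lambda<p$ immediate, reduces $\lambda-q\gtrsim\rho_n$ to crude upper and lower bounds on the weight, and---most usefully---turns the delicate midpoint inequality into the exact symmetrized identity $m-\lambda=\frac{2(1-2m)}{2t}\int_0^h\frac{y^2}{g(y)g(-y)}\,dy$ with $g(y)-g(-y)=2y(1-2m)$, whose sign and order in $\rho_n$ are then transparent. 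All the estimates check out: the monotonicity of $x(1-x)$ on $[q,p]$ is justified by $p\asymp\rho_n\to0$, the bounds $g(y)g(-y)\le p^2$ and $2t\le\frac{p-q}{q(1-q)}$ are valid, and the final constants depend only on the comparability constants in Assumption~\ref{ass:model assumption}, consistent with the lemma's claim. The benefit of your argument is that it is self-contained and makes the lemma independent of the external reference; the cost is that it is longer than a one-line citation, and the ``absolute constant'' phrasing should really be read as ``depending only on the implicit constants in $p\asymp q\asymp p-q\asymp\rho_n$,'' exactly as in the paper's own usage.
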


\begin{proof}
This lemma is the direct result from Proposition 1 and 2 in \cite{sarkar2021random}.
\end{proof}

The next lemma provides crude bounds for the parameters of the threshold BCAVI after the first iteration.

\begin{lemma}[First step: Parameter estimation]
\label{lemma:parameter estimation}
There exist constants $C,c_{1},c_{2}$ only depending on $\varepsilon$ and $K$ such that if $n \rho_n > C$ then with high probability $1-n^{-r}$ for some constant $r>0$,  
\[ t^{(1)} \ge c_{1 },\quad \lambda^{(1)} \le c_{2 } \rho_n.\]
\end{lemma}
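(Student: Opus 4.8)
The plan is to exploit the independence between the initialization $Z^{(0)}$ and the adjacency matrix $A$ guaranteed by Assumption~\ref{ass:perturb}, and to show that a single BCAVI update already separates $p^{(1)}$ from $q^{(1)}$ by a constant fraction of $p-q$. Conditioning on $Z^{(0)}$, the denominators of $p^{(1)}$ and $q^{(1)}$ in \eqref{update of pq in proof} are deterministic functions of $Z^{(0)}$, namely the numbers of pairs placed in the same (resp.\ different) block by $z^{(0)}$, while the numerators are sums of the independent Bernoulli variables $A_{ij}$ weighted by these same indicators. First I would compute the relevant expectations. Writing $a_0=1-\varepsilon$ and $a_1=\varepsilon/(K-1)$, for a pair $i,j$ the probability that $z^{(0)}_i=z^{(0)}_j$ equals $\alpha:=a_0^2+(K-1)a_1^2$ when $z_i=z_j$ and $\beta:=2a_0a_1+(K-2)a_1^2$ when $z_i\neq z_j$. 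Letting $n_s$ and $n_d$ denote the numbers of within- and between-community pairs (so $n_s\asymp n^2/K$ and $n_d\asymp n^2(K-1)/K$ for equal-sized blocks), the conditional means of numerators over denominators are, up to lower-order fluctuations, $(\alpha p\,n_s+\beta q\,n_d)/(\alpha n_s+\beta n_d)$ for $p^{(1)}$ and $((1-\alpha)p\,n_s+(1-\beta)q\,n_d)/((1-\alpha)n_s+(1-\beta)n_d)$ for $q^{(1)}$.

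The combinatorial heart of the argument is the identity
\[\alpha-\beta=(a_0-a_1)^2=\Bigl(1-\tfrac{\varepsilon K}{K-1}\Bigr)^2>0,\]
which is strictly positive precisely because $\varepsilon<(K-1)/K$, i.e.\ the initialization beats random guessing. Consequently both target ratios are weighted averages of $p$ and $q$: I would write $p^{(1)}\approx q+w\,(p-q)$ and $q^{(1)}\approx q+w'\,(p-q)$ with $w=\alpha n_s/(\alpha n_s+\beta n_d)$ and $w'=(1-\alpha)n_s/((1-\alpha)n_s+(1-\beta)n_d)$, and a short cross-multiplication shows that $w>w'$ is equivalent to $\alpha>\beta$. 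Hence $w-w'$ is a strictly positive constant depending only on $\varepsilon$ and $K$, so $p^{(1)}-q^{(1)}\asymp p-q\asymp\rho_n$ while both $p^{(1)},q^{(1)}\asymp\rho_n$.

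To turn these approximate identities into high-probability statements I would argue at two levels. At the level of $Z^{(0)}$, the four refined pair counts (same/different in the initialization crossed with same/different in the truth) are functions of the independent coordinates $z^{(0)}_1,\dots,z^{(0)}_n$, each of which changes any count by at most $n$; McDiarmid's bounded-difference inequality then gives deviations of order $n^{3/2}\log^{1/2}n$ against means of order $n^2$, hence relative error $o(1)$ with probability $1-n^{-r}$. Conditioning on a typical $Z^{(0)}$ from this event, the numerators are sums of independent Bernoulli variables with means and variances both $\asymp n^2\rho_n\asymp nd$; Bernstein's inequality yields relative error $o(1)$ with probability at least $1-\exp(-c\,nd)$, which dominates $n^{-r}$ since $nd\asymp n^2\rho_n\to\infty$ under $n\rho_n>C$. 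A union bound gives $p^{(1)}=q+w(p-q)(1+o(1))$ and $q^{(1)}=q+w'(p-q)(1+o(1))$ with the stated probability.

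Finally I would substitute these estimates into \eqref{eq:tlambda}. Since $p^{(1)},q^{(1)}=o(1)$, the factor $(1-q^{(1)})/(1-p^{(1)})=1+O(\rho_n)$ is negligible, so $t^{(1)}=\tfrac12\log(p^{(1)}/q^{(1)})+O(\rho_n)$, and $p^{(1)}/q^{(1)}=(1+w(r-1))/(1+w'(r-1))$ with $r=p/q$ bounded away from $1$ (because $p-q\asymp q$); since $w>w'$ this ratio exceeds $1$ by a constant, giving $t^{(1)}\ge c_1$. For $\lambda^{(1)}$, using $\log\frac{1-q^{(1)}}{1-p^{(1)}}=(p^{(1)}-q^{(1)})+O(\rho_n^2)\asymp\rho_n$ together with $t^{(1)}\ge c_1$ yields $\lambda^{(1)}\le c_2\rho_n$. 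I expect the main obstacle to be the sparse-regime bookkeeping in the third step: because $d$ may be bounded, the relative fluctuations of the numerators are only $O((nd)^{-1/2})$, so one must verify carefully that these errors are genuinely $o(1)$ and that the two concentration events combine without degrading the $1-n^{-r}$ guarantee, rather than any single estimate being delicate.
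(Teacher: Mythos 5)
Your proposal is correct and follows essentially the same route as the paper's proof: condition on $Z^{(0)}$, compute the denominator and the signal part of the numerator of $p^{(1)}$ and $q^{(1)}$ to show each is a convex combination of $p$ and $q$ whose weights differ by the strictly positive constant $(1-\varepsilon-\tfrac{\varepsilon}{K-1})^2$, control the fluctuations, and substitute into \eqref{eq:tlambda}. Your $\alpha,\beta$ bookkeeping is algebraically identical to the paper's explicit coefficients, and your choice of McDiarmid plus conditional Bernstein in place of the paper's Bernstein-plus-Chebyshev combination is only a minor (and if anything slightly sharper) variation in the concentration step.
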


\begin{proof}
Recall the formulas for $t^{(1)}$ and $\lambda^{(1)}$ in \eqref{eq:tlambda}.
When $s=1$, from \eqref{update of pq in proof} the first update of $p$ is:
\[
    p^{(1)} = \frac{\sum_{i<j} \sum_a \Psi_{i a}^{(0)} \Psi_{j a}^{(0)} A_{i j}}{ \sum_{i<j} \sum_a \Psi_{i a}^{(0)} \Psi_{j a}^{(0)} }, 
\] where $\Psi^{(0)} = Z^{(0)}$ is the random initialization with error rate $\varepsilon$. We will analyze the numerator and denominator of $p^{(1)}$ separately and combine them to have an estimation of $p^{(1)}$.

First we analyze the denominator of $p^{(1)}$. By the Bernstein inequality, for all positive $t$, we have
\[
\mathbb{P} \Big(\Big| \sum_i  \Psi_{ia}^{(0)} - \frac{n}{K}  \Big| > t \Big) \leq 2\exp\Big( -\frac{ \frac{1}{2}t^2}{ \frac{n}{K} \Big[\varepsilon(1 - \varepsilon) + \varepsilon \big(1 - \varepsilon/(K-1) \big) \Big] + \frac{1}{3}t}\Big).
\]
By choosing $t = n^{2/3}$ and taking the union bound for all $a$, we get 
\begin{align}\label{eq:denominator of p1}
   \sum_{i<j} \sum_a \Psi_{i a}^{(0)} \Psi_{j a}^{(0)}  &= \frac{1}{2}\sum_a  \left( \sum_{i } \Psi_{ia}^{(0)} \right)^2 - \frac{1}{2}\sum_a \sum_{i} ( \Psi_{ia}^{(0)} )^2 \nonumber \\
   &= \frac{1}{2}\sum_a  \left( \sum_{i } \Psi_{ia}^{(0)} \right)^2 - \frac{1}{2}\sum_a \sum_{i}  \Psi_{ia}^{(0)}  \nonumber \\
   &= \frac{n^2}{2K} + \mathcal{E},
\end{align}
where $|\mathcal{E}|< r_1 n ^{5/3}$ with probability at least $1 - 2K\exp(- r_2 n^{1/3})$ for some absolute constant $r_1$ and $r_2$.

Second we analyze the numerator of $p^{(1)}$. By replacing $A_{ij}$ with $P_{ij}+(A_{ij}-P_{ij})$, we decompose the numerator of $p^{(1)}$ as the sum of signal and noise terms:
\begin{eqnarray*}
\text{signal of $p^{(1)}$} &:=&\sum_{i<j} \sum_a \Psi_{i a}^{(0)} \Psi_{j a}^{(0)} P_{i j},\\
\text{noise of $p^{(1)}$} &:=&\sum_{i<j} \sum_a \Psi_{i a}^{(0)} \Psi_{j a}^{(0)} (A_{i j} - P_{i j}).
\end{eqnarray*}
Since $P_{ij} = p$ when $z_i=z_j$ and $P_{ij} = q$ when $z_i \neq z_j$, the signal can be written as \[\sum_{i<j} \sum_a \Psi_{i a}^{(0)} \Psi_{j a}^{(0)} P_{i j} = \sum_a \sum_{i<j : z_i = z_j} \Psi_{i a}^{(0)} \Psi_{j a}^{(0)} p + \sum_a \sum_{i<j : z_i \neq z_j} \Psi_{i a}^{(0)} \Psi_{j a}^{(0)} q.\]
By the assumption of $\Psi^{(0)}$, we have 
\begin{align*}
    \mathbb{E} \Big[\sum_{i:z_i=a} \Psi^{(0)}_{ia} \Big] = \frac{n}{K}(1 - \epsilon), \quad \mathbb{E} \Big[\sum_{i:z_i=b, z_i \neq a} \Psi^{(0)}_{ia} \Big] = \frac{n \epsilon}{K(K-1)}.
\end{align*}
Therefore we can apply the Bernstein inequalities for each $\sum_{i:z_i=a} \Psi^{(0)}_{i a}$ and $\sum_{i:z_i=b, z_i \neq a} \Psi^{(0)}_{ia}$. As in the analysis of the denominator of $p^{(1)}$, by choosing $t = n^{2/3}$ and taking the union bound for all those $K^2$ Bernstein inequalities, we have 
\begin{align}\label{eq:signal of p1}
    \text{signal of $p^{(1)}$} =& \sum_{i<j} \sum_a \Psi_{i a}^{(0)} \Psi_{j a}^{(0)} P_{i j} = \sum_a \sum_{i<j : z_i = z_j} \Psi_{i a}^{(0)} \Psi_{j a}^{(0)} p + \sum_a \sum_{i<j : z_i \neq z_j} \Psi_{i a}^{(0)} \Psi_{j a}^{(0)} q  \nonumber \\ =& K\left[ \frac{1}{2}[ \frac{n}{K}(1 - \varepsilon)]^2 + \frac{1}{2}[\frac{n \varepsilon}{K(K-1)}]^2(K-1) \right]p  \nonumber \\ &+ K\left[ \frac{n}{K}(1 - \varepsilon) \frac{n \varepsilon}{K(K-1)} (K-1) + [\frac{n \varepsilon}{K(K-1)}]^2 \binom{K-1}{2} \right]q + \mathcal{E},
\end{align}
where $|\mathcal{E}|< r_1 n ^{5/3} \rho_n$ with probability at least $1 - 2K^2\exp(- r_2 n^{1/3})$ for some absolute constant $r_1$ and $r_2$.

We can analyze the noise of $p^{(1)}$ conditioning on $\Psi^{(0)}$. Since
\begin{equation*}
      \mathbb{E}\left[ \sum_{i<j} \sum_a \Psi_{i a}^{(0)} \Psi_{j a}^{(0)} (A_{i j} - P_{i j}) | \Psi^{(0)} \right] = 0,  
\end{equation*}
we have
\begin{align*}
    \text{Var}\left[\sum_{i<j} \sum_a \Psi_{i a}^{(0)} \Psi_{j a}^{(0)} (A_{i j} - P_{i j}) \right] &= \mathbb{E}\left[ \text{Var} \left[ \sum_{i<j} \sum_a \Psi_{i a}^{(0)} \Psi_{j a}^{(0)} (A_{i j} - P_{i j}) | \Psi^{(0)} \right] \right] \\ &\leq r_1 n^2 K \rho_n,
\end{align*}
where $r_1$ is an absolute constant. By Chebyshev’s inequality, for any real number $k>0$, we have 
\begin{align} \label{eq:noise of p1}
 \mathbb{P} \Big( 
\Big| \sum_{i<j} \sum_a \Psi_{i a}^{(0)} \Psi_{j a}^{(0)} (A_{i j} - P_{i j}) \Big| < kn \sqrt{r_1 K \rho_n} \Big) \geq 1 - \frac{1}{k^2}.
\end{align}
By choosing $k = n^{3/4}\sqrt{\rho_n}$, the noise term is bounded by $ \sqrt{r_1 K} n^{7/4}\rho_n$ with probability at least $1 - \frac{1}{n^{3/2} \rho_n}$ for some absolute constant $r_1$.

It follows from \eqref{eq:denominator of p1}, \eqref{eq:signal of p1} and \eqref{eq:noise of p1} that 
\begin{equation}\label{eq: estimation of p1}
 p^{(1)} = \left[  (1 - \varepsilon)^2 + \frac{\varepsilon^2}{K-1}\right]p+  \left[2(1 - \varepsilon)\varepsilon + \frac{(K-2) \varepsilon^2 }{K-1}\right]q+ \mathcal{E}, 
\end{equation}
where $|\mathcal{E}| < r_1 n^{-1/4}p_n$ with probability at least $1 - n^{-r_2}$ for some absolute constant $r_1$ and $r_2$.

A similar analysis gives
\begin{eqnarray}\label{eq: estimation of q1}
    q^{(1)} = \left[  K-2 + (1 - \varepsilon)^2 + \frac{\varepsilon^2}{K-1}\right] \frac{q}{K-1} + \left[2(1 - \varepsilon)\varepsilon + \frac{(K-2) \varepsilon^2 }{K-1}\right]\frac{p}{K-1}+   \mathcal{E}, 
\end{eqnarray}
where $|\mathcal{E|} < r_1 n^{-1/4}p_n$ with probability at least $1 - n^{-r_2}$ for some absolute constant $r_1$ and $r_2$.

Notice here $p^{(1)}$ is always larger than $q^{(1)}$ with high probability. To verify this, first notice the sum of coefficients in front of $p$ and $q$ is $1$ in both \eqref{eq: estimation of p1} and \eqref{eq: estimation of q1}. Therefore, we only need show that \[
\left[  (1 - \varepsilon)^2 + \frac{\varepsilon^2}{K-1}\right] > \left[2(1 - \varepsilon)\varepsilon + \frac{(K-2) \varepsilon^2 }{K-1}\right] \frac{1}{K-1}
,\]
which is equivalent to \[
(K-1)(1 - \varepsilon)^2 + \frac{\varepsilon^2}{K-1} > 2\varepsilon(1- \varepsilon).\]
The last inequality holds by fundamental inequalities. The similar calculation also yields that 
\begin{equation}
\label{eq: estimation of p1 - q1}
p^{(1)} - q^{(1)} =  \left( (K-1)(1 - \varepsilon)^2 + \frac{\varepsilon^2}{K-1} - 2\varepsilon(1- \varepsilon) \right)\frac{p-q}{K-1} + \mathcal{E}, 
\end{equation}
where $|\mathcal{E|} < r_1 n^{-1/4}p_n$ with probability at least $1 - n^{-r_2}$ for some absolute constant $r_1$ and $r_2$.

Since 
\[t^{(1)} = \frac{1}{2} \log \frac{p^{(1)} ( 1 - q^{(1)})}{ q^{(1)} (1 - p^{(1)})}, \quad \lambda^{(1)} = \frac{1}{2t^{(1)}} \log \frac{1 - q^{(1)}}{ 1 - p^{(1)}},\]
it follows directly from \eqref{eq: estimation of p1}, \eqref{eq: estimation of q1} and \eqref{eq: estimation of p1 - q1} that 
\begin{equation}\label{eq: estimation of t1 and lambda1}
    t^{(1)} \ge c_{1}, \quad \lambda^{(1)} \le c_{2}\rho_n
\end{equation} 
for some constants $c_1$, $c_2$ only depending on $\varepsilon$ and $K$.
\end{proof}

The next lemma provides the accuracy of the label estimates after the first iteration.

\begin{lemma}[First step: Label estimation]\label{lem:label first step}
There exist constants $C, c>0$ only depending on $\varepsilon$ and $K$ such that if $n\rho_n \ge C$ then with high probability $1-n^{-r}$ for some constant $r>0$,
$$||\Psi^{(1)}- Z||_1 \leq n\exp(-c d).$$
\end{lemma}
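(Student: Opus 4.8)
The plan is to reduce the $\ell_1$ bound to counting misclassified nodes and then control that count through a signal-plus-noise decomposition. Since both $Z$ and the thresholded $\Psi^{(1)}$ are one-hot in each row, $\|\Psi^{(1)}-Z\|_1 = 2N$, where $N=\#\{i:\hat z_i\neq z_i\}$ and $\hat z_i=\arg\max_a\Psi^{(1)}_{ia}$. Because $t^{(1)}>0$ by Lemma~\ref{lemma:parameter estimation}, the update \eqref{update of Psi in proof} is monotone, so $\hat z_i$ is just the maximizer over $a$ of the score $S_a(i):=\sum_{j\neq i}\Psi^{(0)}_{ja}(A_{ij}-\lambda^{(1)})$. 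Hence a node $i$ with true label $g$ is correctly labeled once the gap $D_a(i):=S_g(i)-S_a(i)>0$ for all $a\neq g$, and I would bound $\mathbb{P}(D_a(i)\le 0)$.

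I would then split $D_a(i)$ via $A_{ij}=P_{ij}+W_{ij}$ with $W_{ij}=A_{ij}-P_{ij}$. Conditioning on $Z^{(0)}$, which is independent of $A$ by Assumption~\ref{ass:perturb}, the weights $w_j:=\Psi^{(0)}_{jg}-\Psi^{(0)}_{ja}\in\{-1,0,1\}$ are fixed and the signal is $\sum_j w_j(P_{ij}-\lambda^{(1)})$. Using $P_{ij}=p$ iff $z_j=g$ together with the Bernstein concentration of the initialization counts $\sum_{j:z_j=b}\Psi^{(0)}_{ja}$ around their means (the very estimates used in Lemma~\ref{lemma:parameter estimation}), the signal concentrates around $\tfrac{n}{K}\gamma(p-q)$ with $\gamma:=1-K\varepsilon/(K-1)$, while the $\lambda^{(1)}$ term is only $O(\sqrt{n}\,\rho_n)=o(d)$. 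Since $\varepsilon<(K-1)/K$, the constant $\gamma$ is strictly positive, so the signal is at least $c\gamma d$ uniformly over $i,a$ on an event of probability $1-n^{-r}$. The noise $\sum_j w_j W_{ij}$ is a sum of independent, mean-zero, $\{-1,0,1\}$-bounded terms with variance $\sum_j w_j^2\operatorname{Var}(A_{ij})\le np\asymp d$, so Bernstein gives $\mathbb{P}(D_a(i)\le 0)\le \exp\!\big(-\tfrac{(c\gamma d)^2/2}{Cd+c\gamma d/3}\big)\le\exp(-c'd)$, and a union bound over the $K-1$ wrong labels yields $\mathbb{P}(\hat z_i\neq z_i)\le\exp(-cd)$ per node.

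The \emph{main obstacle} is upgrading this per-node tail to a high-probability bound on $N$, since the indicators $\{\hat z_i\neq z_i\}$ are dependent through the symmetric entries $A_{ij}=A_{ji}$ and the global quantities $t^{(1)},\lambda^{(1)}$. I would first condition on $Z^{(0)}$ and on the favorable event of Lemma~\ref{lemma:parameter estimation}, at a cost of $n^{-r}$. The key point is that, given $Z^{(0)}$ and the edges internal to a candidate set $T$ of misclassified nodes, the events $\{\hat z_i\neq z_i\}_{i\in T}$ become independent, as each then depends only on the cross-edges $\{A_{ij}:j\notin T\}$, which are disjoint across $i\in T$. Because a typical $T$ of size $m$ spans only $O(m^2\rho_n)$ internal edges, negligible against the signal $\gamma d$, the per-node bound survives conditioning, so $\mathbb{P}(\bigcap_{i\in T}\{\hat z_i\neq z_i\})\le\exp(-c'dm)$ up to a rare atypical-internal-degree event. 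A union bound over all $\binom{n}{m}$ sets gives $\mathbb{P}(N\ge m)\le (en/m)^m\exp(-c'dm)$, and taking $m=n\exp(-cd)$ lets the exponential factor dominate $(en/m)^m=\exp(m(1+cd))$, so $\mathbb{P}(N\ge n\exp(-cd))\le n^{-r}$ for every $d\ge C$; in the dense regime $d\gtrsim\log n$ the target $n\exp(-cd)<1$ forces exact recovery, which follows from the same per-node bound by a plain union bound over nodes.

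Finally, I would use the regularized matrix $A'$ together with Lemma~\ref{lemma:quantity of removed edges} and Lemma~\ref{lemma:concentration of regularized adjacency matrices} to quarantine the sparse-network irregularities: nodes of atypically large degree (the set $S$) number at most $n\exp(-cd)$ by Lemma~\ref{lemma: Concentration inequality} and may be declared misclassified for free, while on the regular nodes the bounded degree legitimizes the Bernstein and decoupling steps above. The technical crux is thus the passage from the clean per-node tail to the high-probability count, which requires decoupling the rows by conditioning on internal edges and isolating the high-degree nodes so that the bounded-variation hypotheses genuinely hold.
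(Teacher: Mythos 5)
Your reduction to counting misclassified nodes, the signal/noise decomposition of the score gap, the identification of the signal $\tfrac{n}{K}\bigl(1-\tfrac{K\varepsilon}{K-1}\bigr)(p-q)$, and the per-node Bernstein tail $\exp(-c'd)$ all match the paper's proof of Lemma~\ref{lem:label first step} essentially line for line (the paper's $\delta$ is your $\gamma$ up to normalization, and it likewise keeps $\lambda^{(1)}$ out of the noise term so that the misclassification indicators do not depend on the estimated parameters). Where you genuinely diverge is in upgrading the per-node tail to a high-probability bound on the count $N$. The paper does this by row-wise symmetrization: it writes the noise as $\sum_{j<i}Y_{ija}+\sum_{j>i}Y_{ija}$, replaces one half by an independent copy $Y^{*}_{ija}$ via the triangle inequality, observes that the resulting pairs are independent across $i$ conditionally on $\Psi^{(0)}$ (the lower-triangular entries of distinct rows are disjoint), and then applies Bernstein once more to the sum of independent indicators. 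This buys a clean one-shot argument with no union over subsets and no need to touch $A'$ at the first iteration.

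Your alternative — union bound over candidate misclassified sets $T$ of size $m$, with conditional independence of the cross-edge noise given the internal edges of $T$ — is a legitimate strategy, but as written it has a concrete gap: the internal contribution $\sum_{j\in T}w_jW_{ij}$ is not controlled by the \emph{average} internal edge count $O(m^{2}\rho_n)$; you need, for \emph{every} subset $T$ of size $m$ and \emph{every} $i\in T$, that $\sum_{j\in T}A_{ij}\lesssim \delta d$. With $m=n\exp(-cd)$ and $d$ bounded, $m$ is a constant fraction of $n$ and some nodes will certainly have $\Omega(d)$ neighbors inside an adversarially chosen $T$, so "up to a rare atypical-internal-degree event" does not dispose of this: the exceptional event depends on $T$ and must itself be union-bounded over all $\binom{n}{m}$ subsets, which requires a discrepancy-type lemma (uniform control of edge counts within all small vertex subsets) that is neither stated by you nor available among Lemmas~\ref{lemma: Concentration inequality}--\ref{lemma:concentration of regularized adjacency matrices}; the set $S$ of globally high-degree nodes from Lemma~\ref{lemma: Concentration inequality} does not cover nodes whose degree is concentrated inside $T$. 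Relatedly, note that the paper reserves $A'$ and Lemmas~\ref{lemma:quantity of removed edges}--\ref{lemma:concentration of regularized adjacency matrices} for the induction step $s\ge 2$; they are not needed, and do not substitute for the missing discrepancy estimate, in the first iteration. Either supply such a lemma or switch to the paper's independent-copy decoupling, which closes the argument directly.
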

\begin{proof}
To deal with the estimation of $\Psi$ in the first iteration, we will first quantify the probability of misclassification for each node and then derive the upper bound of $||\Psi^{(1)} - Z||$.

For the node $i$, from \eqref{update of Psi in proof} and \eqref{eq: estimation of t1 and lambda1} we know $\Psi_{i}^{(1)} = Z_i$ after threshold if any only if the following inequality 
 \[ \sum_{j:j\neq i} \Psi_{j z_i}^{(0)} (A_{ij} -  \lambda^{(1)}) > \sum_{j:j\neq i} \Psi_{ja}^{(0)} (A_{ij} -  \lambda^{(1)})
\]
holds for any $a \in [K]$ such that $a \neq z_i$. This is equivalent to \[ \sum_{j:j\neq i} (\Psi_{j z_i}^{(0)} - \Psi_{ja}^{(0)}) (A_{ij} -  \lambda^{(1)}) > 0
\] holds for any $a \in [K]$ such that $a \neq z_i$. We can decompose the left-hand side of the above inequality by the sum of signal and noise terms. We will first quantify the signal term and then show that most of the noise terms can be upper bounded by the signal term. Denote the signal and noise terms in the first iteration by $SIG_{ia}^{(0)}$ and $ r_{ia}^{(0)}$ defined as:

\begin{align*}
   \text{SIG}_{ia}^{(0)} :=& \sum_{j:j\neq i} (\Psi_{j z_i}^{(0)} - \Psi_{ja}^{(0)}) (E[A_{ij}] - \lambda^{(1)}), \\
    r_{ia}^{(0)} :=& \sum_{j:j\neq i} (\Psi_{j z_i}^{(0)} - \Psi_{ja}^{(0)}) ( A_{ij} - E[A_{ij}]) := \sum_{j:j \neq i} Y_{ija}.  
\end{align*}

Similarly as in the derivation of \eqref{eq:signal of p1}, if follows from Lemma~\ref{lemma:parameter estimation} that 
\begin{align*}
    \text{SIG}_{ia}^{(0)} &= 
    \sum_{j:j\neq i} (\Psi_{j z_i}^{(0)} - \Psi_{ja}^{(0)}) (E[A_{ij}] - \lambda^{(1)}) \\ &= \sum_{j:j\neq i,z_j = z_i} (\Psi_{j z_i}^{(0)} - \Psi_{ja}^{(0)})p + \sum_{j:j\neq i, z_j \neq z_i } (\Psi_{j z_i}^{(0)} - \Psi_{ja}^{(0)})q -  \sum_{j:j\neq i} (\Psi_{j z_i}^{(0)} - \Psi_{ja}^{(0)})\lambda^{(1)}\\
    & = \frac{n}{K} [ 1- \varepsilon - \frac{\varepsilon}{ K-1 }][p-q] + \mathcal{E},
\end{align*}
where $|\mathcal{E}|<r_1 n^{2/3}\rho_n$ with probability at least $1 - n^{-r_2}$ for some absolute constant $r_1$ and $r_2$. Here the upper bound of $|\mathcal{E}|$ is a union bound for all $i \in [n]$ and $a \in [K]$ such that $a \neq z_i$.

Now we are ready to bound the noise terms $r_{ia}^{(0)}$. Denote
\[ \delta = \frac{[1 - \varepsilon - \frac{\varepsilon}{K-1}](p-q)}{2 [p + (K-1)q]},
\]
and note that $\delta d < SIG_{ia}^{(0)}$. As we discussed, if $|r_{ia}^{(0)}| < \delta d$ holds for all $a \in [K]$ and $a \neq z_i$, then $|r_{ia}^{(0)}|< SIG_{ia}^{(0)}$, which implies $\Psi_i^{(1)} = Z_i$. Otherwise $||\Psi_i^{(1)} -Z_i  ||_1 \leq 2$.  For an event $E$, denote by $\mathbbm{1}(E)$ the indicator of $E$. Therefore, we have the fact that
\begin{equation}\label{bound of Psi1 related to noise}
||\Psi^{(1)} - Z||_1 = \sum_{i} || \Psi^{(1)}_i - Z_i ||_1 \leq 2 \sum_{i} \mathbbm{1}\Big( \bigcup_{a:a \neq z_i}  \Big\{ |r_{ia} ^{(0)} | > \delta d
 \Big\} \Big).
\end{equation}
First we analyze the noise terms $r_{ia}^{(0)}$ conditioning on $\Psi^{(0)}$. 
To deal with the dependency among rows of the adjacency matrix due to symmetry, 
let $\{Y_{i ja}^*\}$ be an independent copy of $\{Y_{i ja}\}$. Then by the triangle inequality, 
 \begin{align*}
 \mathbbm{1}\Big(\bigcup_{a:a \neq z_i}  
 \Big\{ |r_{ia} ^{(0)} | > \delta d
 \Big\} \Big) &= \mathbbm{1} \Big( \bigcup_{a:a \neq z_i} \Big\{ |\sum_{j: j \neq i} Y_{ija }|>\delta d \Big\} \Big)  \leq \mathbbm{1}\Big( \bigcup_{a:a \neq z_i} \Big\{ \Big| \sum_{j=1}^{i-1} Y_{ija}\Big|+\Big|\sum_{j=i+1}^{n} Y_{ija}\Big|> \delta d \Big\} \Big) \\ 
 & \leq \mathbbm{1}\Big( \bigcup_{a:a \neq z_i} \Big\{ \Big| \sum_{j=1}^{i-1} Y_{ija}\Big|+\Big|\sum_{j=i+1}^{n} Y_{ija}\Big| + \Big| \sum_{j=1}^{i-1} Y_{ija}^*\Big| +\Big|\sum_{j=i+1}^{n} Y_{ija}^*\Big| > \delta d \Big\} \Big) \\ 
 & \leq \mathbbm{1}\Big( \bigcup_{a:a \neq z_i} \Big\{ \Big| \sum_{j=1}^{i-1} Y_{ija}\Big|+\Big|\sum_{j=i+1}^{n} Y_{ija}^*\Big| > \frac{\delta d}{2} \Big\} \Big) \\ & \quad + \mathbbm{1}\Big( \bigcup_{a:a \neq z_i} \Big\{ \Big| \sum_{j=1}^{i-1} Y_{ija}^*\Big|+\Big|\sum_{j=i+1}^{n} Y_{ija}\Big| > \frac{\delta d}{2} \Big\} \Big).
 \end{align*}
 Applying Bernstein's inequality for $ \sum_{j: j\neq i} Y_{ija} $ with $\mathbb{E}[Y_{ija}] = 0$, $|Y_{ija}|<1$ and $\sum_{j:j \neq i}\mathbb{E}[Y_{ija}^2] \leq d$, we get
 \begin{align*}
 \mathbb{P}\Big(\Big|\sum_{j=1}^{i-1} Y_{ija}\Big|+\Big|\sum_{j=i+1}^{n} Y_{ija}^*\Big| > \frac{\delta d}{2}\Big) 
  &\leq \mathbb{P}\Big(\Big|\sum_{j=1}^{i-1} Y_{ija}\Big|>\frac{\delta d}{4}\Big) + \mathbb{P}\Big(\Big|\sum_{j=i+1}^{n} Y_{ija}^*\Big|>\frac{\delta d}{4}\Big) \\
  &\leq 4\exp\left(\frac{-(\delta d/4)^2/2}{d+ (\delta d /4)/3}\right) = 4\exp\left(\frac{-\delta^2d}{32+8\delta/3} \right).
 \end{align*} 
Then by taking the union bound for all $a \in [K]$ such that $a \neq z_i$ we get 
 \[\mathbb{P}\Big( \bigcup_{a:a \neq z_i} \Big\{ \Big| \sum_{j=1}^{i-1} Y_{ija}\Big|+\Big|\sum_{j=i+1}^{n} Y_{ija}^*\Big| > \frac{\delta d}{2} \Big\} \Big) \leq 4(K-1)\exp \left(\frac{-\delta^2d}{32+8\delta/3} \right) .\]
Since $(\sum_{j=1}^{i-1}Y_{i ja},\sum_{j=i+1}^{n} Y_{i ja}^*)$, $1\le i\le n$, are independent conditioning on $\Psi^{(0)}$, by Bernstein's inequality, 
 \begin{align*}
  \sum_{i=1}^{n} \mathbbm{1}\Big(  \bigcup_{a:a \neq z_i} \Big\{ \Big| \sum_{j=1}^{i-1} Y_{ija}\Big|+\Big|\sum_{j=i+1}^{n} Y_{ija}^*\Big| > \frac{\delta d}{2} \Big\} \Big) \leq 8(K-1)n \exp\left(\frac{-\delta^2d}{32+8\delta/3}\right)
 \end{align*}
occurs with probability at least $1 - n^{-r_1}$ for some absolute constant $r_1$. The same bound holds for $\sum_{i=1}^{n} \mathbbm{1}\Big(  \bigcup_{a:a \neq z_i} \Big\{ \Big| \sum_{j=1}^{i-1} Y_{ija}^*\Big|+\Big|\sum_{j=i+1}^{n} Y_{ija} \Big| > \frac{\delta d}{2} \Big\} \Big)$. Therefore, the following event 
\[\mathcal{A}_1 = \left\{\sum_{i=1}^n \mathbbm{1}\Big( \bigcup_{a:a \neq z_i}  \Big\{ |r_{ia} ^{(0)} | > \delta d
 \Big\} \Big) \leq 16(K-1)n \exp\left(\frac{-\delta^2d}{32+8\delta/3}\right) \right\} \]
occurs with probability at least $1 - n^{-r_1}$ for some absolute constant $r_1$ conditioning on $\Psi^{(0)}$. By the law of total probability the event $\mathcal{A}_1$ occurs with probability at least $1 - n^{-r_1}$ for some absolute constant $r_1$. Clearly, it follows from \eqref{bound of Psi1 related to noise} that $\mathcal{A}_1$ implies \[ ||\Psi^{(1)} - Z||_1 \leq 
32(K-1) n \exp\left(\frac{-\delta^2d}{32+8\delta/3}\right)
\]
with high probability.
\end{proof}

Using the above lemmas, we are now ready to prove Proposition~\ref{prop:parameter estimation} and Theorem~\ref{unknown pa} by induction.

\medskip

\begin{proof}[Proof of Proposition~\ref{prop:parameter estimation} and Theorem~\ref{unknown pa}]

We will use induction to prove the results for both block parameter estimation and label estimation beyond first step. First we assume that for $s>1$, we have 
\begin{equation}\label{induction assumption}
||\Psi^{(s-1)} - Z||_1 =: n G_{s-1} \leq n\exp(-cd)
\end{equation}
for some constant $c$ only dependent on $\varepsilon$ and $K$. Note that the constant $c$ here does not depend on the number of iterations $s$. We will specify how to choose the constant $c$ afterward.

\medskip

\noindent \textbf{Beyond the first step: block parameter estimation}.
From \eqref{update of pq in proof}, the estimate of $p$ in the $s$-th iteration is given by
\[  p^{(s)} = \frac{\sum_{i<j} \sum_a \Psi_{i a}^{(s-1)} \Psi_{j a}^{(s-1)} A_{i j}}{ \sum_{i<j} \sum_a \Psi_{i a}^{(s-1)} \Psi_{j a}^{(s-1)} }.
\]
First let us define a ``better" estimation of $p^{(s)}$:\[
\underline{p}^{(s)} = \frac{\sum_{i<j} \sum_a Z_{i a} Z_{j a} A_{i j}}{ \sum_{i<j} \sum_a Z_{i a} Z_{j a} }.
\] 
The reason why we say $\underline{p}^{(s)}$ is a ``better" estimation is that we use the ground truth $Z$ to replace the estimated $\Psi^{(s-1)}$ as in the expression of $p^{(s)}$. From the Bernstein inequality we have \begin{equation}\label{bound of underline ps}
|\underline{p}^{(s)} - p| \leq p\exp(-cd)
\end{equation}
occurs with probability at least $1 - n^{-r_1}$ for some absolute constant $r_1$. To show $p^{(s)}$ is close to $\underline{p}^{(s)}$, we will show both the numerators and denominators are close.

To quantify the numerator of $p^{(s)}$, we can decompose it by
\begin{align*}
\sum_{i<j} \sum_a \Psi_{i a}^{(s-1)} \Psi_{j a}^{(s-1)} A_{i j} &= \sum_{i<j} \sum_a Z_{i a} Z_{j a} A_{i j} + \sum_{i<j} \sum_a [\Psi_{i a}^{(s-1)} \Psi_{j a}^{(s-1)} -  Z_{i a} Z_{j a}] A_{i j} \\ &= \sum_{i<j} \sum_a Z_{i a} Z_{j a} A_{i j} + \sum_{i<j} \sum_a [\Psi_{i a}^{(s-1)} \Psi_{j a}^{(s-1)} -  Z_{i a} Z_{j a}] A_{i j}^{\prime} \\ &+  \sum_{i<j} \sum_a [\Psi_{i a}^{(s-1)} \Psi_{j a}^{(s-1)} -  Z_{i a} Z_{j a}] (A_{i j} - A_{i j}^{\prime}).
\end{align*}
To bound the second term, from \eqref{induction assumption} and the assumption that each row and column sum of $A^{\prime}$ is bounded by $C_0 d$, we have
\begin{align}\label{second bound in ps}
    \big|\sum_{i<j} \sum_a \Psi_{i a}^{(s-1)} \Psi_{j a}^{(s-1)} A_{i j}^{\prime} - \sum_{i<j} \sum_a Z_{i a} Z_{j a} A_{i j}^{\prime} \big| = \big| \sum_{i<j} \sum_a (\Psi_{i a}^{(s-1)} - Z_{i a} )(\Psi_{j a}^{(s-1)} - Z_{j a} )A_{i j}^{\prime} \nonumber \\ + \sum_{i<j} \sum_a (\Psi_{i a}^{(s-1)} - Z_{i a} )Z_{j a} A_{i j}^{\prime} + \sum_{i<j} \sum_a Z_{i a} (\Psi_{j a}^{(s-1)} - Z_{j a} )A_{i j}^{\prime} \big| \nonumber \\ \leq 3||\Psi_{(s-1)} - Z||_1 C_0 d \leq 3n \exp(-cd) C_0 d.
\end{align}
To bound the third term, first notice it follows from Lemma~\ref{lemma:quantity of removed edges} that \begin{align}\label{first bound in ps}
   \sum_{i<j} \sum_a \Psi_{i a}^{(s-1)} \Psi_{j a}^{(s-1)} (A_{i j} - A_{i j}^{\prime}) \leq \sum_{i<j}  (\sum_a \Psi_{i a}^{(s-1)})( \sum_a \Psi_{j a}^{(s-1)}) (A_{ i j} - A^{\prime}_{i j}) \nonumber \\ = \sum_{i<j} (A_{ i j} - A^{\prime}_{i j}) \leq 2n\exp(-C_1 d) . 
\end{align}
Similarly as in \eqref{first bound in ps} we have the same bound for $\sum_{i<j} \sum_a Z_{ia}Z_{ja} (A_{ij} - A^{\prime}_{i j})$ and it holds that 
\begin{equation}\label{third bound in ps}
 \big| \sum_{i<j} \sum_a [\Psi_{i a}^{(s-1)} \Psi_{j a}^{(s-1)} -  Z_{i a} Z_{j a}] (A_{i j} - A_{i j}^{\prime}) \big| \leq 4n\exp(-C_1 d).
\end{equation}
From \eqref{second bound in ps} and \eqref{third bound in ps}, the numerator of $p^{(s)}$ satisfies  
\begin{align}\label{estimation of numerator of ps}
\big| \sum_{i<j} \sum_a \Psi_{i a}^{(s-1)} \Psi_{j a}^{(s-1)} A_{i j} -   \sum_{i<j} \sum_a Z_{i a} Z_{j a} A_{i j}\big| \leq 3n\exp(-c d)C_0 d + 4n\exp(-C_1 d).
\end{align}
Moreover, notice that the denominator of $p^{(s)}$ satisfies:
\begin{align}\label{estimation of denominator of ps}
\big| \sum_{i<j} \sum_a \Psi_{i a}^{(s-1)} \Psi_{j a}^{(s-1)} - \sum_{i<j} \sum_a Z_{i a} Z_{j a} \big| = \big| \sum_{i<j} \sum_a (\Psi_{i a}^{(s-1)} - Z_{i a} )(\Psi_{j a}^{(s-1)} - Z_{j a} ) \nonumber \\ + \sum_{i<j} \sum_a (\Psi_{i a}^{(s-1)} - Z_{i a} )Z_{j a} + \sum_{i<j} \sum_a Z_{i a} (\Psi_{j a}^{(s-1)} - Z_{j a} ) \big| \nonumber \\ \leq 3||\Psi_{(s-1)} - Z||_1 n \leq 3n^2 \exp(-cd).
\end{align}
Therefore, from \eqref{estimation of numerator of ps}, \eqref{estimation of denominator of ps} and \eqref{bound of underline ps} we get
\[ |p^{(s)} - p| \leq |p^{(s)} - \underline{p}^{(s)}| + | \underline{p}^{(s)} - p| \leq p\exp(-c_3 d),\]
where $c_3$ is a constant only depending on $\varepsilon$ and $K$. 
By the similar argument we have 
\[|q^{(s)} - q| \leq q\exp(-c_4 d),\]
where $c_4$ is a constant only depending on $\varepsilon$ and $K$. 
Since
\[t^{(s)} = \frac{1}{2} \log \frac{p^{(s)}}{ q^{(s)}}, \quad \lambda^{(s)} = \frac{1}{2t^{(s)}} \log (p^{(s)} - q^{(s)} ),\]
it follows directly that 
\begin{equation}\label{estimatino of ts and lambdas}
    |t^{(s)} - t| \leq t\exp(-c_5 d),\quad
    |\lambda^{(s)} - \lambda| \leq \lambda \exp(-c_6 d)
,
\end{equation}
where $c_5$ and $c_6$ are the constants only depending on $\varepsilon$ and $K$.

\medskip
\noindent \textbf{Beyond the first step: label estimation}. 
Similarly as in the first iteration, from \eqref{update of Psi in proof} and \eqref{estimatino of ts and lambdas} we know $\Psi_{i}^{(s)} = Z_i$ after threshold if any only if the following inequality  \[ \sum_{j:j\neq i} (\Psi_{j z_i}^{(s-1)} - \Psi_{ja}^{(s-1)}) (A_{ij} -  \lambda^{(s)}) > 0
\] holds for any $a \in [K]$ such that $a \neq z_i$. By the similar decomposition for the left-hand side of this inequality, we can define the signal and noise terms in the $s$-th iteration as
\begin{align*}
   \text{SIG}_{ia}^{(s-1)} :=& \sum_{j:j\neq i} (\Psi_{j z_i}^{(s-1)} - \Psi_{ja}^{(s-1)}) (E[A_{ij}] - \lambda^{(s)}), \\
    r_{ia}^{(s-1)} :=& \sum_{j:j\neq i} (\Psi_{j z_i}^{(s-1)} - \Psi_{ja}^{(s-1)}) ( A_{ij} - E[A_{ij}]).  
\end{align*}

To quantify the signal term, first we state the sign of two expressions. It follows from Lemma~\ref{lemma:Propositions of the parameter lambda} and \eqref{estimatino of ts and lambdas} that 
\begin{equation}\label{sign of two expressions}
    p - \lambda^{(s)} >0, \quad q - \lambda^{(s)}<0.
\end{equation}
Second we give the upper bounds of two expressions. By the triangle inequality, from \eqref{induction assumption} we have: 
\begin{align*}
    \Big|\sum_{j:j\neq i,z_j = z_i} (\Psi_{j z_i}^{(s-1)} - \Psi_{ja}^{(s-1)}) -  \sum_{j:j\neq i,z_j = z_i} (Z_{j z_i} - Z_{ja}) \Big| \leq || \Psi^{(s-1)} - Z||_1 = nG_{s-1}.
\end{align*}
Since $ \sum_{j:j\neq i,z_j = z_i} (Z_{j z_i} - Z_{ja}) = \frac{n}{K}-1$, we have \begin{align}\label{upper bound of first expressions}
    \Big|\sum_{j:j\neq i,z_j = z_i} (\Psi_{j z_i}^{(s-1)} - \Psi_{ja}^{(s-1)}) - (n/K - 1 ) \Big| \leq nG_{s-1}.
\end{align}
Similarly we can derive the other upper bound: 
\begin{align}\label{upper bound of second expressions}
    \Big|\sum_{j:j\neq i,z_j \neq z_i} (\Psi_{j z_i}^{(s-1)} - \Psi_{ja}^{(s-1)}) - (-n/K ) \Big| \leq nG_{s-1}.
\end{align}
Therefore from \eqref{sign of two expressions}, \eqref{upper bound of first expressions} and \eqref{upper bound of second expressions}
we can quantify the signal term by
\begin{align*}
    SIG_{ia}^{(s-1)} = \sum_{j:j\neq i,z_j = z_i} (\Psi_{j z_i}^{(s-1)} - \Psi_{ja}^{(s-1)})(p - \lambda^{(s)}) + \sum_{j:j\neq i, z_j \neq z_i } (\Psi_{j z_i}^{(s-1)} - \Psi_{ja}^{(s-1)})(q - \lambda^{(s)}) \\
    \geq (n/K-1 - nG_{s-1})(p - \lambda^{(s)}) + (-n/K+nG_{s-1})(q - \lambda^{(s)}) \\
    \geq [\frac{n}{K} - nG_{s-1} - C_4](p-q) \geq [\frac{n}{K} - n\exp(-cd) - C_4](p-q),
\end{align*}
where $C_4$ is an absolute constant. 

To analyze the noise terms, let us define a small constant $\delta^{\prime}$:
\[ \delta^{\prime} = \frac{C_3 (p-q)}{2 [p + (K-1)q]},
\]
where $\delta^{\prime}$ satisfies $\delta^{\prime} d < SIG_{ia}^{(s-1)}$ and $C_3$ is an absolute constant. We can decompose the noise terms by
\begin{align*}
r^{(s-1)}_{ia} = \sum_{j:j \neq i} (A _{i j} - A^{\prime}_{i j} )(\Psi^{(s-1)}_{j z_i} - \Psi^{(s-1)}_{ja} - Z_{j z_i} +Z_{ja} ) + \sum_{j:j \neq i} (A^{\prime}_{i j} - P_{i j} )(\Psi^{(s-1)}_{j z_i} \\- \Psi^{(s-1)}_{ja} - Z_{j z_i} +Z_{ja} ) + \sum_{j:j \neq i} (A_{i j} - P_{i j} )(Z_{j z_i} - Z_{ja})  =: r_{1ia}^{(s-1)} + r_{2ia}^{(s-1)} + r_{3ia}.
\end{align*}
If $r_{ia}^{(s-1)}>\delta^{\prime} d$, it implies that at least one components above is larger than $\delta^{\prime} d/3$. Let $\delta_1 = \delta_2 = \delta_3 = \delta^{\prime}/3$, we have the fact that
\begin{align*}
\mathbbm{1}\Big(\bigcup_{a:a \neq z_i}  
 \Big\{ |r_{ia} ^{(s-1)} | > \delta^{\prime} d
 \Big\} \Big) \leq \mathbbm{1}\Big(\bigcup_{a:a \neq z_i}  
 \Big\{ |r_{1ia} ^{(s-1)} | > \delta_1 d
 \Big\} \Big)+\mathbbm{1}\Big(\bigcup_{a:a \neq z_i}  
 \Big\{ |r_{2ia} ^{(s-1)} | > \delta_2 d
 \Big\} \Big) \\+\mathbbm{1}\Big(\bigcup_{a:a \neq z_i}  
 \Big\{ |r_{3ia}  | > \delta_3 d
 \Big\} \Big).
 \end{align*}
 
By using the similar argument as in the first iteration, the following event 
\begin{equation}\label{first bound in Psis}
\mathcal{A}_2 = \left\{\sum_{i=1}^n \mathbbm{1}\Big( \bigcup_{a:a \neq z_i}  \Big\{ |r_{3ia}| > \delta_3 d
 \Big\} \Big) \leq 16(K-1)n \exp\left(\frac{-\delta_3^2d}{32+8\delta_3/3}\right) \right\} 
 \end{equation}
occurs with probability at least $1 - n^{-r_1}$ for some absolute constant $r_1$.

To deal with the first noise component $r_{1ia}^{(s-1)}$, first notice that the following inequality holds by Lemma~\ref{lemma:quantity of removed edges}:\[
 \sum_i \sum_a |r_{1ia}^{(s-1)}| \leq  \sum_i \sum_a \sum_{j:j \neq i} 2(A_{ij} - A_{ij}^{\prime}) \leq 4nK \exp(- C_1 d).
\]
This implies
\[\sum_{i}\sum_{a} \mathbbm{1}(|r^{(s-1)}_{1ia}| > \delta_1 d) \leq \frac{ 4nK\exp(-C_1 d)}{\delta_1 d}.
\]
Consequently we have
\begin{equation}\label{second bound in Psis}
\sum_{i=1}^n \mathbbm{1}\Big(\bigcup_{a:a \neq z_i}  
 \Big\{ |r_{1ia} ^{(s-1)} | > \delta_1 d
 \Big\} \Big) \leq \sum_i \sum_a \mathbbm{1}(|r^{(s-1)}_{1ia}| > \delta_1 d) \leq \frac{ 4nK\exp(-C_1 d)}{\delta_1 d}.
 \end{equation}

To deal with the second noise component $r_{2ia}^{(s-1)}$, first notice that the following inequality holds by \eqref{induction assumption} and Lemma~\ref{lemma:concentration of regularized adjacency matrices}:
\begin{align*}
\sum_i \sum_a |r_{2ia}^{(s-1)}|^2 &= \sum_a \sum_{i} \left( \sum_{j: j \neq i} (A^{\prime}_{i j} - P_{i j} )(\Psi^{(s-1)}_{j z_i} - \Psi^{(s-1)}_{ja} -Z_{j z_i} +Z_{ja} ) \right)^2 \\
& \leq \sum_a ||A^{\prime} - P||^2 \cdot 2nG_{s-1} \leq 2 C_2 d K nG_{s-1}.
\end{align*}
This implies \[
\sum_{i}\sum_{a} \mathbbm{1}(|r^{(s-1)}_{2ia}| > \delta_2 d) \leq \frac{ 2C_2 dK nG_{s-1} }{\delta_2^2 d^2},
\]
and consequently,
\begin{equation}\label{third bound in Psis}
 \sum_{i=1}^n \mathbbm{1}\Big(\bigcup_{a:a \neq z_i}  
 \Big\{ |r_{2ia} ^{(s-1)} | > \delta_1 d
 \Big\} \Big) \leq \sum_{i}\sum_{a} \mathbbm{1}(|r^{(s-1)}_{2ia}| > \delta_2 d) \leq \frac{ 2 C_2 d K nG_{s-1} }{\delta_2^2 d^2}.
 \end{equation}
Therefore, it follows from \eqref{first bound in Psis}, \eqref{second bound in Psis} and \eqref{third bound in Psis} that 
\begin{align*}
||\Psi^{(s)} - Z||_1 \leq 2 \sum_{i} \mathbbm{1}\Big( \bigcup_{a:a \neq z_i}  \Big\{ |r_{ia} ^{(s-1)} | > \delta^{\prime} d
 \Big\} \Big) 
 \leq  \frac{4C_2 K}{ \delta_2^2 d} || \Psi^{(s-1)}- Z ||_1 \\+ \frac{8nK\exp(-C_1 d)}{\delta_1 d} +  32(K-1)n \exp\left(\frac{-\delta_3^2d}{32+8\delta_3/3}\right).
\end{align*}
Since we assume that $d$ is larger than a constant that depending only on $\epsilon$ and $K$, the inductive assumption in \eqref{induction assumption} automatically satisfied for $\Psi^{(s)}$ as long as we choose a suitable constant $c$ such that $c < \frac{\delta^2}{32 + 8\delta / 3}$, $c<C_1$ and $c< \frac{\delta_3^2}{32 + 8\delta_3 / 3}$. For example, we can choose $c = \frac{1}{2} \min\{\frac{\delta^2}{32 + 8\delta / 3}, C_1, \frac{\delta_3^2}{32 + 8\delta_3 / 3}\}$. The proof is complete.
\end{proof}

Now from all the quantities above we are ready to prove Theorem~\ref{dist}.

\medskip

\begin{proof}[Proof of Theorem \protect\ref{dist}]
For notation simplicity, we use $\Psi$ to denote $\Psi^{(s-1)}$ in this proof. The estimate of $p$ in the $s$-th iteration is
\[  p^{(s)} = \frac{\sum_{i<j} \sum_a \Psi_{i a} \Psi_{j a} A_{i j}}{ \sum_{i<j} \sum_a \Psi_{i a} \Psi_{j a} }.\]
Similarly as in equation~\eqref{estimation of numerator of ps}, the numerator of $p^{(s)}$ satisfies  
\begin{align}\label{analysis1 of p in dist}
\big| \sum_{i<j} \sum_a \Psi_{i a} \Psi_{j a} A_{i j} -   \sum_{i<j} \sum_a Z_{i a} Z_{j a} A_{i j}\big| \leq 3n\exp(-c d)C_0 d + 6n\exp(-C_1 d)
\end{align}
with high probability. As in equation~\eqref{estimation of denominator of ps}, the denominator of $p^{(s)}$ satisfies  
\begin{align}\label{analysis2 of p in dist}
\big| \sum_{i<j} \sum_a \Psi_{i a} \Psi_{j a} - \sum_{i<j} \sum_a Z_{i a} Z_{j a} \big|  \leq 3n^2 \exp(-cd).
\end{align}
with high probability. By Berry-Esseen theorem the asymptotic normality holds for $ \sum_{i<j} \sum_a Z_{i a} Z_{j a} A_{i j}$  since $n^2 \rho_n \to \infty$. To be specific, we have the following asymptotic result: 
\begin{equation}\label{analysis3 of p in dist}
\sqrt{\frac{n^2}{2Kp}}\left( \frac{\sum_{i<j} \sum_a Z_{i a} Z_{j a} A_{i j}}{ \sum_{i<j} \sum_a Z_{i a} Z_{j a} } - p\right) \to N(0,1).
\end{equation}
To obtain the asymptotic normality of $p^{(s)}$, from \eqref{analysis1 of p in dist}, \eqref{analysis2 of p in dist} and \eqref{analysis3 of p in dist}, by Slutsky's theorem we need \[\sqrt{\frac{n^2}{p}} \left( \frac{nd \exp(-cd)}{n^2} \right) = o(1),\] which holds when $d\ge C \log n$ for some large constant $C$ only depending on $\varepsilon$ and $K$. The analysis for $q^{(s)}$ is analogical, and by the multidimensional version of Slutsky's theorem, we showed that $p^{(s)}$ and $q^{(s)}$ are jointly asymptotically normally distributed: 
\[ \begin{pmatrix}n (p^{(s)} - p)/\sqrt{p}\\n(q^{(s)} - q)/ \sqrt{q} \end{pmatrix} \to N\left(\begin{pmatrix} 0\\0\end{pmatrix},\begin{pmatrix} 2K & 0\\0 & 2K/(K-1)\end{pmatrix}\right).
\]
The proof is complete.
\end{proof}

\section{Proofs of Results for DCSBM in Section~\ref{sec:theory dcsbm}}
\label{all_proof2}

The proof of the results under DCSBM is similar to that under SBM, except that we need to track the updates of the degree parameters $\theta=(\theta_1,...,\theta_n)^\top$ carefully. Instead of only highlighting the main differences, for clarity, we present the complete proof for the DCSBM case here to make it easier for the reader to follow. 

First, we introduce some notations used in the main proof. Denote 
$$D_i = \sum_{j=1}^n A_{i j }, \quad d_i = \mathbb{E} [ D_i], \quad d_{\min} = \min_{1\le i\le n} d_i, \quad d_{\max} = \max_{1\le i\le n} d_i.$$ 
Recall that $d = np / K + n(K-1)q/K$ is the expected average degree of the network. In the sparse regime, we must carefully control the irregularity of sparse networks. To that end, denote by $A^{\prime}$ the adjacency matrix of the network obtained from the observed network by removing some edges (in an arbitrary way) of nodes with degrees greater than $C_0 d$ so that all node degrees of the new network are at most $C_0 d$, where $C_0>0$ is a large fixed absolute constant. In this paper we choose $C_0$ such that $C_0 d \geq \max_i 20n \theta_i^2 p$. Note that $A'$ is only used for the proofs, and our algorithm does not need to specify it.

Now, we state and prove all the lemmas needed in the main proof. In the following lemmas, we assume that the assumptions of Proposition~\ref{prop:parameter estimation in DCSBM} and Theorem~\ref{unknown pa (dcsbm)} are satisfied.

\begin{lemma}[Concentration inequality in DCSBM]
\label{lemma: Concentration inequality (dcsbm)}
Define $S = \{ i \in [n], \sum_j A_{ij} \geq C_0 d \}$ and $H_i = \sum_j |A_{ij} - \mathbb{E} A_{ij}| \mathbbm{1}\{ i \in S\}$. Then with probability at least $1 - \exp(-C_0 d /4)$, we have
\begin{equation}
   \sum_i H_i \leq n C_0 d \exp(-C_0 d / 4).
\end{equation}
\end{lemma}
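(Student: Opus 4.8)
The plan is to follow the structure of the proof of Lemma~\ref{lemma: Concentration inequality}, the SBM counterpart, since the only essential change under DCSBM is that the expected degrees $d_i = \mathbb{E}\sum_j A_{ij}$ are now node-dependent. The whole argument goes through once we observe that Assumption~\ref{ass:theta}, which forces $C_1 \le \theta_i \le C_2$, makes every $d_i$ comparable to $d$ uniformly in $i$: indeed $d_i = \theta_i \sum_{j\neq i}\theta_j B_{z_iz_j} \asymp \theta_i\, n\rho_n \asymp d$, and the choice $C_0 d \ge \max_i 20 n\theta_i^2 p$ guarantees $2 d_i \le C_0 d/10$, which is exactly the slack exploited in the SBM proof (there it was $2np \le C_0 d/10$).

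First I would control the conditional mean of the absolute deviations. Since $A_{ij}$ is Bernoulli with mean $P_{ij}$, we have $\mathbb{E}\,|A_{ij}-\mathbb{E}A_{ij}| = 2P_{ij}(1-P_{ij}) \le 2P_{ij}$, so $\mathbb{E}[\sum_j |A_{ij}-\mathbb{E}A_{ij}|] \le 2 d_i \le C_0 d/10$ by the choice of $C_0$. Next, for $0 < s \le C_0 d$, I apply Bernstein's inequality to the centered sum $\sum_j |A_{ij}-\mathbb{E}A_{ij}| - \mathbb{E}[\,\cdot\,]$, whose increments are bounded by $1$ and whose variance proxy is at most $\sum_j P_{ij} = d_i$; subtracting the mean and using $2d_i \le C_0 d/10 \le s$ yields $\mathbb{P}(H_i > s) \le \exp(-s/2)$, exactly as in the SBM case.

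For the probability that node $i$ enters $S$ at all, I would apply Bernstein once more to $\sum_j A_{ij}$: since $\mathbb{E}\sum_j A_{ij} = d_i \le C_0 d/20$, we have $\mathbb{P}(H_i>0) = \mathbb{P}(\sum_j A_{ij}\ge C_0 d) \le \mathbb{P}(\sum_j A_{ij}-d_i \ge \tfrac{9}{10}C_0 d) \le \exp(-\tfrac{21}{40}C_0 d)$, where the variance proxy $d_i$ and the boundedness of the summands are again the only inputs. Combining the two tail estimates through $\mathbb{E}[H_i] \le \int_0^{C_0 d}\mathbb{P}(H_i>0)\,ds + \int_{C_0 d}^\infty \mathbb{P}(H_i>s)\,ds$ gives $\mathbb{E}[H_i] \le C_0 d\exp(-C_0 d/2)$, and a final Markov inequality applied to $\sum_i H_i$, together with $\mathbb{E}[\sum_i H_i] = n\,\mathbb{E}[H_1]$, produces the claimed bound with failure probability at most $\exp(-C_0 d/4)$.

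The main obstacle—more a bookkeeping point than a genuine difficulty—is to make the node-dependence of $d_i$ harmless: every constant tied to $np$ in the SBM proof must now be traced back to $\max_i d_i$, and one has to check that the single global choice $C_0 d \ge \max_i 20 n\theta_i^2 p$ simultaneously controls the mean bound ($2d_i \le C_0 d/10$), the tail exponent in the second step, and the entry probability in the third step. Since $\theta_i \le C_2$, all of these reduce to requiring $C_0$ large relative to $C_2$, so no probabilistic ingredient beyond Bernstein's and Markov's inequalities is needed.
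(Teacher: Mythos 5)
Your proposal is correct and follows essentially the same route as the paper's proof: bound $\mathbb{E}\sum_j|A_{ij}-\mathbb{E}A_{ij}|$ by $C_0d/10$ using the choice of $C_0$ (the paper phrases this via $p_{\max}=\max_i\theta_i^2p$ rather than via $d_i$ directly, but the content is the same), then two applications of Bernstein's inequality for $\mathbb{P}(H_i>s)$ and $\mathbb{P}(H_i>0)$, the integral bound on $\mathbb{E}[H_i]$, and a final Markov step. No substantive difference to report.
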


\begin{proof}
This lemma is analogous to the Lemma C.5 in \cite{zhang2020theoretical}. Denote $\max_i \theta_i^2 p$ by $p_{\text{max}}$. Note that $ \mathbb{E} \left[ \sum_j |A_{ij} - \mathbb{E} A_{ij}| \right] \leq 2n p_{\text{max}} \leq C_0 d /10.$  For any $s \leq C_0 d$, we have 
\begin{align*}
\mathbb{P}(H_i > s) & \leq \mathbb{P} \left( \sum_j |A_{ij} - \mathbb{E} A_{ij}| -   \mathbb{E} \left[ \sum_j |A_{ij} - \mathbb{E} A_{ij}| \right] > s - 2np_{\text{max}} \right) \\
&  \leq \exp\left(  -\frac{ (s-2np_{\text{max}})^2 / 2 }{np_{\text{max}}+ (s-2np_{\text{max}})/3}  \right)\\
& \leq \exp(-s/2)
\end{align*}
by implementing Bernstein inequality. Applying Bernstein inequality again we have
\begin{align*}
\mathbb{P}(H_i>0) & = \mathbb{P} \left( \sum_i A_{ij} \geq C_0 d  \right) \\ & \leq \mathbb{P} \left( \sum_i A_{ij} - \mathbb{E}\left[ \sum_i A_{ij}\right] \geq 9C_0 d /10  \right) \\ & \leq \exp\left(  -\frac{(9C_0 d /10)^2/2}{ C_0 d /20 +(9C_0 d /10) / 3 }\right) \\ & \leq \exp( - 21/40 C_0 d).
\end{align*}
Thus, we are able to bound $\mathbb{E}[H_i]$ by 
\begin{align*}
\mathbb{E}[H_i] & \leq \int_{0}^{C_0 d} \mathbb{P}(H_i > 0) ds + \int_{C_0 d}^{\infty}  \mathbb{P}(H_i > s) ds \\ & \leq C_0 d \exp( - 21/40 C_0 d) + \int_{C_0 d}^{\infty}  \exp(-s/2) ds \\ &  \leq C_0 d \exp( - C_0 d / 2).
\end{align*}
By Markov inequality, we have
\begin{align*}
\mathbb{P}\left( \sum_i H_i \geq n C_0 d \exp(-C_0 d / 4) \right) \leq \frac{ n \mathbb{E}[H_1] }{n C_0 d \exp(-C_0 d / 4)} \leq \exp(-C_0 d /4).
\end{align*}
The proof is complete.
\end{proof}

 \begin{lemma}[Number of removed edges in DCSBM] 
\label{lemma:quantity of removed edges (dcsbm)}
Assume that $ d \geq \kappa $ for a large constant $\kappa>0$. Then with high probability $1 - n^{-r}$ for some absolute constant $r$, we have  
\[\sum_{i ,j} (A_{i j} - A_{i j}^{\prime}) \leq 2n \exp(- C_1 d),\]
where $C_1$ is an absolute constant.
\end{lemma}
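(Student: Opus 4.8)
The plan is to follow the structure of the SBM proof of Lemma~\ref{lemma:quantity of removed edges}, splitting into two regimes according to whether the expected average degree $d$ is comparable to $\log n$. The only genuine change is that the homogeneous edge probability $p$ is replaced by the degree-corrected quantities $\theta_i\theta_j B_{z_iz_j}$; since Assumption~\ref{ass:theta} guarantees $C_1\le\theta_i\le C_2$, every expected node degree $d_i$ as well as $p_{\max}=\max_i\theta_i^2 p$ satisfies $d_i\asymp d$ and $np_{\max}\asymp d$, so all constant factors can be absorbed into the absolute constants $C_0$, $C_1$.

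First I would treat the bounded-degree case $\kappa\le d\le\eta\log n$ for a small constant $\eta\in(0,1)$. As in the SBM argument, the total number of removed edges is controlled by the vertices whose degree exceeds $C_0 d$, weighted by their excess degree, so it suffices to bound $\sum_{k>C_0 d} k\cdot(\text{number of vertices of degree }k)$. I would invoke Proposition 1.12 of \cite{benaych2019largest}, which remains valid for inhomogeneous Bernoulli networks and controls the expected vertex count of degree at least $k$ through the same rate function $f(x)=x\log(x/d)-(x-d)-\log\sqrt{2\pi x}$ (with $d$ replaced by $d_{\max}\asymp d$). The key elementary estimate $f(k)-\log k\ge k(\log C_0-1)/2$ for $k>C_0 d$ then yields geometric decay of the summands, giving $\sum_{i,j}(A_{ij}-A_{ij}')\le 2n\exp(-C_1 d)$ with high probability.

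Next I would handle the case $d>\eta\log n$ using the already-established Lemma~\ref{lemma: Concentration inequality (dcsbm)}. Writing $S=\{i:\sum_j A_{ij}\ge C_0 d\}$, $H_i=\sum_j|A_{ij}-\mathbb{E}A_{ij}|\mathbbm{1}\{i\in S\}$, and $M_i=\sum_j|A_{ij}'-\mathbb{E}A_{ij}|\mathbbm{1}\{i\in S\}$, I would bound $\sum_i H_i$ directly by Lemma~\ref{lemma: Concentration inequality (dcsbm)} and $\sum_i M_i$ by Markov's inequality after estimating $\mathbb{E}[M_i]\le 21np_{\max}\exp(-21C_0 d/40)$ from the tail bound $\mathbb{P}(i\in S)\le\exp(-21C_0 d/40)$ established within that lemma. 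Since every removed edge is counted in either $H_i$ or $M_i$, combining the two estimates through $\sum_{i,j}(A_{ij}-A_{ij}')\le\sum_i H_i+\sum_i M_i$ again produces the bound $2n\exp(-C_1 d)$ with probability $1-n^{-r}$.

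The main obstacle I anticipate is verifying that the tail bound of \cite{benaych2019largest} transfers cleanly to the degree-corrected setting: one must confirm that the relevant rate is governed by $d_{\max}$ rather than the individual $d_i$, and that $d_{\max}\asymp d$ (from Assumption~\ref{ass:theta}) keeps the exponent in $f$ of the correct order, so that the geometric series still sums to $O(n\exp(-C_1 d))$. Once this is in place, the remainder of both cases is a routine repetition of the corresponding SBM computations with $p$ replaced by $p_{\max}$.
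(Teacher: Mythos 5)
Your proposal is correct and follows essentially the same two-case argument as the paper: Proposition 1.12 of \cite{benaych2019largest} with the rate function evaluated at $d_{\max}\asymp d$ in the regime $\kappa\le d\le\eta\log n$, and the $H_i$/$M_i$ decomposition with Lemma~\ref{lemma: Concentration inequality (dcsbm)} plus Markov's inequality when $d>\eta\log n$. The transfer issue you flag is handled in the paper by noting that the total edge count is stochastically dominated by the homogeneous case and by choosing $C_0$ so that $C_0 d\ge 20 d_{\max}$, which makes the ratio $k/d_{\max}\ge 20$ in the rate function; this matches your intended use of $d_{\max}\asymp d$ up to the value of the absolute constant in the exponent.
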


\begin{proof}
Case 1: $ \kappa \leq d \leq \eta \log n$ for some constant $\eta \in (0,1)$. Let $d_{\text{max}} = \max_i d_i$. It follows from Proposition 1.12 of \cite{benaych2019largest} that with high probability, \[\sum_{i,j} (A_{i j} - A^{\prime}_{i j}) \leq 2 \sum_{k = C _0 d+1}^{\infty} k n  \exp(-f(k)) ,\]
where \[f(x) = x\log\left(\frac{x}{d_{\text{max}}}\right) - (x-d_{\text{max}}) - \log\sqrt{2\pi x}.
\]
Here we can use the results from \cite{benaych2019largest} since the sum of all elements in the adjacency matrix considered in our case is stochastically bounded by that in \cite{benaych2019largest}. Notice that for $k > C_0 d$, since $C_0 d \geq 20 d_{\text{max}}$ we have
$$f(k) - \log k \geq k ( \log 20 - 1) - \log \sqrt{2 \pi k} - \log k \geq k( \log 20 - 1)/2.$$ 
This implies the following inequality
\[ \sum_{i,j} (A_{i j} - A^{\prime}_{i j}) \leq 2n  \sum_{k = C_0 d+1}^{\infty} \exp\left(-\frac{\log 20 -1}{2} k\right) \leq 2 n \exp(- C_1 d)\]
occurs with high probability for some constant $C_1$.

Case 2: $d >  \eta \log n$. Define $S = \{ i \in [n], \sum_j A_{ij} \geq C_0 d \}$, $H_i = \sum_j |A_{ij} - \mathbb{E} A_{ij}| \mathbbm{1}\{ i \in S\}$ and $M_i = \sum_j |A_{ij}^{\prime} - \mathbb{E} A_{ij} | \mathbbm{1}\{ i \in S\}$. From Lemma~\ref{lemma: Concentration inequality (dcsbm)} we know that with probability at least $1 - \exp(-C_0 d/4)$, it holds that 
\begin{equation}\label{first bound in removed edges (dcsbm)}
   \sum_i H_i \leq n C_0 d \exp(-C_0 d / 4).
\end{equation}
In addition, from Lemma~\ref{lemma: Concentration inequality (dcsbm)} we know \[\mathbb{P}(i \in S) = \mathbb{P}(H_i>0) \leq \exp(-21C_0 d / 40).\]
Therefore, $\mathbb{E}[M_i] \leq (20np_{\text{max}} + np_{\text{max}}) \exp(-21C_0 d / 40 ) \leq 21C_0d/20 \exp(-21C_0 d / 40 ) $. By Markov inequality, we have 
\begin{equation}\label{second bound in removed edges (dcsbm)}
  \mathbb{P}\left( \sum_i M_i > 21n C_0 d/20 \exp(-11 C_0 d / 40)\right) \leq \exp(-C_0 d /4).
\end{equation}
It follows from \eqref{first bound in removed edges (dcsbm)} and \eqref{second bound in removed edges (dcsbm)} that 
\begin{align*}
    \sum_{i ,j} (A_{i j} - A_{i j}^{\prime}) & \leq \sum_i H_i +  \sum_i M_i \\
    & \leq n C_0 d \exp(-C_0 d / 4) +  21n C_0 d/20 \exp(-11 C_0 d / 40) \\
    & \leq 2 n \exp(- C_1 d)
\end{align*}
occurs with high probability for some constant $C_1$.
\end{proof}

% \begin{lemma}[Concentration of regularized adjacency matrices in DCSBM]
% \label{lemma:concentration of regularized adjacency matrices (dcsbm)}
% Let $P=\mathbb{E}[A|Z]$.  With high probability $1 - n^{-r}$ for some absolute constant $r$, we have
% \[||A^{\prime} - \mathbb{E} A||^2 \leq C_2 d,\]
% where $\|.\|$ denotes the spectral norm and 
% $C_2$ is an absolute constant.
% \end{lemma}

% \begin{proof}[of Lemma~\protect\ref{lemma:concentration of regularized adjacency matrices (dcsbm)}]
% This lemma is a special case of Theorem 1.1  in \cite{le2017concentration}.
% \end{proof}

\begin{lemma}[Propositions of the parameter $\lambda$ in DCSBM]
\label{lemma:Propositions of the parameter lambda (dcsbm)}
Under the assumption that $p>q>0$ and $p \asymp q \asymp p - q \asymp \rho_n$, we have the following bounds for $\lambda$:
\begin{align*}
    q< \lambda < p, \quad
    \lambda - q \geq C_3 \rho_n, \quad
    \frac{p+q}{2} - \lambda \geq C_4 \rho_n,
\end{align*}
where $C_3$ and $C_4$ are absolute constants.
\end{lemma}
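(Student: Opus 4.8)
The plan is to observe that for DCSBM the parameter $\lambda$ is exactly the \emph{logarithmic mean} of $p$ and $q$, and then to deduce all three bounds from the standard mean inequalities combined with the density relation $p\asymp q\asymp p-q\asymp\rho_n$. Indeed, from the definitions $t=\tfrac12\log(p/q)$ and $\lambda=\tfrac{1}{2t}(p-q)$ in \eqref{eq:tlambda in dc} we have
\[ \lambda=\frac{p-q}{\log p-\log q}=:L(p,q). \]
The classical inequalities $\sqrt{pq}\le L(p,q)\le\tfrac{p+q}{2}$ (strict for $p\neq q$), together with $q<\sqrt{pq}$ and $\tfrac{p+q}{2}<p$, immediately give $q<\lambda<p$, which is the first claim.

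The key reduction for the two quantitative bounds is to pass to the ratio $x:=p/q>1$, and it is here that Assumption~\ref{ass:model assumption} is used in an essential way: the relation $p-q\asymp q$ forces $x-1=(p-q)/q$ to be bounded below by a positive absolute constant, while $p\asymp q$ forces $x$ to be bounded above, so $x$ ranges over a compact interval $[1+c_1,c_2]\subset(1,\infty)$. Writing both target quantities as $q$ times a function of $x$,
\[ \lambda-q=q\,h_1(x),\qquad \frac{p+q}{2}-\lambda=q\,h_2(x),\qquad h_1(x)=\frac{x-1}{\log x}-1,\quad h_2(x)=\frac{x+1}{2}-\frac{x-1}{\log x}, \]
it then suffices to bound $h_1$ and $h_2$ below on this interval.

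Both $h_1$ and $h_2$ are continuous on $(1,\infty)$ and strictly positive there: $h_1(x)>0$ because $x-1>\log x$ for $x>1$, and $h_2(x)>0$ is the strict arithmetic--logarithmic mean inequality. Hence each is bounded below by a positive constant on the compact interval $[1+c_1,c_2]$, and since $q\asymp\rho_n$ we obtain $\lambda-q\ge C_3\rho_n$ and $\tfrac{p+q}{2}-\lambda\ge C_4\rho_n$, with $C_3,C_4$ depending only on the hidden constants in the relation $\asymp$. The only real obstacle is the quantitative gap $\tfrac{p+q}{2}-\lambda\ge C_4\rho_n$, i.e.\ ensuring the arithmetic--logarithmic inequality is bounded \emph{away} from equality; this is exactly what would fail if $x$ were allowed to approach $1$, and it is supplied precisely by the third part $p-q\asymp\rho_n$ of the assumption. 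The positivity of $h_1,h_2$ on $(1,\infty)$ itself is standard and requires no new work.
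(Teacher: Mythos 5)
Your proof is correct, and it takes a genuinely different route from the paper's. The paper disposes of this lemma in one line by citing Propositions 1 and 2 of \cite{sarkar2021random}, which establish the analogous bounds for the SBM parameter $\lambda=\frac{1}{2t}\log\frac{1-q}{1-p}$ with $t=\frac12\log\frac{p(1-q)}{q(1-p)}$, and then transfers them to the DCSBM parameter $\lambda=\frac{p-q}{\log(p/q)}$ via the elementary inequality $0<x-\log(1+x)\le x^2$, which controls the discrepancy between the Bernoulli and Poisson parametrizations in the regime $p,q\to 0$. You instead work with the DCSBM $\lambda$ directly, recognizing it exactly as the logarithmic mean $L(p,q)$ and invoking the geometric--logarithmic--arithmetic mean inequalities for the qualitative bound $q<\lambda<p$, then reducing the two quantitative gaps to lower-bounding the continuous positive functions $h_1,h_2$ of the ratio $x=p/q$ on a compact subinterval of $(1,\infty)$, which is supplied by $p\asymp q\asymp p-q$. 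Your argument is self-contained and exact (no approximation step, no external propositions), and it correctly isolates where the assumption $p-q\asymp\rho_n$ is essential, namely in keeping $x$ away from $1$ so that the arithmetic--logarithmic inequality is quantitatively strict; the paper's route is shorter on the page but imports the SBM results and an extra comparison step. The only cosmetic mismatch is that your constants $C_3,C_4$ depend on the implied constants in $\asymp$ rather than being truly absolute, but the paper's own statement has the same looseness, so nothing is lost.
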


\begin{proof}
This lemma is the direct result from Proposition 1 and 2 in \cite{sarkar2021random} by using the inequality: $0 < x - \log(1+x) \leq x^2$ for $x>0$.
\end{proof}

The next lemma provides crude bounds for the parameters of the threshold BCAVI after the first iteration in DCSBM.

\begin{lemma}[First step: Parameter estimation in DCSBM]
\label{lemma:parameter estimation (dcsbm)}
There exist constants $C,c_{1},c_{2}$ only depending on $\varepsilon$ and $K$ such that if $n \rho_n > C$ then with high probability $1-n^{-r}$ for some constant $r>0$,  
\[ t^{(1)} \ge c_{1 },\quad \lambda^{(1)} \le c_{2 } \rho_n.\]
\end{lemma}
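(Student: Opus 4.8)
The plan is to follow the SBM argument of Lemma~\ref{lemma:parameter estimation} almost line by line; the only genuinely new ingredient is the estimated degree parameters $\theta^{(0)}_i = (\sum_{j} A_{ij})\,n / \sum_{k,l} A_{kl}$, which appear in the denominators of $p^{(1)}$ and $q^{(1)}$ in \eqref{update of pq in dc in proof}. I would first control $\theta^{(0)}$. Writing $D_i = \sum_j A_{ij}$, one has $\mathbb{E}[D_i\mid \theta,Z] = \theta_i \sum_{j\neq i} \theta_j B_{z_i z_j}$, which equals $\theta_i d$ up to the fluctuation of the within- and between-community sums of $\theta_j$ around $n/K$ and $n(K-1)/K$. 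Two applications of Bernstein's inequality—one for $D_i$, one for $\sum_{k,l} A_{kl} = \sum_k D_k$—give $\theta^{(0)}_i = \theta_i\,(1 + O(n^{-1/3}))$ with high probability; Assumption~\ref{ass:theta} (boundedness $C_1 \le \theta_i \le C_2$ and $\mathbb{E}\theta_i = 1$) is exactly what keeps all constants uniform in $i$. Consequently the denominator $\sum_{i<j}\sum_a \Psi^{(0)}_{ia}\Psi^{(0)}_{ja}\theta^{(0)}_i\theta^{(0)}_j$ may be replaced by $\sum_{i<j}\sum_a \Psi^{(0)}_{ia}\Psi^{(0)}_{ja}\theta_i\theta_j$ at the cost of a relative error of order $n^{-1/3}$.

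Next I would carry out the signal/noise decomposition of the numerator exactly as in \eqref{eq:signal of p1}--\eqref{eq:noise of p1}, now weighting every pair by $\theta_i\theta_j$. Conditioning on $\theta$ (treated as fixed numbers in $[C_1,C_2]$), the signal $\sum_{i<j}\sum_a \Psi^{(0)}_{ia}\Psi^{(0)}_{ja} P_{ij}$ with $P_{ij} = \theta_i\theta_j B_{z_i z_j}$ concentrates around $p\,\alpha\,T_{=} + q\,\beta\,T_{\neq}$, where $\alpha = (1-\varepsilon)^2 + \varepsilon^2/(K-1)$ and $\beta = 2(1-\varepsilon)\varepsilon/(K-1) + (K-2)\varepsilon^2/(K-1)^2$ are the mixture coefficients inherited from the SBM computation, and $T_{=} = \sum_{i<j:\,z_i=z_j}\theta_i\theta_j$, $T_{\neq} = \sum_{i<j:\,z_i\neq z_j}\theta_i\theta_j$. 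Since the summands are bounded (because $\theta_i\le C_2$), Bernstein's inequality over the independent rows of $\Psi^{(0)}$ controls the deviation of the $\Psi^{(0)}$-average, Chebyshev's inequality controls the additive edge noise $\sum \Psi^{(0)}_{ia}\Psi^{(0)}_{ja}(A_{ij}-P_{ij})$ just as in \eqref{eq:noise of p1}, and a further Hoeffding/Bernstein bound shows $T_{=}$ and $T_{\neq}$ concentrate around $\mathbb{E}[\theta_i\theta_j]=1$ times the corresponding unweighted pair counts $n^2/(2K)$ and $n^2(K-1)/(2K)$, i.e.\ around the same leading values as in SBM. Dividing numerator by denominator (and using $\alpha + \beta(K-1) = 1$ so the denominator concentrates to $n^2/(2K)$), I obtain representations of $p^{(1)}$ and $q^{(1)}$ identical to \eqref{eq: estimation of p1} and \eqref{eq: estimation of q1} up to an additive error of order $(n^{-1/4} + n^{-1/3})\rho_n$.

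Finally, because the leading coefficients coincide with the SBM case, the elementary inequality $(K-1)(1-\varepsilon)^2 + \varepsilon^2/(K-1) > 2\varepsilon(1-\varepsilon)$ used in Lemma~\ref{lemma:parameter estimation} again shows that $p^{(1)}$ exceeds $q^{(1)}$ by a factor bounded away from $1$ by a constant depending only on $\varepsilon$ and $K$, while both are of order $\rho_n$. Plugging these into the DCSBM formulas \eqref{eq:tlambda in dc}, namely $t^{(1)} = \frac{1}{2}\log(p^{(1)}/q^{(1)})$ and $\lambda^{(1)} = (p^{(1)}-q^{(1)})/(2t^{(1)})$, immediately yields $t^{(1)}\ge c_1$ and $\lambda^{(1)}\le c_2\rho_n$ for constants $c_1,c_2$ depending only on $\varepsilon$ and $K$. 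The main obstacle, and the only real departure from the SBM proof, is the first step: because $\theta^{(0)}_i$ is a function of the same adjacency matrix $A$ that drives the numerator, the degree estimates and the edge noise are dependent, so I must argue that the $n^{-1/3}$ degree-estimation error enters only the denominator and is absorbed into the error term without destroying the separation $p^{(1)}/q^{(1)}>1$. Keeping the constants in this coupling uniform over $i$, relying only on the boundedness of $\theta$ in Assumption~\ref{ass:theta}, is the delicate part.
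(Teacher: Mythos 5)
Your overall architecture matches the paper's: control the denominator of $p^{(1)}$ involving the estimated degree parameters, run the same signal/noise decomposition on the numerator as in the SBM case, recover the SBM mixture coefficients, and conclude via the elementary inequality $(K-1)(1-\varepsilon)^2+\varepsilon^2/(K-1)>2\varepsilon(1-\varepsilon)$ together with the formulas \eqref{eq:tlambda in dc}. However, your first step contains a genuine gap. You claim that two applications of Bernstein's inequality give $\theta^{(0)}_i=\theta_i\,(1+O(n^{-1/3}))$ uniformly over $i$ with high probability, and you then substitute $\theta_i\theta_j$ for $\theta^{(0)}_i\theta^{(0)}_j$ in the denominator at a relative cost of $n^{-1/3}$. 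The lemma only assumes $n\rho_n>C$, i.e.\ the expected degree $d$ may be a bounded constant; in that regime $D_i=\sum_j A_{ij}$ has mean $d_i=\theta_i d=O(1)$ and fluctuations of the same order (indeed $\mathbb{P}(D_i=0)\ge e^{-Cd}$ is a constant), so $D_i/d_i$ does not concentrate and the pointwise statement $\theta^{(0)}_i\approx\theta_i$ is false. Bernstein only yields $|D_i-d_i|\lesssim\sqrt{d_i\log n}+\log n$ uniformly in $i$, which is far larger than $d_i$ itself in the sparse regime the lemma is designed to cover.

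The paper circumvents this by never requiring pointwise accuracy of $\theta^{(0)}_i$. It writes the denominator as $\frac12\sum_a\bigl(\sum_i\Psi^{(0)}_{ia}\theta^{(0)}_i\bigr)^2-\frac12\sum_a\sum_i(\Psi^{(0)}_{ia}\theta^{(0)}_i)^2$ and observes that $\sum_i\Psi^{(0)}_{ia}\theta^{(0)}_i=n\sum_i\Psi^{(0)}_{ia}D_i/\sum_iD_i$ involves only aggregate sums over $n$ node degrees, which do concentrate (Chebyshev over $n$ terms), yielding $n/K$ up to lower-order error; the subtracted diagonal term is absorbed into the error using only the crude bound $D_i\le n^{1/3}d_i$ for all $i$. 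Your argument can be repaired by replacing the pointwise substitution with this aggregate computation, after which the rest of your plan (the $\theta_i\theta_j$-weighted signal/noise decomposition, concentration of $T_{=}$ and $T_{\neq}$, and the final comparison of $p^{(1)}$ and $q^{(1)}$) goes through essentially as in the paper. The dependence between $\theta^{(0)}$ and the edge noise that you flag as the delicate point is in fact harmless here, because $\theta^{(0)}$ enters only the denominator of \eqref{update of pq in dc in proof} and the numerator and denominator are controlled by separate high-probability events combined with a union bound.
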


\begin{proof}
By the Assumption~\ref{ass:theta}, it follows that
\begin{align*}
d_i &=  \sum_{j:j \neq i, z_j = z_i} \theta_i \theta_j p + \sum_{j: z_j \neq z_i}  \theta_i \theta_j q = \theta_i \frac{np + n(K-1) q}{K} +  \mathcal{E}
= \theta_i d + \mathcal{E},
\end{align*}
where $|\mathcal{E}| < r_1 n^{2/3} \rho_n$ holds for all $i$ with probability at least $1 - n^{-r_2}$ for some absolute constant $r_1>0$ and $r_2>0$.

When $s=1$, it follows from \eqref{update of pq in dc in proof} that the first update of $p$ is:
\[
    p^{(1)} = \frac{\sum_{i<j} \sum_a \Psi_{i a}^{(0)} \Psi_{j a}^{(0)} A_{i j}}{ \sum_{i<j} \sum_a \Psi_{i a}^{(0)} \Psi_{j a}^{(0)} \theta_i^{(0)} \theta_j^{(0)} }, 
\] where $\Psi^{(0)} = Z^{(0)}$ is the random initialization with error rate $\varepsilon$. For degree parameters, as we state in Section~\ref{sec:initialization} we initialize it by \[
\theta^{(0)}_i = \frac{D_i n}{\sum_i D_i}, i = 1,2,...,n.
\]
We will analyze the numerator and denominator of $p^{(1)}$ separately and combine them to have an estimation of $p^{(1)}$.

First we analyze the denominator of $p^{(1)}$. We will bound $\sum_i D_i$ and $ \sum_i \Psi_{ia}^{(0)} D_i$ separately. Notice that 

\begin{align}
\label{E of sum Di}
\mathbb{E} \big[ \sum_i D_i \big] = \sum_i d_i  = nd + \mathcal{E},
\end{align}
where  $|\mathcal{E}| < r_1 n^{5/3} \rho_n$ with probability at least $1 - n^{-r_2}$ for some absolute constant $r_1>0$ and $r_2>0$. The variance of $\sum_i D_i$ is bounded by:
\begin{align}
\label{Var of sum Di}
 \text{Var}\big[  \sum_i D_i \big] \leq r_1 n^2 \rho_n,
\end{align}
where $r_1>0$ is an absolute constant. By choosing $k = n^{2/3} \rho_n^{1/2}$ in the Chebyshev’s inequality, it follows from \eqref{E of sum Di} and \eqref{Var of sum Di} that 
\begin{align}
\label{bound of sum Di}
\sum_i D_i = nd + \mathcal{E},
\end{align}
where  $|\mathcal{E}| < r_1 n^{5/3} \rho_n$ with probability at least $1 - n^{-r_2}$ for some absolute constant $r_1>0$ and $r_2>0$. Then we compute the mean and variance of $ \sum_i \Psi_{ia}^{(0)} D_i$. The mean is given by:
\begin{align}
\label{E of sum Di Psi}
\mathbb{E} \big[  \sum_i \Psi_{ia}^{(0)} D_i \big] =  \sum_i \mathbb{E} \big[ \Psi_{ia}^{(0)} \big] d_i = \frac{n d}{K} + \mathcal{E},
\end{align}
where  $|\mathcal{E}| < r_1 n^{5/3} \rho_n $ with probability at least $1 - n^{-r_2}$ for some absolute constant $r_1>0$ and $r_2>0$. The variance is bounded by:
\begin{align}
\label{Var of sum Di Psi}
\nonumber \text{Var}\big[   \sum_i \Psi_{ia}^{(0)} D_i  \big] &= \text{Var} \left[  \mathbb{E} \big[   \sum_i \Psi_{ia}^{(0)} D_i  | \Psi^{(0)}\big]  \right] +  \mathbb{E} \left[  \text{Var} \big[   \sum_i \Psi_{ia}^{(0)} D_i  | \Psi^{(0)}\big]  \right] \\
& \leq r_1 n d^2,
\end{align}
where $r_1>0$ is an absolute constant. By choosing $k = n^{1/6}$ in the Chebyshev’s inequality, it follows from \eqref{E of sum Di Psi} and \eqref{Var of sum Di Psi} that 
\begin{align}
\label{bound of sum Di Psi}
   \sum_i \Psi_{ia}^{(0)} D_i = \frac{nd}{K} + \mathcal{E},
\end{align}
where  $|\mathcal{E}| < r_1 n^{5/3} \rho_n $ with probability at least $1 - n^{-r_2}$ for some absolute constant $r_1>0$ and $r_2>0$. 

Now we are ready to bound the denominator of $p^{(1)}$. It follows from \eqref{bound of sum Di} and \eqref{bound of sum Di Psi} that the denominator is:
\begin{align}\label{eq:denominator of p1 (dcsbm)}
   \sum_{i<j} \sum_a \Psi_{i a}^{(0)} \Psi_{j a}^{(0)} \theta_i^{(0)} \theta_j^{(0)}  &= \frac{1}{2}\sum_a  \left( \sum_{i } \Psi_{ia}^{(0)} \theta_i^{(0)} \right)^2 - \frac{1}{2}\sum_a \sum_{i} ( \Psi_{ia}^{(0)} \theta_i^{(0)} )^2 \nonumber \\
   &= \frac{ K (nd/K)^2 n^2}{2 n^2 d^2} +  \mathcal{E}  \nonumber \\
   &= \frac{n^2}{2K} + \mathcal{E},
\end{align}
where $|\mathcal{E}|< r_1 n ^{5/3}$ with probability at least $1 - n^{-r_2}$ for some absolute constant $r_1>0$ and $r_2>0$. Notice here we can put the term $\sum_a \sum_{i} ( \Psi_{ia}^{(0)} \theta_i^{(0)} )^2$ within $\mathcal{E}$ because it is obvious that $D_i \leq n^{1/3} d_i$ holds for all $i$ with high probability.

Second we analyze the numerator of $p^{(1)}$. By replacing $A_{ij}$ with $P_{ij}+(A_{ij}-P_{ij})$, we decompose the numerator of $p^{(1)}$ as the sum of signal and noise terms:
\begin{eqnarray*}
\text{signal of $p^{(1)}$} &:=&\sum_{i<j} \sum_a \Psi_{i a}^{(0)} \Psi_{j a}^{(0)} P_{i j},\\
\text{noise of $p^{(1)}$} &:=&\sum_{i<j} \sum_a \Psi_{i a}^{(0)} \Psi_{j a}^{(0)} (A_{i j} - P_{i j}).
\end{eqnarray*}
Since $P_{ij} = \theta_i \theta_j p$ when $z_i=z_j$ and $P_{ij} = \theta_i \theta_j q$ when $z_i \neq z_j$, the signal can be written as \[\sum_{i<j} \sum_a \Psi_{i a}^{(0)} \Psi_{j a}^{(0)} P_{i j} = \sum_a \sum_{i<j : z_i = z_j} \Psi_{i a}^{(0)} \Psi_{j a}^{(0)} \theta_i \theta_j p + \sum_a \sum_{i<j : z_i \neq z_j} \Psi_{i a}^{(0)} \Psi_{j a}^{(0)} \theta_i \theta_j q.\]
By the assumption of $\Psi^{(0)}$ and $\theta$, we have 
\begin{align*}
    \mathbb{E} \Big[\sum_{i:z_i=a} \Psi^{(0)}_{ia} \theta_i \Big] = \frac{n}{K}(1 - \epsilon) +  \mathcal{E}, \quad \mathbb{E} \Big[\sum_{i:z_i=b, z_i \neq a} \Psi^{(0)}_{ia} \theta_i \Big] = \frac{n \epsilon}{K(K-1)} +  \mathcal{E},
\end{align*}
where $|\mathcal{E}|<  r_1 n ^{2/3}$ with probability at least $1 - n^{-r_2}$ for some absolute constant $r_1>0$ and $r_2>0$. Therefore we can apply the Chebyshev’s inequalities for each $\sum_{i:z_i=a} \Psi^{(0)}_{i a} \theta_i$ and $\sum_{i:z_i=b, z_i \neq a} \Psi^{(0)}_{ia} \theta_i$ as in the analysis of the denominator of $p^{(1)}$. Then by taking the union bound we have 
\begin{align}\label{eq:signal of p1 (dcsbm)}
    \text{signal of $p^{(1)}$} =& \sum_{i<j} \sum_a \Psi_{i a}^{(0)} \Psi_{j a}^{(0)} P_{i j} = \sum_a \sum_{i<j : z_i = z_j} \Psi_{i a}^{(0)} \Psi_{j a}^{(0)} \theta_i \theta_j p + \sum_a \sum_{i<j : z_i \neq z_j} \Psi_{i a}^{(0)} \Psi_{j a}^{(0)} \theta_i \theta_j q  \nonumber \\ =& K\left[ \frac{1}{2}[ \frac{n}{K}(1 - \varepsilon)]^2 + \frac{1}{2}[\frac{n \varepsilon}{K(K-1)}]^2(K-1) \right]p  \nonumber \\ &+ K\left[ \frac{n}{K}(1 - \varepsilon) \frac{n \varepsilon}{K(K-1)} (K-1) + [\frac{n \varepsilon}{K(K-1)}]^2 \binom{K-1}{2} \right]q + \mathcal{E},
\end{align}
where $|\mathcal{E}|< r_1 n ^{5/3} \rho_n$ with probability at least $1 - n^{-r_2}$ for some absolute constant $r_1>0$ and $r_2>0$.

We can analyze the noise of $p^{(1)}$ conditioning on $\Psi^{(0)}$. Since
\begin{equation*}
      \mathbb{E}\left[ \sum_{i<j} \sum_a \Psi_{i a}^{(0)} \Psi_{j a}^{(0)} (A_{i j} - P_{i j}) | \Psi^{(0)} \right] = 0,  
\end{equation*}
we have
\begin{align*}
    \text{Var}\left[\sum_{i<j} \sum_a \Psi_{i a}^{(0)} \Psi_{j a}^{(0)} (A_{i j} - P_{i j}) \right] &= \mathbb{E}\left[ \text{Var} \left[ \sum_{i<j} \sum_a \Psi_{i a}^{(0)} \Psi_{j a}^{(0)} (A_{i j} - P_{i j}) | \Psi^{(0)} \right] \right] \\ &\leq r_1 n^2 K \rho_n,
\end{align*}
where $r_1$ is an absolute constant. By Chebyshev’s inequality, for any real number $k>0$, we have 
\begin{align} \label{eq:noise of p1 (dcsbm)}
 \mathbb{P} \Big( 
\Big| \sum_{i<j} \sum_a \Psi_{i a}^{(0)} \Psi_{j a}^{(0)} (A_{i j} - P_{i j}) \Big| < kn \sqrt{r_1 K \rho_n} \Big) \geq 1 - \frac{1}{k^2}.
\end{align}
By choosing $k = n^{3/4}\sqrt{\rho_n}$, the noise term is bounded by $ \sqrt{r_1 K} n^{7/4}\rho_n$ with probability at least $1 - \frac{1}{n^{3/2} \rho_n}$ for some absolute constant $r_1$.

It follows from \eqref{eq:denominator of p1 (dcsbm)}, \eqref{eq:signal of p1 (dcsbm)} and \eqref{eq:noise of p1 (dcsbm)} that 
\begin{equation}\label{eq: estimation of p1 (dcsbm)}
 p^{(1)} = \left[  (1 - \varepsilon)^2 + \frac{\varepsilon^2}{K-1}\right]p+  \left[2(1 - \varepsilon)\varepsilon + \frac{(K-2) \varepsilon^2 }{K-1}\right]q+ \mathcal{E}, 
\end{equation}
where $|\mathcal{E}| < r_1 n^{-1/4}p_n$ with probability at least $1 - n^{-r_2}$ for some absolute constant $r_1$ and $r_2$.

A similar analysis gives
\begin{eqnarray}\label{eq: estimation of q1 (dcsbm)}
    q^{(1)} = \left[  K-2 + (1 - \varepsilon)^2 + \frac{\varepsilon^2}{K-1}\right] \frac{q}{K-1} + \left[2(1 - \varepsilon)\varepsilon + \frac{(K-2) \varepsilon^2 }{K-1}\right]\frac{p}{K-1}+   \mathcal{E}, 
\end{eqnarray}
where $|\mathcal{E|} < r_1 n^{-1/4}p_n$ with probability at least $1 - n^{-r_2}$ for some absolute constant $r_1$ and $r_2$.

Notice here $p^{(1)}$ is always larger than $q^{(1)}$ with high probability. To verify this, first notice the sum of coefficients in front of $p$ and $q$ is $1$ in both \eqref{eq: estimation of p1 (dcsbm)} and \eqref{eq: estimation of q1 (dcsbm)}. Therefore, we only need show that \[
\left[  (1 - \varepsilon)^2 + \frac{\varepsilon^2}{K-1}\right] > \left[2(1 - \varepsilon)\varepsilon + \frac{(K-2) \varepsilon^2 }{K-1}\right] \frac{1}{K-1}
,\]
which is equivalent to \[
(K-1)(1 - \varepsilon)^2 + \frac{\varepsilon^2}{K-1} > 2\varepsilon(1- \varepsilon).\]
The last inequality holds by fundamental inequalities. The similar calculation also yields that 
\begin{equation}
\label{eq: estimation of p1 - q1 (dcsbm)}
p^{(1)} - q^{(1)} =  \left( (K-1)(1 - \varepsilon)^2 + \frac{\varepsilon^2}{K-1} - 2\varepsilon(1- \varepsilon) \right)\frac{p-q}{K-1} + \mathcal{E}, 
\end{equation}
where $|\mathcal{E|} < r_1 n^{-1/4}p_n$ with probability at least $1 - n^{-r_2}$ for some absolute constant $r_1$ and $r_2$.

Since 
\[t^{(1)} = \frac{1}{2} \log \frac{p^{(1)}}{ q^{(1)} }, \quad \lambda^{(1)} = \frac{1}{2t^{(1)}} (p^{(1)} - q^{(1)}),\]
it follows directly from \eqref{eq: estimation of p1 (dcsbm)}, \eqref{eq: estimation of q1 (dcsbm)} and \eqref{eq: estimation of p1 - q1 (dcsbm)} that 
\begin{equation}\label{eq: estimation of t1 and lambda1 (dcsbm)}
    t^{(1)} \ge c_{1}, \quad \lambda^{(1)} \le c_{2}\rho_n
\end{equation} 
for some constants $c_1$, $c_2$ only depending on $\varepsilon$ and $K$.
\end{proof}

The next lemma provides the accuracy of the label estimates after the first iteration in DCSBM.

\begin{lemma}[First step: Label estimation in DCSBM]\label{lem:label first step (dcsbm)}
There exist constants $C, c>0$ only depending on $\varepsilon$ and $K$ such that if $n\rho_n \ge C$ then with high probability $1-n^{-r}$ for some constant $r>0$,
$$||\Psi^{(1)}- Z||_1 \leq n\exp(-c d).$$
\end{lemma}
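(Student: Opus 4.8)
The plan is to mirror the SBM argument in Lemma~\ref{lem:label first step}, carrying along the extra degree factors introduced by the DCSBM update rule \eqref{update of Psi in dc in proof}. First I would record the misclassification criterion: after thresholding, $\Psi_i^{(1)} = Z_i$ if and only if, for every $a \neq z_i$,
$$\sum_{j:j\neq i}(\Psi_{jz_i}^{(0)} - \Psi_{ja}^{(0)})(A_{ij} - \theta_i^{(0)}\theta_j^{(0)}\lambda^{(1)}) > 0,$$
and then split the left-hand side into a signal term $\text{SIG}_{ia}^{(0)}$ (with $A_{ij}$ replaced by $\mathbb{E}[A_{ij}]$) and a noise term $r_{ia}^{(0)} = \sum_{j:j\neq i}(\Psi_{jz_i}^{(0)} - \Psi_{ja}^{(0)})(A_{ij} - \mathbb{E}[A_{ij}])$.

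For the signal, I would first invoke the concentration estimates established within the proof of Lemma~\ref{lemma:parameter estimation (dcsbm)}, namely $\sum_i D_i = nd + \mathcal{E}$ and $d_i = \theta_i d + \mathcal{E}$, to show that the initialized degree parameters satisfy $\theta_i^{(0)} = \theta_i(1 + o(1))$ uniformly in $i$ with high probability; this lets me replace $\theta_i^{(0)}\theta_j^{(0)}$ by $\theta_i\theta_j$ up to a negligible error. Using $\mathbb{E}[A_{ij}] = \theta_i\theta_j p$ when $z_i = z_j$ and $\theta_i\theta_j q$ otherwise, together with $\mathbb{E}[\theta_j] = 1$, the independence of $\Psi^{(0)}$ from $\theta$, and Chebyshev concentration of $\sum_j(\Psi_{jz_i}^{(0)} - \Psi_{ja}^{(0)})\theta_j$, the $\lambda^{(1)}$-contribution cancels in expectation (each column of $\Psi^{(0)}$ carrying expected mass $n/K$), leaving
$$\text{SIG}_{ia}^{(0)} = \theta_i\,\frac{n}{K}\Big[1 - \varepsilon - \tfrac{\varepsilon}{K-1}\Big](p-q) + \mathcal{E}.$$
Since $\theta_i \ge C_1$ by Assumption~\ref{ass:theta}, the signal is bounded below by a constant multiple of $d$ uniformly in $i$, so I may fix a $\delta$ (depending only on $\varepsilon$, $K$, $C_1$, $C_2$) with $\delta d < \text{SIG}_{ia}^{(0)}$.

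The noise term is structurally identical to the SBM case: $Y_{ija} := (\Psi_{jz_i}^{(0)} - \Psi_{ja}^{(0)})(A_{ij} - \mathbb{E}[A_{ij}])$ satisfies $\mathbb{E}[Y_{ija}] = 0$, $|Y_{ija}| \le 1$, and $\sum_{j:j\neq i}\mathbb{E}[Y_{ija}^2] \le C\,d_i = O(d)$ because $\theta_i\theta_j \le C_2^2$. I would therefore reuse the symmetrization step of Lemma~\ref{lem:label first step}: introduce an independent copy $\{Y_{ija}^*\}$, apply the triangle inequality to split $r_{ia}^{(0)}$ into two halves each summed over a fixed index range, bound each half by Bernstein's inequality, and take a union bound over $a \neq z_i$, giving $\mathbb{P}(|r_{ia}^{(0)}| > \delta d) \le C\exp(-c\delta^2 d)$. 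Finally, since the pairs $(\sum_{j<i}Y_{ija}, \sum_{j>i}Y_{ija}^*)$ are independent across $i$ conditional on $\Psi^{(0)}$ and $\theta$, a further Bernstein bound on $\sum_i \mathbbm{1}\big(\bigcup_{a\neq z_i}\{|r_{ia}^{(0)}| > \delta d\}\big)$, combined with $\|\Psi^{(1)} - Z\|_1 \le 2\sum_i \mathbbm{1}(\cdots)$, yields $\|\Psi^{(1)} - Z\|_1 \le n\exp(-cd)$ with probability $1 - n^{-r}$.

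The main obstacle I anticipate lies in the signal analysis rather than the noise: because $\theta_i^{(0)}$ is a function of the full adjacency matrix (through all degrees $D_i$), it is coupled with the same randomness driving $r_{ia}^{(0)}$, so I must argue on a single high-probability event that the uniform approximation $\theta_i^{(0)} = \theta_i(1+o(1))$ holds and that its residual error is dominated by $\delta d$. Degree heterogeneity also makes the signal lower bound node-dependent (proportional to $\theta_i$), so I must use $\theta_i \ge C_1$ to recover a uniform gap; keeping the resulting $\theta$-dependent constants separate from the $(\varepsilon, K)$-dependent ones is the bookkeeping that requires the most care.
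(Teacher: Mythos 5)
Your overall architecture matches the paper's: the same misclassification criterion, the same split into a signal term and a centered noise term $r_{ia}^{(0)}$, the same symmetrization with an independent copy plus Bernstein and a union bound over $a\neq z_i$, and the same final Bernstein bound on the sum of indicators. The noise analysis is fine. The gap is in the step you yourself flag as the main obstacle: you propose to justify replacing $\theta_i^{(0)}\theta_j^{(0)}$ by $\theta_i\theta_j$ via a uniform entrywise approximation $\theta_i^{(0)}=\theta_i(1+o(1))$. This is false in the regime the lemma is actually about. Since $\theta_i^{(0)}=D_i n/\sum_j D_j$ with $\mathbb{E}[D_i]=d_i=\theta_i d(1+o(1))$ and $d$ merely bounded below by a constant, $D_i$ has constant-order relative fluctuations around $d_i$ (a nonvanishing fraction of nodes even have $D_i=0$, hence $\theta_i^{(0)}=0$), so no uniform $(1+o(1))$ statement can hold. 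Your signal analysis therefore does not close.

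The paper avoids this by never comparing $\theta^{(0)}$ to $\theta$ entrywise. It defines the signal with the \emph{true} $\theta_i\theta_j$ and isolates the discrepancy as a separate term
$r_{2ia}^{(0)}=\lambda^{(1)}\bigl[\theta_i\sum_{j\neq i}(\Psi^{(0)}_{jz_i}-\Psi^{(0)}_{ja})\theta_j-\theta_i^{(0)}\sum_{j\neq i}(\Psi^{(0)}_{jz_i}-\Psi^{(0)}_{ja})\theta_j^{(0)}\bigr]$,
which is then controlled by the cancellation of the two weighted column sums: both $\sum_j\Psi^{(0)}_{ja}\theta_j$ and $\sum_j\Psi^{(0)}_{ja}\theta_j^{(0)}$ concentrate within $O(n^{2/3})$ of $n/K$ (the latter because $\sum_j\Psi^{(0)}_{ja}D_j$ concentrates near $nd/K$), and only the crude bound $|\theta_i-\theta_i^{(0)}|\le n^{1/6}$ is needed on the prefactors, yielding $|r_{2ia}^{(0)}|\lesssim n^{5/6}\rho_n=o(d)$. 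You already invoke exactly this kind of column-sum cancellation for the true $\theta$; applying it directly to $\theta^{(0)}$ as well, rather than routing through an entrywise consistency claim, is the fix that makes your argument coincide with the paper's.
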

\begin{proof}
To deal with the estimation of $\Psi$ in the first iteration, we will first quantify the probability of misclassification for each node and then derive the upper bound of $||\Psi^{(1)} - Z||$.

For the node $i$, from \eqref{update of Psi in dc in proof} we know $\Psi_{i}^{(1)} = Z_i$ after threshold if any only if the following inequality 
 \[ \sum_{j:j\neq i} \Psi_{j z_i}^{(0)} (A_{ij} -   \theta_i^{(0)} \theta_j^{(0)} \lambda^{(1)}) > \sum_{j:j\neq i} \Psi_{ja}^{(0)} (A_{ij} -   \theta_i^{(0)} \theta_j^{(0)} \lambda^{(1)})
\]
holds for any $a \in [K]$ such that $a \neq z_i$. This is equivalent to \[ \sum_{j:j\neq i} (\Psi_{j z_i}^{(0)} - \Psi_{ja}^{(0)}) (A_{ij} -   \theta_i^{(0)} \theta_j^{(0)} \lambda^{(1)}) > 0
\] holds for any $a \in [K]$ such that $a \neq z_i$. We can decompose the left-hand side of the above inequality by the sum of signal and noise terms. We will first quantify the signal term and then show that most of the noise terms can be upper bounded by the signal term. Denote the signal and noise terms in the first iteration by $SIG_{ia}^{(0)}$, $ r_{1ia}^{(0)}$ and $r_{2ia}^{(0)}$ defined as:

\begin{align*}
   \text{SIG}_{ia}^{(0)} :=& \sum_{j:j\neq i} (\Psi_{j z_i}^{(0)} - \Psi_{ja}^{(0)}) (E[A_{ij}] - \theta_i \theta_j \lambda^{(1)}), \\
    r_{1ia}^{(0)} :=& \sum_{j:j\neq i} (\Psi_{j z_i}^{(0)} - \Psi_{ja}^{(0)}) ( A_{ij} - E[A_{ij}]) := \sum_{j:j \neq i} Y_{ija}, \\
    r_{2ia}^{(0)} :=& \sum_{j:j\neq i} (\Psi_{j z_i}^{(0)} - \Psi_{ja}^{(0)}) (\theta_i \theta_j \lambda^{(1)} - \theta_i^{(0)} \theta_j^{(0)} \lambda^{(1)}).
\end{align*}

Similarly as in the derivation of \eqref{eq:signal of p1 (dcsbm)}, if follows from Lemma~\ref{lemma:parameter estimation (dcsbm)} that 
\begin{align*}
    \text{SIG}_{ia}^{(0)} &= 
    \sum_{j:j\neq i} (\Psi_{j z_i}^{(0)} - \Psi_{ja}^{(0)}) (E[A_{ij}] - \lambda^{(1)}) \\ &= \sum_{j:j\neq i,z_j = z_i} (\Psi_{j z_i}^{(0)} - \Psi_{ja}^{(0)}) \theta_i \theta_j p + \sum_{j:j\neq i, z_j \neq z_i } (\Psi_{j z_i}^{(0)} - \Psi_{ja}^{(0)}) \theta_i \theta_j q -  \sum_{j:j\neq i} (\Psi_{j z_i}^{(0)} - \Psi_{ja}^{(0)})\lambda^{(1)}\\
    & = \theta_i \frac{n}{K} [ 1- \varepsilon - \frac{\varepsilon}{ K-1 }][p-q] + \mathcal{E},
\end{align*}
where $|\mathcal{E}|<r_1 n^{2/3}\rho_n$ with probability at least $1 - n^{-r_2}$ for some absolute constant $r_1$ and $r_2$. Here the upper bound of $|\mathcal{E}|$ is a union bound for all $i \in [n]$ and $a \in [K]$ such that $a \neq z_i$.

To bound the noise term $r_{2ia}^{(0)}$, notice that 
\begin{align*}
   | r_{2ia}^{(0)} |&= \Big| \theta_i \sum_{j:j\neq i} (\Psi_{j z_i}^{(0)} - \Psi_{ja}^{(0)}) \theta_j \lambda^{(1)} - \theta_i^{(0)}  \sum_{j:j\neq i} (\Psi_{j z_i}^{(0)} - \Psi_{ja}^{(0)})  \theta_j^{(0)} \lambda^{(1)} \Big| \\
   & \leq (\theta_i + \theta^{(0)}_i ) \lambda^{(1)} r_1 n^{2/3} \leq r_1 n^{5/6} \rho_n
\end{align*}
for some constant $r_1 >0$. The last inequality holds because it is obvious that $|\theta_i - \theta^{(0)}_i| \leq n^{1/6}$ for all $i$.

Now we are ready to bound the noise terms $r_{1ia}^{(0)}$. Denote
\[ \delta = \frac{[1 - \varepsilon - \frac{\varepsilon}{K-1}](p-q)}{2 [p + (K-1)q]},
\]
and note that $\delta d_i < SIG_{ia}^{(0)}$. As we discussed, if $|r_{1ia}^{(0)}| < \delta d_i$ holds for all $a \in [K]$ and $a \neq z_i$, then $|r_{1ia}^{(0)}|< SIG_{ia}^{(0)}$, which implies $\Psi_i^{(1)} = Z_i$. Otherwise $||\Psi_i^{(1)} -Z_i  ||_1 \leq 2$.  For an event $E$, denote by $\mathbbm{1}(E)$ the indicator of $E$. Therefore, we have the fact that
\begin{equation}\label{bound of Psi1 related to noise (dcsbm)}
||\Psi^{(1)} - Z||_1 = \sum_{i} || \Psi^{(1)}_i - Z_i ||_1 \leq 2 \sum_{i} \mathbbm{1}\Big( \bigcup_{a:a \neq z_i}  \Big\{ |r_{1ia} ^{(0)} | > \delta d_i
 \Big\} \Big).
\end{equation}
First we analyze the noise terms $r_{1ia}^{(0)}$ conditioning on $\Psi^{(0)}$. 
To deal with the dependency among rows of the adjacency matrix due to symmetry, 
let $\{Y_{i ja}^*\}$ be an independent copy of $\{Y_{i ja}\}$. Then by the triangle inequality, 
 \begin{align*}
 \mathbbm{1}\Big(\bigcup_{a:a \neq z_i}  
 \Big\{ |r_{1ia} ^{(0)} | > \delta d_i
 \Big\} \Big) &= \mathbbm{1} \Big( \bigcup_{a:a \neq z_i} \Big\{ |\sum_{j: j \neq i} Y_{ija }|>\delta d_i \Big\} \Big)  \leq \mathbbm{1}\Big( \bigcup_{a:a \neq z_i} \Big\{ \Big| \sum_{j=1}^{i-1} Y_{ija}\Big|+\Big|\sum_{j=i+1}^{n} Y_{ija}\Big|> \delta d_i \Big\} \Big) \\ 
 & \leq \mathbbm{1}\Big( \bigcup_{a:a \neq z_i} \Big\{ \Big| \sum_{j=1}^{i-1} Y_{ija}\Big|+\Big|\sum_{j=i+1}^{n} Y_{ija}\Big| + \Big| \sum_{j=1}^{i-1} Y_{ija}^*\Big| +\Big|\sum_{j=i+1}^{n} Y_{ija}^*\Big| > \delta d_i \Big\} \Big) \\ 
 & \leq \mathbbm{1}\Big( \bigcup_{a:a \neq z_i} \Big\{ \Big| \sum_{j=1}^{i-1} Y_{ija}\Big|+\Big|\sum_{j=i+1}^{n} Y_{ija}^*\Big| > \frac{\delta d_i}{2} \Big\} \Big) \\ & \quad + \mathbbm{1}\Big( \bigcup_{a:a \neq z_i} \Big\{ \Big| \sum_{j=1}^{i-1} Y_{ija}^*\Big|+\Big|\sum_{j=i+1}^{n} Y_{ija}\Big| > \frac{\delta d_i}{2} \Big\} \Big).
 \end{align*}
 Applying Bernstein's inequality for $ \sum_{j: j\neq i} Y_{ija} $ with $\mathbb{E}[Y_{ija}] = 0$, $|Y_{ija}|<1$ and $\sum_{j:j \neq i}\mathbb{E}[Y_{ija}^2] \leq d_i$, we get
 \begin{align*}
 \mathbb{P}\Big(\Big|\sum_{j=1}^{i-1} Y_{ija}\Big|+\Big|\sum_{j=i+1}^{n} Y_{ija}^*\Big| > \frac{\delta d_i}{2}\Big) 
  &\leq \mathbb{P}\Big(\Big|\sum_{j=1}^{i-1} Y_{ija}\Big|>\frac{\delta d_i}{4}\Big) + \mathbb{P}\Big(\Big|\sum_{j=i+1}^{n} Y_{ija}^*\Big|>\frac{\delta d_i}{4}\Big) \\
  &\leq 4\exp\left(\frac{-(\delta d_i/4)^2/2}{d_i+ (\delta d_i /4)/3}\right) = 4\exp\left(\frac{-\delta^2d_i}{32+8\delta/3} \right).
 \end{align*} 
Then by taking the union bound for all $a \in [K]$ such that $a \neq z_i$ we get 
 \[\mathbb{P}\Big( \bigcup_{a:a \neq z_i} \Big\{ \Big| \sum_{j=1}^{i-1} Y_{ija}\Big|+\Big|\sum_{j=i+1}^{n} Y_{ija}^*\Big| > \frac{\delta d_i}{2} \Big\} \Big) \leq 4(K-1)\exp \left(\frac{-\delta^2d_i}{32+8\delta/3} \right) .\]
Since $(\sum_{j=1}^{i-1}Y_{i ja},\sum_{j=i+1}^{n} Y_{i ja}^*)$, $1\le i\le n$, are independent conditioning on $\Psi^{(0)}$, by Bernstein's inequality, 
 \begin{align*}
  \sum_{i=1}^{n} \mathbbm{1}\Big(  \bigcup_{a:a \neq z_i} \Big\{ \Big| \sum_{j=1}^{i-1} Y_{ija}\Big|+\Big|\sum_{j=i+1}^{n} Y_{ija}^*\Big| > \frac{\delta d_i}{2} \Big\} \Big) \leq 8(K-1) \sum_{i} \exp\left(\frac{-\delta^2d_i}{32+8\delta/3}\right)
 \end{align*}
occurs with probability at least $1 - n^{-r_1}$ for some absolute constant $r_1$. The same bound holds for $\sum_{i=1}^{n} \mathbbm{1}\Big(  \bigcup_{a:a \neq z_i} \Big\{ \Big| \sum_{j=1}^{i-1} Y_{ija}^*\Big|+\Big|\sum_{j=i+1}^{n} Y_{ija} \Big| > \frac{\delta d_i}{2} \Big\} \Big)$. Therefore, the following event 
\[\mathcal{A}_1 = \left\{\sum_{i=1}^n \mathbbm{1}\Big( \bigcup_{a:a \neq z_i}  \Big\{ |r_{1ia} ^{(0)} | > \delta d_i
 \Big\} \Big) \leq 16(K-1) \sum_i \exp\left(\frac{-\delta^2d_i}{32+8\delta/3}\right) \right\} \]
occurs with probability at least $1 - n^{-r_1}$ for some absolute constant $r_1$ conditioning on $\Psi^{(0)}$. By the law of total probability the event $\mathcal{A}_1$ occurs with probability at least $1 - n^{-r_1}$ for some absolute constant $r_1$. Clearly, it follows from \eqref{bound of Psi1 related to noise (dcsbm)} that $\mathcal{A}_1$ implies \[ ||\Psi^{(1)} - Z||_1 \leq  n \exp(-cd)
\]
with high probability for some constant $ c>0$ only depending on $\varepsilon$ and $K$.
\end{proof}

\begin{lemma}[First step: Degree parameter estimation]\label{lem:degree first step}
There exist constants $C$ only depending on $\varepsilon$ and $K$ such that if $n\rho_n \ge C$ then with high probability $1-n^{-r}$ for some constant $r>0$, we have that the following inequalities holds for all $i$:
$$|\theta_i^{(1)} - \theta_i^{(0)}| \leq \frac{1}{4}\theta_i^{(0)}.$$
\end{lemma}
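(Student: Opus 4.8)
The plan is to exploit that both $\theta_i^{(0)}$ and the first update $\theta_i^{(1)}$ are proportional to the observed degree $D_i=\sum_j A_{ij}$, so $D_i$ cancels in their ratio. Starting from \eqref{update of theta in proof} with $s=1$ and $\Psi^{(0)}=Z^{(0)}$, and using that each row of $Z^{(0)}$ is an indicator, the raw update for a node $i$ with $a:=z_i^{(0)}$ reduces to
$$\theta_i^{(1)}=\frac{D_i}{\mathrm{Den}_i},\qquad \mathrm{Den}_i=q^{(1)}\big(n-T_a\big)+p^{(1)}\big(T_a-\theta_i^{(0)}\big),$$
where $T_a:=\sum_{j:z_j^{(0)}=a}\theta_j^{(0)}$ and I used $\sum_j\theta_j^{(0)}=n$. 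Since $\theta_i^{(0)}=D_i n/S$ with $S=\sum_j D_j$, the degree cancels and
$$\frac{\theta_i^{(1)}}{\theta_i^{(0)}}=\frac{S/n}{\mathrm{Den}_i}.$$
It thus suffices to show that $S/n$ and $\mathrm{Den}_i$ both equal the expected average degree $d$ up to a factor tending to $1$, uniformly in $i$.

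The concentration $S/n=d(1+o(1))$ is already given by \eqref{bound of sum Di}. For $\mathrm{Den}_i$, I would first show $T_a=(n/K)(1+o(1))$ for every $a\in[K]$: writing $T_a=(n/S)\sum_{j:z_j^{(0)}=a}D_j$ and using that $Z^{(0)}$ is independent of $A$ (Assumption~\ref{ass:perturb}), the equal community sizes, and $\mathbb{E}\theta_j=1$, the mean of $\sum_{j:z_j^{(0)}=a}D_j$ equals $(1-\varepsilon)\tfrac{n}{K}d+\tfrac{\varepsilon}{K-1}\tfrac{n(K-1)}{K}d=\tfrac{n}{K}d$, and a Chebyshev bound as in Lemma~\ref{lemma:parameter estimation (dcsbm)} (together with a union bound over the $K$ communities) controls the fluctuation. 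The crucial algebraic fact is the identity
$$(K-1)q^{(1)}+p^{(1)}=(K-1)q+p+\mathcal{E},\qquad |\mathcal{E}|=O(n^{-1/4}\rho_n),$$
which I obtain by substituting the explicit expressions \eqref{eq: estimation of p1 (dcsbm)} and \eqref{eq: estimation of q1 (dcsbm)} for $p^{(1)},q^{(1)}$ and using that the coefficients of $p$ and of $q$ in $p^{(1)}$ sum to one. Rewriting $\mathrm{Den}_i=q^{(1)}n+(p^{(1)}-q^{(1)})T_a-p^{(1)}\theta_i^{(0)}$ and inserting $T_a=(n/K)(1+o(1))$ and this identity gives $\mathrm{Den}_i=\tfrac{n}{K}[(K-1)q+p](1+o(1))=d(1+o(1))$; the subtracted term is uniformly negligible since $p^{(1)}\theta_i^{(0)}/d=O\big(1/(n\rho_n)\big)$ even for the largest-degree node. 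Combining, $\theta_i^{(1)}/\theta_i^{(0)}=1+o(1)$ uniformly in $i$, so the claimed bound $\tfrac14$ holds once $n\rho_n\ge C$ for a large constant $C$.

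It remains to account for the rescaling step \eqref{threshold theta in dcsbm in proof}, which multiplies the within-community values by $f_a=(n/K)/\sum_{i:z_i^{(1)}=a}\theta_i^{(1)}$. By Lemma~\ref{lem:label first step (dcsbm)} the set $\{i:z_i^{(1)}=a\}$ differs from the true community $\{i:z_i=a\}$ by at most $ne^{-cd}$ nodes, whose total degree is negligible by the irregular-degree control of Lemma~\ref{lemma: Concentration inequality (dcsbm)}; together with $\mathrm{Den}_i=d(1+o(1))$ and $\sum_{i:z_i=a}D_i=(n/K)d(1+o(1))$ this yields $f_a=1+o(1)$, so rescaling perturbs the estimate only at lower order and the bound is preserved.

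The main obstacle is the identity $(K-1)q^{(1)}+p^{(1)}=(K-1)q+p$: it is precisely what forces the fitted denominator $\mathrm{Den}_i$ to reproduce the true average degree irrespective of how poor the initial clustering is, and establishing it requires the exact first-iteration expressions for $p^{(1)}$ and $q^{(1)}$. A secondary difficulty is the bookkeeping for the rescaling factor, which forces one to invoke the label-accuracy Lemma~\ref{lem:label first step (dcsbm)} and the irregular-degree estimate.
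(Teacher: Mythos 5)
Your proposal is correct and follows essentially the same route as the paper's proof: both cancel the observed degree $D_i$ between $\theta_i^{(0)}$ and the raw update $\theta_i^{(1)}$, reduce the denominator to $\tfrac{n}{K}\bigl[(K-1)q^{(1)}+p^{(1)}\bigr]=d(1+o(1))$ via the first-step estimates \eqref{eq: estimation of p1 (dcsbm)}--\eqref{eq: estimation of q1 (dcsbm)} together with concentration of $\sum_j D_j$ and of the within-initialized-community sums of $\theta_j^{(0)}$, and then control the rescaling factor using the first-step label accuracy and the removed-edge/bounded-degree estimates. You merely make explicit the algebraic identity $(K-1)q^{(1)}+p^{(1)}=(K-1)q+p+\mathcal{E}$ that the paper uses implicitly when it writes the denominator as $d+\mathcal{E}$.
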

\begin{proof}

From \eqref{update of theta in proof} the first update of $\theta$ is: 
\[
\frac{\sum_j A_{i j}}{ \theta_i^{(1)}} = \sum_{j:j \neq i } \sum_{a \neq b} \Psi_{i a}^{(0)} \Psi_{j b}^{(0)} \theta_j^{(0)} q^{(1)} + \sum_{j:j \neq i } \sum_{a} \Psi_{i a}^{(0)} \Psi_{j a}^{(0)} \theta_j^{(0)} p^{(1)}, \quad i \in [n].\]

We will analyze $\theta^{(1)}$ in each estimated community. First we fix $a$ and analyze $\theta_i^{(1)}$ when $\Psi_{ia}^{(0)} = 1$. It follows from \eqref{eq: estimation of p1 (dcsbm)} and \eqref{eq: estimation of q1 (dcsbm)} that the above update equation can be simplified to:
\begin{align}
\label{bound first update of theta}
\nonumber \frac{\sum_j A_{i j}}{ \theta_i^{(1)}}  &= \sum_{j:j \neq i } \sum_{b: b\neq a} \Psi_{j b}^{(0)} \theta_j^{(0)} q^{(1)} + \sum_{j:j \neq i }  \Psi_{j a}^{(0)} \theta_j^{(0)} p^{(1)} \\
&= \frac{n(K-1)}{K}q^{(1)} + \frac{np^{(1)}}{K} + \mathcal{E} = d + \mathcal{E},
\end{align}
where $|\mathcal{E}|<r_1 n^{3/4}\rho_n$ with probability at least $1 - n^{-r_2}$ for some absolute constant $r_1$ and $r_2$. Therefore, it follows from \eqref{bound of sum Di} and \eqref{bound first update of theta} that 
\begin{equation}
\label{theta_1 from theta_0}
 \theta_i^{(1)} = \theta_i^{(0)} (1 + \mathcal{E} ),
 \end{equation}
where $|\mathcal{E}|<r_1 n^{ - 1/4}$ with probability at least $1 - n^{-r_2}$ for some absolute constant $r_1$ and $r_2$. Since we further update $\theta_i^{(1)}$ by \eqref{threshold theta in dcsbm in proof}, we only need show $\sum_i \Psi_{ia}^{(1)} \theta_i^{(1)}$ is close to $n/K$. For this purpose we can just show $\sum_i \Psi_{i a}^{(1)} \theta_i^{(0)}$ is close to $n/K$ according to \eqref{theta_1 from theta_0}. To evaluate $\sum_i \Psi_{i a}^{(1)} \theta_i^{(0)}$, first we show that $\sum_{i j} \Psi_{ia}^{(1)} A_{i j}$ is close to $nd/K$.

\begin{align*}
\sum_{i j} \Psi_{ia}^{(1)} A_{i j} &= \sum_{i j} (\Psi_{ia}^{(1)} - Z_{ia}) A_{i j} +  \sum_{i j}  Z_{ia} A_{i j} \\
&= \sum_{i j} (\Psi_{ia}^{(1)} - Z_{ia}) (A_{i j} - A_{ij}^{\prime}) +  \sum_{i j}(\Psi_{ia}^{(1)} - Z_{ia})  A_{ij}^{\prime} +  \sum_{i j}  Z_{ia} A_{i j}.
\end{align*}

By Lemma~\ref{lemma:quantity of removed edges (dcsbm)} the first term is bounded by $4n\exp(-C_1 d)$. By Lemma~\ref{lem:label first step (dcsbm)} the second term is bounded by $n \exp(- cd)$. Therefore, it follows that
\begin{equation}
\label{bound of numerator in theta_1 Psi_1}
\big| \sum_{i j} \Psi_{ia}^{(1)} A_{i j} - \frac{nd}{K} \big| \leq 4n\exp(-C_1 d) + n \exp(-cd) + r_1 n^{5/3}\rho_n,
 \end{equation}
for some absolute constant $r_1$. Then it follows from \eqref{bound of numerator in theta_1 Psi_1}, \eqref{bound of sum Di} and \eqref{theta_1 from theta_0} that after rescaling we have:
\[
 |\theta_i^{(1)} - \theta_i^{(0)}| \leq \frac{1}{4}\theta_i^{(0)}
\]
as long as $n\rho_n \ge C$ for some constants $C$ only depending on $\varepsilon$ and $K$.
\end{proof}

Using the above lemmas, we are now ready to prove Proposition~\ref{prop:parameter estimation in DCSBM} and Theorem~\ref{unknown pa (dcsbm)} by induction.

\medskip

\begin{proof}[Proof of Proposition~\ref{prop:parameter estimation in DCSBM} and Theorem~\ref{unknown pa (dcsbm)}]

We will use induction to prove the results for block parameter estimation, degree parameter estimation and label estimation beyond first step. First we assume that for $s>1$, we have 
\begin{equation}\label{induction assumption (dcsbm)}
||\Psi^{(s-1)} - Z||_1 =: n G_{s-1} \leq n\exp(-cd)
\end{equation}
for some constant $c$ only dependent on $\varepsilon$ and $K$. Note that the constant $c$ here does not depend on the number of iterations $s$. We will specify how to choose the constant $c$ afterwards. 

Second we assume that for $s>1$, we have 
\begin{equation}
  \label{induction assumption of theta}  
   |\theta_i^{(s-1)} - \theta_i^{(0)}| \leq \frac{1}{4}\theta_i^{(0)}
\end{equation}
holds for all $i$. Now we will analyze the block parameter.

\medskip

\noindent \textbf{Beyond first step: block parameter estimation}\\

The estimate of $p$ in the $s$-th iteration is
\[  p^{(s)} = \frac{\sum_{i<j} \sum_a \Psi_{i a}^{(s-1)} \Psi_{j a}^{(s-1)} A_{i j}}{ \sum_{i<j} \sum_a \Psi_{i a}^{(s-1)} \Psi_{j a}^{(s-1)} \theta_i^{(s-1)} \theta_j^{(s-1)} }.
\]
First let us define a ``better" estimation of $p^{(s)}$:\[
\underline{p}^{(s)} = \frac{\sum_{i<j} \sum_a Z_{i a} Z_{j a} A_{i j}}{ \sum_{i<j} \sum_a Z_{i a} Z_{j a} \theta_i \theta_j}.
\] 
The reason why we say $\underline{p}^{(s)}$ is a ``better" estimation is that we use the ground truth $(Z,\theta)$ to replace the estimated $(\Psi^{(s-1)}, \theta^{(s-1)})$ as in the expression of $p^{(s)}$. From the Bernstein inequality we have \begin{equation}\label{bound of underline ps (dcsbm)}
|\underline{p}^{(s)} - p| \leq p\exp(-r_1 d)
\end{equation}
occurs with probability at least $1 - n^{-r_2}$ for some absolute constant $r_1$ and $r_2$. To show $p^{(s)}$ is close to $\underline{p}^{(s)}$, we will show both the numerators and denominators are close.

To quantify the numerator of $p^{(s)}$, we can decompose it by
\begin{align*}
\sum_{i<j} \sum_a \Psi_{i a}^{(s-1)} \Psi_{j a}^{(s-1)} A_{i j} &= \sum_{i<j} \sum_a Z_{i a} Z_{j a} A_{i j} + \sum_{i<j} \sum_a [\Psi_{i a}^{(s-1)} \Psi_{j a}^{(s-1)} -  Z_{i a} Z_{j a}] A_{i j} \\ &= \sum_{i<j} \sum_a Z_{i a} Z_{j a} A_{i j} + \sum_{i<j} \sum_a [\Psi_{i a}^{(s-1)} \Psi_{j a}^{(s-1)} -  Z_{i a} Z_{j a}] A_{i j}^{\prime} \\ &+  \sum_{i<j} \sum_a [\Psi_{i a}^{(s-1)} \Psi_{j a}^{(s-1)} -  Z_{i a} Z_{j a}] (A_{i j} - A_{i j}^{\prime}).
\end{align*}
To bound the second term, from \eqref{induction assumption (dcsbm)} and the assumption that each row and column sum of $A^{\prime}$ is bounded by $C_0 d$, we have
\begin{align}\label{second bound in ps (dcsbm)}
    \big|\sum_{i<j} \sum_a \Psi_{i a}^{(s-1)} \Psi_{j a}^{(s-1)} A_{i j}^{\prime} - \sum_{i<j} \sum_a Z_{i a} Z_{j a} A_{i j}^{\prime} \big| = \big| \sum_{i<j} \sum_a (\Psi_{i a}^{(s-1)} - Z_{i a} )(\Psi_{j a}^{(s-1)} - Z_{j a} )A_{i j}^{\prime} \nonumber \\ + \sum_{i<j} \sum_a (\Psi_{i a}^{(s-1)} - Z_{i a} )Z_{j a} A_{i j}^{\prime} + \sum_{i<j} \sum_a Z_{i a} (\Psi_{j a}^{(s-1)} - Z_{j a} )A_{i j}^{\prime} \big| \nonumber \\ \leq 3||\Psi_{(s-1)} - Z||_1 C_0 d \leq 3n \exp(-cd) C_0 d.
\end{align}
To bound the third term, first notice it follows from Lemma~\ref{lemma:quantity of removed edges (dcsbm)} that \begin{align}\label{first bound in ps (dcsbm)}
   \sum_{i<j} \sum_a \Psi_{i a}^{(s-1)} \Psi_{j a}^{(s-1)} (A_{i j} - A_{i j}^{\prime}) \leq \sum_{i<j}  (\sum_a \Psi_{i a}^{(s-1)})( \sum_a \Psi_{j a}^{(s-1)}) (A_{ i j} - A^{\prime}_{i j}) \nonumber \\ = \sum_{i<j} (A_{ i j} - A^{\prime}_{i j}) \leq 2n\exp(-C_1 d) . 
\end{align}
Similarly as in \eqref{first bound in ps (dcsbm)} we have the same bound for $\sum_{i<j} \sum_a Z_{ia}Z_{ja} (A_{ij} - A^{\prime}_{i j})$ and it holds that 
\begin{equation}\label{third bound in ps (dcsbm)}
 \big| \sum_{i<j} \sum_a [\Psi_{i a}^{(s-1)} \Psi_{j a}^{(s-1)} -  Z_{i a} Z_{j a}] (A_{i j} - A_{i j}^{\prime}) \big| \leq 4n\exp(-C_1 d).
\end{equation}
From \eqref{second bound in ps (dcsbm)} and \eqref{third bound in ps (dcsbm)}, the numerator of $p^{(s)}$ satisfies  
\begin{align}\label{estimation of numerator of ps (dcsbm)}
\big| \sum_{i<j} \sum_a \Psi_{i a}^{(s-1)} \Psi_{j a}^{(s-1)} A_{i j} -   \sum_{i<j} \sum_a Z_{i a} Z_{j a} A_{i j}\big| \leq 3n\exp(-c d)C_0 d + 4n\exp(-C_1 d).
\end{align}
Moreover, notice that the denominator of $p^{(s)}$ is:
\begin{align}\label{eq:denominator of ps (dcsbm)}
   \sum_{i<j} \sum_a \Psi_{i a}^{(s-1)} \Psi_{j a}^{(s-1)} \theta_i^{(s-1)} \theta_j^{(s-1)}  &= \frac{1}{2}\sum_a  \left( \sum_{i } \Psi_{ia}^{(s-1)} \theta_i^{(s-1)} \right)^2 - \frac{1}{2}\sum_a \sum_{i} ( \Psi_{ia}^{(s-1)} \theta_i^{(s-1)} )^2 \nonumber \\
   &= \frac{n^2 }{2K} - \frac{1}{2}\sum_a \sum_{i} ( \Psi_{ia}^{(s-1)} \theta_i^{(s-1)} )^2  \nonumber \\
   &= \frac{n^2}{2K} + \mathcal{E},
\end{align}
where $|\mathcal{E}|< r_1 n ^{5/3}$ for some absolute constant $r_1$. Notice here the second equation holds since  $\sum_{i } \Psi_{ia}^{(s-1)} \theta_i^{(s-1)} = n/K$ for all $a$ after rescaling. The third equation holds since we can put the term $\sum_a \sum_{i} ( \Psi_{ia}^{(s-1)} \theta_i^{(s-1)} )^2$ within the error term by the induction \eqref{induction assumption of theta}.

Therefore, from \eqref{estimation of numerator of ps (dcsbm)}, \eqref{eq:denominator of ps (dcsbm)} and \eqref{bound of underline ps (dcsbm)} we get
\begin{equation}
\label{estimation of ps (dcsbm)}
|p^{(s)} - p| \leq |p^{(s)} - \underline{p}^{(s)}| + | \underline{p}^{(s)} - p| \leq p [ \exp(-c_3 d) + c_3^{\prime} n^{-1/3}],
\end{equation}
where $c_3$ and $c_3^{\prime}$ are constants only depending on $\varepsilon$ and $K$. 
By the similar argument we have 
\begin{equation}
\label{estimation of qs (dcsbm)}
|q^{(s)} - q| \leq q[ \exp(-c_4 d) + c_4^{\prime} n^{-1/3}],
\end{equation}
where $c_4$ and $c_4^{\prime}$ are constants only depending on $\varepsilon$ and $K$. 
Since
\[t^{(s)} = \frac{1}{2} \log \frac{p^{(s)}}{ q^{(s)}}, \quad \lambda^{(s)} = \frac{1}{2t^{(s)}} \log (p^{(s)} - q^{(s)} ),\]
it follows directly that 
\begin{equation}\label{estimatino of ts and lambdas (dcsbm)}
    |t^{(s)} - t| \leq t[ \exp(-c_5 d) + c_5^{\prime} n^{-1/3}],\quad
    |\lambda^{(s)} - \lambda| \leq \lambda [ \exp(-c_6 d) + c_6^{\prime} n^{-1/3}]
,
\end{equation}
where $c_5$, $c_5^{\prime}$, $c_6$ and $c_6^{\prime}$ are the constants only depending on $\varepsilon$ and $K$.

\medskip

\noindent \textbf{Beyond first step: label estimation}\\

Similarly as in the first iteration we know $\Psi_{i}^{(s)} = Z_i$ after threshold if any only if the following inequality  \[ \sum_{j:j\neq i} (\Psi_{j z_i}^{(s-1)} - \Psi_{ja}^{(s-1)}) (A_{ij} -  \theta_i^{(s-1)} \theta_j^{(s-1)} \lambda^{(s)}) > 0
\] holds for any $a \in [K]$ such that $a \neq z_i$. By the similar decomposition for the left-hand side of this inequality, we can define the signal and noise terms in the $s$-th iteration as
\begin{align*}
   \text{SIG}_{ia}^{(s-1)} :=& \sum_{j:j\neq i} (\Psi_{j z_i}^{(s-1)} - \Psi_{ja}^{(s-1)}) (E[A_{ij}] - \theta_i \theta_j \lambda^{(s)}), \\
    r_{ia}^{(s-1)} :=& \sum_{j:j\neq i} (\Psi_{j z_i}^{(s-1)} - \Psi_{ja}^{(s-1)}) ( A_{ij} - E[A_{ij}]), \\
    R_{ia}^{(s-1)} :=& \sum_{j:j\neq i} (\Psi_{j z_i}^{(s-1)} - \Psi_{ja}^{(s-1)}) (\theta_i \theta_j \lambda^{(s)} - \theta_i^{(s-1)} \theta_j^{(s-1)} \lambda^{(s)}).
\end{align*}

To quantify the signal term, first we state the sign of two expressions. It follows from Lemma~\ref{lemma:Propositions of the parameter lambda (dcsbm)} and \eqref{estimatino of ts and lambdas (dcsbm)} that 
\begin{equation}\label{sign of two expressions （dcsbm)}
    p - \lambda^{(s)} >0, \quad q - \lambda^{(s)}<0.
\end{equation}
Second we give the upper bounds of two expressions. By the triangle inequality, from \eqref{induction assumption (dcsbm)} we have: 
\begin{align*}
    \Big|\sum_{j:j\neq i,z_j = z_i} (\Psi_{j z_i}^{(s-1)} \theta_j - \Psi_{ja}^{(s-1)}\theta_j)  -  \sum_{j:j\neq i,z_j = z_i} (Z_{j z_i} \theta_j - Z_{ja}\theta_j )  \Big| \leq r_1 || \Psi^{(s-1)} - Z||_1 = r_1 nG_{s-1},
\end{align*}
where $r_1 >0$ is an absolute constant.
Since $ \sum_{j:j\neq i,z_j = z_i} (Z_{j z_i} \theta_j - Z_{ja} \theta_j) = \frac{n}{K} + \mathcal{E} $ where $|\mathcal{E}| \leq r_1 n^{2/3}$ for some absolute constant $r_1$, we have \begin{align}\label{upper bound of first expressions (dcsbm)}
    \Big|\sum_{j:j\neq i,z_j = z_i} (\Psi_{j z_i}^{(s-1)} \theta_j - \Psi_{ja}^{(s-1)} \theta_j ) - n/K \Big| \leq n r_1 (G_{s-1} +n^{-1/3}).
\end{align}
Similarly we can derive the other upper bound: 
\begin{align}\label{upper bound of second expressions (dcsbm)}
    \Big|\sum_{j:j\neq i,z_j \neq z_i} (\Psi_{j z_i}^{(s-1)} \theta_j - \Psi_{ja}^{(s-1)} \theta_j ) - (-n/K ) \Big| \leq n r_1 (G_{s-1} +n^{-1/3}) .
\end{align}
Therefore from \eqref{sign of two expressions （dcsbm)}, \eqref{upper bound of first expressions (dcsbm)} and \eqref{upper bound of second expressions (dcsbm)}
we can quantify the signal term by
\begin{align*}
    SIG_{ia}^{(s-1)} = \sum_{j:j\neq i,z_j = z_i} (\Psi_{j z_i}^{(s-1)} - \Psi_{ja}^{(s-1)})(p - \lambda^{(s)})\theta_i\theta_j + \sum_{j:j\neq i, z_j \neq z_i } (\Psi_{j z_i}^{(s-1)} - \Psi_{ja}^{(s-1)})(q - \lambda^{(s)})\theta_i\theta_j \\
    \geq \theta_i \left( n/K- n r_1 (G_{s-1} +n^{-1/3}) \right)(p - \lambda^{(s)}) + \theta_i \left(-n/K+n r_1 (G_{s-1} +n^{-1/3}) \right)(q - \lambda^{(s)}) \\
    \geq \theta_i [\frac{n}{K} - n r_1 (G_{s-1} +n^{-1/3})](p-q) \geq \theta_i[\frac{n}{K} - n r_1 (\exp(-cd) +n^{-1/3})](p-q),
\end{align*}
where $r_1$ is an absolute constant. Let us define a small constant $\delta^{\prime}$:
\[ \delta^{\prime} = \frac{C_3(p-q)}{2 [p + (K-1)q]},
\]
where $\delta^{\prime}$ satisfies $2 \delta^{\prime} d_i < SIG_{ia}^{(s-1)}$ and $C_3$ is an absolute constant. We will show $R_{ia}^{(s-1)}$ and $r_{ia}^{(s-1)}$ can be bounded by $\delta^{\prime} d_i$ (at least for most of the nodes).

To bound the noise term $R_{ia}^{(s-1)}$, notice that
\begin{align*}
        |R_{ia}^{(s-1)}| &= \Big| \theta_i \sum_{j:j\neq i} (\Psi_{j z_i}^{(s-1)} - \Psi_{ja}^{(s-1)}) \theta_j \lambda^{(s)}  -  \theta_i^{(s-1)}\sum_{j:j\neq i} (\Psi_{j z_i}^{(s-1)} - \Psi_{ja}^{(s-1)}) \theta_j^{(s-1)} \lambda^{(s)} \Big| \\
        & \leq r_1 \theta_i n G_{s-1} \rho_n \leq \delta^{\prime} d_i ,
\end{align*}
where $r_1$ is an absolute constant. The last inequality holds since we assume that $d$ is larger than a constant that depending on $\varepsilon$ and $K$.

To bound the noise term $r_{ia}^{(s-1)}$ we can decompose it by
\begin{align*}
r^{(s-1)}_{ia} = \sum_{j:j \neq i} (A _{i j} - A^{\prime}_{i j} )(\Psi^{(s-1)}_{j z_i} - \Psi^{(s-1)}_{ja} - Z_{j z_i} +Z_{ja} ) + \sum_{j:j \neq i} (A^{\prime}_{i j} - P_{i j} )(\Psi^{(s-1)}_{j z_i} \\- \Psi^{(s-1)}_{ja} - Z_{j z_i} +Z_{ja} ) + \sum_{j:j \neq i} (A_{i j} - P_{i j} )(Z_{j z_i} - Z_{ja})  =: r_{1ia}^{(s-1)} + r_{2ia}^{(s-1)} + r_{3ia}.
\end{align*}
If $r_{ia}^{(s-1)}>\delta^{\prime} d_i$, it implies that at least one components above is larger than $\delta^{\prime} d_i/3$. Let $\delta_1 = \delta_2 = \delta_3 = \delta^{\prime}/3$, we have the fact that
\begin{align*}
\mathbbm{1}\Big(\bigcup_{a:a \neq z_i}  
 \Big\{ |r_{ia} ^{(s-1)} | > \delta^{\prime} d_i
 \Big\} \Big) \leq \mathbbm{1}\Big(\bigcup_{a:a \neq z_i}  
 \Big\{ |r_{1ia} ^{(s-1)} | > \delta_1 d_i
 \Big\} \Big)+\mathbbm{1}\Big(\bigcup_{a:a \neq z_i}  
 \Big\{ |r_{2ia} ^{(s-1)} | > \delta_2 d_i
 \Big\} \Big) \\+\mathbbm{1}\Big(\bigcup_{a:a \neq z_i}  
 \Big\{ |r_{3ia}  | > \delta_3 d_i
 \Big\} \Big).
 \end{align*}
 
By using the similar argument as in the first iteration, the following event 
\begin{equation}\label{first bound in Psis (dcsbm)}
\mathcal{A}_2 = \left\{\sum_{i=1}^n \mathbbm{1}\Big( \bigcup_{a:a \neq z_i}  \Big\{ |r_{3ia}| > \delta_3 d_i
 \Big\} \Big) \leq 16(K-1) \sum_i \exp\left(\frac{-\delta_3^2d_i}{32+8\delta_3/3}\right) \right\} 
 \end{equation}
occurs with probability at least $1 - n^{-r_1}$ for some absolute constant $r_1$.

To deal with the first noise component $r_{1ia}^{(s-1)}$, first notice that the following inequality holds by Lemma~\ref{lemma:quantity of removed edges (dcsbm)}:\[
 \sum_i \sum_a |r_{1ia}^{(s-1)}| \leq  \sum_i \sum_a \sum_{j:j \neq i} 2(A_{ij} - A_{ij}^{\prime}) \leq 4nK \exp(- C_1 d).
\]
This implies
\[\sum_{i}\sum_{a} \mathbbm{1}(|r^{(s-1)}_{1ia}| > \delta_1 d_i) \leq \frac{ 4nK\exp(-C_1 d)}{\delta_1 d_{\text{min}}}.
\]
Consequently we have
\begin{equation}\label{second bound in Psis (dcsbm)}
\sum_{i=1}^n \mathbbm{1}\Big(\bigcup_{a:a \neq z_i}  
 \Big\{ |r_{1ia} ^{(s-1)} | > \delta_1 d_i
 \Big\} \Big) \leq \sum_i \sum_a \mathbbm{1}(|r^{(s-1)}_{1ia}| > \delta_1 d_i) \leq \frac{ 4nK\exp(-C_1 d)}{\delta_1 d_{\text{min}}}.
 \end{equation}

To deal with the second noise component $r_{2ia}^{(s-1)}$, first notice that the following inequality holds by \eqref{induction assumption (dcsbm)} and Lemma~\ref{lemma:concentration of regularized adjacency matrices}:
\begin{align*}
\sum_i \sum_a |r_{2ia}^{(s-1)}|^2 &= \sum_a \sum_{i} \left( \sum_{j: j \neq i} (A^{\prime}_{i j} - P_{i j} )(\Psi^{(s-1)}_{j z_i} - \Psi^{(s-1)}_{ja} -Z_{j z_i} +Z_{ja} ) \right)^2 \\
& \leq \sum_a ||A^{\prime} - P||^2 \cdot 2nG_{s-1} \leq 2 C_2 d K nG_{s-1}.
\end{align*}
This implies \[
\sum_{i}\sum_{a} \mathbbm{1}(|r^{(s-1)}_{2ia}| > \delta_2 d_i) \leq \frac{ 2C_2 dK nG_{s-1} }{\delta_2^2 d_{\text{min}}^2},
\]
and consequently,
\begin{equation}\label{third bound in Psis (dcsbm)}
 \sum_{i=1}^n \mathbbm{1}\Big(\bigcup_{a:a \neq z_i}  
 \Big\{ |r_{2ia} ^{(s-1)} | > \delta_1 d_i
 \Big\} \Big) \leq \sum_{i}\sum_{a} \mathbbm{1}(|r^{(s-1)}_{2ia}| > \delta_2 d_i) \leq \frac{ 2 C_2 d K nG_{s-1} }{\delta_2^2 d_{\text{min}}^2}.
 \end{equation}
Therefore, it follows from \eqref{first bound in Psis (dcsbm)}, \eqref{second bound in Psis (dcsbm)} and \eqref{third bound in Psis (dcsbm)} that 
\begin{align*}
||\Psi^{(s)} - Z||_1 \leq 2 \sum_{i} \mathbbm{1}\Big( \bigcup_{a:a \neq z_i}  \Big\{ |r_{ia} ^{(s-1)} | > \delta^{\prime} d_i
 \Big\} \Big) 
 \leq  \frac{4C_2 d K}{ \delta_2^2 d_{\min}^2} || \Psi^{(s-1)}- Z ||_1 \\+ \frac{8nK\exp(-C_1 d)}{\delta_1 d_{\text{min}}} +  32(K-1)\sum_i \exp\left(\frac{-\delta_3^2d_i}{32+8\delta_3/3}\right).
\end{align*}
Since we assume that $d$ is larger than a constant that depending only on $\epsilon$ and $K$, the inductive assumption in \eqref{induction assumption (dcsbm)} automatically satisfied for $\Psi^{(s)}$ as long as we choose a suitable constant $c$ such that $c < \frac{\delta^2}{32 + 8\delta / 3}$, $c<C_1$ and $c< \frac{\delta_3^2}{32 + 8\delta_3 / 3}$. For example, we can choose $c = \frac{1}{2} \min\{\frac{\delta^2}{32 + 8\delta / 3}, C_1, \frac{\delta_3^2}{32 + 8\delta_3 / 3}\}$.

\medskip

\noindent \textbf{Beyond first step: Degree parameter estimation}\\

From \eqref{update of theta in proof} the update of $\theta$ at $s$-iteration is: 
\[ \frac{\sum_j A_{i j}}{ \theta_i^{(s)}} = \sum_{j:j \neq i } \sum_{a \neq b} \Psi_{i a}^{(s-1)} \Psi_{j b}^{(s-1)} \theta_j^{(s-1)} q^{(s)} + \sum_{j:j \neq i } \sum_{a} \Psi_{i a}^{(s-1)} \Psi_{j a}^{(s-1)} \theta_j^{(s-1)} p^{(s)}, \quad i \in [n].\]
We will analyze $\theta^{(s)}$ in each estimated community. First we fix $a$ and analyze $\theta_i^{(s)}$ when $\Psi_{ia}^{(s-1)} = 1$. It follows from \eqref{estimation of ps (dcsbm)} and \eqref{estimation of qs (dcsbm)} that the above update equation can be simplified to:
\begin{align}
\label{bound sth update of theta}
\nonumber \frac{\sum_j A_{i j}}{ \theta_i^{(s)}}  &= \sum_{j:j \neq i } \sum_{b: b\neq a} \Psi_{j b}^{(s-1)} \theta_j^{(s-1)} q^{(s)} + \sum_{j:j \neq i }  \Psi_{j a}^{(s-1)} \theta_j^{(s-1)} p^{(s)} \\
&= \frac{n(K-1)}{K}q^{(s)} + \frac{np^{(s)}}{K} - \theta_i^{(s-1)}p^{(s)} = d + \mathcal{E},
\end{align}
where $|\mathcal{E}|< n \rho_n [ \exp(- c_0 d) + c_0^{\prime} n^{-1/3}]$ for constants $c_0$ and $c_0^{\prime}$  only depending on $\varepsilon$ and $K$. Therefore, it follows from \eqref{bound of sum Di} and \eqref{bound sth update of theta} that 
\begin{equation}
\label{theta_s from theta_0}
 \theta_i^{(s)} = \theta_i^{(0)} (1 + \mathcal{E} ),
 \end{equation}
where $|\mathcal{E}|< \exp(- c_0 d) + c_0^{\prime} n^{-1/3}$. Since we further update $\theta_i^{(s)}$ by \eqref{threshold theta in dcsbm in proof}, we only need show $\sum_i \Psi_{ia}^{(s)} \theta_i^{(s)}$ is close to $n/K$. For this purpose we can just show $\sum_i \Psi_{i a}^{(s)} \theta_i^{(0)}$ is close to $n/K$ according to \eqref{theta_s from theta_0}. To evaluate $\sum_i \Psi_{i a}^{(s)} \theta_i^{(0)}$, first we show that $\sum_{i j} \Psi_{ia}^{(s)} A_{i j}$ is close to $nd/K$.

\begin{align*}
\sum_{i j} \Psi_{ia}^{(s)} A_{i j} &= \sum_{i j} (\Psi_{ia}^{(s)} - Z_{ia}) A_{i j} +  \sum_{i j}  Z_{ia} A_{i j} \\
&= \sum_{i j} (\Psi_{ia}^{(s)} - Z_{ia}) (A_{i j} - A_{ij}^{\prime}) +  \sum_{i j}(\Psi_{ia}^{(s)} - Z_{ia})  A_{ij}^{\prime} +  \sum_{i j}  Z_{ia} A_{i j}.
\end{align*}

By Lemma~\ref{lemma:quantity of removed edges (dcsbm)} the first term is bounded by $4n\exp(-C_1 d)$. In the previous section we proved that the second term is bounded by $n \exp(- cd)$. Therefore, it follows that
\begin{equation}
\label{bound of numerator in theta_0 Psi_s}
\big| \sum_{i j} \Psi_{ia}^{(s)} A_{i j} - \frac{nd}{K} \big| \leq 4n\exp(-C_1 d) + n \exp(-cd) + r_1 n^{5/3}\rho_n,
 \end{equation}
for some absolute constant $r_1$. Then it follows from \eqref{bound of numerator in theta_0 Psi_s}, \eqref{bound of sum Di} and \eqref{theta_s from theta_0} that after rescaling we have:
\[
 |\theta_i^{(s)} - \theta_i^{(0)}| \leq \frac{1}{4}\theta_i^{(0)}
\]
as long as $n\rho_n \ge C$ for some constants $C$ only depending on $\varepsilon$ and $K$. The proof is complete.
\end{proof}

\section{Numerical results for networks generated from DCSBM}\label{sec: simu of dcsbm}
In this section, we provide the results of numerical study in DCSBM. We follow the same network settings as described in Section~\ref{simulation}. In addition, we generate each element of $\theta$ from $\text{Beta}(2,1/3)$, the Beta distribution with parameters $2$ and $1/3$, respectively. For the spectral clustering initialization settings as in Figures~\ref{avd-dc} and \ref{Uavd-dc}, we use regularized spectral clustering \citep{qin2013regularized} in DCSBM. The qualitative behaviors of the two versions of T-BCAVI based on SBM and DCSBM are very similar for networks drawn from SBM and DCSBM, respectively; for a more detailed discussion about these results, see the discussions in Section~\ref{simulation}.

\begin{figure}[ht]
\begin{subfigure}{.33\textwidth}
  \centering
  % include first image
  \includegraphics[width=1\linewidth]{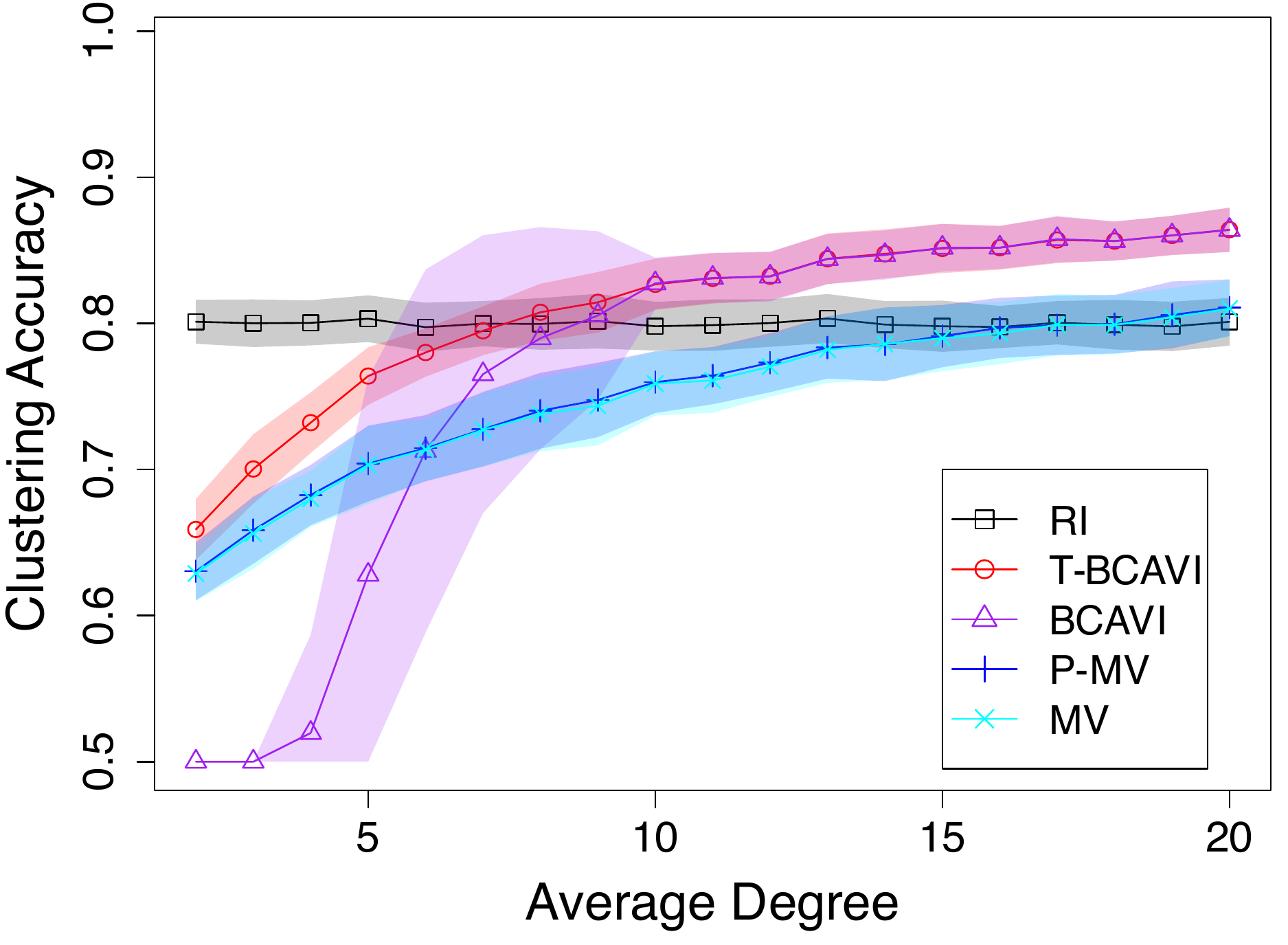}  
  \caption{$\varepsilon = 0.2$}
  \label{eps0.2-dc}
\end{subfigure}
\begin{subfigure}{.33\textwidth}
  \centering
  % include second image
  \includegraphics[width=1\linewidth]{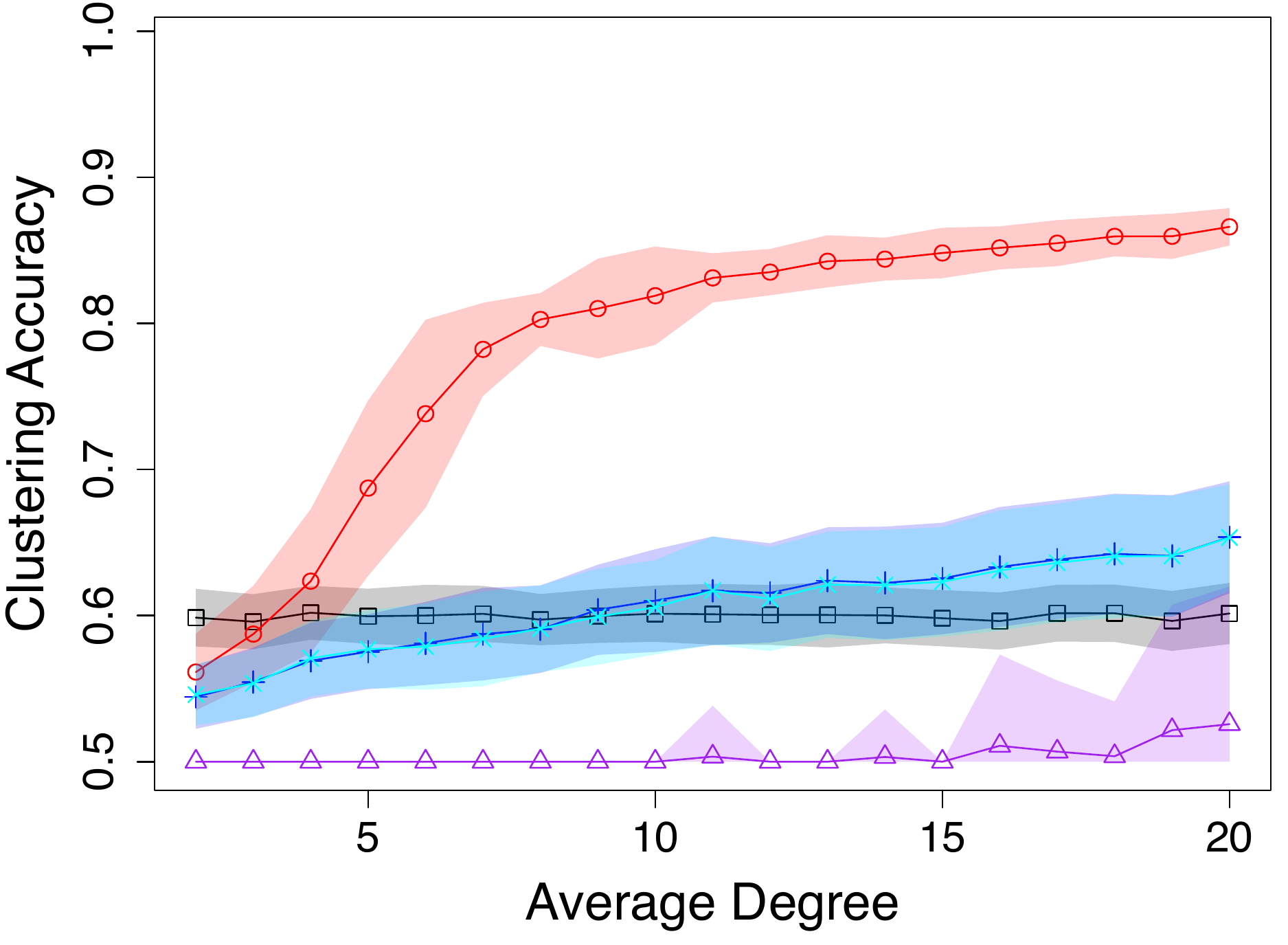}  
  \caption{$\varepsilon = 0.4$}
  \label{eps0.4-dc}
\end{subfigure}
\begin{subfigure}{.33\textwidth}
  \centering
  % include second image
  \includegraphics[width=1\linewidth]{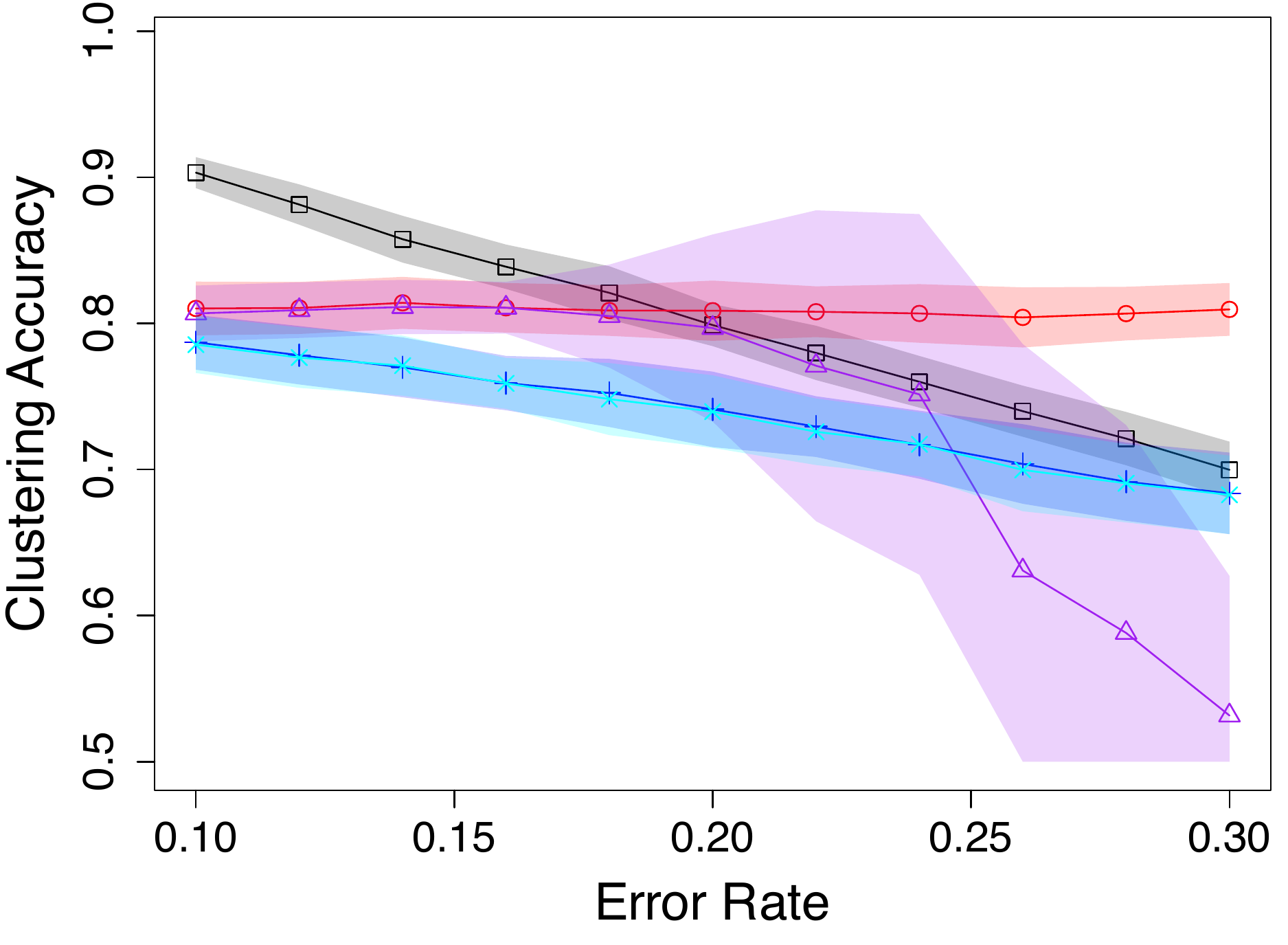}  
  \caption{$d = 8$}
  \label{avd8-dc}
\end{subfigure}
\begin{subfigure}{.33\textwidth}
  \centering
  % include first image
  \includegraphics[width=1\linewidth]{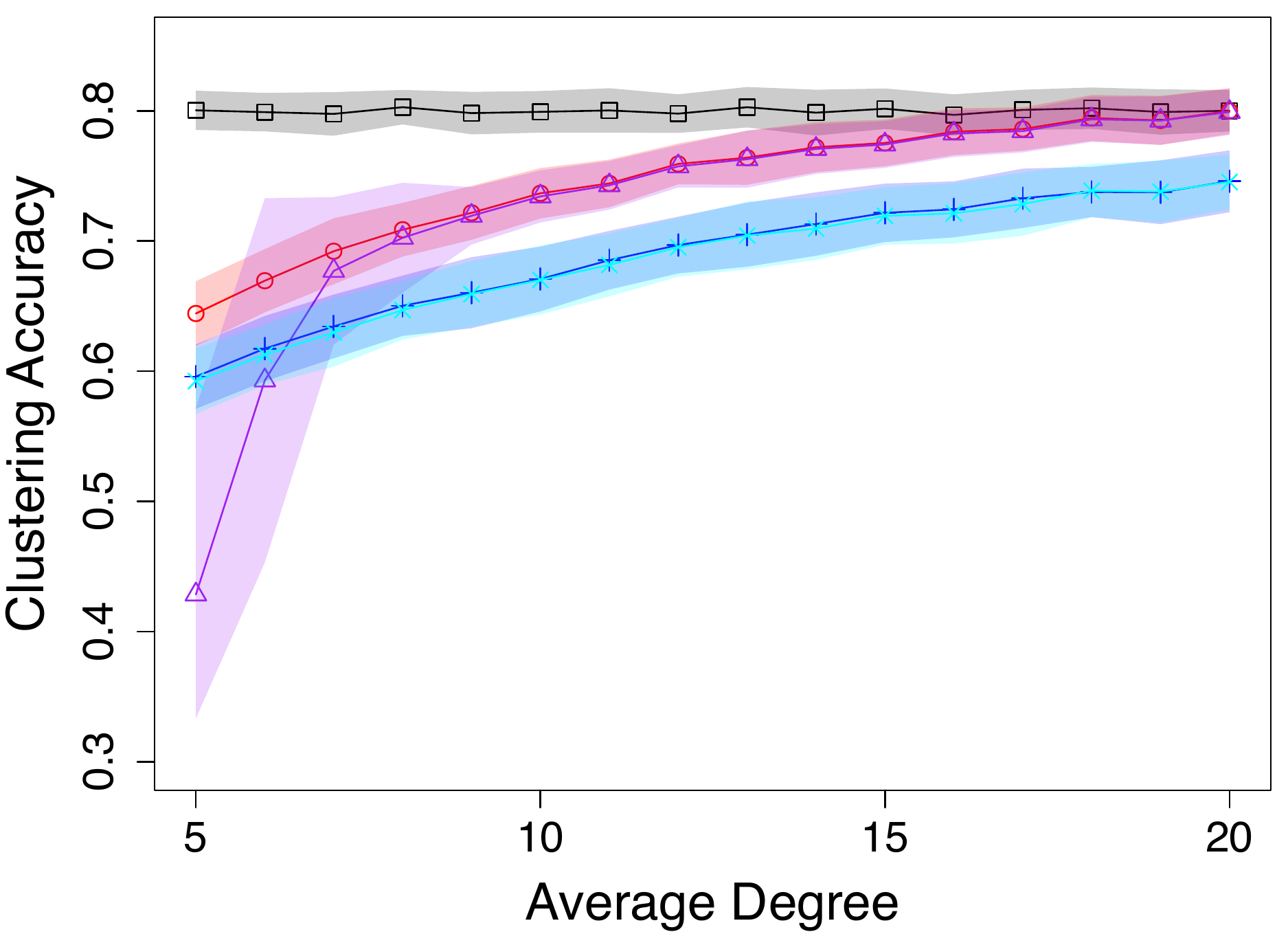}  
  \caption{$\varepsilon = 0.2$}
  \label{eps0.2(k=3)-dc}
\end{subfigure}
\begin{subfigure}{.33\textwidth}
  \centering
  % include second image
  \includegraphics[width=1\linewidth]{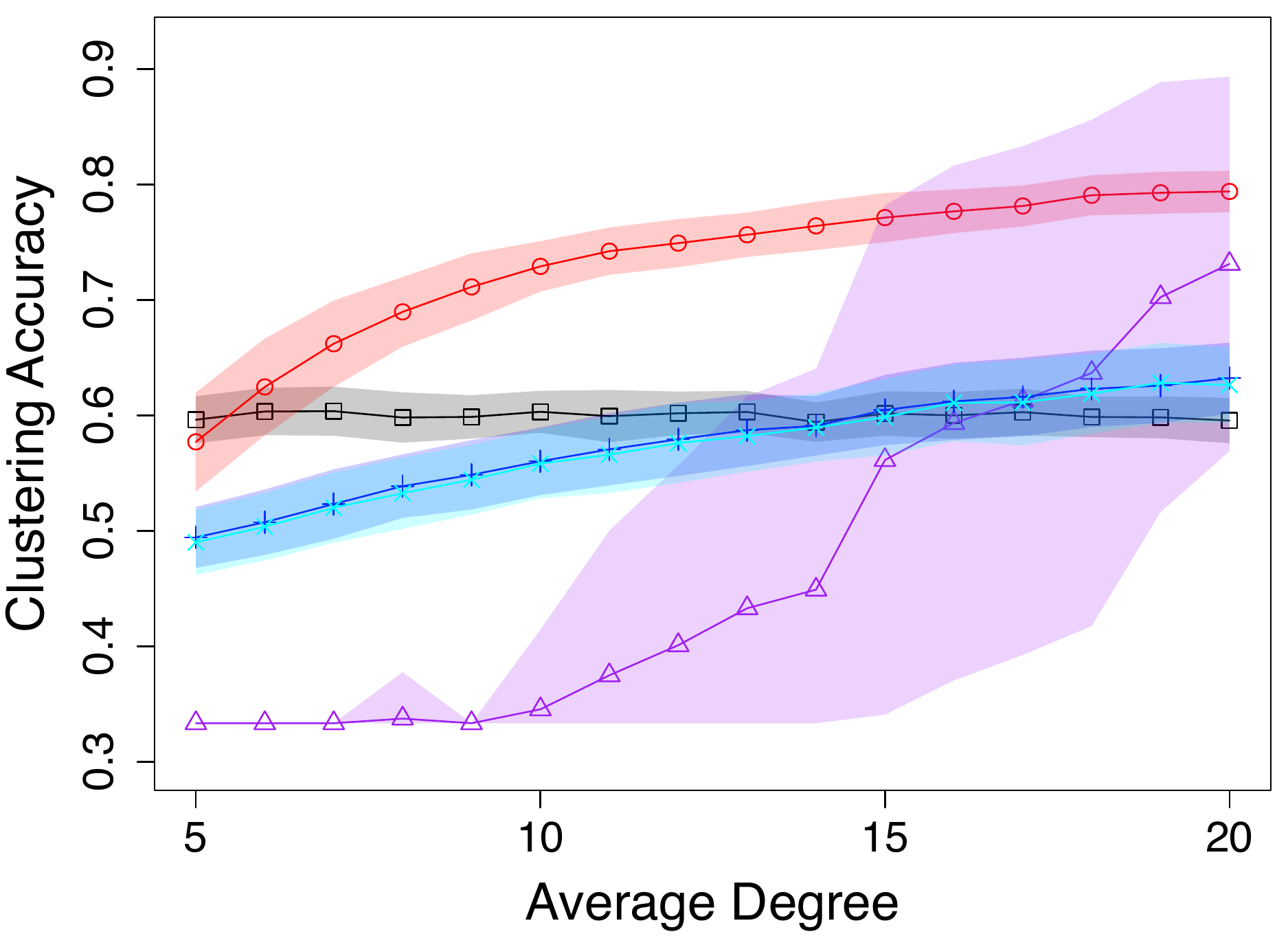}  
  \caption{$\varepsilon = 0.4$}
  \label{eps0.4(k=3)-dc}
\end{subfigure}
\begin{subfigure}{.33\textwidth}
  \centering
  % include second image
  \includegraphics[width=1\linewidth]{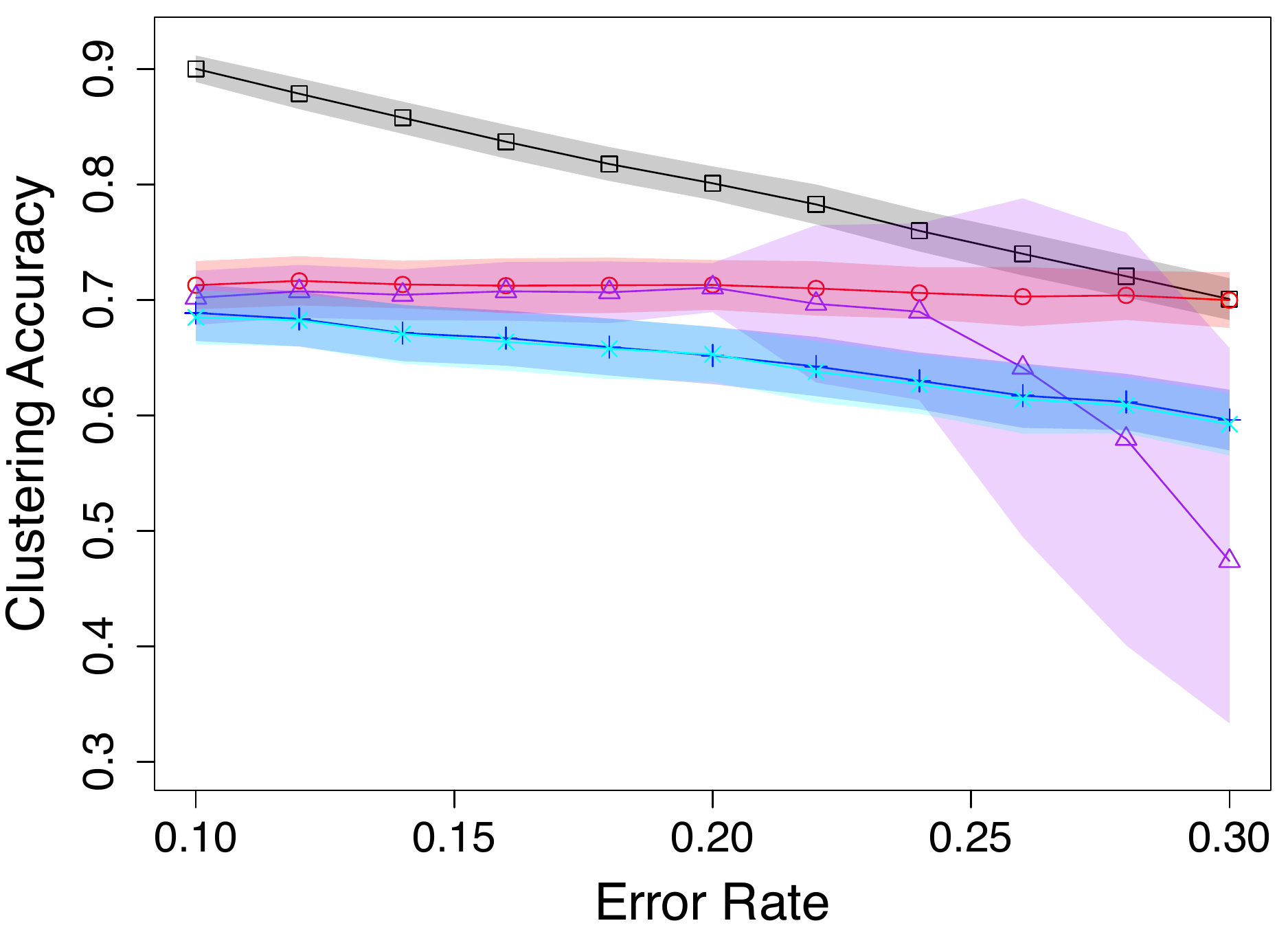}  
  \caption{$d = 8$}
  \label{avd8(k=3)-dc}
\end{subfigure}
\caption{Performance of Threshold BCAVI (T-BCAVI), the classical BCAVI, majority vote (MV), and majority vote with penalization (P-MV) in balanced settings. (a)-(c): Networks are generated from DCSBM with $n=600$ nodes, $K=2$ communities of sizes $n_1 = n_2 = 300$. (d)-(f): Networks are generated from DCSBM with $n=600$ nodes, $K=3$ communities of sizes $n_1 = n_2 = n_3 = 200$. Initializations are generated from true node labels according to Assumption~\protect\ref{ass:perturb} with error rate $\varepsilon$, resulting in actual clustering initialization accuracy (RI) of approximately $1-\varepsilon$.}
\label{eps-dc}
\end{figure}

\begin{figure}[ht]
\begin{subfigure}{.33\textwidth}
  \centering
  % include first image
  \includegraphics[width=1\linewidth]{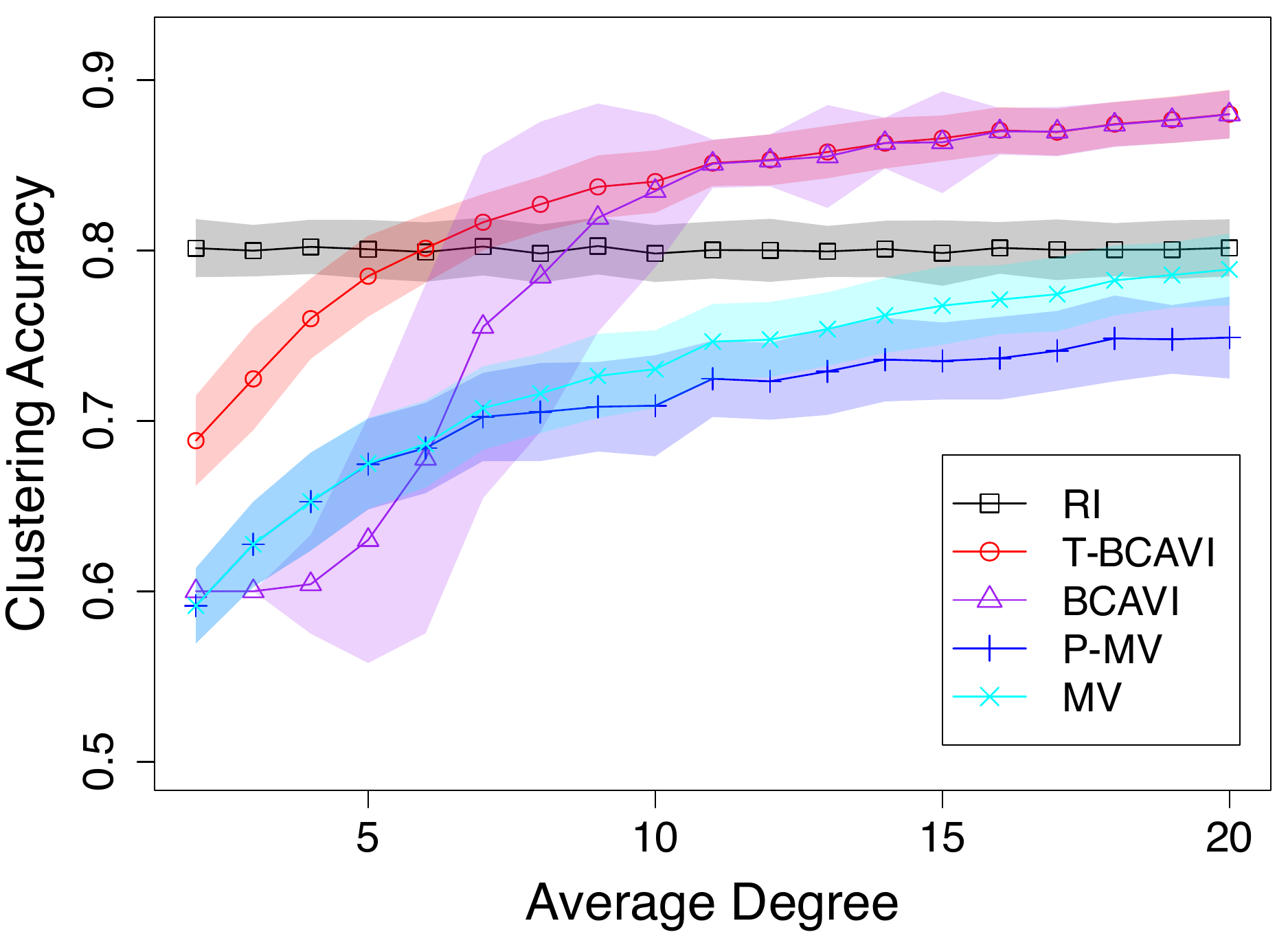}  
  \caption{$\varepsilon = 0.2$}
  \label{Ueps0.2-dc}
\end{subfigure}
\begin{subfigure}{.33\textwidth}
  \centering
  % include second image
  \includegraphics[width=1\linewidth]{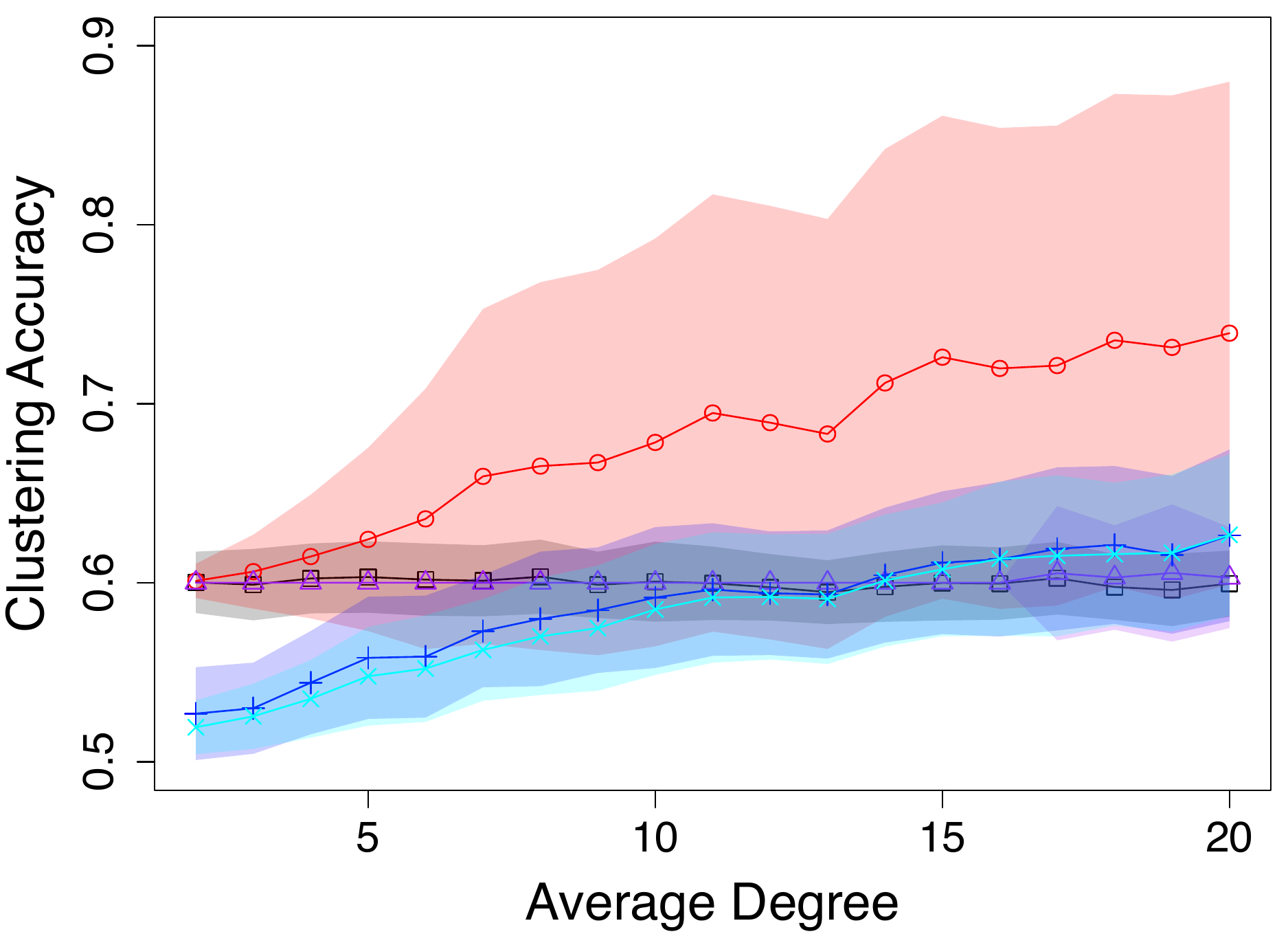}  
  \caption{$\varepsilon = 0.4$}
  \label{Ueps0.4-dc}
\end{subfigure}
\begin{subfigure}{.33\textwidth}
  \centering
  % include second image
  \includegraphics[width=1\linewidth]{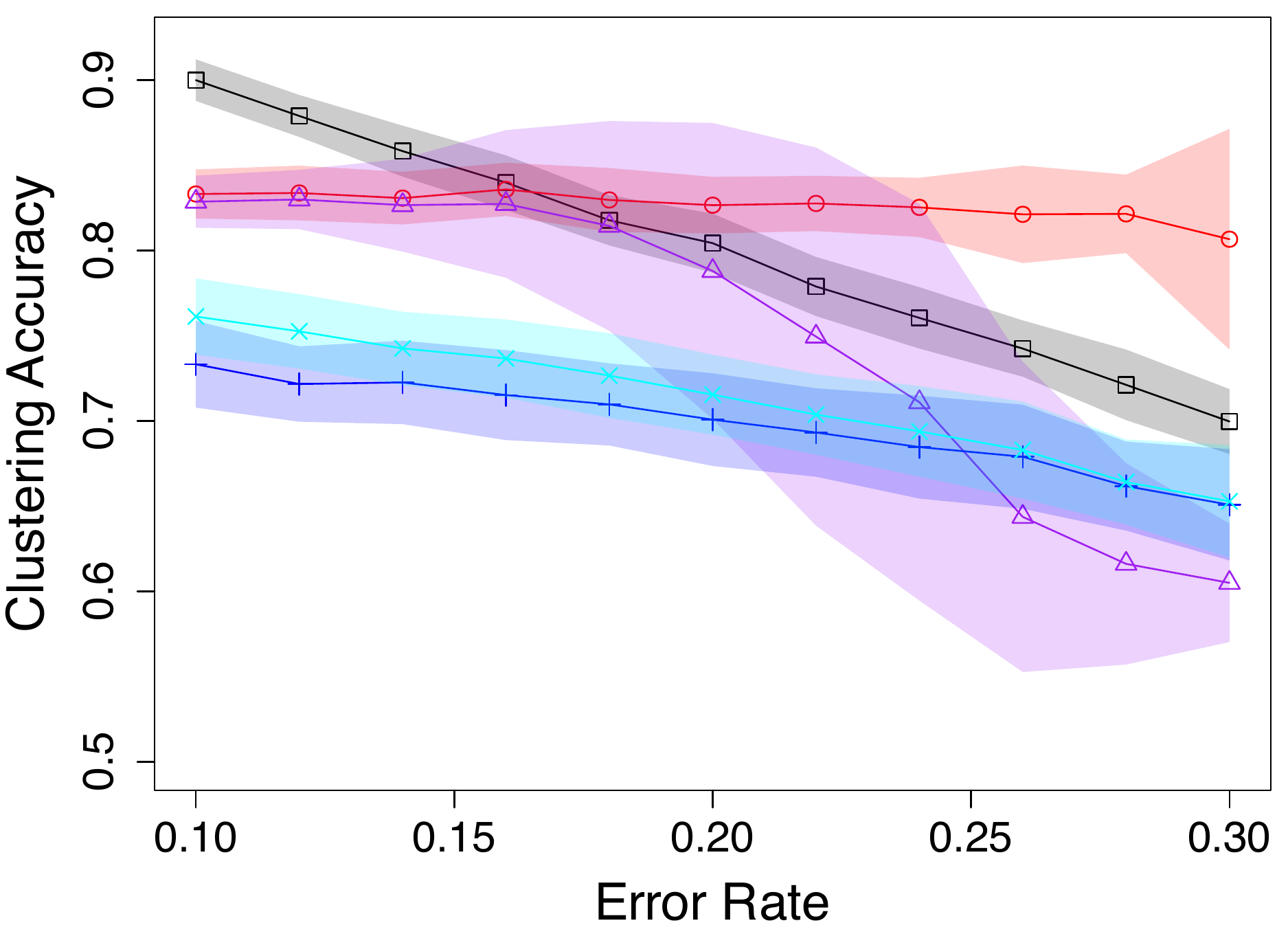}  
  \caption{$d = 8 $}
  \label{Uavd8-dc}
\end{subfigure}
\begin{subfigure}{.33\textwidth}
  \centering
  % include first image
  \includegraphics[width=1\linewidth]{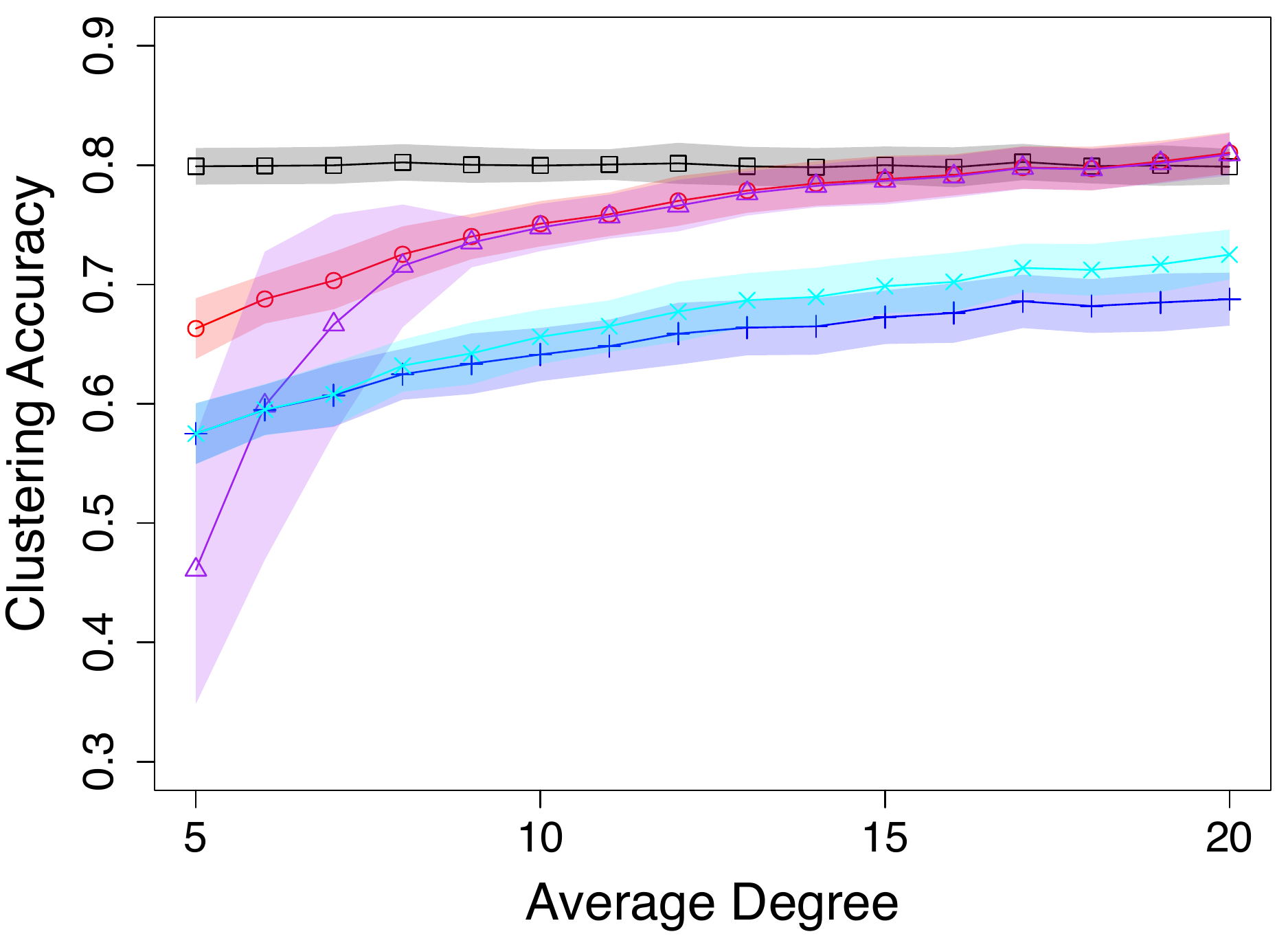}  
  \caption{$\varepsilon = 0.2$}
  \label{Ueps0.2(k=3)-dc}
\end{subfigure}
\begin{subfigure}{.33\textwidth}
  \centering
  % include second image
  \includegraphics[width=1\linewidth]{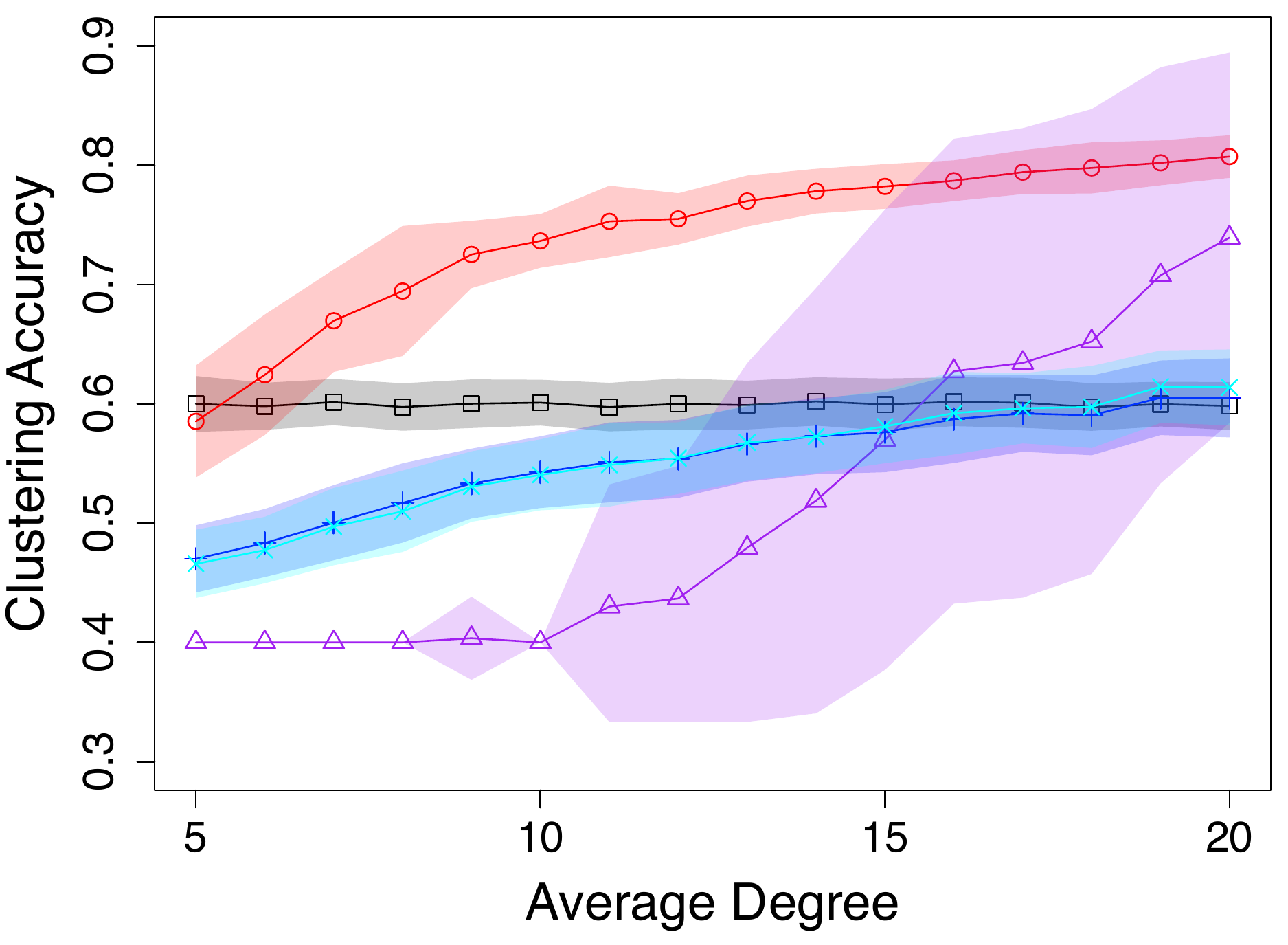} 
  \caption{$\varepsilon = 0.4$}
  \label{Ueps0.4(k=3)-dc}
\end{subfigure}
\begin{subfigure}{.33\textwidth}
  \centering
  % include second image
  \includegraphics[width=1\linewidth]{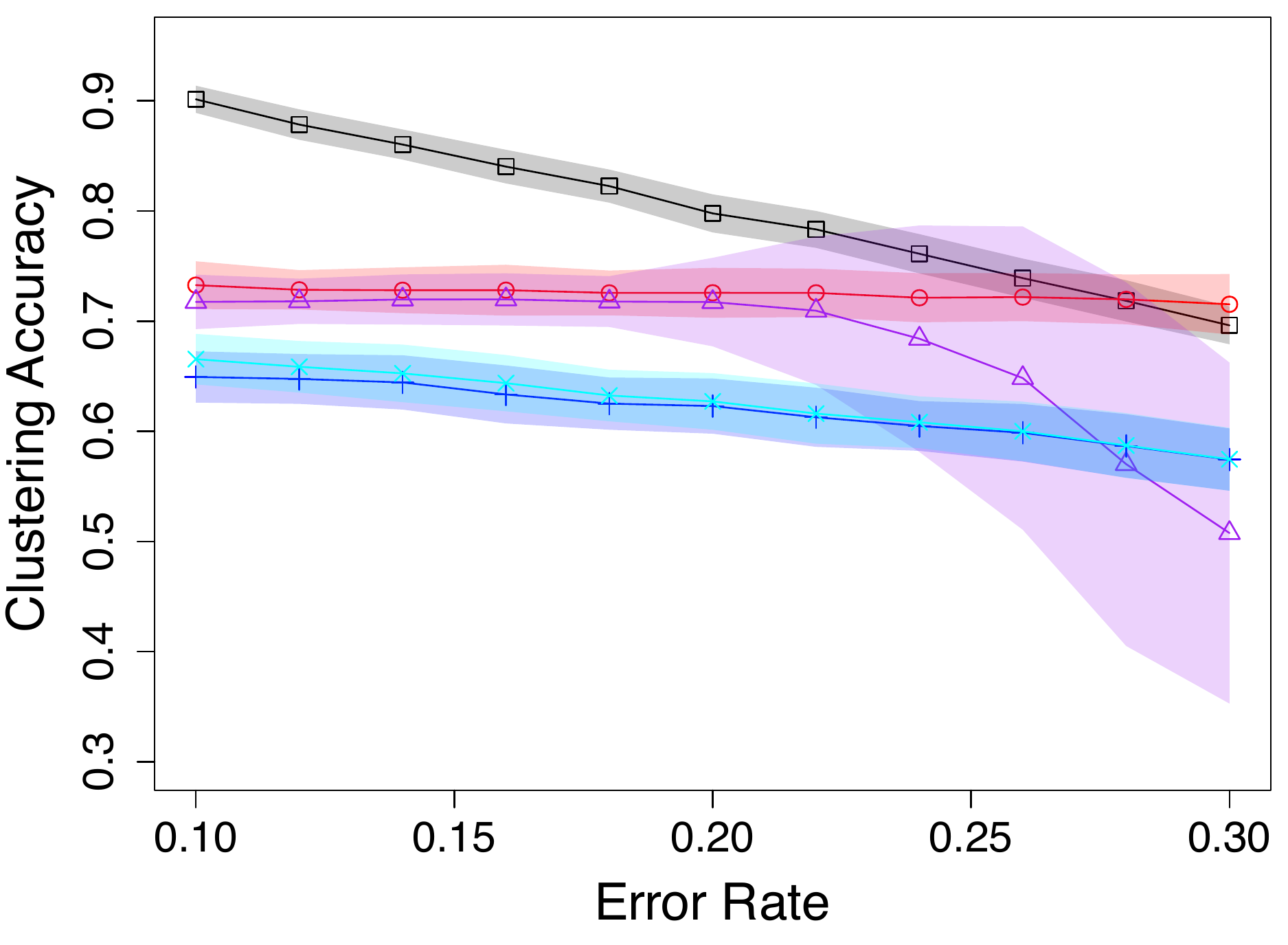}  
  \caption{$d = 8$}
  \label{Uavd8(k=3)-dc}
\end{subfigure}
\caption{Performance of Threshold BCAVI (T-BCAVI), the classical BCAVI, majority vote (MV), and majority vote with penalization (P-MV) in unbalanced settings. (a)-(c): Networks are generated from DCSBM with $n=600$ nodes, $K=2$ communities of sizes $n_1 = 240, n_2 = 360$. (d)-(f): Networks are generated from DCSBM with $n=600$ nodes, $K=3$ communities of sizes $n_1 = 150, n_2 =210, n_3 = 240$. Initializations are generated from true node labels according to Assumption~\protect\ref{ass:perturb} with error rate $\varepsilon$, resulting in actual clustering initialization accuracy (RI) of approximately $1-\varepsilon$.}
\label{Ueps-dc}
\end{figure}

\begin{figure}[ht]
\begin{subfigure}{.33\textwidth}
  \centering
  % include first image
  \includegraphics[width=1\linewidth]{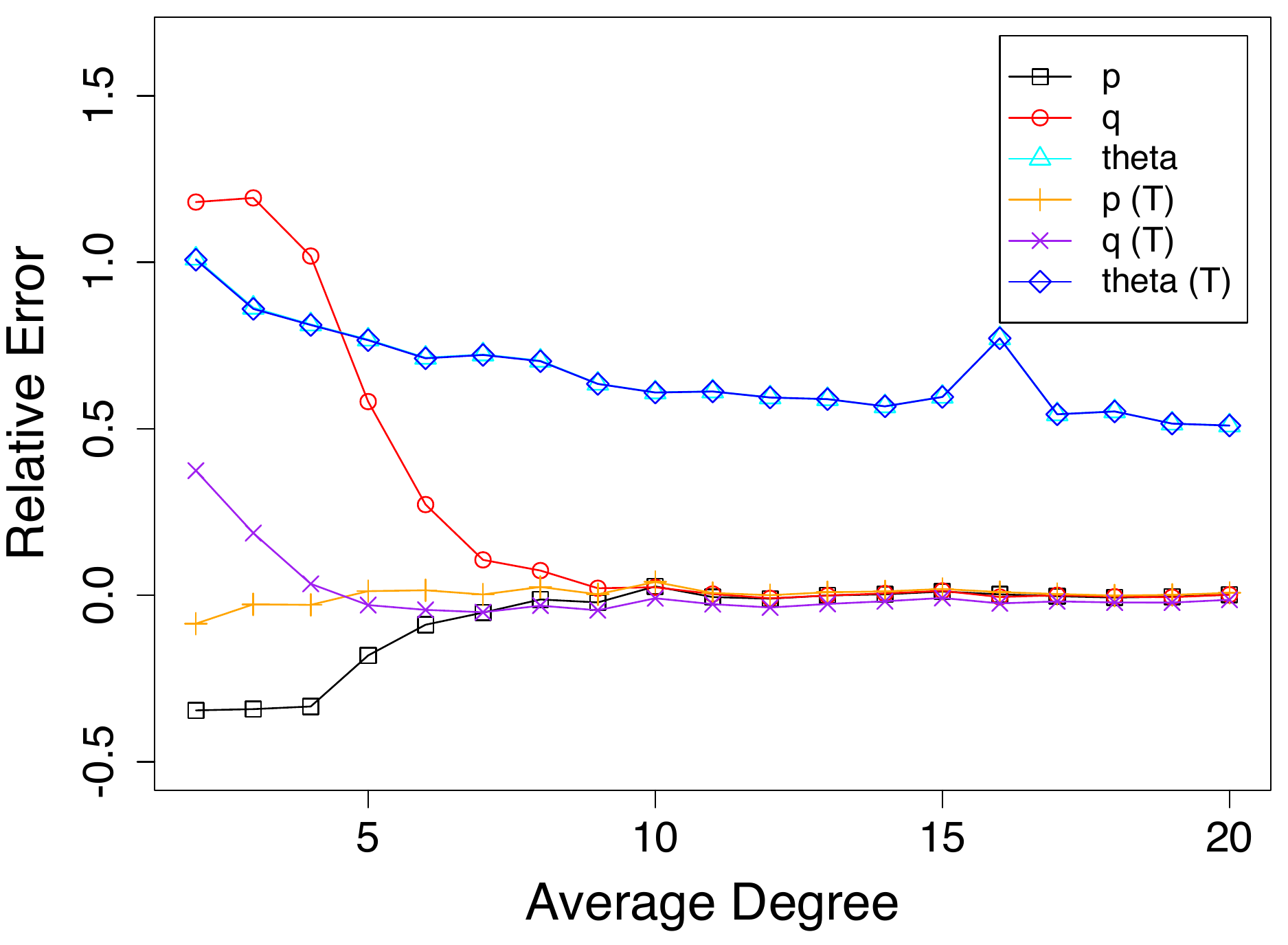}  
  \caption{$\varepsilon = 0.2$}
  \label{REeps0.2-dc}
\end{subfigure}
\begin{subfigure}{.33\textwidth}
  \centering
  % include second image
  \includegraphics[width=1\linewidth]{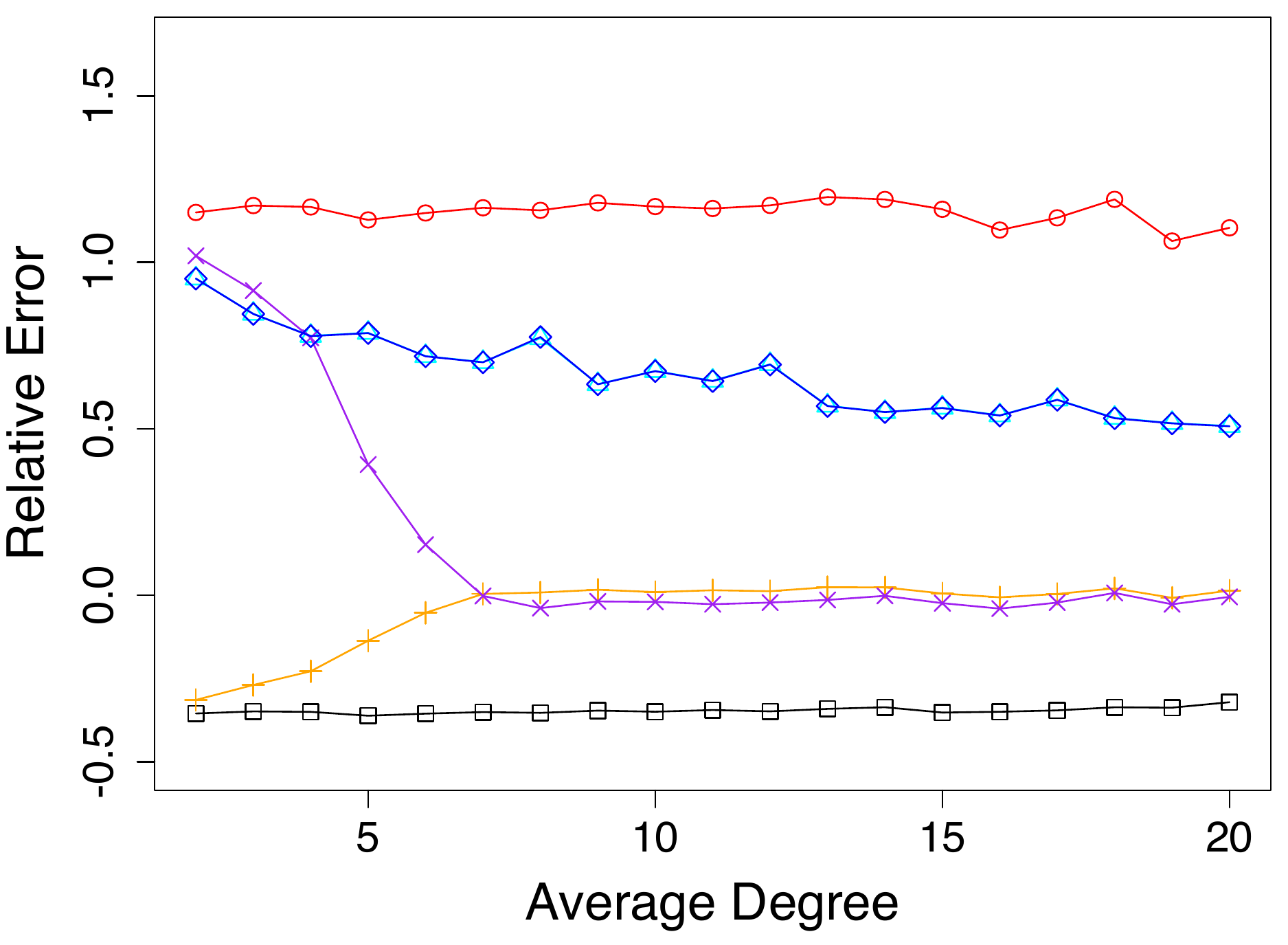}  
  \caption{$\varepsilon = 0.4$}
  \label{REeps0.4-dc}
\end{subfigure}
\begin{subfigure}{.33\textwidth}
  \centering
  % include second image
  \includegraphics[width=1\linewidth]{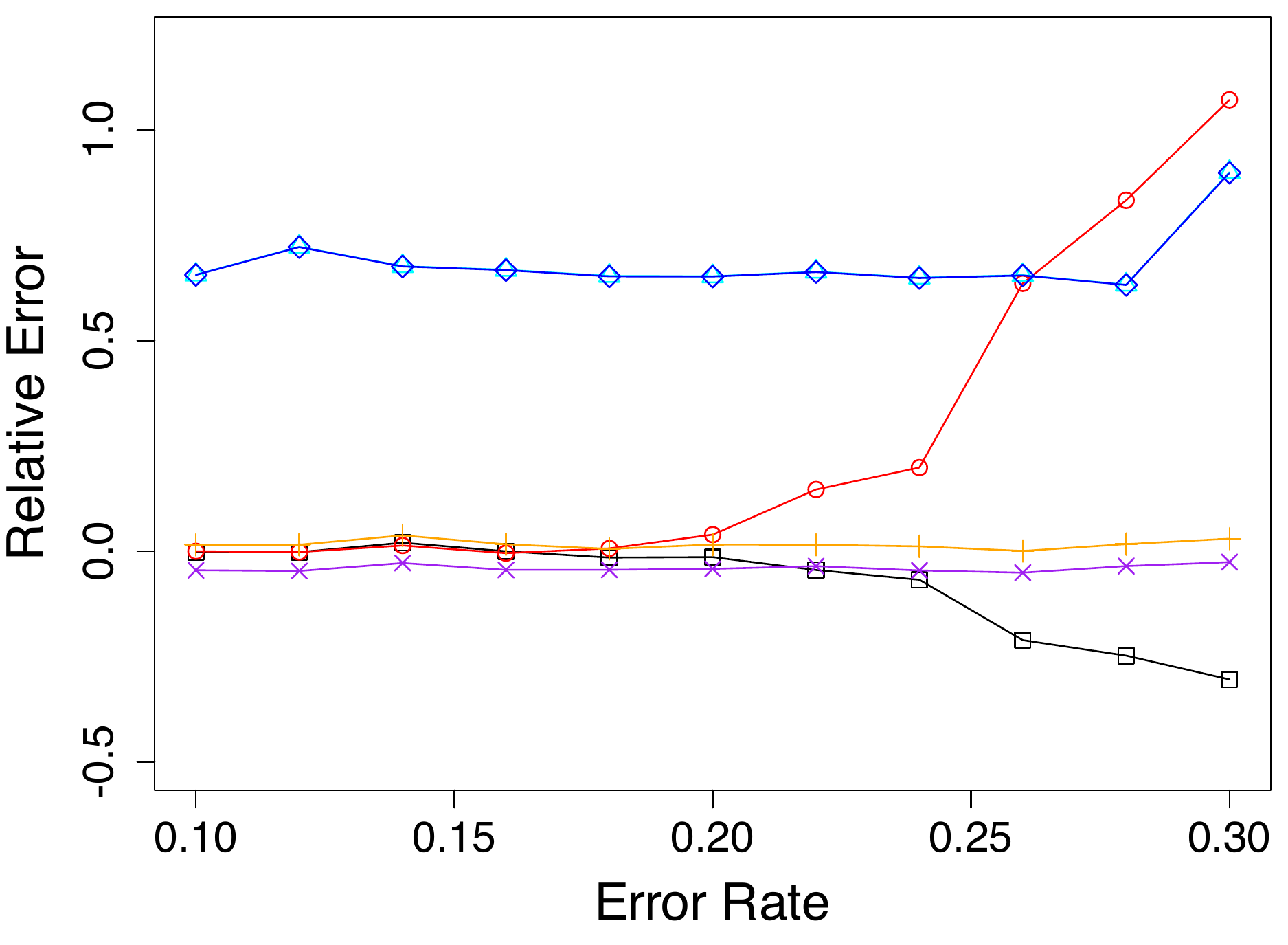}  
  \caption{$d = 8$}
  \label{REavd8-dc}
\end{subfigure}
\begin{subfigure}{.33\textwidth}
  \centering
  % include first image
  \includegraphics[width=1\linewidth]{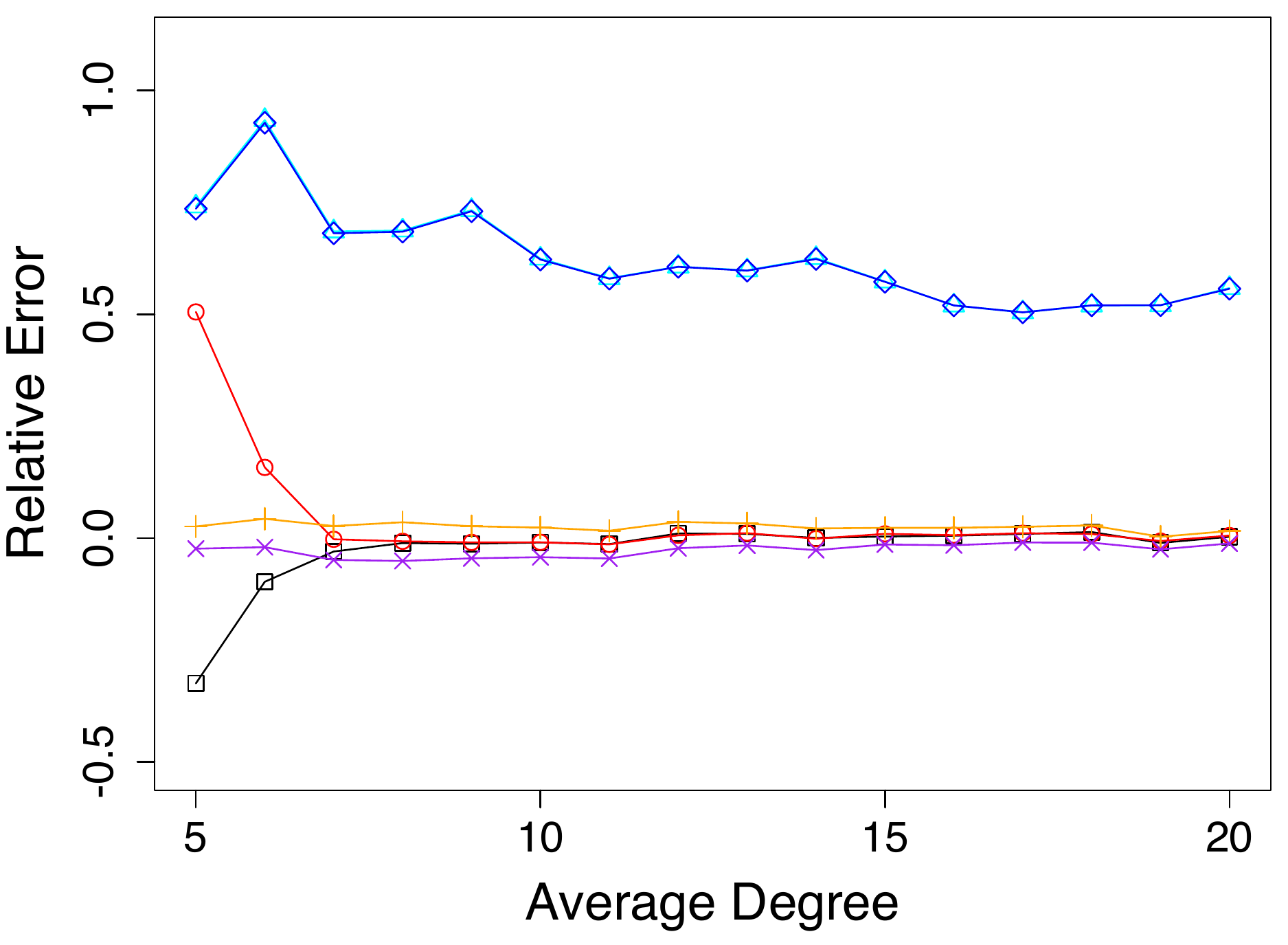}  
  \caption{$\varepsilon = 0.2$}
  \label{REeps0.2(k=3)-dc}
\end{subfigure}
\begin{subfigure}{.33\textwidth}
  \centering
  % include second image
  \includegraphics[width=1\linewidth]{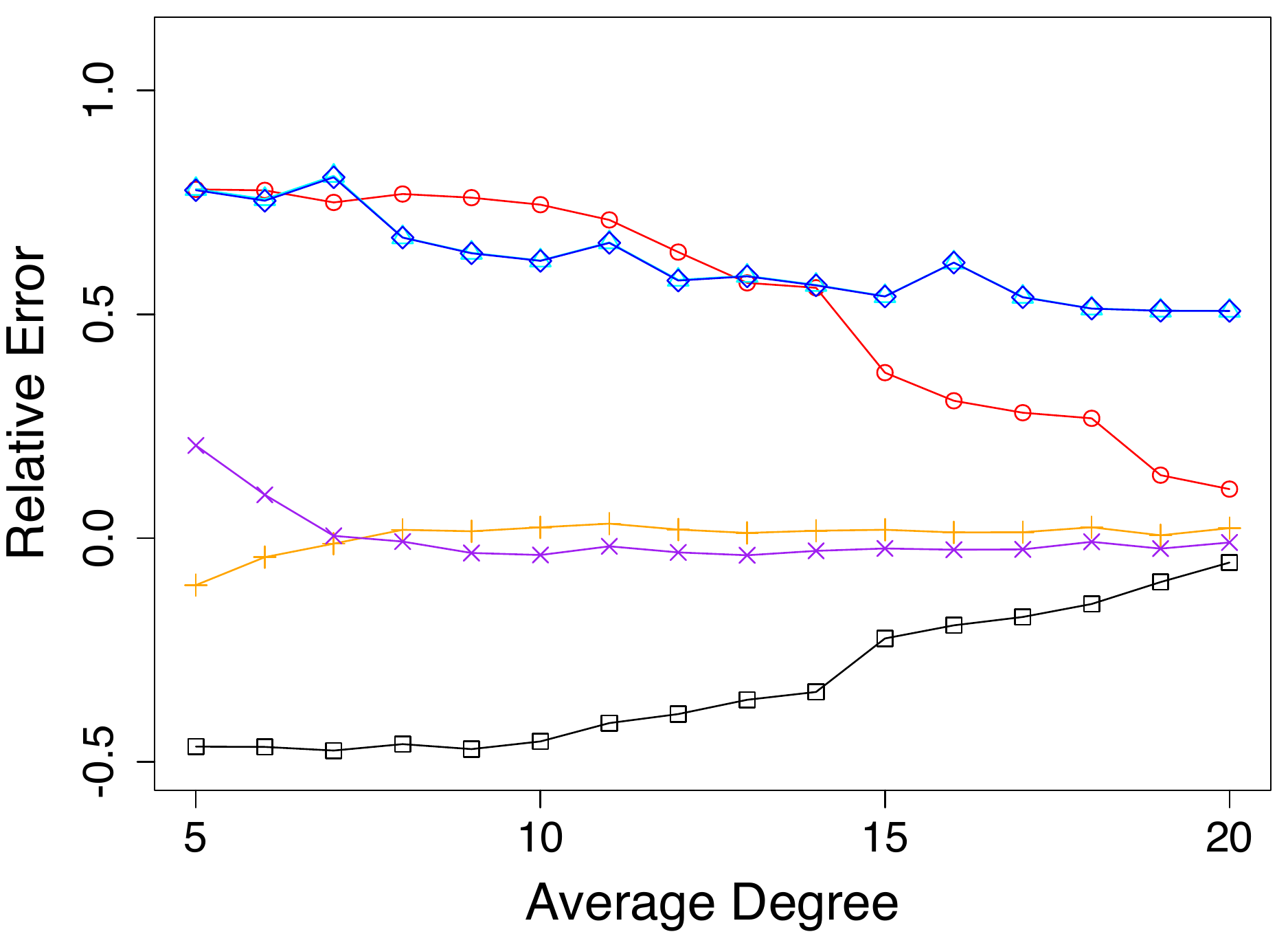}  
  \caption{$\varepsilon = 0.4$}
  \label{REeps0.4(k=3)-dc}
\end{subfigure}
\begin{subfigure}{.33\textwidth}
  \centering
  % include second image
  \includegraphics[width=1\linewidth]{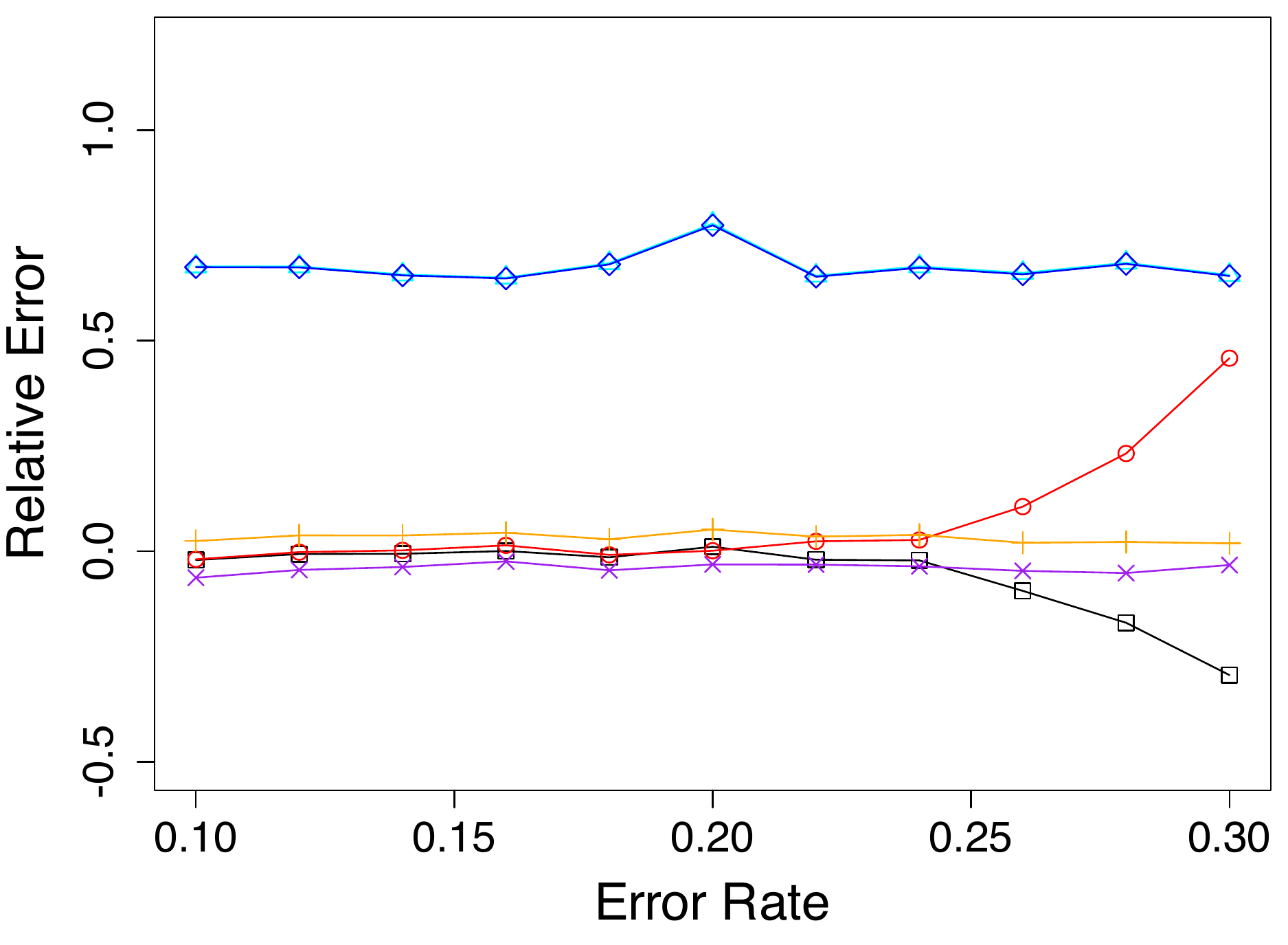}  
  \caption{$d = 8$}
  \label{REavd8(k=3)-dc}
\end{subfigure}
\caption{Relative errors of parameter estimation by Threshold BCAVI (T) and the classical BCAVI in balanced settings. (a)-(c): Networks are generated from DCSBM with $n=600$ nodes, $K=2$ communities of sizes $n_1 = n_2 = 300$. (d)-(f): Networks are generated from DCSBM with $n=600$ nodes, $K=3$ communities of sizes $n_1 = n_2 = n_3 = 200$. Initializations are generated from true node labels according to Assumption~\protect\ref{ass:perturb} with error rate $\varepsilon$.}
\label{REperturb-dc}
\end{figure}

\begin{figure}[ht]
\begin{subfigure}{.33\textwidth}
  \centering
  % include first image
  \includegraphics[width=1\linewidth]{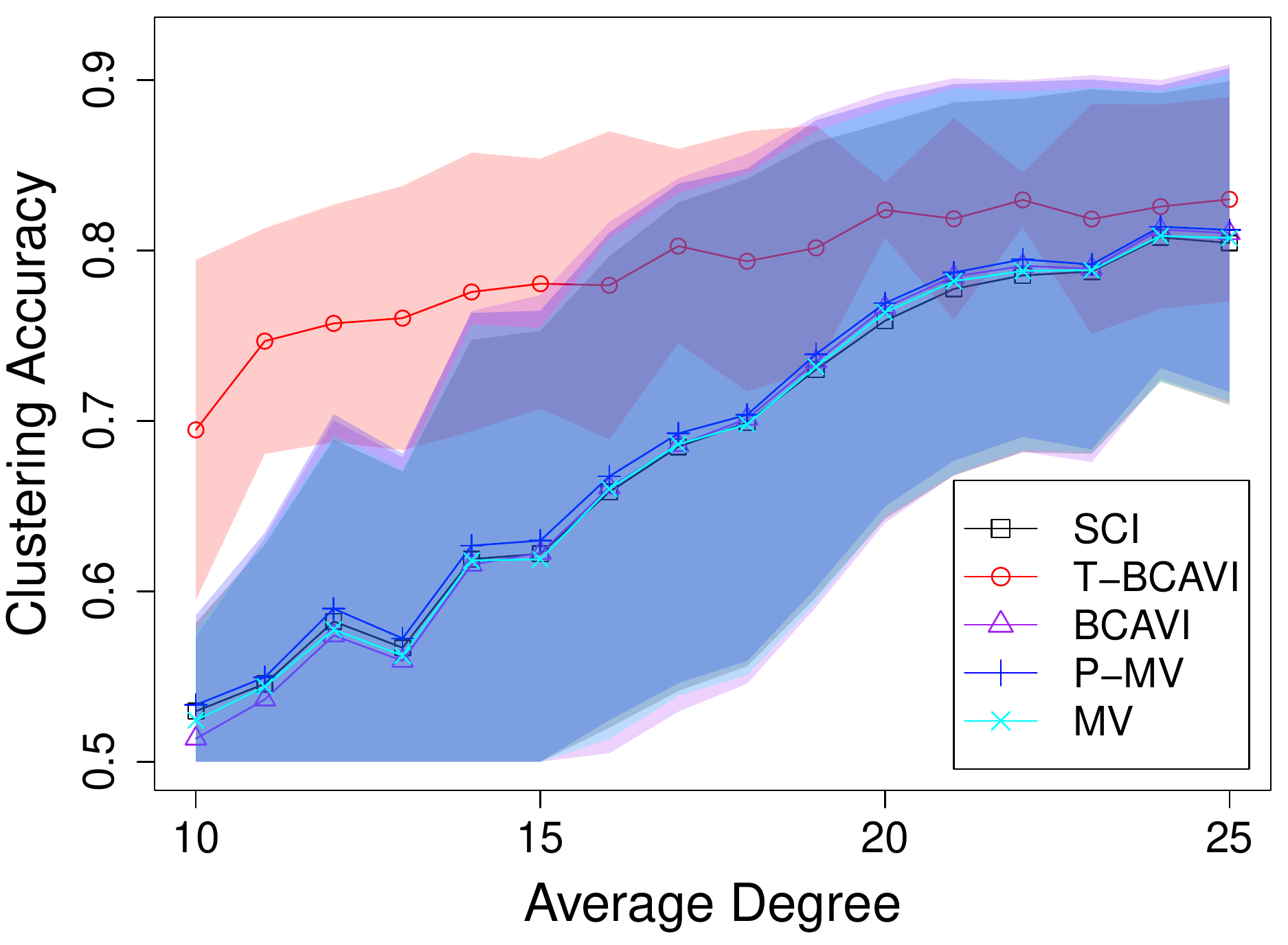}  
  \caption{$\tau = 0.5$}
  \label{samplep0.5-dc}
\end{subfigure}
\begin{subfigure}{.33\textwidth}
  \centering
  % include second image
  \includegraphics[width=1\linewidth]{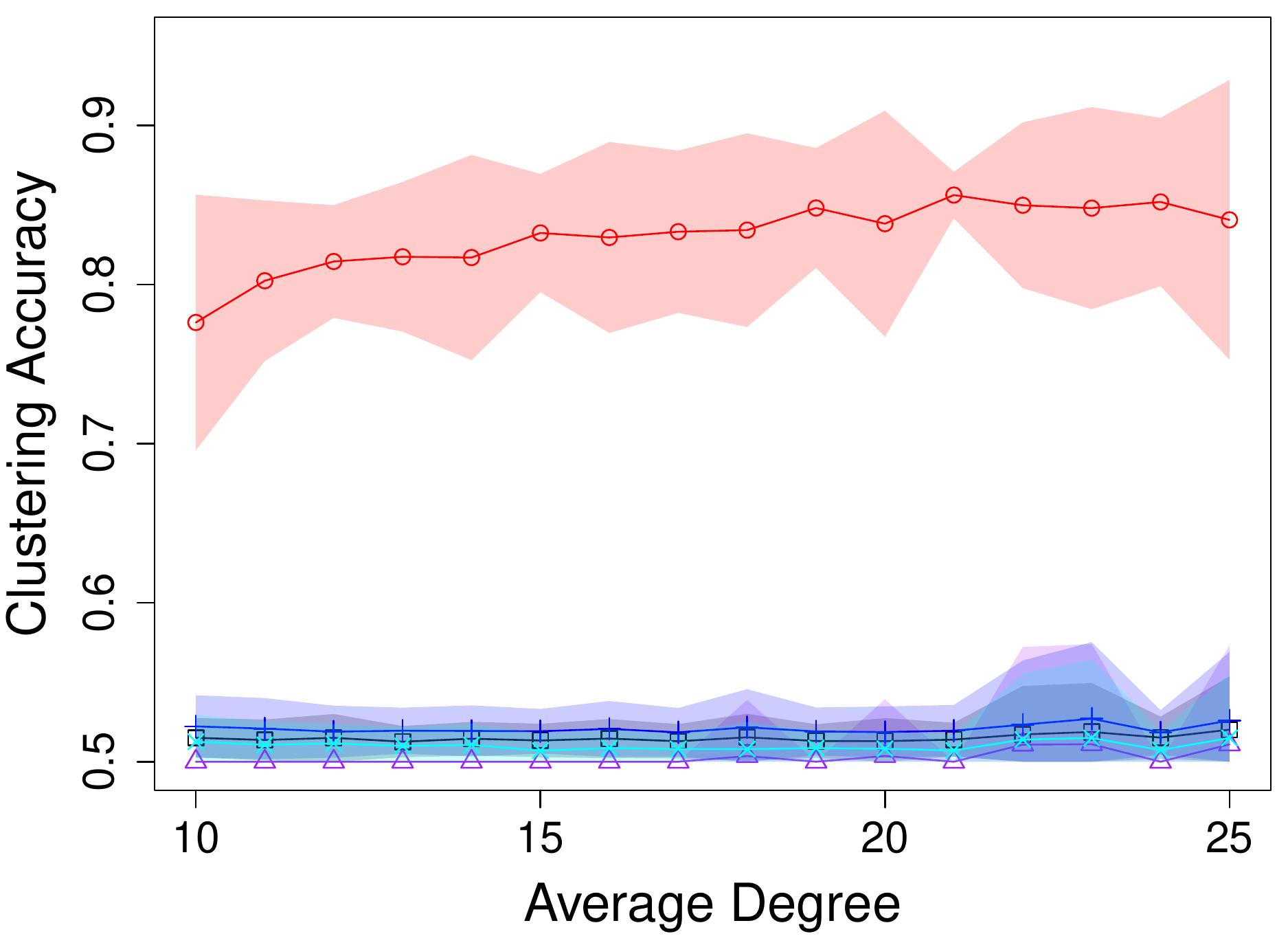}  
  \caption{$\tau = 0.2$}
  \label{samplep0.2-dc}
\end{subfigure}
\begin{subfigure}{.33\textwidth}
  \centering
  % include second image
  \includegraphics[width=1\linewidth]{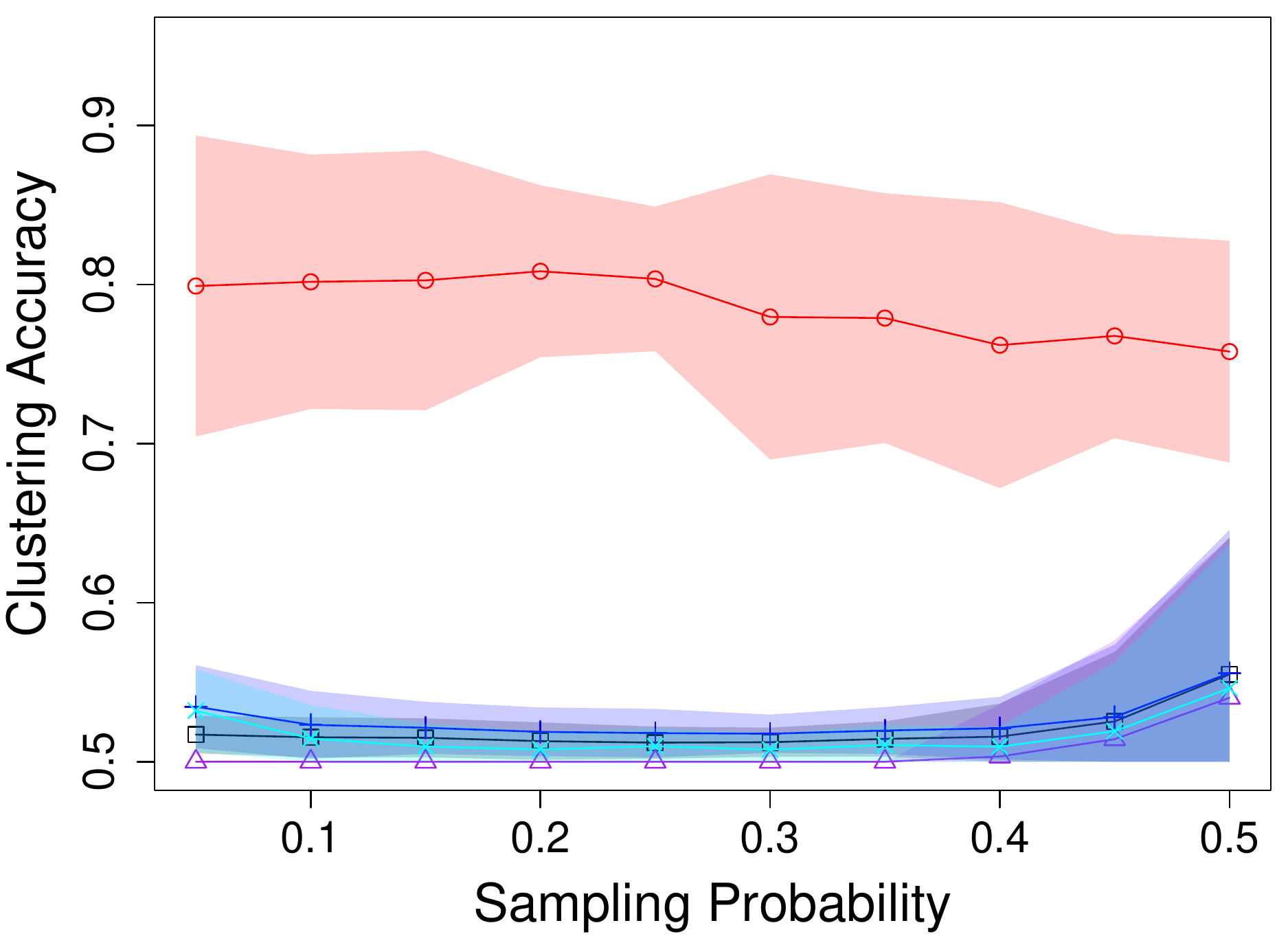}  
  \caption{ $d = 12$}
  \label{sample_avd12-dc}
\end{subfigure}
\begin{subfigure}{.33\textwidth}
  \centering
  % include first image
  \includegraphics[width=1\linewidth]{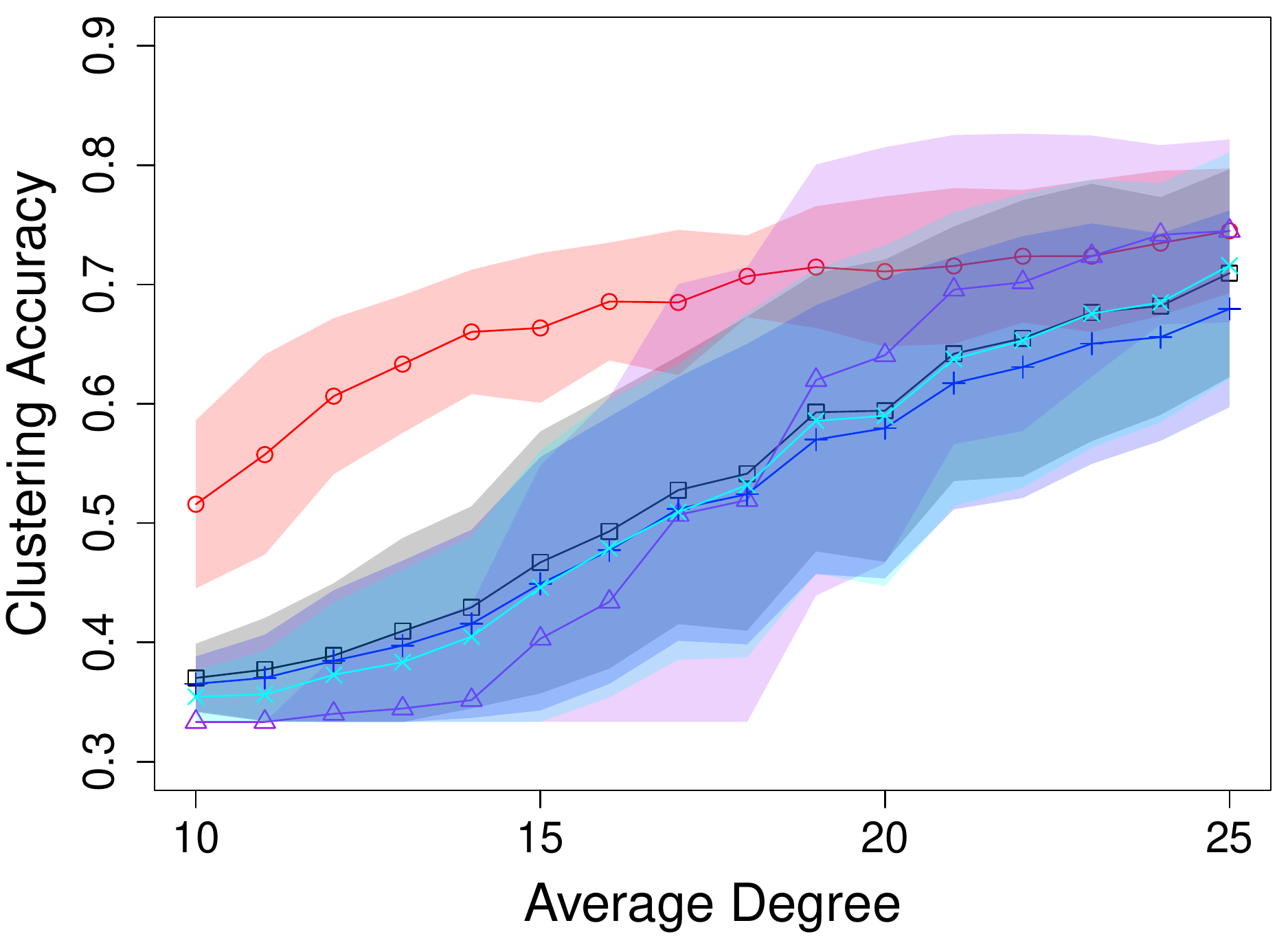}  
  \caption{$\tau = 0.5$}
  \label{samplep0.5(k=3)-dc}
\end{subfigure}
\begin{subfigure}{.33\textwidth}
  \centering
  % include second image
  \includegraphics[width=1\linewidth]{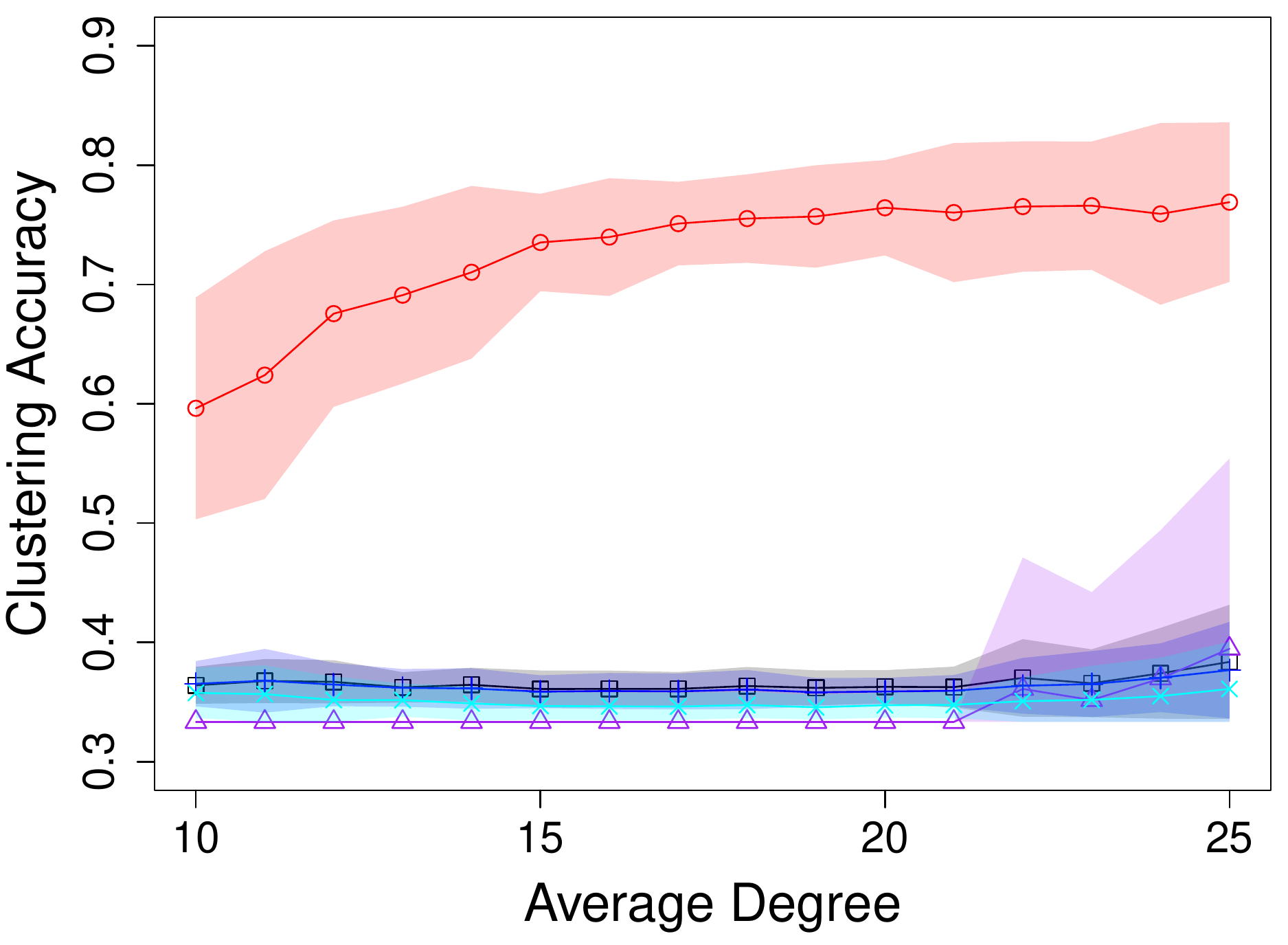}  
  \caption{$\tau = 0.25$}
  \label{samplep0.25(k=3)-dc}
\end{subfigure}
\begin{subfigure}{.33\textwidth}
  \centering
  % include second image
  \includegraphics[width=1\linewidth]{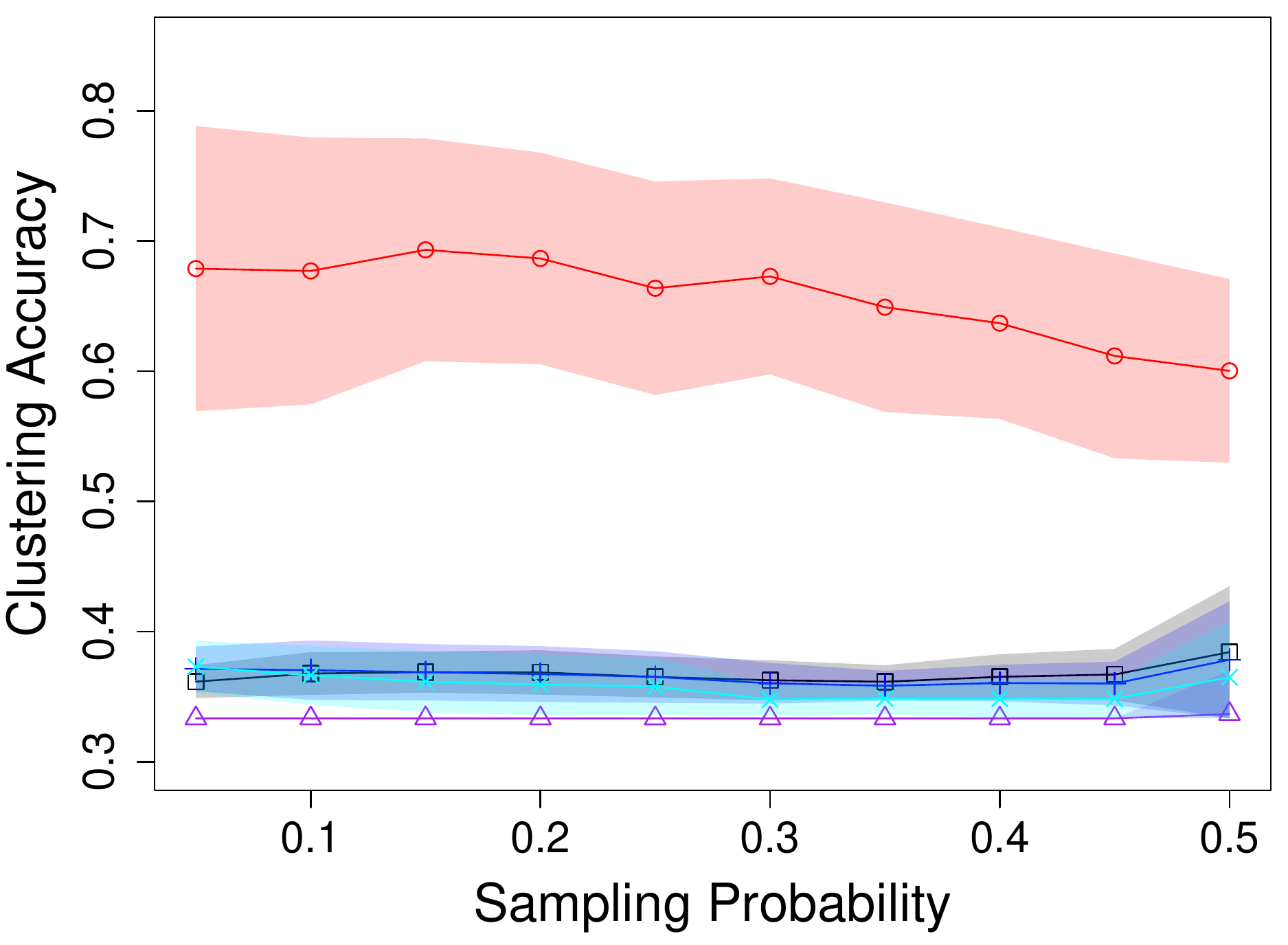}  
  \caption{$d = 12$}
  \label{sample_avd12(k=3)-dc}
\end{subfigure}

\caption{Performance of Threshold BCAVI (T-BCAVI), the classical BCAVI, majority vote (MV), and majority vote with penalization (P-MV) in balanced settings. (a)-(c): Networks are generated from DCSBM with $n=600$ nodes, $K=2$ communities of sizes $n_1 = n_2 = 300$. (d)-(f): Networks are generated from DCSBM with $n=600$ nodes, $K=3$ communities of sizes $n_1 = n_2 = n_3 = 200$. Initializations are computed by spectral clustering (SCI) applied to sampled sub-networks $A^{(\text{init})}$ with sampling probability $\tau$  while T-BCAVI and BCAVI are performed on remaining sub-networks $A-A^{(\text{init})}$.}
\label{avd-dc}
\end{figure}

\begin{figure}[ht]
\begin{subfigure}{.33\textwidth}
  \centering
  % include first image
  \includegraphics[width=1\linewidth]{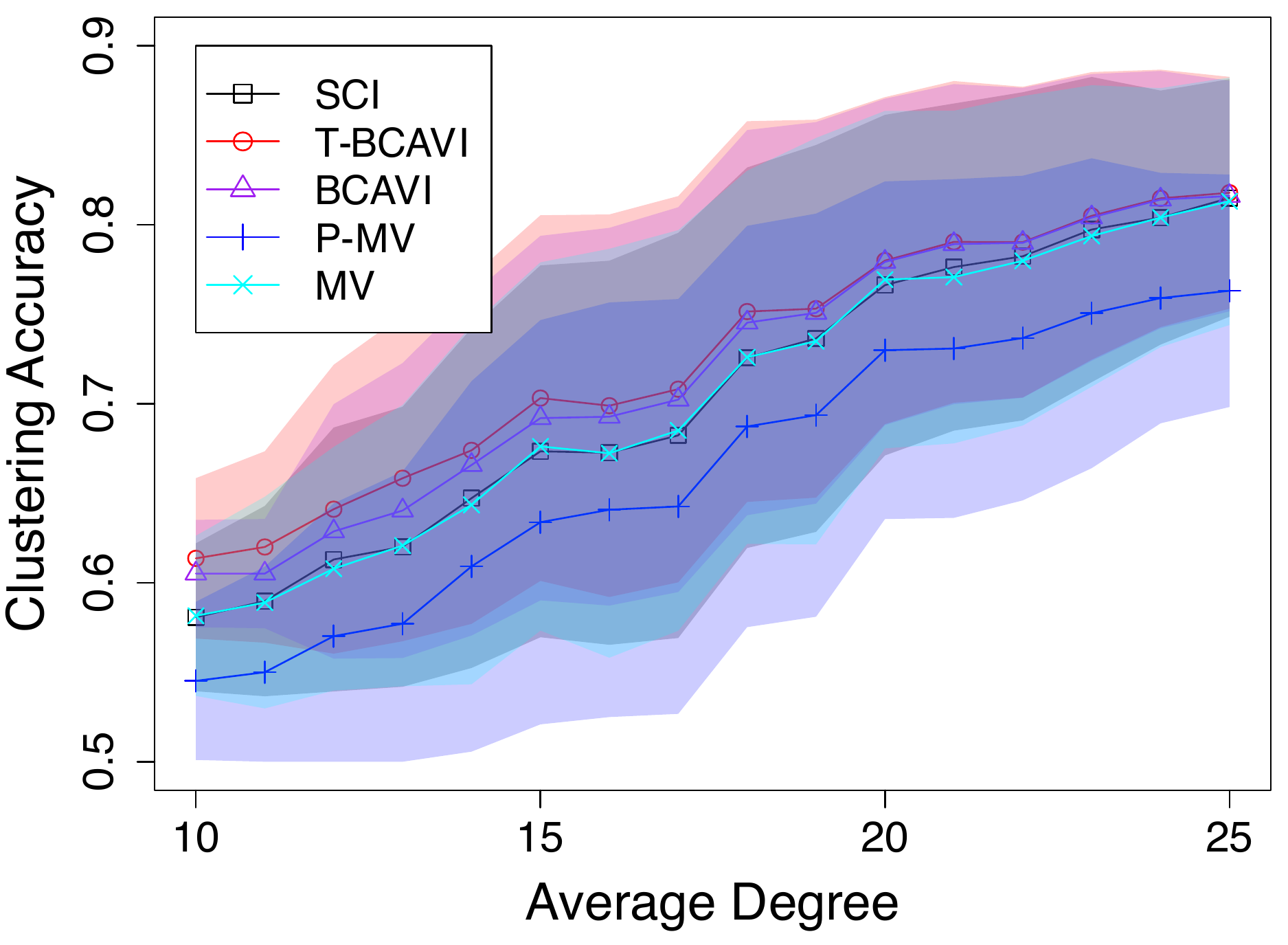}  
  \caption{$\tau = 0.5$}
  \label{Usamplep0.5-dc}
\end{subfigure}
\begin{subfigure}{.33\textwidth}
  \centering
  % include second image
  \includegraphics[width=1\linewidth]{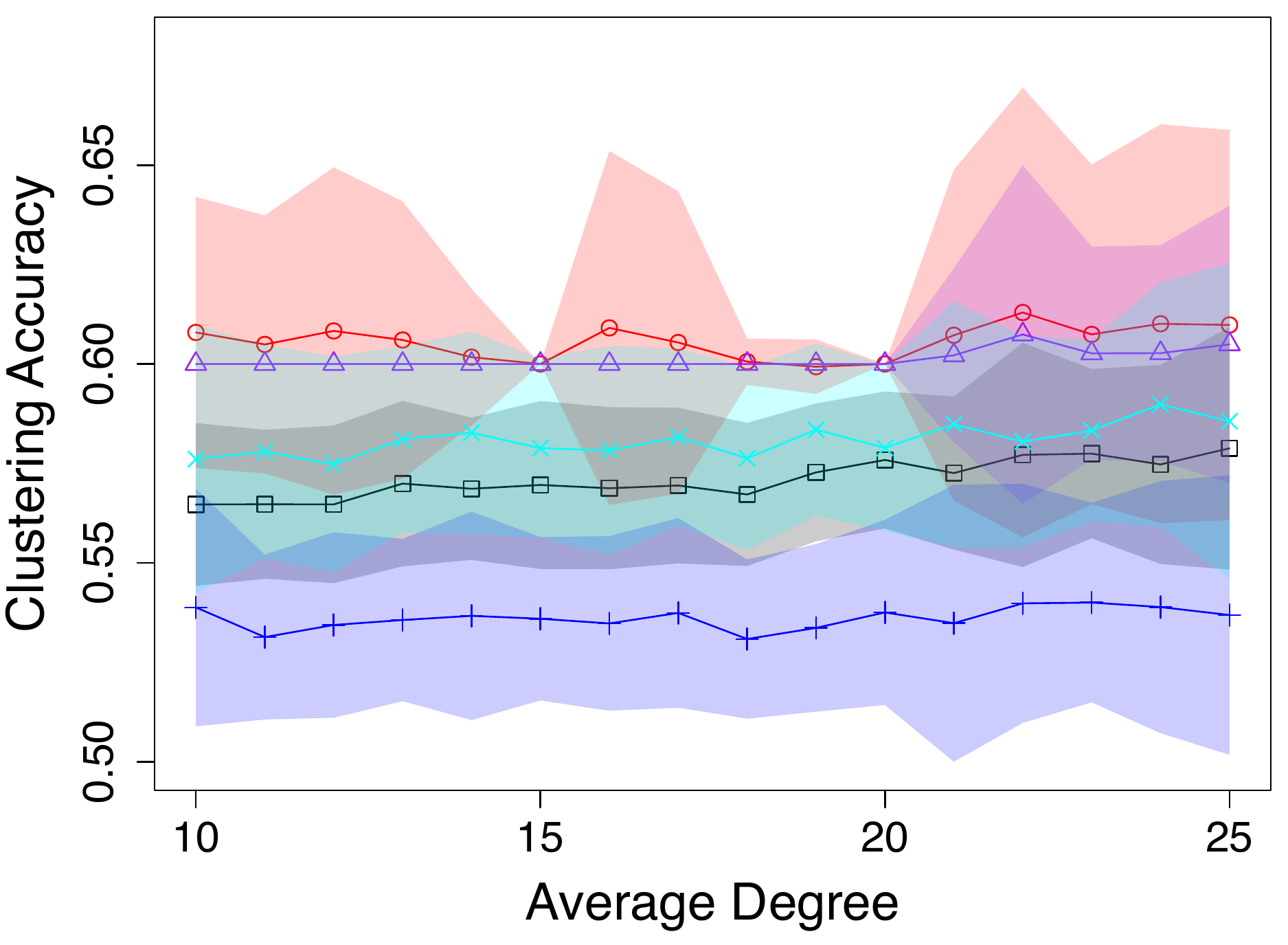}  
  \caption{$\tau = 0.2$}
  \label{Usamplep0.2-dc}
\end{subfigure}
\begin{subfigure}{.33\textwidth}
  \centering
  % include second image
  \includegraphics[width=1\linewidth]{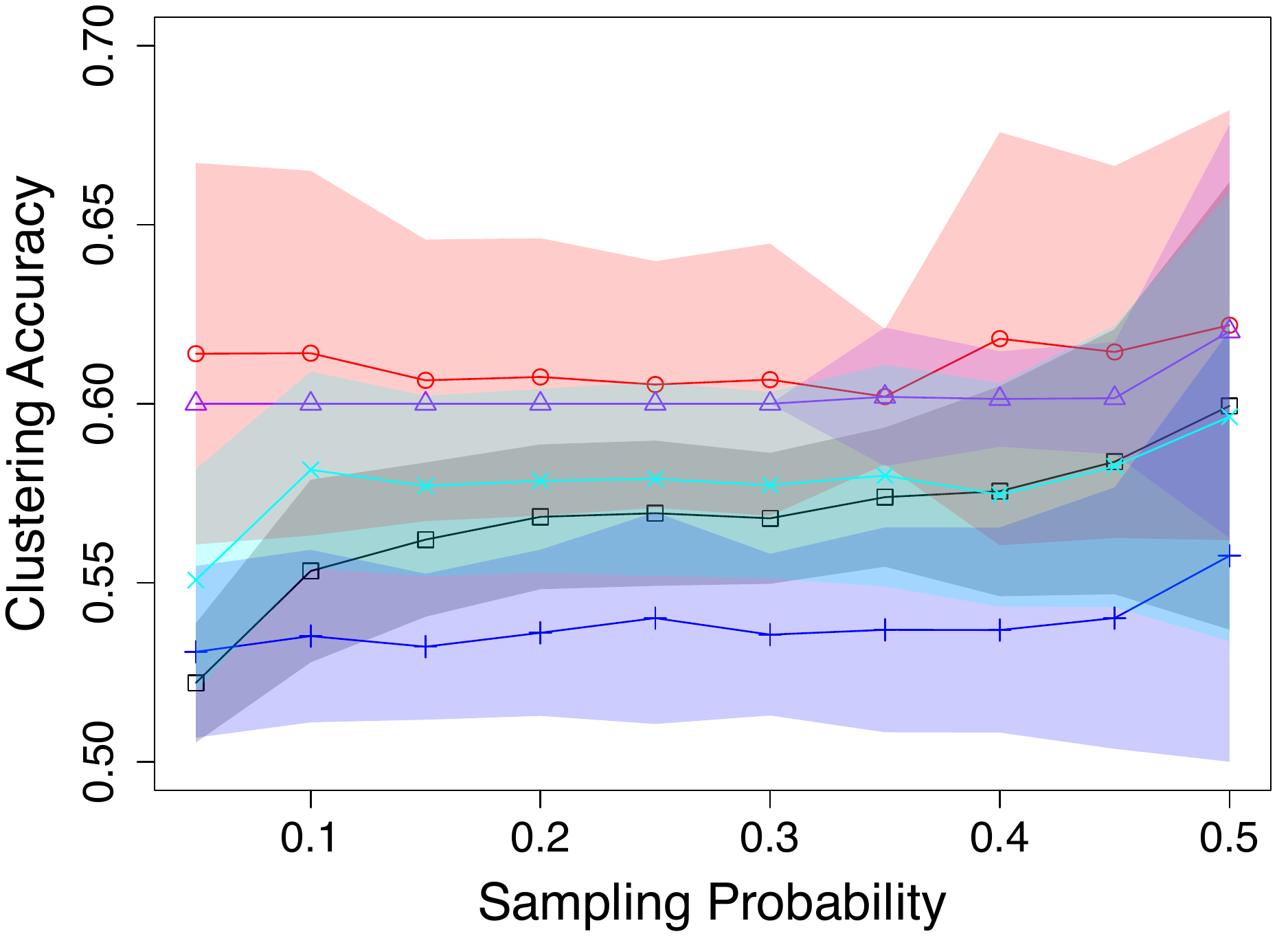}  
  \caption{$d = 12$}
  \label{Usample_avd12-dc}
\end{subfigure}
\begin{subfigure}{.33\textwidth}
  \centering
  % include first image
  \includegraphics[width=1\linewidth]{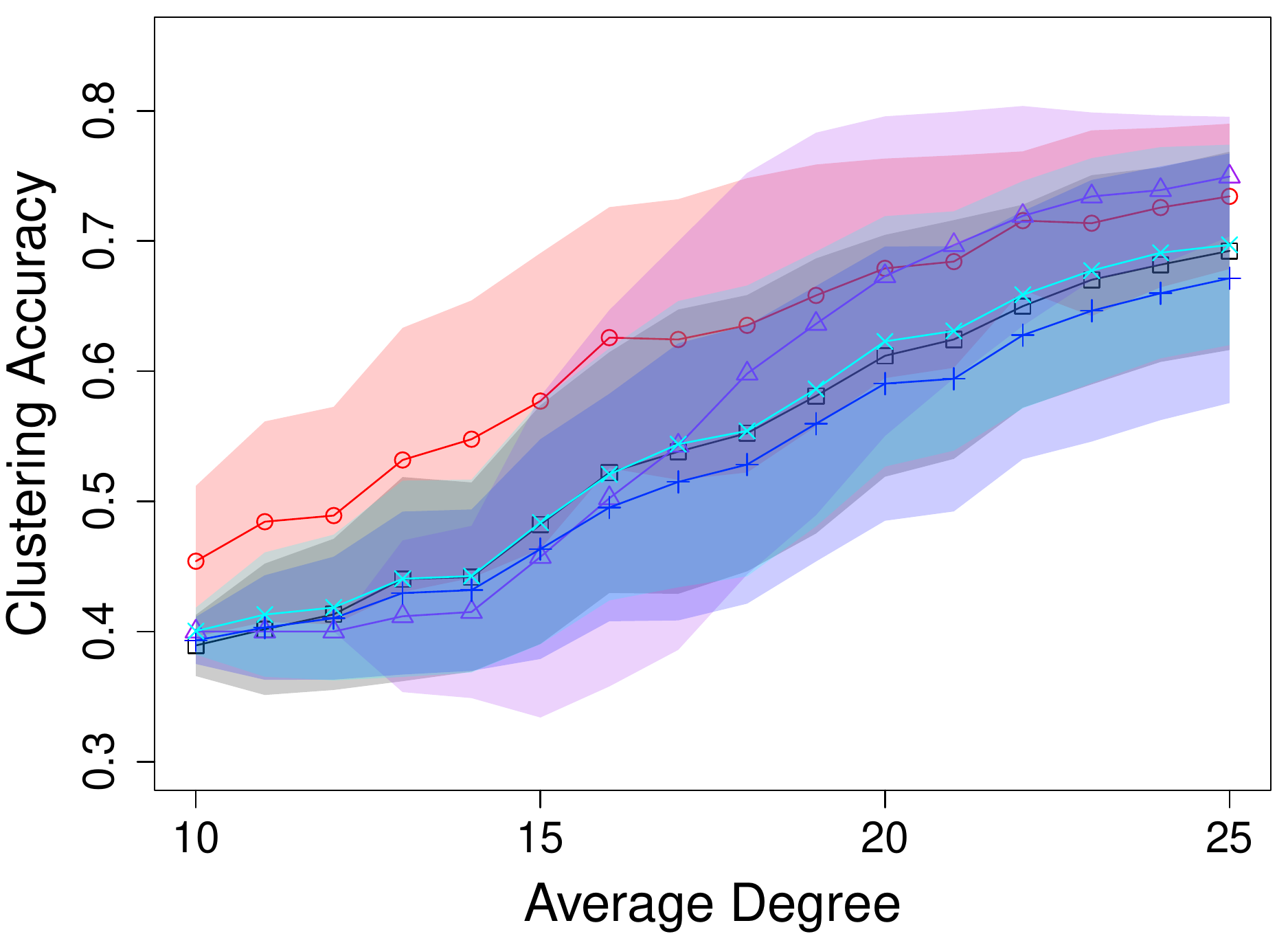}  
  \caption{$\tau = 0.5$}
  \label{Usamplep0.5(k=3)-dc}
\end{subfigure}
\begin{subfigure}{.33\textwidth}
  \centering
  % include second image
  \includegraphics[width=1\linewidth]{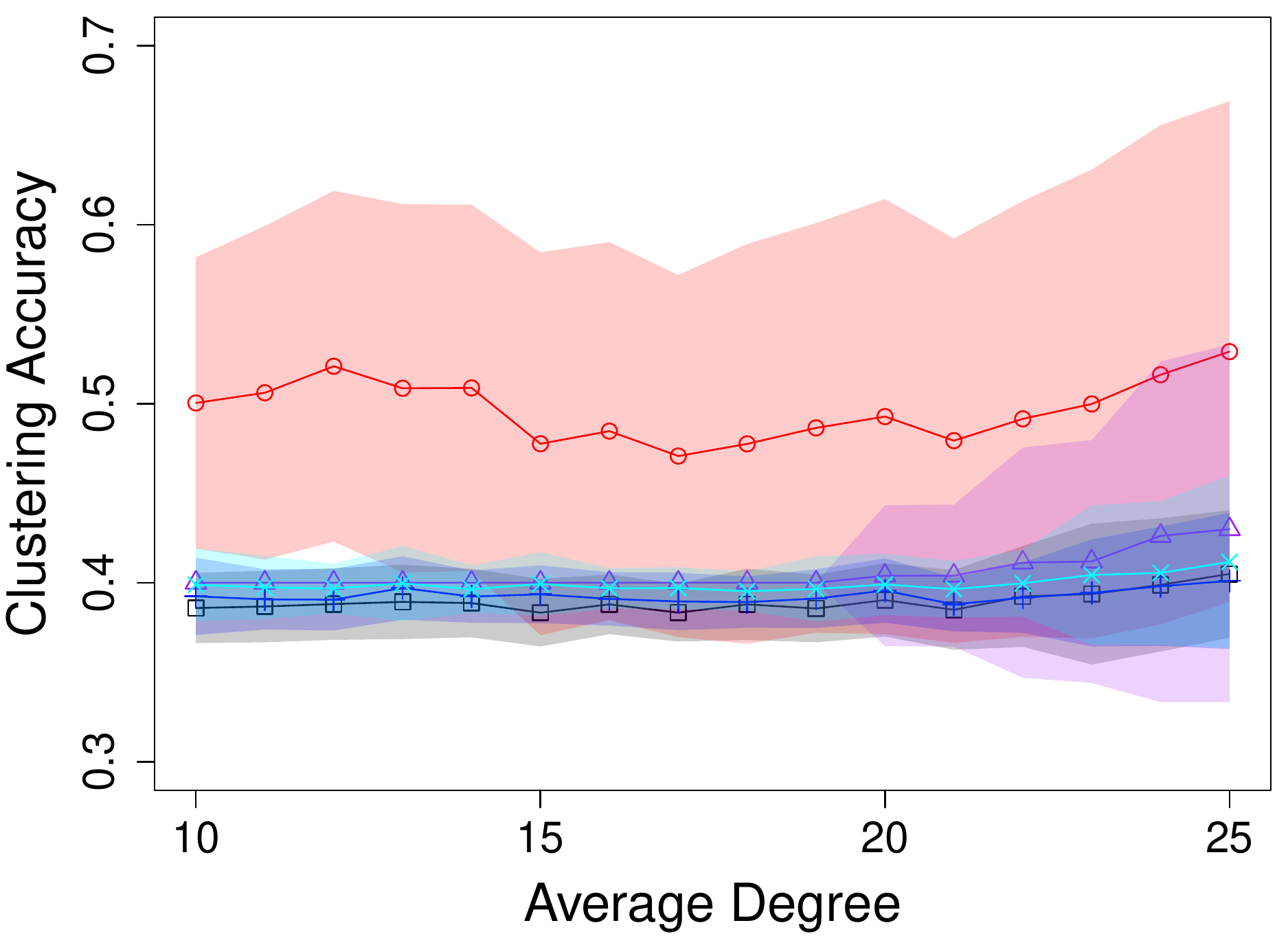}  
  \caption{$\tau = 0.2$}
  \label{Usamplep0.25(k=3)-dc}
\end{subfigure}
\begin{subfigure}{.33\textwidth}
  \centering
  % include second image
  \includegraphics[width=1\linewidth]{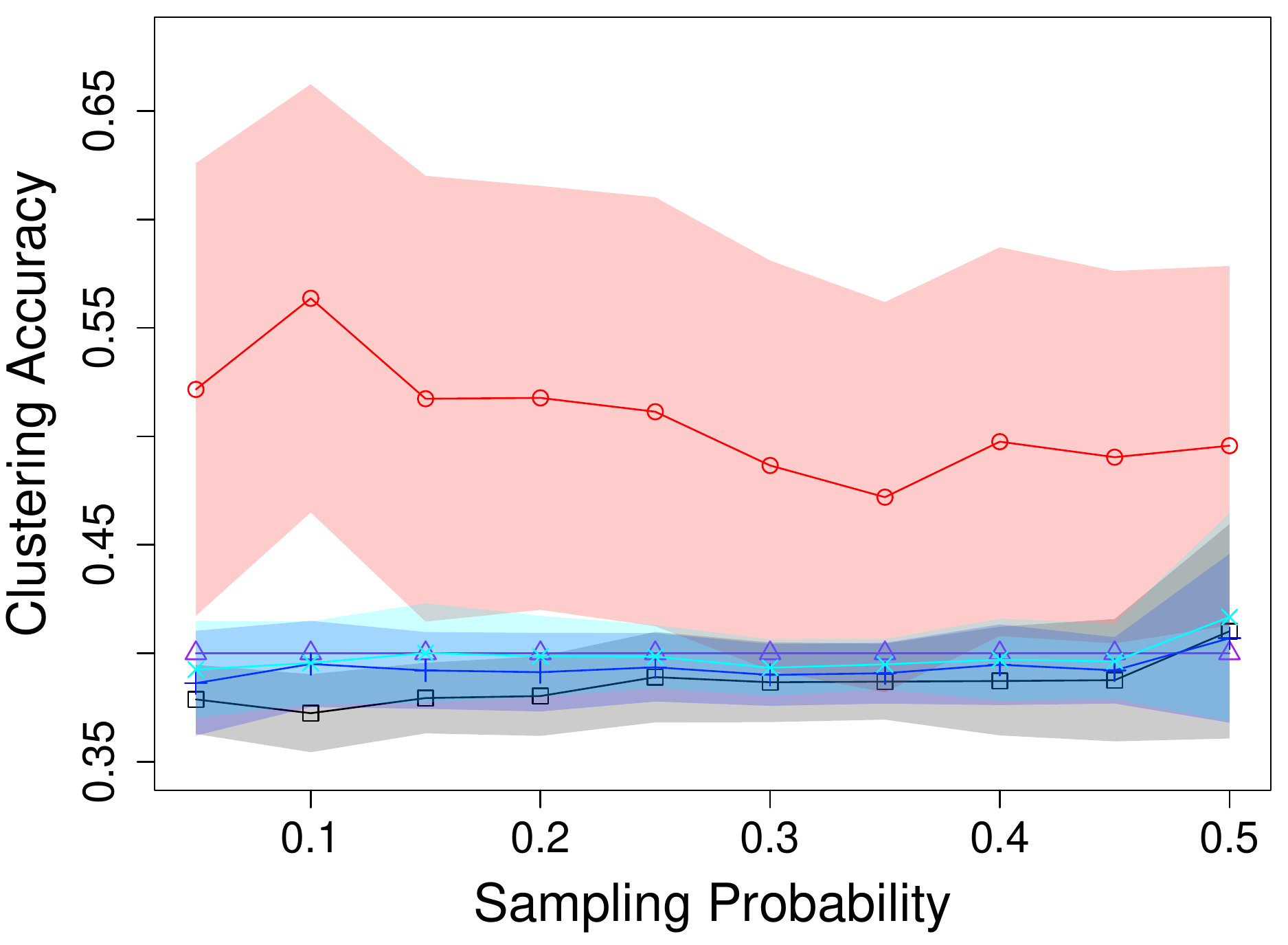}  
  \caption{$d = 12$}
  \label{Usample_avd12(k=3)-dc}
\end{subfigure}

\caption{Performance of Threshold BCAVI (T-BCAVI), the classical BCAVI, majority vote (MV), and majority vote with penalization (P-MV) in unbalanced settings. (a)-(c): Networks are generated from DCSBM with $n=600$ nodes, $K=2$ communities of sizes $n_1 = 240, n_2 = 360$. (d)-(f): Networks are generated from DCSBM with $n=600$ nodes, $K=3$ communities of sizes $n_1 = 150, n_2 =210, n_3 = 240$. Initializations are computed by spectral clustering (SCI) applied to sampled sub-networks $A^{(\text{init})}$ with sampling probability $\tau$  while T-BCAVI and BCAVI are performed on remaining sub-networks $A-A^{(\text{init})}$.}
\label{Uavd-dc}
\end{figure}

\vskip 0.2in
\bibliography{my_paper}

\begin{thebibliography}{35}
\providecommand{\natexlab}[1]{#1}
\providecommand{\url}[1]{\texttt{#1}}
\expandafter\ifx\csname urlstyle\endcsname\relax
  \providecommand{\doi}[1]{doi: #1}\else
  \providecommand{\doi}{doi: \begingroup \urlstyle{rm}\Url}\fi

\bibitem[Abbe(2018)]{abbe2017community}
Emmanuel Abbe.
\newblock Community detection and stochastic block models: Recent developments.
\newblock \emph{Journal of Machine Learning Research}, 18\penalty0 (177):\penalty0 1--86, 2018.

\bibitem[Adamic and Glance(2005)]{adamic2005political}
Lada~A Adamic and Natalie Glance.
\newblock The political blogosphere and the 2004 us election: divided they blog.
\newblock In \emph{Proceedings of the 3rd international workshop on Link discovery}, pages 36--43, 2005.

\bibitem[Airoldi et~al.(2008)Airoldi, Blei, Fienberg, and Xing]{airoldi2008mixed}
Edoardo~Maria Airoldi, David~M Blei, Stephen~E Fienberg, and Eric~P Xing.
\newblock Mixed membership stochastic blockmodels.
\newblock \emph{Journal of machine learning research}, 2008.

\bibitem[Amini et~al.(2013)Amini, Chen, Bickel, and Levina]{amini2013pseudo}
Arash~A Amini, Aiyou Chen, Peter~J Bickel, and Elizaveta Levina.
\newblock Pseudo-likelihood methods for community detection in large sparse networks.
\newblock \emph{The Annals of Statistics}, 41\penalty0 (4):\penalty0 2097--2122, 2013.

\bibitem[Beal(2003)]{beal2003variational}
Matthew~James Beal.
\newblock \emph{Variational algorithms for approximate Bayesian inference}.
\newblock University of London, University College London (United Kingdom), 2003.

\bibitem[Benaych-Georges et~al.(2019)Benaych-Georges, Bordenave, and Knowles]{benaych2019largest}
Florent Benaych-Georges, Charles Bordenave, and Antti Knowles.
\newblock Largest eigenvalues of sparse inhomogeneous erd{\H{o}}s--r{\'e}nyi graphs.
\newblock \emph{The Annals of Probability}, 47\penalty0 (3):\penalty0 1653--1676, 2019.

\bibitem[Bickel et~al.(2013)Bickel, Choi, Chang, and Zhang]{bickel2013asymptotic}
Peter Bickel, David Choi, Xiangyu Chang, and Hai Zhang.
\newblock Asymptotic normality of maximum likelihood and its variational approximation for stochastic blockmodels.
\newblock \emph{The Annals of Statistics}, 41\penalty0 (4):\penalty0 1922--1943, 2013.

\bibitem[Blei et~al.(2003)Blei, Ng, and Jordan]{blei2003latent}
David~M Blei, Andrew~Y Ng, and Michael~I Jordan.
\newblock Latent dirichlet allocation.
\newblock \emph{the Journal of machine Learning research}, 3:\penalty0 993--1022, 2003.

\bibitem[Blei et~al.(2017)Blei, Kucukelbir, and McAuliffe]{blei2017variational}
David~M Blei, Alp Kucukelbir, and Jon~D McAuliffe.
\newblock Variational inference: A review for statisticians.
\newblock \emph{Journal of the American statistical Association}, 112\penalty0 (518):\penalty0 859--877, 2017.

\bibitem[Celisse et~al.(2012)Celisse, Daudin, and Pierre]{celisse2012consistency}
Alain Celisse, Jean-Jacques Daudin, and Laurent Pierre.
\newblock Consistency of maximum-likelihood and variational estimators in the stochastic block model.
\newblock \emph{Electronic Journal of Statistics}, 6:\penalty0 1847--1899, 2012.

\bibitem[Chin et~al.(2015)Chin, Rao, and Vu]{chin2015stochastic}
Peter Chin, Anup Rao, and Van Vu.
\newblock Stochastic block model and community detection in sparse graphs: A spectral algorithm with optimal rate of recovery.
\newblock In \emph{Conference on Learning Theory}, pages 391--423. PMLR, 2015.

\bibitem[Gao et~al.(2017)Gao, Ma, Zhang, and Zhou]{gao2017achieving}
Chao Gao, Zongming Ma, Anderson~Y Zhang, and Harrison~H Zhou.
\newblock Achieving optimal misclassification proportion in stochastic block models.
\newblock \emph{The Journal of Machine Learning Research}, 18\penalty0 (1):\penalty0 1980--2024, 2017.

\bibitem[Gao et~al.(2018)Gao, Ma, Zhang, and Zhou]{gao2018community}
Chao Gao, Zongming Ma, Anderson~Y Zhang, and Harrison~H Zhou.
\newblock Community detection in degree-corrected block models.
\newblock \emph{The Annals of Statistics}, 46\penalty0 (5):\penalty0 2153--2185, 2018.

\bibitem[Gelfand and Smith(1990)]{gelfand1990sampling}
Alan~E Gelfand and Adrian~FM Smith.
\newblock Sampling-based approaches to calculating marginal densities.
\newblock \emph{Journal of the American statistical association}, 85\penalty0 (410):\penalty0 398--409, 1990.

\bibitem[Grabska-Barwi{\'n}ska et~al.(2017)Grabska-Barwi{\'n}ska, Barthelm{\'e}, Beck, Mainen, Pouget, and Latham]{grabska2017probabilistic}
Agnieszka Grabska-Barwi{\'n}ska, Simon Barthelm{\'e}, Jeff Beck, Zachary~F Mainen, Alexandre Pouget, and Peter~E Latham.
\newblock A probabilistic approach to demixing odors.
\newblock \emph{Nature neuroscience}, 20\penalty0 (1):\penalty0 98--106, 2017.

\bibitem[Holland et~al.(1983)Holland, Laskey, and Leinhardt]{holland1983stochastic}
Paul~W Holland, Kathryn~Blackmond Laskey, and Samuel Leinhardt.
\newblock Stochastic blockmodels: First steps.
\newblock \emph{Social networks}, 5\penalty0 (2):\penalty0 109--137, 1983.

\bibitem[Jordan et~al.(1999)Jordan, Ghahramani, Jaakkola, and Saul]{jordan1999anintroduction}
MI~Jordan, Zoubin Ghahramani, TS~Jaakkola, and Lawrence~K Saul.
\newblock Anintroduction to variational methods for graphical models.
\newblock \emph{Learning in graphical models}, pages 105--161, 1999.

\bibitem[Karrer and Newman(2011)]{karrer2011stochastic}
Brian Karrer and Mark~EJ Newman.
\newblock Stochastic blockmodels and community structure in networks.
\newblock \emph{Physical review E}, 83\penalty0 (1):\penalty0 016107, 2011.

\bibitem[Krebs(2022)]{orgnet}
Valdis Krebs.
\newblock {Orgnet}.
\newblock \url{http://http://www.orgnet.com}, 2022.
\newblock [Online; accessed 18-May-2022].

\bibitem[Le et~al.(2017)Le, Levina, and Vershynin]{le2017concentration}
Can~M Le, Elizaveta Levina, and Roman Vershynin.
\newblock Concentration and regularization of random graphs.
\newblock \emph{Random Structures \& Algorithms}, 51\penalty0 (3):\penalty0 538--561, 2017.

\bibitem[Li et~al.(2020)Li, Levina, and Zhu]{li2016network}
Tianxi Li, Elizaveta Levina, and Ji~Zhu.
\newblock Network cross-validation by edge sampling.
\newblock \emph{Biometrika}, 107\penalty0 (2):\penalty0 257--276, 2020.

\bibitem[Lu and Zhou(2016)]{lu2016statistical}
Yu~Lu and Harrison~H Zhou.
\newblock Statistical and computational guarantees of lloyd's algorithm and its variants.
\newblock \emph{arXiv preprint arXiv:1612.02099}, 2016.

\bibitem[Mossel et~al.(2012)Mossel, Neeman, and Sly]{mossel2012stochastic}
Elchanan Mossel, Joe Neeman, and Allan Sly.
\newblock Stochastic block models and reconstruction.
\newblock \emph{arXiv preprint arXiv:1202.1499}, 2012.

\bibitem[Newman(2013)]{ndt}
Mark Newman.
\newblock {Network data}.
\newblock \url{http://http://www-personal.umich.edu/%7Emejn/netdata//}, 2013.
\newblock [Online; accessed 18-May-2022].

\bibitem[Newman(2006)]{newman2006finding}
Mark~EJ Newman.
\newblock Finding community structure in networks using the eigenvectors of matrices.
\newblock \emph{Physical review E}, 74\penalty0 (3):\penalty0 036104, 2006.

\bibitem[Qin and Rohe(2013)]{qin2013regularized}
Tai Qin and Karl Rohe.
\newblock Regularized spectral clustering under the degree-corrected stochastic blockmodel.
\newblock \emph{Advances in neural information processing systems}, 26, 2013.

\bibitem[Sarkar et~al.(2021)Sarkar, Wang, and Mukherjee]{sarkar2021random}
Purnamrita Sarkar, YX~Rachel Wang, and Soumendu~Sundar Mukherjee.
\newblock When random initializations help: a study of variational inference for community detection.
\newblock \emph{Journal of Machine Learning Research}, 22:\penalty0 22--1, 2021.

\bibitem[Sweet(2015)]{sweet2015incorporating}
Tracy~M Sweet.
\newblock Incorporating covariates into stochastic blockmodels.
\newblock \emph{Journal of Educational and Behavioral Statistics}, 40\penalty0 (6):\penalty0 635--664, 2015.

\bibitem[Von~Luxburg(2007)]{von2007tutorial}
Ulrike Von~Luxburg.
\newblock A tutorial on spectral clustering.
\newblock \emph{Statistics and computing}, 17\penalty0 (4):\penalty0 395--416, 2007.

\bibitem[Wang and Blei(2013)]{wang2013variational}
Chong Wang and David~M Blei.
\newblock Variational inference in nonconjugate models.
\newblock \emph{Journal of Machine Learning Research}, 14\penalty0 (4), 2013.

\bibitem[Wang and Blei(2019)]{wang2019frequentist}
Yixin Wang and David~M Blei.
\newblock Frequentist consistency of variational bayes.
\newblock \emph{Journal of the American Statistical Association}, 114\penalty0 (527):\penalty0 1147--1161, 2019.

\bibitem[Yin et~al.(2020)Yin, Wang, and Sarkar]{yin2020theoretical}
Mingzhang Yin, YX~Rachel Wang, and Purnamrita Sarkar.
\newblock A theoretical case study of structured variational inference for community detection.
\newblock In \emph{International Conference on Artificial Intelligence and Statistics}, pages 3750--3761. PMLR, 2020.

\bibitem[Zhang and Zhou(2020)]{zhang2020theoretical}
Anderson~Y Zhang and Harrison~H Zhou.
\newblock Theoretical and computational guarantees of mean field variational inference for community detection.
\newblock \emph{The Annals of Statistics}, 48\penalty0 (5):\penalty0 2575--2598, 2020.

\bibitem[Zhang and Gao(2020)]{zhang2020convergence}
Fengshuo Zhang and Chao Gao.
\newblock Convergence rates of variational posterior distributions.
\newblock \emph{The Annals of Statistics}, 48\penalty0 (4):\penalty0 2180--2207, 2020.

\bibitem[Zhao et~al.(2012)Zhao, Levina, and Zhu]{zhao2012consistency}
Yunpeng Zhao, Elizaveta Levina, and Ji~Zhu.
\newblock Consistency of community detection in networks under degree-corrected stochastic block models.
\newblock \emph{The Annals of Statistics}, 40\penalty0 (4):\penalty0 2266--2292, 2012.

\end{thebibliography}

\end{document}